%% file: main_to_arxiv.tex
\documentclass[11pt]{article}
\makeatletter
\let\@fnsymbol\@arabic
\makeatother
\usepackage{appendix}
\usepackage{graphicx}
\usepackage{subcaption}
\usepackage{makecell}
\usepackage{amsmath}
\usepackage{amsfonts}
\usepackage{amssymb}
\usepackage{epsfig}
\usepackage{amsmath}
\usepackage{amsfonts}
\usepackage{amssymb}
\usepackage{dsfont}
\usepackage{mathrsfs}
\usepackage{bbm}
 \usepackage{hyperref}
\usepackage{amsmath}
\usepackage{amsfonts}
\usepackage{amssymb}
\usepackage{multirow}
\usepackage{mathrsfs}
\usepackage[dvipsnames,usenames]{color}
\usepackage{booktabs}
\usepackage{subcaption}
\usepackage{caption}
\usepackage{svg}
\usepackage{bm}
\usepackage{tikz}
\usetikzlibrary{arrows.meta}
\usetikzlibrary{positioning}
\input{Tikz_flow_charts/flowchart_style.tex}

\usepackage[normalem]{ulem}
\usepackage{natbib}
\setcitestyle{numbers,square}
\allowdisplaybreaks[4]
\usepackage{titlesec}
\titleformat{\paragraph}[hang]
  {\normalfont\normalsize\bfseries}
  {\theparagraph}
  {1em}
  {}
\titlespacing*{\paragraph}
  {0pt}
  {1ex plus .2ex minus .2ex}
  {1ex plus .2ex}

\newtheorem{theorem}{Theorem}[section]
\newtheorem{theorem*}{Theorem}[subsubsection]
\newtheorem{lemma}{Lemma}[section]
\newtheorem{proposition}{Proposition}[section]
\newtheorem{example}{Example}[section]
\newtheorem{remark}{Remark}[section]
\newtheorem{definition}{Definition}[section]

\newtheorem{assumption}{Assumption}[section]
\newtheorem{corollary}{Corollary}[section]

\newcommand{\setd}{{ d \kern -.15em l}}
\newcommand{\hatsetd}{ d \hat{\kern -.15em l }}
\newcommand{\dd}{\mathsf {d\kern -0.07em l}} 

\newcommand{\bgeqn}{\begin{eqnarray}}
\newcommand{\edeqn}{\end{eqnarray}}
\newcommand{\bgeq}{\begin{eqnarray*}}
\newcommand{\edeq}{\end{eqnarray*}}
\newcommand{\bec}{\begin{center}}
\newcommand{\enc}{\end{center}}
\newcommand{\R}{{\rm I\!R}}

\newcommand{\var}{{\rm Var}}

\newcommand{\E}{{\cal E}}

\newcommand{\B}{{\cal B}}
\newcommand{\I}{{\cal I}}

\newcommand{\Y}{{\cal Y}}

\newcommand{\W}{\mathcal{W}}

\newcommand{\be}{\begin{equation}}
\newcommand{\ee}{\end{equation}}

\def\eps{\varepsilon}

\def\rP{{\rm P}}

\def\bbe{{\Bbb{E}}} 

\def\bbp{{\Bbb{P}}}

\def\bbbone{{\mathchoice {\rm 1\mskip-4mu l} {\rm 1\mskip-4mu l}
{\rm 1\mskip-4.5mu l} {\rm 1\mskip-5mu l}}}
\newcommand{\ind}{{\bbbone}} 

\newcommand{\qedbox}{\hfill\ensuremath{\square}}

\AtBeginDocument{%
  \setlength{\abovedisplayskip}{3pt plus 1pt minus 1pt}%
  \setlength{\belowdisplayskip}{3pt plus 1pt minus 1pt}%
  \setlength{\abovedisplayshortskip}{3pt plus 1pt minus 1pt}%
  \setlength{\belowdisplayshortskip}{3pt plus 1pt minus 1pt}%
}

\def\bbbone{{\mathchoice {\rm 1\mskip-4mu l} {\rm 1\mskip-4mu l}
		{\rm 1\mskip-4.5mu l} {\rm 1\mskip-5mu l}}}

\newcommand{\Boalp}{{\boldsymbol{\alpha}}}
\newcommand{\Bobet}{{\boldsymbol{\beta}}}
\newcommand{\Bothe}{{\boldsymbol{\theta}}}

\newcommand{\BoTheS}{{\boldsymbol{\theta}^{\star}}}
\newcommand{\Bov}{\boldsymbol{v}}
\newcommand{\Boxi}{\boldsymbol{\xi}}
\begin{document}

\title{Error Bounds for Statistical Estimators in BTL Model with Parametric Multivariate Utility Functions}
    
\author{
Yicheng Li\thanks{Department of Systems Engineering and Engineering Management, The Chinese University of Hong Kong. Email: \href{mailto:ycli@se.cuhk.edu.hk}{\texttt{ycli@se.cuhk.edu.hk}}.
}
\,\,and\,\,
Huifu Xu\thanks{Department of Systems Engineering and Engineering Management, The Chinese University of Hong Kong. Email: \href{mailto:hfxu@se.cuhk.edu.hk}{\texttt{hfxu@se.cuhk.edu.hk}}.
}
}

	\date{\today}

	\maketitle

	\begin{abstract}
    We study preference elicitation under the Bradley-Terry-Luce (BTL) model where the true partworth vector is unknown and has to be estimated as a parameter with elicited preference information. The set of selected pairwise queries is non-uniform, deterministic, and arbitrary over a collection of alternatives, provided that it satisfies a joint identifiability condition. We focus on understanding when the canonical maximum likelihood estimator (MLE) is finite and admits sharp error bounds without explicit compactness constraints on the feasible set or external regularizers.
    To this end, we derive minimax lower bounds under the standard bounded dynamic range condition, and find that the same Fisher-information geometry in the classic Cram\'er-Rao lower bounds underpins the finite-sample difficulty of the estimation problem. By combining a non-asymptotic expansion of the likelihood score equation with a fixed-point localization argument, we identify a design-dependent sample size threshold above which the unconstrained canonical MLE exists and is unique with high probability. The same expansion yields a decomposition of the estimation error into a linear stochastic term, an explicit second-order bias, and a higher-order remainder. A refined analysis gives sufficient sample size conditions under which the canonical MLE attains the minimax rates up to logarithmic and constant factors. These results provide a unified non-asymptotic theory for parametric utility elicitation and reveal when the inference is determined by response data alone rather than by external regularization. Preliminary numerical results are consistent with the theoretical findings. 
	\end{abstract}

\noindent
 \textbf{Keywords.} 
 Parametric multivariate utility function, 
 BTL model, preference elicitation,
 pairwise comparison, 
 MLE, 
 minimax lower 
 bounds

\section{Introduction}
\label{sec_intro}
Preference elicitation seeks to infer a decision maker’s (DM’s) latent utility function from a limited collection of observed preference statements.
In many applications, including but not limited to marketing and transportation~\cite{ToubiaPolyCut04, YuGoosBayesSurvey}, interactive recommendation systems~\cite{BonillaGuoGPPrefElict, BaltrunasGroupRecommend}, and reinforcement learning from human feedback~\cite{RafailovDPORLHF, ZhuRLHFBTLMLEError}, such statements are collected through survey-based statistical procedures. In particular, given $d$ alternatives (e.g., products or services), the experimenter may present selected pairs $(i,j)\in \mathbb{I}:=\{(k,l)\mid 1\leq k<l\leq d\}$ to the DM and record which alternative is preferred. Let $\mathcal{X}:=\{\mathbf{x}_1,\dots,\mathbf{x}_d\}\subseteq\mathbb{R}^{n}$, where $\mathbf{x}_i$ denotes the characteristic vector of alternative $i$, and let $U:\mathcal{X}\to \R$ denote the DM's latent utility function. We are then interested in quantifying the statistical effectiveness of estimating $U(\cdot)$ from a prescribed questionnaire design and the resulting ordinal responses. 

In some practical applications, the preference statements are not necessarily consistent, which means the observed pairwise choices are not transitive~\cite{TverskyIntransitivityOfPrefs} and the DM's responses to the same pair of alternatives may be different~\cite{HeyRepetImproveConsist},
either because the DM's preferences are truly random or there are random errors in the elicitation process~\cite{SaureElliposDopt, ToubiaDynaExHessElict, BertsimasOHairLearnPrewithNoise, ChenLiuVNMElicit}.
The phenomena cannot be described by deterministic utility models (including von Neumann-Morgenstern's expected utility theory) such as~\cite{ToubiaPolyCut04, SainanModPloyH, JiaxinCoorWisePolyCut},
because no single admissible utility function is consistent with all observed responses. Likewise, set-based approaches remain non-probabilistic, but represent incomplete preference information through families of deterministic utility functions~\cite{GrecoSalvatoreSetbasedUtility, GiarlottaGrecoSetbasedUtility}. On the other hand, random utility theory models such response variability by treating utilities as random variables~\cite{CascettaRUT2009, HuJianPLAPRO}.
The most well-known one is the
Bradley-Terry-Luce (BTL) model~\cite{BradleyTerryModel,LuceIndividualChoice} in the discrete-choice literature. Bradley and Terry~\cite{BradleyTerryModel} first 
use the formula
\begin{equation}
    \label{eq_Logit_originalIIA_form}
    \bbp(i\mid\{i,j\})=\frac{U(\mathbf{x}_i)}{U(\mathbf{x}_i)+U(\mathbf{x}_j)}
\end{equation}
to calculate the probability of alternative $i$ winning a comparison $(i,j)$. Luce develops an axiomatic description of DM's preference to justify~\eqref{eq_Logit_originalIIA_form} when $U(\cdot):\mathcal{X}\to\mathbb{R}_{++}$, see~\cite[Theorem~3]{LuceIndividualChoice}.
By setting $U(\cdot):=\sigma\log U(\cdot)$, where $\sigma\in(0,\infty)$ is a positive constant,
McFadden~\cite{McFaddenCondLogit}~\cite[Chapter~2]{TrainDiscreteChoice} shows that the DM's preference can be described by the random utility function
\begin{equation}
    \label{eq_def_random_utiliy_V_i}
    V(\mathbf{x}_i) = U(\mathbf{x}_i) + \epsilon_i,
\end{equation}
where $\epsilon_i:(\Omega,\mathcal F,\mathbb P)\to \mathbb{R}$, $i\in[d]$, 
are identically and independently distributed (i.i.d.) with $\epsilon_i\sim\mathrm{Gumbel}(0,\sigma)$, and
\begin{equation}
    \label{eq_Logit_general_form}
    \bbp(i\mid\{i,j\})=\frac{e^{U(\mathbf{x}_i)/\sigma}}{e^{U(\mathbf{x}_i)/\sigma}+e^{U(\mathbf{x}_j)/\sigma}}.
\end{equation}
For repeated presentations of a pair $(i,j)$, we assume that the draws $(\epsilon_i,\epsilon_j)$ are independent.
The logit form in~\eqref{eq_Logit_general_form} is widely known as the BTL model. 

When the utility function $U(\cdot)$ is linear in its parameters, maximum likelihood estimation under the BTL model is down to solving a logistic regression problem, see~\cite[Chapter~4]{McCullaghGLM} and~\cite[Chapter~7]{DobsonIntroToGeneralLineModel}.
Nevertheless, the statistical question depends on how the questionnaire is generated. In preference elicitation, the experimenter controls the comparisons manually~\cite{GlickmanPairedComparisonDesign, ZhuRLHFBTLMLEError, MukherjeeOptDesRLHF}, or sometimes through an optimal design criterion~\cite{ToubiaDynaExHessElict, YuGoosBayesSurvey, GoosDoptimalNestedLogit}. Thus, covariates here need not satisfy specific distributional properties used in, e.g.,~\cite{OstrovsBachSelfCMEst,
ChardonLogitErrorFinSam}. In addition, many survey-based data collection procedures involve a limited number of responses because of cost, logistics, or respondent fatigue. Consequently, the reliability of asymptotic maximum likelihood theory and experimental design based on the Fisher information matrix (FIM) is questionable~\cite[Section~2]{YuGoosBayesSurvey}. These two concerns raise the basic well-posedness issue of the maximum likelihood estimator (MLE). Classical results~\cite{KonisLPCheckSeperLogit, ALBERTExistMLELogit, LesaffreExistMLELogit} show that the positive definiteness of the FIM does not exclude separation in the realized responses, in which case a finite MLE fails to exist. A common remedy to this challenge is to stabilize the estimator externally. For example, one may restrict the parameter to a prescribed compact set~\cite{ShahEstPariCompGrapTop, ZhuRLHFBTLMLEError, SuMouPropScoreDeBias, ChenLiuVNMElicit}, regularize the likelihood via ridge~\cite{BachSelfconcordantLogit, YuxinSpectralMLETopK} or Firth correction~\cite{FirthCorrect93, KosmidisFirth21}, and introduce a Bayesian prior~\cite{SaureElliposDopt, JiapengBayesPieceLinPreLean}. The constrained MLE guarantees the existence of an empirical minimizer and a uniform lower bound on the likelihood curvature over the feasible set, but it requires the diameter of that set to be specified a priori. Firth correction remains finite under separation and has a shrinkage property when applied to logistic regression~\cite[Theorem~2]{KosmidisFirth21}, yet the resulting loss function is generally non-convex.
These strategies produce finite estimates even under separation, but finiteness alone does not establish that the likelihood is informative in every identifiable direction. When the response data remain separated, the magnitude of the estimate along an escaping direction is determined primarily by the intensity of the external regularizer.
This leads to our central question:

\textit{
For a fixed and possibly nonuniform questionnaire design, how many responses suffice for the unconstrained canonical MLE to exist with high probability and exhibit sharp non-asymptotic error bounds?}

This issue has been well addressed in pairwise ranking models, where every query vector is a canonical graph edge. In this setting, the DM assigns a structureless latent preference score to each alternative, and the FIM reduces to a weighted graph Laplacian~\cite{ShahEstPariCompGrapTop} (see also Definition~\ref{def_comparGraph} and Section~\ref{subsubsec_likelihood_func}). The special geometry leads to graph-topology treatment of the estimation problem, e.g., information-theoretic lower bounds established through Laplacian spectral quantities~\cite{HajekOhMimaxInfPartilRank, ShahEstPariCompGrapTop, LeeMiniMaxGLM, LeeMinimaxGenPairComp}, finite-sample MLE existence criteria derived via Ford’s condition on the directed win-loss graph~\cite{FordCondMLE, ButlerMLEBTL_Exists, BongBTL_Ranking_MLEerror}, and upper bounds obtained from graph-based localization~\cite{ShahEstPariCompGrapTop, NegahbanRankCentrality, ChenYanXiRankRisistan, YangCongTopKMonoAd, YangRaschRandomMLE} and leave-one-out (LOO) arguments~\cite{ChenSuhSpeMLETopK, YuxinSpectralMLETopK, GaoUncertQuatiBTL}. Even though related techniques also appear in,
e.g., structured von Neumann–Morgenstern utility elicitation~\cite{ChenLiuVNMElicit}, downstream policy optimality quantification~\cite{ZhuRLHFBTLMLEError}, $K$-wise questionnaire design under the Plackett–Luce model~\cite{MukherjeeOptDesRLHF}, covariate-assisted ranking under Erd\"os-R\'enyi graph design~\cite{FanRankingWithCovar}, and $\ell_\infty$ error analysis over general comparison graphs~\cite{LIl_inf_bound_BTL_GenGraph}, the unified graph-theoretic framework does not transfer mechanically once the canonical edge vectors are replaced by general covariate directions. Some of these error bounds remain conservative when the heterogeneity of the query directions is taken into account. For example, they may depend inversely on the smallest eigenvalue of the FIM, leading to deterioration in Euclidean error and sample complexity, see e.g.,~\cite[Theorem~2]{ShahEstPariCompGrapTop} and~\cite[Lemma~6]{SuMouPropScoreDeBias}; or be inherently suboptimal when the comparison graph is irregular, see discussion in~\cite{ChenYanXiRankRisistan, YangCongTopKMonoAd}.

For general covariates, the overlapping condition~\cite{ALBERTExistMLELogit} conveys not only the graph topology but also the conic geometry of the covariates in the parameter space~\cite{SpeckmanLeeSunChoiceMLE}. A sharp and explicit sample threshold for MLE existence requires additional assumptions on how the covariates are generated. For example, Cand\`es and Sur~\cite{CandesSurPhaseTransLogit} derive an asymptotic phase transition for MLE existence under Gaussian covariates, while Chardon et
al.~\cite{ChardonLogitErrorFinSam} obtain non-asymptotic guarantees for MLE existence and excess risk under Gaussian and other regular random designs. They also show that for discrete covariates, existence may depend strongly on the orientation of the ground truth~\cite[Section~3.3]{ChardonLogitErrorFinSam}. Our aim is instead to derive a design-dependent sufficient condition that applies to an arbitrary but fixed collection of pairwise queries.

We use a two-step analysis based on the pseudo self-concordance property of the logistic likelihood function~\cite{BachSelfconcordantLogit, OstrovsBachSelfCMEst}. First, we localize the estimator uniformly along each query direction, which keeps the empirical Hessian comparable to its value at the ground truth. Second, we use a likelihood score expansion to connect this local behavior with global excess risk and Euclidean error. Due to the non-asymptotic nature of the results to be presented, the expansion retains the second-order bias of the canonical MLE. Classical works show explicit bias formulas for generalized linear models~\cite{CordeiroMcCuBiasGLM}, and show how Firth's correction removes the leading bias asymptotically~\cite{FirthCorrect93}. 
More recent finite-sample analysis for smooth functionals of Z-estimators by Lin et al.~\cite{LinSuJackQuadBarriZEst} identifies how the second-order bias produces a quadratic sample size barrier, complementing the growing-dimensional asymptotic literature by Portnoy~\cite{PortnoyAsymMpN, PortnoAsymExpFamily}.
\subsection{Main contributions}
\label{subsec_major_contrib}
In this paper, we study the parametric utility elicitation under the BTL model with a fixed and possibly nonuniform collection of pairwise queries.
To facilitate reading, we plot a flowchart (see Figure~\ref{fig_theoretical-flow}) outlining
key theoretical developments. 
The main contributions are three-fold.
\begin{figure}[t]
    \centering
    \resizebox{\linewidth}{!}{\input{Tikz_flow_charts/theory_tikz_auto.tex}}
    \caption{Flowchart of theoretical results.}
    \label{fig_theoretical-flow}
\end{figure}
\begin{itemize}
    \item We consider a linear-in-parameter utility function subject to linear equality constraints (Definition~\ref{def_para_utility_func}). This covers 
    several classical specifications, e.g., pairwise ranking model~\cite{YuxinSpectralMLETopK, ShahEstPariCompGrapTop, FanRankingWithCovar}, multivariate linear utility function~\cite{TrainDiscreteChoice, SaureElliposDopt, JiaxinCoorWisePolyCut}, piecewise linear approximation of nonlinear univariate utilities~\cite{ChenLiuVNMElicit, SainanModPloyH}. 
    We derive a necessary and sufficient identifiability condition for the population-level MLE to be well-posed (Proposition~\ref{prop_indentifibility_complete_obv}), equivalent to the positive definiteness of the design matrix (Corollary~\ref{coro_from_identifibality}).
    Under the identifiability condition, we establish both Cramér--Rao Lower Bounds (CRLB) for locally unbiased estimators (Corollary~\ref{coro_of_CRLB}), and minimax lower bounds over bounded dynamic range parameter class (Theorems~\ref{thm_minimax_loewe_bound_tr_inv},~\ref{thm_residual_coherence_minimax} and Corollary~\ref{coro_miniax_lowerbound_excess_risk}). These results demonstrate that trace- and coherence-based Fisher-information criteria govern both classical local efficiency and finite-sample minimax lower bounds,
    when measured by the Euclidean norm and along individual query directions, respectively.
    \item 
    We develop a two-step convex localization strategy for the finite-sample analysis of the unconstrained MLE. Motivated by the induction-type localization arguments in ranking literature~\cite{YuxinSpectralMLETopK, ChenYanXiRankRisistan}, we express the likelihood score equation as a fixed-point system. By combining this construction with the expansion strategy of score equation in~\cite{LinSuJackQuadBarriZEst, SuMouPropScoreDeBias}, and the pseudo self-concordance trick in~\cite{BachSelfconcordantLogit, OstrovsBachSelfCMEst}, we derive a design-dependent sufficient sample size above which the unconstrained MLE exists and is unique with high probability (Theorem~\ref{thm_suffic_sample_l2_upper_bound}). Instead of imposing boundedness constraints or introducing explicit regularization for the MLE, we localize the likelihood minimizer in a convex and compact set around the ground truth in the proof. 
    \item We distinguish the sample sizes sufficient for existence of the MLE from those sufficient for its minimax optimality.
    A refined decomposition of the estimation error isolates the contribution of the linear stochastic term, deterministic second-order bias, and higher-order remainder (Theorem~\ref{thm_refinred_Q_infty_ell_2_residual_decompose_bound}). 
    When measured by the Euclidean norm, beyond the baseline localization requirements, the sufficient sample size for the MLE to attain the minimax rate depends on the inverse of the FIM via its effective rank but not its extreme eigenvalue (Corollary~\ref{coro_euclidean_upper_in_detailed_terms}), thereby improving the previous conclusions in~\cite{ShahEstPariCompGrapTop, SuMouPropScoreDeBias}.
    Numerical results in Section~\ref{sec_numerical_exp} illustrate how the bias and nonlinear residuals deteriorate the statistical efficiency of the MLE.
\end{itemize}

The rest of the paper is organized as follows. In Section~\ref{sec_model_setup}, we introduce the parametric utility model and its general identifiability condition. Several classical problems that this model subsumes are sequentially presented as examples. In Sections~\ref{sec_minimax_LB} and~\ref{MLE_upper_bound}, we discuss lower error bounds and finite-sample guarantees for the unconstrained MLE, respectively.
In Section~\ref{sec_numerical_exp}, we provide numerical evidence that verifies the established theory. In Section~\ref{sec_conclusion_remarks}, we conclude with some remarks. Proofs of the main results are put in Section~\ref{sec_proof_of_main_results}, with auxiliary arguments deferred to Appendix~\ref{appendi_A}.

Throughout the paper, bold lowercase and uppercase letters denote column vectors and matrices, respectively. $\mathbf{0}$, $\mathbf{1}$, and $\mathbf{I}_n$ denote the all-zero vector, the all-one vector, and the identity matrix in $\mathbb{R}^{n\times n}$, respectively. $\mathbf{e}_i$ is the $i$-th canonical Euclidean basis  with dimensions clear from context. For a matrix $\mathbf{A}$, $\mathbf{A}_{ij}=[\mathbf{A}]_{ij}$ denotes its $(i,j)$-th entry. $\mathrm{diag}(\mathbf{a})$ and $\mathrm{diag}(\mathbf{A})$ stand for a diagonal matrix formed by $\mathbf{a}$ or a vector formed by the diagonal of $\mathbf{A}$, respectively. $(a_{ij})_{i,j\leq n}$ is a matrix whose $(i,j)$-th element is $a_{ij}$. We write $\mathbf{A}^\dagger$ for the Moore-Penrose pseudoinverse of $\mathbf{A}$. For any conformable matrices $\mathbf{A}$, $\mathbf{B}$, $\langle\mathbf{A},\mathbf{B}\rangle = \mathrm{tr}(\mathbf{A}^\top\mathbf{B})$ denotes their Frobenius inner product, and $\mathbf{A}\circ\mathbf{B}$ is the Hadamard product between them. $\mathbf{A}\succ\mathbf{0}$ ($\mathbf{A}\succeq\mathbf{0}$) means $\mathbf{A}$ is positive (semi)definite. We use $\Vert\mathbf{A}\Vert$, $\Vert\mathbf{A}\Vert_F$ for the spectral and Frobenius norms. $\Vert\mathbf{a}\Vert_2$, $\Vert\mathbf{a}\Vert_{\mathbf{H}}:=\sqrt{\mathbf{a}^\top\mathbf{H}\mathbf{a}}$ stand for the Euclidean norm and $\mathbf{H}$-reweighted seminorm of $\mathbf{a}$, respectively, where $\mathbf{H}\succeq\mathbf{0}$. For $\Theta\subseteq\mathbb{R}^n$, $\mathrm{int}(\Theta)$ denotes its interior. For a random variable $Z$, let $\Vert Z\Vert_{\psi_2}$ denote its sub-Gaussian norm. For a random vector $\boldsymbol{\xi}\in\mathbb{R}^n$, define $\Vert\boldsymbol{\xi}\Vert_{\psi_2}:=\sup_{\mathbf{u}\in\mathbb S^{n-1}}\Vert\mathbf{u}^\top\boldsymbol{\xi}\Vert_{\psi_2}$, where $\mathbb{S}^{n-1}$ is the unit sphere in $\mathbb{R}^n$. For nonnegative quantities $f$ and $g$, relations $f\lesssim g$, $f\gtrsim g$ and $f\asymp g$ mean, respectively, that $f\leq Cg$, $f\geq cg$, or $cg\leq f\leq Cg$ for some absolute constants $C\geq c>0$.

\section{Model Setup and Identifiability}
	\label{sec_model_setup}
In this section, we spell out the details of the parametric BTL model, and discuss parameter identifiability issue. See Figure~\ref{fig_survey_pipeline} for an overview of the statistical estimation procedure considered in this paper. 
\begin{figure}[t]
  \centering
  \resizebox{\linewidth}{!}{\input{Tikz_flow_charts/survey_tikz_auto.tex}}
  \caption{Survey-based statistical procedures, from alternative features to utility estimation.}
  \label{fig_survey_pipeline}
\end{figure}
\subsection{The Parametric Random Utility Model}
\label{subsec_utlity_model}
    Under the BTL framework~\eqref{eq_Logit_general_form}, choice probabilities depend on the alternatives through their utility differences and the scale parameter $\sigma$.
    We will handle them separately.
    We first specify $U(\cdot)$ through an appropriate feature map, allowing nonlinear dependence on the characteristic vectors while retaining linearity in the unknown parameter.
    \begin{definition}[Linear-in-parameter Utility Representation]
    \label{def_para_utility_func}
    Let $\mathcal{X}:=\{\mathbf{x}_1,\dots,\mathbf{x}_d\}$ denote the characteristic vectors of the $d$ alternatives, and let $\boldsymbol{\phi}:\mathcal{X}\to\mathbb{R}^p$ be a fixed vector-valued feature map.
    The utility values $\{U(\mathbf{x}_i)\}_{i=1}^d$ 
    admit the following linear form w.r.t. parameter $\mathbf{w}\in\mathbb{R}^p$, i.e.,
    \begin{equation}
    \label{eq_para_of_U_i_by_embedding}
        U(\mathbf{x}_i) := \mathbf{w}^\top\boldsymbol{\phi}(\mathbf{x}_i) = \sum_{k=1}^p w_k[\boldsymbol{\phi}(\mathbf{x}_i)]_k,\,\;\;\text{for}\,\;i\in[d],
    \end{equation}
    where $\mathbf{w}=[w_1,\dots,w_p]^\top$
    satisfies $\mathbf{Aw}=\mathbf{b}$ for some $\mathbf{A}\in\mathbb{R}^{l\times p}$, $\mathbf{b} \in \mathbb{R}^l$.
    \end{definition}
    We mainly focus on the case where $\{\mathbf{x}_i\}_{i=1}^d$ are deterministic attribute vectors of the alternatives, though our results can be modified to cover random lotteries with finite outcomes~\cite{ChenLiuVNMElicit}, and highly nonlinear specifications~\cite[Section~4.1]{BachSelfconcordantLogit}. The equality constraints $\mathbf{Aw}=\mathbf{b}$ are adopted to encode the known structural restrictions of the utility function or to eliminate non-identifiable directions, as discussed in Proposition~\ref{prop_indentifibility_complete_obv}. They should not be confused with polyhedral cuts generated from observed preferences in deterministic elicitation methods (see e.g.,~\cite{ToubiaPolyCut04}). Definition~\ref{def_para_utility_func} places several commonly used utility specifications within a common parametric framework, as illustrated next.
    \begin{example}[Discrete Choice Model]
    \label{example_Discrete_Choice_Model}
        For every $i\in[d]$, set $\mathbf{x}_i\in\{0,1\}^p$, $\boldsymbol{\phi}(\mathbf{x}_i)=\mathbf{x}_i$ in~\eqref{eq_para_of_U_i_by_embedding}, and remove the linear equality constraint $\mathbf{Aw}=\mathbf{b}$. Then $U(\mathbf{x}_i) = \mathbf{w}^\top \mathbf{x}_i$, and it can be viewed as a multivariate linear utility function, where $\mathbf{x}_i$ is the vector of attributes and $\mathbf{w}$ represents the partworth vector. This recovers the discrete choice model used in classical conjoint analysis~\cite[Chapter 2]{TrainDiscreteChoice}, ~\cite[Section 3]{SaureElliposDopt}, and~\cite{JiaxinCoorWisePolyCut}. 
    \end{example}

    \begin{example}[Piecewise-linear Approximation]
    \label{ex_piecewise_linear_approxi}
        For each $i\in[d]$, let $x_i\in[a,b]\subset\mathbb{R}$ denote the scalar attribute value of alternative $i$. Fix a set of breakpoints $a=t_0<t_1<\cdots<t_J=b$, and let $u:[a,b]\to[0,1]$ be a normalized univariate, nonlinear utility function satisfying $u(a)=0$ and $u(b)=1$. Define $\mathbf{w}_j:=u(t_j)-u(t_{j-1}),\,j\in[J]$, and let $\mathbf{w}:=[\mathbf{w}_1,\dots,\mathbf{w}_J]^\top\in\mathbb{R}^J$. For every $\zeta\in[a,b]$, define $\boldsymbol{\phi}(\zeta)\in\mathbb{R}^J$ coordinatewise, i.e.,
        \begin{equation*}
            [\boldsymbol{\phi}(\zeta)]_j :=
                \begin{cases}
                    0,
                    & \text{for } \zeta\le t_{j-1},\\[1mm]
                    \dfrac{\zeta-t_{j-1}}{t_j-t_{j-1}},
                    & \text{for } t_{j-1}<\zeta\le t_j,\\[3mm]
                    1,
                    & \text{for } \zeta>t_j.
                \end{cases}
        \end{equation*}
        Then $U(\zeta):=\mathbf{w}^\top\boldsymbol{\phi}(\zeta)$ is a piecewise-linear approximation of $u$, and hence $U(x_i)=\mathbf{w}^\top\boldsymbol{\phi}(x_i)$ gives the corresponding utility scores~\cite{ChenLiuVNMElicit,SainanModPloyH}. The normalization is encoded by $\mathbf{1}^\top\mathbf{w}=1$, while the monotonicity of $u$ implies $\mathbf{w}\geq\mathbf{0}$. The same representation extends to additive multi-attribute utility functions following~\cite[Proposition~2]{HuJianPLAPRO}.
   \end{example}
    \begin{example}[Pairwise Ranking Model]
    \label{ex_ranking_setup}
        By setting $\boldsymbol{\phi}(\mathbf{x}_i) = \mathbf{e}_i\in\mathbb{R}^d,\,\forall\,i\in[d]$, and imposing $\mathbf{1}^\top\mathbf{w}=0$, we can interpret the $\{U(\mathbf{x}_i)\}_{i=1}^d$ in~\eqref{eq_para_of_U_i_by_embedding} as structure-less latent preference scores that the DM assigns to the items. This recovers the ranking model used in~\cite{HajekOhMimaxInfPartilRank, ShahEstPariCompGrapTop, YuxinSpectralMLETopK, YangCongTopKMonoAd}.
    \end{example}
    \begin{example}[Covariate Assisted Ranking Model]
    \label{ex_cov_assisted_ranking_1}
        Let $\boldsymbol{\phi}(\mathbf{x}_i) = [\mathbf{e}_i^\top,\mathbf{x}_i^\top]^\top\in \mathbb{R}^{d+n}$, $\mathbf{w} = [\mathbf{w}_1^\top,\mathbf{w}_2^\top]^\top$, where $\mathbf{w}_1\in\mathbb{R}^d$, $\mathbf{w}_2\in\mathbb{R}^n$. Then $U(\mathbf{x}_i) = \mathbf{e}_i^\top\mathbf{w}_1 + \mathbf{x}_i^\top \mathbf{w}_2$, where $\mathbf{e}_i^\top\mathbf{w}_1$ represents the residual scores that cannot be explained by the contribution from covariates $\mathbf{w}_2^\top \mathbf{x}_i$ alone~\cite{FanRankingWithCovar}. 
        Under the condition that 
        \begin{equation*}
            \bar{\mathbf{X}}:=\begin{bmatrix}
            1 &\cdots &1\\
            \mathbf{x}_1 & \cdots & \mathbf{x}_d 
        \end{bmatrix}\in\mathbb{R}^{(n+1)\times d},\,\,\mathrm{rank}(\bar{\mathbf{X}}) = n+1,
        \end{equation*}
        the linear equality constraint $[\bar{\mathbf{X}},\,\mathbf{0}_{(n+1)\times n}]\mathbf{w}=\mathbf{0}$ is imposed to ensure identifiability. We will come back to this in Example~\ref{ex_cov_assisted_ranking_2}.
    \end{example}
    
    We now discuss the specification of $\sigma$. To preserve strictly positive choice probabilities and a nondegenerate stochastic model, we assume $0<\sigma<\infty$ throughout. When $\sigma\downarrow 0$, the DM chooses alternative $i$ without uncertainty whenever $U(\mathbf{x}_i)>U(\mathbf{x}_j)$. This noise-free limit violates the strict positivity condition underlying Luce’s representation~\cite[Lemma~1]{LuceIndividualChoice}. As $\sigma\to \infty$, the choice probability converges to $\frac{1}{2}$, so the responses become uninformative about utility differences. From standard treatment of generalized linear models (see e.g.,~\cite[Chapter~3.2]{TrainDiscreteChoice},~\cite[Chapter~4]{McCullaghGLM}, and~\cite{LeeMiniMaxGLM}), we treat $\sigma$ as a fixed scale parameter and do not estimate it along with $\mathbf{w}$.
    This is justified by the fact that the response law is invariant within $[(\mathbf w,\sigma)]: = \{(c\mathbf w,c\sigma)\mid c\in\mathbb{R}_{++}\}$ unless the scale of $U(\cdot)$ is fixed exogenously, e.g., in Examples~\ref{example_Discrete_Choice_Model},~\ref{ex_ranking_setup}, and~\ref{ex_cov_assisted_ranking_1}, we set $\sigma=1$. When the utility function is normalized as in Example~\ref{ex_piecewise_linear_approxi}, one may select a unique representation from $[(\mathbf w,\sigma)]$ as in~\cite[Theorem 1]{ChenLiuVNMElicit}, but the interpretation of $\sigma$ remains relative to that normalization.

\subsection{Questionnaire and Responses}
Let $\Boalp:=(i,j)\in\mathbb{I}$ denote a pairwise query, and define
  \begin{subequations}
  \label{eq_def_of_U_phi_matrix}
      \begin{align}
          \label{eq_def_of_U_phi_matrix_a}
        \mathfrak{A}
        &:=[\boldsymbol{\phi}(\mathbf{x}_1),\dots,\boldsymbol{\phi}(\mathbf{x}_d)]^\top\in\mathbb{R}^{d\times p}, \\
        \label{eq_def_of_U_phi_matrix_b}
        \mathbf{a}_{\Boalp}&:=\mathfrak{A}^\top(\mathbf{e}_i-\mathbf{e}_j).
      \end{align}
  \end{subequations}
 $\mathbf{a}_{\Boalp}$ is called a {\em query vector} associated with query $\Boalp$, which is also known as a {\em query direction}.
Next, we discuss how the sequence of responses is generated and specify several mild conditions on the ground-truth parameter underlying the responses.
By substituting~\eqref{eq_para_of_U_i_by_embedding} and~\eqref{eq_def_of_U_phi_matrix_b} into \eqref{eq_Logit_general_form}, we obtain a scalar choice probability in terms of $\mathbf{a}_{\Boalp}$ parameterized by $\mathbf{w}$
		 \begin{equation}
		\label{eq_BTL_basic_scaled_alpha}
		p_\Boalp(\mathbf{w}):=\bbp(i\mid\{i,j\}) =  \frac{1}{1+\exp\left(-\frac{\mathbf{w}^\top\mathbf{a}_{\Boalp}}{\sigma}\right)}.
		\end{equation}
        Let $\E_s\subseteq\mathbb{I}$ be the selected pairs, and let $n_\Boalp\geq 1$ denote the number of responses collected for pair $\Boalp\in\E_s$. Repeated presentations of the same query are allowed but not required for our theoretical results to hold. Thus, $\E_s$ and $\{n_\Boalp\}_{\Boalp\in\E_s}$ specify the {\em questionnaire design}.
        For query $\Boalp$, let $(\Y_\Boalp,\mathscr{Y}_\Boalp) := (\{0,1\},2^{\{0,1\}})$, and let $\rP_\mathbf{w}^\Boalp$ denote the Bernoulli probability measure on this space with success rate $p_\Boalp(\mathbf{w})$. The response to presentation $t\in[n_\Boalp]$ is denoted by $Y_\Boalp^{(t)}$, with value one when $i$ is chosen and zero otherwise. Its distribution under the BTL model~\eqref{eq_BTL_basic_scaled_alpha} is $\rP_\mathbf{w}^\Boalp$. The next proposition describes this via the exponential-family representation.
		\begin{proposition}[{\cite[Chapters~2 and~4]{McCullaghGLM}}]
		   \label{observ_Bernoulli_exp_calss}
          For any but fixed $\Boalp$ and $t\in[n_\Boalp]$, suppose that $Y_{\Boalp}^{(t)}\sim \rP_\mathbf{w}^\Boalp$. 
        Then the probability mass function of $Y_{\Boalp}^{(t)}$, evaluated at $y\in\{0,1\}$, can be expressed by
            \begin{equation}
            \label{eq_T_alpha_t_loss_as_Logit}
                f_{Y_\Boalp^{(t)}}(y\mid\mathbf{w}) = \exp\left[y\frac{\mathbf{w}^\top\mathbf{a}_{\Boalp}}{\sigma} - \log\left(1+e^{\frac{\mathbf{w}^\top\mathbf{a}_{\Boalp}}{\sigma}}\right)\right].
            \end{equation}
        Moreover, let $\Psi(\eta) := \log(1+e^\eta)$. Then $\bbe_\mathbf{w}^\Boalp[Y_\Boalp^{(t)}]=\Psi^{\prime}\left(\frac{\mathbf{w}^\top\mathbf{a}_{\Boalp}}{\sigma}\right)$ and $\mathrm{Var}_\mathbf{w}^\Boalp[Y_\Boalp^{(t)}]=\Psi^{\prime\prime}\left(\frac{\mathbf{w}^\top\mathbf{a}_{\Boalp}}{\sigma}\right)$, where $\bbe_\mathbf{w}^\Boalp[Y]$ and $\mathrm{Var}_{\mathbf{w}}^\Boalp[Y]$ denote the expectation and the variance under $\rP_\mathbf{w}^\Boalp$, respectively. 
		\end{proposition}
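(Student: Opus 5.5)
The plan is a direct verification, with two parts: first rewrite the Bernoulli mass function in exponential-family form, then read off the first two moments from derivatives of the log-partition function $\Psi$.

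\emph{The mass function.} Write $\eta:=\mathbf{w}^\top\mathbf{a}_{\Boalp}/\sigma$. By~\eqref{eq_BTL_basic_scaled_alpha}, $p_\Boalp(\mathbf{w})=1/(1+e^{-\eta})=e^{\eta}/(1+e^{\eta})$, and so $1-p_\Boalp(\mathbf{w})=1/(1+e^{\eta})$. For $y\in\{0,1\}$ the Bernoulli mass function is
\begin{equation*}
p_\Boalp(\mathbf{w})^y\bigl(1-p_\Boalp(\mathbf{w})\bigr)^{1-y}=\exp\Bigl[y\log\tfrac{p_\Boalp(\mathbf{w})}{1-p_\Boalp(\mathbf{w})}+\log\bigl(1-p_\Boalp(\mathbf{w})\bigr)\Bigr].
\end{equation*}
The log-odds equal $\eta$, and $\log(1-p_\Boalp(\mathbf{w}))=-\log(1+e^{\eta})$. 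Substituting these gives~\eqref{eq_T_alpha_t_loss_as_Logit}. As a sanity check, I would evaluate at $y=0$ and $y=1$ separately; this recovers $1-p_\Boalp(\mathbf{w})$ and $p_\Boalp(\mathbf{w})$, respectively.

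\emph{The moments.} The mean follows immediately: $\bbe_\mathbf{w}^\Boalp[Y_\Boalp^{(t)}]=p_\Boalp(\mathbf{w})=e^\eta/(1+e^\eta)=\Psi'(\eta)$. For the variance, since $Y\in\{0,1\}$,
\begin{equation*}
\mathrm{Var}_\mathbf{w}^\Boalp[Y_\Boalp^{(t)}]=p_\Boalp(\mathbf{w})\bigl(1-p_\Boalp(\mathbf{w})\bigr)=\frac{e^\eta}{(1+e^\eta)^2}.
\end{equation*}
Differentiating $\Psi'(\eta)=e^\eta/(1+e^\eta)$ once more gives exactly $e^\eta/(1+e^\eta)^2=\Psi''(\eta)$, which completes the identification.

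An alternative route is the standard exponential-family argument: normalization gives $\sum_y e^{y\eta-\Psi(\eta)}=1$, and differentiating this identity in $\eta$ once and twice yields the mean and variance formulas. I would use the direct computation above, since it is shorter.

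There is no genuine obstacle here. The only point needing care is sign bookkeeping in the identity $1-p_\Boalp(\mathbf{w})=(1+e^{\eta})^{-1}$, together with the convention that $y=1$ means alternative $i$ is chosen.
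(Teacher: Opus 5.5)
Your proof is correct. Rewriting $p^y(1-p)^{1-y}$ through the log-odds gives \eqref{eq_T_alpha_t_loss_as_Logit}, and the identities $p_\Boalp(\mathbf{w})=\Psi'(\eta)$ and $p_\Boalp(\mathbf{w})\bigl(1-p_\Boalp(\mathbf{w})\bigr)=e^{\eta}/(1+e^{\eta})^2=\Psi''(\eta)$ are both verified correctly.

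The paper gives no proof of its own and defers to McCullagh--Nelder (Section~2.2.2). There the mean and variance come from the general exponential-family score identities $\mathbb{E}[\partial_\eta \log f]=0$ and $\mathbb{E}[\partial_\eta^2 \log f]+\mathbb{E}[(\partial_\eta \log f)^2]=0$, which is exactly the alternative route you mention. Your direct Bernoulli computation is more elementary and fully self-contained. The cited route is the one that carries over unchanged to any canonical-link GLM, which the paper relies on when it remarks that its minimax argument extends to such models.
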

        The proof can be found in, e.g.,~\cite[Chapter 2.2.2]{McCullaghGLM}.
        $\Psi(\eta)$ is also known as the cumulant function, as its derivatives determine the moment statistics of $Y_\Boalp^{(t)}$, 
        with $|\Psi^{(3)}(\eta)|\leq \Psi^{\prime\prime}(\eta)$, $|\Psi^{(4)}(\eta)|\leq \Psi^{\prime\prime}(\eta),\,\forall\,\eta\in\mathbb{R}$.
        Equation~\eqref{eq_T_alpha_t_loss_as_Logit} has the canonical logistic regression form given a single sample-covariate pair $(Y_\Boalp^{(t)},\frac{\mathbf{a}_{\Boalp}}{\sigma})$, and the structure of the collection of such pairs may be conveniently described by the notion of comparison graph (see e.g.~\cite{ShahEstPariCompGrapTop, BongBTL_Ranking_MLEerror, LeeMinimaxGenPairComp, ZhuRLHFBTLMLEError}).

    \begin{definition}[Comparison Graph]
        \label{def_comparGraph}  
        Consider an undirected weighted graph $\mathcal{G}([d],\mathcal{E}_s,n_\Boalp)$ with vertex set $[d]$, edge set $\mathcal{E}_s\subseteq \mathbb{I}$, and edge weight $n_{\Boalp}$. It is called a {\em comparison graph} because
        any two distinct vertices $i$ and $j$ are adjacent iff the pair $\Boalp=(i,j)$ is presented to the DM at least once. 
        The edge weight is equal to the number of times the corresponding query is presented to the DM.
    \end{definition}
        With the notion of comparison graph, we are ready to make an assumption that ensures that model~\eqref{eq_T_alpha_t_loss_as_Logit} is correctly specified.
         \begin{assumption}[Well Specified Model]
			\label{assumpt1_well_Logit}
                There exists a ground-truth parameter $\mathbf{w}^{\star}\in\mathbb{R}^p$ satisfying $\mathbf{A}\mathbf{w}^\star=\mathbf{b}$,
                such that $Y_\Boalp^{(t)}\sim\rP_{\mathbf{w}^\star}^\Boalp$ for every $\Boalp\in\mathcal{E}_s$, $t\in[n_\Boalp]$.
		\end{assumption} 
Assumption~\ref{assumpt1_well_Logit} specifies the population-level response law. Testing whether the observed choices are compatible with the model is beyond the scope of this work. Ruan et al.~\cite[Theorem~1, Section~5]{RuanNonparaMDM} formulate the representability of 
a marginal distribution model (including the BTL model as a special case) as a linear feasibility problem.
Applying such verification requires the choice probabilities or suitable estimates. Without repeated presentations of the same query, empirical choice frequencies provide no reliable estimate of the corresponding choice probabilities~\cite[Chapter 4.4.3]{McCullaghGLM}. 
        \begin{assumption}[Independent Samples]
            \label{assump_indepent_sample}
            For a fixed edge set $\mathcal{E}_s$ and the edge weights $\{n_\Boalp\}_{\Boalp\in\E_s}$ of the comparison graph defined as in Definition~\ref{def_comparGraph}, the 
            responses $\big\{Y_\Boalp^{(t)}\big\}_{\Boalp\in\mathcal{E}_s}^{t\in[n_\Boalp]}$ 
            are mutually independent, and for each fixed $\Boalp\in\E_s$, $\{Y_\Boalp^{(t)}\}_{t\in [n_\Boalp]}$ are i.i.d.
        \end{assumption}
        The next assumption states that log odds of $p_\Boalp(\mathbf{w}^\star)$, i.e., $\frac{\mathbf{a}_{\Boalp}^\top\mathbf{w}^\star}{\sigma}$ is uniformly bounded for every queried pair. 
        \begin{assumption}[Bounded Dynamic Range]
		 \label{auumpt1self_concor}
       There exists a positive constant $B>1$ such that the ground truth parameter $\mathbf{w}^\star$ has a dynamic range bounded by $\log B$ over the set of selected queries, i.e., 
          \begin{equation}
            \label{eq_assump_bound_dymic_by_logB}
            \max_{\Boalp\in\mathcal{E}_s}\big|\mathbf{a}_{\Boalp}^\top\mathbf{w}^\star/\sigma|\leq \log B<\infty.
          \end{equation}
		\end{assumption}
         The assumption follows from the bounded dynamic range condition used in ranking~\cite[Section~2.1]{ShahEstPariCompGrapTop}~\cite[Section~2.1]{YuxinSpectralMLETopK}.
         When $B\downarrow1$, the queried choice probabilities approach $\frac{1}{2}$,
       meaning that the DM becomes indifferent 
       throughout the comparisons. 
       On the other hand, as $B\to\infty$, the bound~\eqref{eq_assump_bound_dymic_by_logB} allows choice probabilities to approach zero or one. 
Assumption~\ref{auumpt1self_concor} is also needed for uniform statistical guarantees, i.e., by Proposition~\ref{prop_lim_B_to_infty_minimax_risk_Euclide_to_infty}, the minimax lower bound may go to infinity without Assumption~\ref{auumpt1self_concor}.
We therefore define
			\begin{equation}
            \label{eq_w_para_bounded_set}
				\mathcal{W}_B :=\left\{ \mathbf{w}\in\mathbb{R}^p\,\big|\, \mathbf{Aw}=\mathbf{b},\,\big|\mathbf{a}_{\Boalp}^\top\mathbf{w}/\sigma\big|\leq \log B,\,\forall\,\Boalp\in\mathcal{E}_s \right\}.
			\end{equation} 
    Proposition~\ref{prop_bound_dymic_range} below is a standard consequence of Assumption~\ref{auumpt1self_concor}, we restate it here for completeness. It shows that $\mathrm{Var}_{\mathbf{w}}^\Boalp(Y_\Boalp^{(t)})$ is uniformly bounded away from zero over $\mathcal{W}_B$. 
        \begin{proposition}[Non-degenerate Variance]
			\label{prop_bound_dymic_range}
            Under Assumption \ref{auumpt1self_concor}, 
            \begin{equation}
                \frac{1}{4B}\leq \mathrm{Var}_{\mathbf{w}}^\Boalp(Y_\Boalp^{(t)}) \leq \frac{1}{4},\,\forall\,\mathbf{w}\in \mathcal{W}_B,\,\Boalp\in\mathcal{E}_s,\,t\in[n_\Boalp].
            \end{equation}
           
		\end{proposition}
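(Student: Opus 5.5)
By Proposition~\ref{observ_Bernoulli_exp_calss}, the variance of $Y_\Boalp^{(t)}$ under $\rP_{\mathbf{w}}^\Boalp$ is $\Psi''(\eta)$ evaluated at $\eta=\mathbf{w}^\top\mathbf{a}_\Boalp/\sigma$. The plan is therefore to reduce the claim to a one-dimensional statement about $\Psi''$ on the interval $[-\log B,\log B]$. Writing $s(\eta)=1/(1+e^{-\eta})$, we have $\Psi''(\eta)=s(\eta)(1-s(\eta))=e^{\eta}/(1+e^{\eta})^2$. For any $\mathbf{w}\in\mathcal{W}_B$ and any $\Boalp\in\mathcal{E}_s$, the definition \eqref{eq_w_para_bounded_set} guarantees $|\eta|\le\log B$. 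Since the $Y_\Boalp^{(t)}$ for different $t$ share the same law, the index $t$ plays no role.

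For the upper bound, note that $s(1-s)\le 1/4$ for every $s\in[0,1]$ by AM--GM, with equality at $\eta=0$. This gives $\mathrm{Var}\le 1/4$ everywhere, not just on $\mathcal{W}_B$.

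For the lower bound, the key steps are as follows.
\begin{itemize}
\item $\Psi''$ is even, since $e^{-\eta}/(1+e^{-\eta})^2=e^{\eta}/(1+e^{\eta})^2$ after multiplying numerator and denominator by $e^{2\eta}$.
\item $\Psi''$ is decreasing on $[0,\infty)$, because $\Psi'''(\eta)=s(1-s)(1-2s)\le 0$ when $s\ge 1/2$.
\item Hence on $|\eta|\le\log B$ the minimum is attained at $\eta=\log B$:
\[
\Psi''(\log B)=\frac{B}{(1+B)^2}.
\]
\item Finally, $B>1$ implies $(1+B)^2\le (2B)^2=4B^2$, so
\[
\frac{B}{(1+B)^2}\ge \frac{B}{4B^2}=\frac{1}{4B}.
\]
\end{itemize}

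No step is a real obstacle. The only point needing care is the last inequality $(1+B)^2\le 4B^2$, which uses $B\ge 1$ from Assumption~\ref{auumpt1self_concor}.
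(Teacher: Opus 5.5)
Your proof is correct and is essentially the standard argument the paper relies on: the paper omits the proof and defers to the cited references, which also reduce to $\Psi^{\prime\prime}(\eta)=e^{\eta}/(1+e^{\eta})^2$ on $|\eta|\le\log B$. They use evenness and monotonicity to locate the minimum $B/(1+B)^2\ge 1/(4B)$ (using $B>1$), and $s(1-s)\le 1/4$ for the upper bound.
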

        We omit the proof as it follows directly by mimicking~\cite[Appendix~A.2]{ShahEstPariCompGrapTop} or~\cite[Lemma 13]{YangCongTopKMonoAd}.
   \subsection{Identifiability and Likelihood Function}
   \label{subsec_model_assumps_and_identib}
   This section serves a twofold purpose. First, we remove the equality constraints by reparameterization and establish the identifiability condition of the general model introduced above through the joint structure of the features, constraints, and comparison graph. We then deduce the likelihood function and express the same condition equivalently by positive definiteness of an approximation of the FIM.
    \subsubsection{Identifiability}
        Let $r_A:=\mathrm{rank}(\mathbf{A})$, and let $\mathbf{V}_{A^{\perp}}\in\mathbb{R}^{p\times (p-r_A)}$ have orthonormal columns that span $\mathrm{Null}(\mathbf{A})$. With $\mathbf{w}_o :=\mathbf{A}^\dagger\mathbf{b}$ being the minimum-norm solution of $\mathbf{Aw}=\mathbf{b}$, every feasible $\mathbf{w}$ has the representation
		\begin{equation}
		  \label{eq_transfrom_w_to_theta}
		  \mathbf{w}=\mathbf{w}_o + \mathbf{V}_{A^\perp}\Bothe,\quad\Bothe\in\mathbb{R}^{d_\theta},\quad d_\theta := p-r_A.
		\end{equation}
        Since $\mathbf{V}_{A^{\perp}}$ has orthogonal columns, the map $\boldsymbol{\theta}\mapsto \mathbf{w}_o + \mathbf{V}_{A^{\perp}}\boldsymbol{\theta}$ is injective.
        Therefore $\mathcal{W}_B$ induces the equivalent parameter space
		\begin{equation}
        \label{eq_def_of_theta_B_set}
			\Theta_B :=\left\{ \boldsymbol{\theta}\in\mathbb{R}^{d_{\theta}}\,\,\big|\,\, \big|\mathbf{a}_{\Boalp}^\top(\mathbf{w}_o + \mathbf{V}_{A^{\perp}}\Bothe)/\sigma\big|\leq \log B,\,\forall\,\Boalp\in\mathcal{E}_s \right\}.
		\end{equation}
        \begin{assumption}
		\label{assump_gap_wp_w_B_r0}
            Let $B$ be given as in Assumption~\ref{auumpt1self_concor} and $B_0$ be a positive constant such that  $\max_{\Boalp\in\mathcal{E}_s}\big|\mathbf{w}_o^\top\mathbf{a}_{\Boalp}/\sigma\big|= \log(B_0)$. Assume that $B > B_0$.
		\end{assumption}
        By Assumption~\ref{assumpt1_well_Logit}, we may denote by $\BoTheS\in\Theta_B$ the reparameterized ground truth satisfying $\mathbf{w}^\star = \mathbf{w}_o+\mathbf{V}_{A^{\perp}}\BoTheS$.
       For the pairwise ranking model, setting $\mathbf{A}=\mathbf{1}^\top$ and $\mathbf{b}=0$ gives rise to $\mathbf{w}_o=\mathbf{0}$ and makes the columns of $\mathbf{V}_{A^\perp}$ an orthonormal basis of $\{\mathbf{1}\}^\perp$, Assumption~\ref{assump_gap_wp_w_B_r0} then follows from $B>1$. The self-centering constraint removes the translation ambiguity inherent in the ranking model, otherwise, the vectors $\mathbf{w}$ and $\mathbf{w}+c\mathbf{1}$ induce the same choice probability for every constant $c$, see~\cite{LeeMinimaxGenPairComp, ShahEstPariCompGrapTop}. An analogous result holds under the general setup in this paper. Its proof is deferred to Appendix~\ref{proof_prop_indentifibility_complete_obv}.
        \begin{proposition}[Identifiability]
        \label{prop_indentifibility_complete_obv}
             Assume that the comparison graph $\mathcal{G}([d],\mathcal{E}_s,n_\Boalp)$ has $K$ connected components, 
             i.e., there exist subsets $\mathcal{C}_k\subseteq[d]$, for $k\in[K]$ such that $[d] = \bigcup_{k\in[K]}\mathcal{C}_k$ and $\mathcal{C}_k\cap \mathcal{C}_l =\varnothing$ for any $k\neq l$.
             Let $\mathbf{C}_I:=[\mathds{1}_{\mathcal{C}_1},\dots,\mathds{1}_{\mathcal{C}_K}]\in\mathbb{R}^{d\times K}$ where the $k-$th column of $\mathbf{C}_I$ is the indicator vector of $\mathcal{C}_k$, 
             i.e., 
                $\left[\mathds{1}_{\mathcal{C}_k}\right]_i =\begin{cases}
                     1,\,&\text{ for } i\in {\mathcal{C}_k}\\
                     0,\,&\text{ for } i\notin {\mathcal{C}_k}
                 \end{cases}.
         $
            Then the following two statements are equivalent.
             \begin{itemize}
                 \item[(i)] 
                 For any $\mathbf{w},\,\mathbf{w}^\prime\in\{\mathbf{w}\in\mathbb{R}^p\mid\mathbf{Aw}=\mathbf{b}\}$, $\{p_\Boalp(\mathbf{w}) = p_\Boalp(\mathbf{w}^\prime)\}_{\Boalp\in\E_s}$ implies $\mathbf{w}=\mathbf{w}^\prime$.

                 \item[(ii)] The joint rank condition holds, i.e.,
                        \begin{equation}
                            \label{eq_identifiable_rank_constraint}
                            \mathrm{rank}\left([\mathbf{C}_I,\mathfrak{A}\mathbf{V}_{A^\perp}]\right) = d_\theta+K.
                        \end{equation}
             \end{itemize}
        \end{proposition}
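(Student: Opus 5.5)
Since $\eta\mapsto 1/(1+e^{-\eta})$ is strictly increasing, I would first reduce statement (i) to a linear-algebra statement. For feasible $\mathbf{w},\mathbf{w}'$ we have $p_\Boalp(\mathbf{w})=p_\Boalp(\mathbf{w}')$ if and only if $\mathbf{a}_\Boalp^\top(\mathbf{w}-\mathbf{w}')=0$. By \eqref{eq_transfrom_w_to_theta}, feasible differences are exactly $\mathbf{w}-\mathbf{w}'=\mathbf{V}_{A^\perp}\boldsymbol{\delta}$ with $\boldsymbol{\delta}\in\mathbb{R}^{d_\theta}$, and every such $\boldsymbol{\delta}$ arises, since $\mathbf{w}_o$ and $\mathbf{w}_o+\mathbf{V}_{A^\perp}\boldsymbol{\delta}$ are both feasible. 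So (i) is equivalent to:
\[
(\mathbf{e}_i-\mathbf{e}_j)^\top\mathfrak{A}\mathbf{V}_{A^\perp}\boldsymbol{\delta}=0\ \ \forall (i,j)\in\E_s \;\Longrightarrow\; \boldsymbol{\delta}=\mathbf{0}.
\]
In words, (i) holds if and only if the map $\boldsymbol{\delta}\mapsto\mathbf{u}:=\mathfrak{A}\mathbf{V}_{A^\perp}\boldsymbol{\delta}\in\mathbb{R}^d$ is injective on the set of $\boldsymbol{\delta}$ whose image $\mathbf{u}$ is constant across every edge of the comparison graph.

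The second step is the graph fact: a vector $\mathbf{u}$ satisfies $u_i=u_j$ for all $(i,j)\in\E_s$ if and only if $\mathbf{u}$ is constant on each connected component, that is, $\mathbf{u}\in\mathrm{range}(\mathbf{C}_I)$. The "if" direction is immediate. For "only if", propagate the equality along paths within each component. Using this, (i) becomes
\[
\mathfrak{A}\mathbf{V}_{A^\perp}\boldsymbol{\delta}=\mathbf{C}_I\mathbf{c}\ \text{for some }\mathbf{c}\in\mathbb{R}^K \;\Longrightarrow\;\boldsymbol{\delta}=\mathbf{0}.
\]

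The final step links this to the rank condition. The columns of $\mathbf{C}_I$ are indicators of disjoint nonempty sets, so they are linearly independent and $\mathrm{rank}(\mathbf{C}_I)=K$. The matrix $[\mathbf{C}_I,\mathfrak{A}\mathbf{V}_{A^\perp}]$ has $K+d_\theta$ columns, so \eqref{eq_identifiable_rank_constraint} is equivalent to its kernel being trivial: $\mathbf{C}_I\mathbf{c}+\mathfrak{A}\mathbf{V}_{A^\perp}\boldsymbol{\delta}=\mathbf{0}$ forces $(\mathbf{c},\boldsymbol{\delta})=\mathbf{0}$.

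For (ii)$\Rightarrow$(i), take $\boldsymbol{\delta}$ with $\mathfrak{A}\mathbf{V}_{A^\perp}\boldsymbol{\delta}=\mathbf{C}_I\mathbf{c}$. Then $(-\mathbf{c},\boldsymbol{\delta})$ lies in the kernel, so $\boldsymbol{\delta}=\mathbf{0}$.

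For (i)$\Rightarrow$(ii), take $(\mathbf{c},\boldsymbol{\delta})$ in the kernel. Then $\mathfrak{A}\mathbf{V}_{A^\perp}\boldsymbol{\delta}=\mathbf{C}_I(-\mathbf{c})$, so (i) gives $\boldsymbol{\delta}=\mathbf{0}$. This leaves $\mathbf{C}_I\mathbf{c}=\mathbf{0}$, and the independence of the columns of $\mathbf{C}_I$ gives $\mathbf{c}=\mathbf{0}$.

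None of these steps is a real obstacle. The only points needing care are that both feasible points can be chosen to realize an arbitrary $\boldsymbol{\delta}$, and that the rank of $\mathbf{C}_I$ is exactly $K$ because the components are nonempty and disjoint. Together these let the kernel argument match the count $d_\theta+K$ exactly.
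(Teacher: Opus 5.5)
Your proposal is correct and follows essentially the same route as the paper's proof. Strict monotonicity of the logistic link reduces (i) to a linear condition on $\mathfrak{A}\mathbf{V}_{A^\perp}(\boldsymbol{\theta}-\boldsymbol{\theta}')$, edge-wise equality is recast as membership in $\mathrm{span}(\mathbf{C}_I)$, and both directions are read off the kernel of $[\mathbf{C}_I,\mathfrak{A}\mathbf{V}_{A^\perp}]$ using the independence of the columns of $\mathbf{C}_I$; the paper proves (i)$\Rightarrow$(ii) by contrapositive, while you prove it directly. One small slip: your ``in words'' gloss (injectivity of $\boldsymbol{\delta}\mapsto\mathfrak{A}\mathbf{V}_{A^\perp}\boldsymbol{\delta}$ on that set) is strictly weaker than your displayed implication, but the displayed implication is the correct reformulation and is the one your argument actually uses.
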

        Consequently, under Assumptions~\ref{assumpt1_well_Logit} and~\ref{auumpt1self_concor}, condition~\eqref{eq_identifiable_rank_constraint} uniquely identifies the ground-truth parameter $\mathbf{w}^\star\in\mathcal{W}_B$ from the pairwise choice probabilities $\{p_\Boalp(\mathbf{w}^\star)\}_{\Boalp\in\E_s}$.
        Part (ii) of the proposition, partly inspired by \cite[Proposition~1]{FanRankingWithCovar}, 
        provides a verifiable condition for identifiability, and is equivalent to the positive definiteness of the design matrix, as shown in Corollary~\ref{coro_from_identifibality} below. Unlike classical ranking models~\cite{ShahEstPariCompGrapTop, HajekOhMimaxInfPartilRank, YuxinSpectralMLETopK, YangCongTopKMonoAd}, the general linear utility model in Definition~\ref{def_para_utility_func} can remain identifiable on a disconnected comparison graph as the utility parameters are shared across alternatives.
        Examples~\ref{ex_identify_disr_choice_model} and~\ref{ex_cov_assisted_ranking_2} illustrate the disconnected and connected cases. 
        \begin{example}[Discrete Choice Model]
        \label{ex_identify_disr_choice_model}
            This example corresponds to the unconstrained case, where $r_A=0$, $\mathbf{V}_{A^\perp}=\mathbf{I}_p$, and $\mathbf{w}_o=\mathbf{0}$. For instance, let $n=p=d_\theta=2$, $d=4$, and the comparison graph have two connected components $\mathcal{C}_1=\{1,2\}$ and $\mathcal{C}_2=\{3,4\}$. Let $\mathbf{x}_1=[0,0]^\top,\,\,\mathbf{x}_2=[1,1]^\top,\,\,\mathbf{x}_3=[0,1]^\top,\,\,\mathbf{x}_4=[1,0]^\top$, and $\mathfrak{A}=[\mathbf{x}_1,\,\mathbf{x}_2,\,\mathbf{x}_3,\,\mathbf{x}_4]^\top$. Then the matrix
            \begin{equation*}
                [\mathds{1} _{\mathcal{C}_1},\mathds{1}_{\mathcal{C}_2},\mathfrak{A}] = \begin{bmatrix}
                    1&0&0&0\\
                    1&0&1&1\\
                    0&1&0&1\\
                    0&1&1&0
                \end{bmatrix},      
            \end{equation*}
            has rank $4=d_\theta+K$, where $K=2$ is the number of connected components.   
        \end{example}
        \begin{example}[Covariate Assisted Ranking]
        \label{ex_cov_assisted_ranking_2}
            In~\cite[Proposition~1]{FanRankingWithCovar}, the authors show that, if the comparison graph is further assumed to be connected, then the equality constraint in Example~\ref{ex_cov_assisted_ranking_1} always ensures identifiability. To see this, let $d=4$, $n=1$, $K=1$, and take the non-binary covariates as $x_1=0,\, x_2=7,\, x_3=2,\, x_4=1$. Then the feature matrix $\mathfrak{A}$ and linear constraint matrix $\mathbf{A}$ are
            \begin{equation*}
                \mathfrak{A}=\begin{bmatrix}
                    1&0&0&0&0\\
                    0&1&0&0&7\\
                    0&0&1&0&2\\
                    0&0&0&1&1
                \end{bmatrix},\,
                \mathbf{A} = \begin{bmatrix}
                    1&1&1&1&0\\
                    0&7&2&1&0
                \end{bmatrix},
            \end{equation*}
            which gives $\mathrm{rank}(\mathbf{A})=2$, $p=d+n=5$, and $d_\theta=3$.
            We may use any basis of $\mathrm{Null}(\mathbf{A})$ in place of the orthonormal basis $\mathbf{V}_{A^\perp}$ for the rank check, e.g.,
            \begin{equation*}
                \mathbf{v}_1=[-6, -1, 0, 7, 0]^\top,\,\mathbf{v}_2=[-5, -2, 7, 0, 0]^\top,\,\mathbf{v}_3=[0,0,0,0,1]^\top,
            \end{equation*}
            and therefore
            \begin{equation*}
                [\mathbf{1},\,\mathfrak{A} [\mathbf{v}_1,\,\mathbf{v}_2,\,\mathbf{v}_3] ] = \begin{bmatrix}
                    1& -6& -5&0\\
                    1&-1& -2& 7\\
                    1& 0& 7& 2\\
                    1& 7& 0& 1
                \end{bmatrix},       
            \end{equation*}
            has rank $4=d_\theta+K =3+1$.
        \end{example}
    \subsubsection{Likelihood Function}
    \label{subsubsec_likelihood_func}
    We now derive the MLE of $\BoTheS$ under the above reparameterization. Let $N:=\sum_{\Boalp\in\mathcal{E}_s}n_{\Boalp}$ denote the total number of comparisons. 
    For the fixed deterministic design $(\mathcal{E}_s,\{n_\Boalp\}_{\Boalp\in\mathcal{E}_s})$, define the full sample space $(\mathcal{Y}_N,\mathscr{Y}_N)$, where
        \begin{equation*}
            \mathcal{Y}_N : = \prod_{\Boalp\in\mathcal{E}_s} \prod_{t=1}^{n_\Boalp} \mathcal{Y}_\Boalp,\quad \mathscr{Y}_N :
            = \bigotimes_{\Boalp\in\mathcal{E}_s} \bigotimes_{t=1}^{n_\Boalp} \mathscr{Y}_\Boalp.
        \end{equation*}
    The canonical sample variable $Y^{(N)}:\cal{Y}_N\to \cal{Y}_N$ consists of the coordinate projections $Y_\Boalp^{(t)}:\cal{Y}_N \to \cal{Y}_\Boalp$ and is identified with an element in $\{0,1\}^N$ under a fixed ordering of the index set.
    Let $\mathscr{P}(\mathcal{Y}_N,\mathscr{Y}_N)$ denote the set of all probability measures over $(\mathcal{Y}_N,\mathscr{Y}_N)$. For each $\Bothe\in\mathbb{R}^{d_\theta}$ and $\Boalp\in\E_s$, define $\rP_{\Bothe}^{\Boalp} := \rP_{\mathbf{w}_o + \mathbf{V}_{A^\perp}\Bothe}^\Boalp$.
    Under Assumption~\ref{assump_indepent_sample}, the joint distribution of $Y^{(N)}$ is the product of the marginals $\rP_{\Bothe}^{\alpha}$ 
        \begin{equation}
            \label{eq_joint_distribution_law_of_samples}\mathbb{P}_\Bothe^{(N)}:=\bigotimes_{\Boalp\in\mathcal{E}_s}\left(\rP_{\Bothe}^{\alpha}\right)^{\otimes n_{\Boalp}} \in \mathscr{P}(\mathcal{Y}_N,\mathscr{Y}_N).
        \end{equation}
        Any measurable statistic $\hat{T}=\hat{T}(Y^{(N)})$ on $(\mathcal{Y}_N,\mathscr{Y}_N)$ is a random element under $\bbp_{\Bothe}^{(N)}$, whose explicit dependence on $Y^{(N)}$ is suppressed whenever there is no confusion in context.
         For an integrable random variable $Z$ on $(\mathcal{Y}_N,\mathscr{Y}_N)$, let $\bbe_{\Bothe}^{(N)}[Z] : = \int_{\mathcal{Y}_N} Z(\boldsymbol{y}) \mathbb{P}_\Bothe^{(N)}(\mathrm{d} \boldsymbol{y})$, and we use $\mathrm{Var}_{\Bothe}^{(N)}(\cdot)$ and $\mathrm{Cov}_{\Bothe}^{(N)}(\cdot)$ analogously. 
         
       Under Assumptions~\ref{assumpt1_well_Logit} and~\ref{assump_indepent_sample}, it follows by Proposition~\ref{observ_Bernoulli_exp_calss} that the likelihood function of $Y^{(N)}$ given parameter $\mathbf{w}$ is
        \begin{equation}
            f_{Y^{(N)}}(Y^{(N)}\mid \mathbf{w}) = 
            \label{eq_Y_N_likelihood}
            \prod_{\Boalp\in\mathcal{E}_s}\prod_{t=1}^{n_\Boalp}\exp\left[Y_{\Boalp}^{(t)}\frac{\mathbf{w}^\top\mathbf{a}_{\Boalp}}{\sigma} - \log\left(1+\exp\left(\frac{\mathbf{w}^\top\mathbf{a}_{\Boalp}}{\sigma}\right)\right)\right].
        \end{equation}
     The reparameterized negative log-likelihood can then be written as
    \begin{align}
        \ell(\Bothe)&:=-\frac{1}{N}\log\left(f_{Y^{(N)}}\left(Y^{(N)}\mid \mathbf{w}_o + \mathbf{V}_{A^\perp}\Bothe\right)\right)\nonumber\\
        \label{eq_def_of_ell_theta}
        &=\frac{1}{N}\sum_{\Boalp\in\mathcal{E}_s} \left[ -Y_\Boalp\left(\frac{\mathbf{w}_o^\top\mathbf{a}_{\Boalp}+\Bothe^\top\mathbf{V}_{A^\perp}^\top\mathbf{a}_{\Boalp}}{\sigma} \right) + n_\Boalp\Psi\left(\frac{\mathbf{w}_o^\top\mathbf{a}_{\Boalp}+\Bothe^\top\mathbf{V}_{A^\perp}^\top\mathbf{a}_{\Boalp}}{\sigma}\right)\right],
    \end{align}
    where $Y_{\Boalp}:=\sum_{t=1}^{n_{\Boalp}}Y_{\Boalp}^{(t)}$. $\{Y_{\Boalp}\}_{\Boalp\in\E_s}$ is a sufficient statistic under the model specified above and $\Psi(\cdot)$ is defined as in Proposition~\ref{observ_Bernoulli_exp_calss}.
        \begin{definition}[MLE and Firth Correction]
            \label{def_MLE_func}
            Under the identifiability condition~\eqref{eq_identifiable_rank_constraint}, the MLE is the unique solution of the unconstrained convex program
            \begin{equation}
                \label{eq_MLE_def_as_minimizer}
                \hat{\Bothe}\in\arg\min_{\Bothe\in\mathbb{R}^{d_\theta}}\ell(\Bothe),
            \end{equation}
            provided that it exists. The corresponding estimator in the original parameterization is recovered with $\hat{\mathbf{w}} = \mathbf{w}_o + \mathbf{V}_{A^\perp}\hat{\Bothe}$.
 When $\ell(\Bothe)$ is regularized by the Jeffreys' invariant prior,
\begin{equation}
\label{eq_Firth_correction}
\hat{\Bothe}_{\mathrm{Firth}}\in\arg\min_{\Bothe\in\mathbb{R}^{d_\theta}}\ell(\Bothe) - \frac{1}{2N}\log\det(\nabla^2\ell(\Bothe)),
\end{equation}
 is called the Firth correction~\cite{FirthCorrect93}. 
    \end{definition}
    Note that MLE may not exist in that 
    $\ell(\Bothe)$ is not coercive due to the separation in the realization of response data~\cite{ALBERTExistMLELogit}. In contrast, since the regularization term in~\eqref{eq_Firth_correction} goes to infinity when $\Vert\Bothe\Vert_\infty\to\infty$,
    the existence of $\hat{\Bothe}_{\mathrm{Firth}}$ is always guaranteed under the identifiability condition~\cite[Theorem~1]{KosmidisFirth21}. In Sections~\ref{sec_minimax_LB} and~\ref{MLE_upper_bound} we will derive error bounds of MLE under various metrics, and use Firth correction for numerical comparison in Section~\ref{sec_numerical_exp}.

    For notational simplicity, define
    \begin{equation}
    \label{eq_definition_q_alp}
        \mathbf{q}_{\Boalp} := \mathbf{V}_{A^\perp}^{\top}\mathbf{a}_{\Boalp}, \quad \eta_{\Boalp}(\Bothe) :=  \frac{\mathbf{w}_o^{\top}\mathbf{a}_{\Boalp}+\mathbf{q}_{\Boalp}^{\top}\Bothe}{\sigma}, \quad \Boalp\in\mathcal{E}_s,
    \end{equation}
where $\eta_{\Boalp}(\Bothe)$ denotes the log odds of choice probability $p_\Boalp(\mathbf{w}_o+\mathbf{V}_{A^\perp}\Bothe)$.
        By the chain rule
		\begin{subequations}
        \label{eq_gradAndHess_mle_in_theta}
        \begin{align}
            \label{eq_grad_mle_in_theta}
            \nabla\ell(\Bothe)&=\frac{1}{N\sigma}\sum_{\Boalp\in\mathcal{E}_s}\left[-Y_{\Boalp}+ n_{\Boalp}\Psi^\prime\bigl(\eta_{\Boalp}(\Bothe)\bigr)\right]\mathbf{q}_{\Boalp},\\
            \label{eq_hess_mle_in_theta}
            \nabla^2\ell(\Bothe)&=\frac{1}{N\sigma^2}\sum_{\Boalp\in\mathcal{E}_s}n_{\Boalp}\Psi^{\prime\prime}\bigl(\eta_{\Boalp}(\Bothe)\bigr)\mathbf{q}_{\Boalp}\mathbf{q}_{\Boalp}^{\top}.
        \end{align}
        \end{subequations}
        Equation~\eqref{eq_hess_mle_in_theta} shows that
        the difference between $\nabla^2\ell(\Bothe)$ and $\nabla^2\ell(\BoTheS)$ is governed by the query-wise predictor deviations $\eta_{\Boalp}(\Bothe)-\eta_{\Boalp}(\BoTheS)=\frac{\mathbf{q}_{\Boalp}^{\top}(\Bothe-\BoTheS)}{\sigma}$. This observation motivates us to develop lower and upper bounds on this quantity, we will come back to this in Theorems~\ref{thm_residual_coherence_minimax} and~\ref{thm_suffic_sample_l2_upper_bound}, respectively. The FIM can be subsequently calculated from~\eqref{eq_hess_mle_in_theta}
        \begin{align}
			\label{eq_FIM_in_theta}
			\mathcal{I}(\boldsymbol{\theta}^\star) &:= \mathbb{E}_{\BoTheS}^{(N)} [\nabla^2 \ell(\boldsymbol{\theta}^\star)] = \frac{1}{\sigma^2}\mathbf{V}_{A^{\perp}}^\top\mathfrak{A}^\top\left[\frac{1}{N}\sum_{\Boalp\in\mathcal{E}_s} n_{\Boalp}\Psi^{\prime\prime}\left(\eta_{\Boalp}(\BoTheS)\right)(\mathbf{e}_i-\mathbf{e}_j)(\mathbf{e}_i-\mathbf{e}_j)^\top\right]\mathfrak{A}\mathbf{V}_{A^{\perp}},
		\end{align} 
        where the second equality follows from~\eqref{eq_definition_q_alp}. Note that the graph Laplacian of the comparison graph $\mathcal{G}([d],\mathcal{E}_s,\{n_\Boalp\})$ is defined as 
        \begin{equation}
            \label{eq_graph_laplace_def}
            \mathbf{L}:=\frac{1}{N}\sum_{\Boalp\in\mathcal{E}_s}n_{\Boalp}(\mathbf{e}_i-\mathbf{e}_j)(\mathbf{e}_i-\mathbf{e}_j)^\top.
        \end{equation}
        By Propositions~\ref{observ_Bernoulli_exp_calss} and~\ref{prop_bound_dymic_range}, it is then straightforward to verify that the bracketed term in~\eqref{eq_FIM_in_theta} is an approximation of $\mathbf{L}$ up to multiplicative constants depending only on $B$, i.e., for each $\Boalp\in\E_s$, $\frac{1}{4B}\leq\Psi^{\prime\prime}(\eta_{\Boalp}(\BoTheS))\leq\frac{1}{4}$ implies that
        \begin{equation}
        \label{eq_upperlower_sorogate_by_L}
            \frac{1}{4B}\mathbf{L}\preceq \frac{1}{N}\sum_{\Boalp\in\mathcal{E}_s} n_{\Boalp}\Psi^{\prime\prime}\left(\eta_{\Boalp}(\BoTheS)\right)(\mathbf{e}_i-\mathbf{e}_j)(\mathbf{e}_i-\mathbf{e}_j)^\top \preceq \frac{1}{4}\mathbf{L}.
        \end{equation}
        For fixed $\E_s$, the empirical frequencies $\frac{Y_\Boalp}{n_\Boalp}$ converge in probability to $p_\Boalp(\mathbf{w}^\star)$ as $n_\Boalp\to\infty$.
        Thus the population-level problem concerns the recovery of parameters from these choice probabilities. 
        The identifiability of population-level MLE is stated as a corollary of Proposition~\ref{prop_indentifibility_complete_obv}. Its proof is deferred to Appendix~\ref{proof_coro_from_identifibality}.
       \begin{corollary}
           \label{coro_from_identifibality}
           Let $\mathbf{V}_{A^\perp}$, $\mathfrak{A}$, and $\mathbf{L}$ be defined as in~\eqref{eq_transfrom_w_to_theta},~\eqref{eq_def_of_U_phi_matrix}, and~\eqref{eq_graph_laplace_def}, respectively. The rank condition~\eqref{eq_identifiable_rank_constraint} holds iff the matrix 
           \begin{equation}
           \label{eq_def_of_matrix_K}
               \mathbf{W}:=\mathbf{V}_{A^{\perp}}^\top\mathfrak{A}^\top \mathbf{L}\mathfrak{A}\mathbf{V}_{A^{\perp}}\in\mathbb{R}^{d_\theta\times d_\theta},
           \end{equation}
           is positive definite. Moreover, if~\eqref{eq_identifiable_rank_constraint} fails, let $\mathbf{Q}_+$ and $\mathbf{Q}_\perp$ have orthonormal columns spanning the range and null space of $\mathbf{W}$, respectively. Augmenting $\mathbf{Aw}=\mathbf{b}$ with $(\mathbf{V}_{A^\perp}\mathbf{Q}_\perp)^\top\mathbf{w}=\mathbf{0}$ yields an identifiable model. Further, for any ground-truth parameter $\mathbf{w}^\star=\mathbf{w}_o+\mathbf{V}_{A^\perp}\BoTheS$ satisfying Assumption \ref{assumpt1_well_Logit}, there exists an identifiable representative
           \begin{equation}
               \label{eq_projected_identify_rep}\tilde{\mathbf{w}}^\star:=\mathbf{w}_o+\mathbf{V}_{A^\perp}\mathbf{Q}_+\mathbf{Q}_+^\top\BoTheS,
           \end{equation}
           such that $\tilde{\mathbf{w}}^\star$ is the unique parameter in the augmented model satisfying $p_\Boalp(\tilde{\mathbf{w}}^\star)=p_\Boalp(\mathbf{w}^\star)$ for every $\Boalp\in \E_s$.
       \end{corollary}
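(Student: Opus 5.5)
The plan is to reduce everything to a kernel computation. Write $\mathbf{M}:=\mathfrak{A}\mathbf{V}_{A^\perp}\in\mathbb{R}^{d\times d_\theta}$, so that $\mathbf{W}=\mathbf{M}^\top\mathbf{L}\mathbf{M}$. Since $\mathbf{L}\succeq\mathbf{0}$, $\mathbf{W}$ is positive definite iff $\mathbf{L}\mathbf{M}\mathbf{v}\neq\mathbf{0}$ for every $\mathbf{v}\neq\mathbf{0}$. This holds because $\mathbf{v}^\top\mathbf{W}\mathbf{v}=\Vert\mathbf{L}^{1/2}\mathbf{M}\mathbf{v}\Vert_2^2$ and $\mathrm{Null}(\mathbf{L}^{1/2})=\mathrm{Null}(\mathbf{L})$.

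The standard graph fact I will use is the following. From $\mathbf{u}^\top\mathbf{L}\mathbf{u}=\frac1N\sum_{\Boalp\in\E_s}n_\Boalp(u_i-u_j)^2$ and $n_\Boalp\geq1$, the null space of $\mathbf{L}$ consists of the vectors constant on each connected component. Hence $\mathrm{Null}(\mathbf{L})=\mathrm{range}(\mathbf{C}_I)$, and $\mathbf{C}_I$ has full column rank $K$ because its columns have disjoint supports. Thus $\mathbf{W}$ fails to be positive definite iff there exist $\mathbf{v}\neq\mathbf{0}$ and $\mathbf{c}$ with $\mathbf{M}\mathbf{v}=\mathbf{C}_I\mathbf{c}$. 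Equivalently, $[\mathbf{C}_I,\mathbf{M}]$ has a nontrivial kernel vector $(-\mathbf{c},\mathbf{v})$ with $\mathbf{v}\neq\mathbf{0}$.

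A kernel vector with $\mathbf{v}=\mathbf{0}$ would force $\mathbf{C}_I\mathbf{c}=\mathbf{0}$, hence $\mathbf{c}=\mathbf{0}$. So the kernel of $[\mathbf{C}_I,\mathbf{M}]$ is trivial, that is, the matrix has rank $d_\theta+K$, iff $\mathbf{W}\succ\mathbf{0}$. This proves the first claim. The only care needed is the step $\mathbf{v}^\top\mathbf{W}\mathbf{v}=0\Rightarrow\mathbf{L}\mathbf{M}\mathbf{v}=\mathbf{0}$, which I will write out via $\mathbf{L}^{1/2}$.

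For the augmented model, the feasible set becomes $\mathbf{w}=\mathbf{w}_o+\mathbf{V}_{A^\perp}\Bothe$ with $\mathbf{Q}_\perp^\top\mathbf{V}_{A^\perp}^\top\mathbf{V}_{A^\perp}\Bothe=\mathbf{Q}_\perp^\top\Bothe=\mathbf{0}$, using $\mathbf{V}_{A^\perp}^\top\mathbf{V}_{A^\perp}=\mathbf{I}$. Two points need checking:
\begin{itemize}
\item $(\mathbf{V}_{A^\perp}\mathbf{Q}_\perp)^\top\mathbf{w}_o=\mathbf{0}$. This holds since $\mathbf{w}_o=\mathbf{A}^\dagger\mathbf{b}\in\mathrm{range}(\mathbf{A}^\top)\perp\mathrm{Null}(\mathbf{A})$.
\item Consequently the augmented feasible set is $\{\mathbf{w}_o+\mathbf{V}_{A^\perp}\mathbf{Q}_+\Boxi\}$, an affine set whose null-space basis is $\mathbf{V}_{A^\perp}\mathbf{Q}_+$, which again has orthonormal columns.
\end{itemize}
The new design matrix is $\mathbf{Q}_+^\top\mathbf{W}\mathbf{Q}_+$, which is positive definite because $\mathbf{Q}_+$ spans the range of the symmetric matrix $\mathbf{W}$. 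Identifiability then follows from the first part together with Proposition~\ref{prop_indentifibility_complete_obv}.

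For the representative, write $\BoTheS=\mathbf{Q}_+\mathbf{Q}_+^\top\BoTheS+\mathbf{Q}_\perp\mathbf{Q}_\perp^\top\BoTheS$. For $\mathbf{z}\in\mathrm{Null}(\mathbf{W})$, the argument above gives $\mathbf{M}\mathbf{z}\in\mathrm{Null}(\mathbf{L})=\mathrm{range}(\mathbf{C}_I)$. Then $\mathbf{M}\mathbf{z}$ is constant on each component, and so $\mathbf{q}_\Boalp^\top\mathbf{z}=(\mathbf{e}_i-\mathbf{e}_j)^\top\mathbf{M}\mathbf{z}=0$ for every $\Boalp=(i,j)\in\E_s$, since both endpoints of an edge lie in one component. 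Hence $\eta_\Boalp$, and therefore $p_\Boalp$, are unchanged when $\BoTheS$ is replaced by $\mathbf{Q}_+\mathbf{Q}_+^\top\BoTheS$. The resulting $\tilde{\mathbf{w}}^\star$ lies in the augmented feasible set, and uniqueness follows from the identifiability of the augmented model just established.
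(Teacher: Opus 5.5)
Your proposal is correct and follows essentially the same route as the paper: it identifies $\mathrm{Null}(\mathbf{W})=\{\Bothe\mid \mathfrak{A}\mathbf{V}_{A^\perp}\Bothe\in\mathrm{span}(\mathbf{C}_I)\}$ via $\mathrm{Null}(\mathbf{L})=\mathrm{span}(\mathbf{C}_I)$, passes to the augmented model with design matrix $\mathbf{Q}_+^\top\mathbf{W}\mathbf{Q}_+\succ\mathbf{0}$, and uses $\mathbf{Q}_\perp\mathbf{Q}_\perp^\top\BoTheS\in\mathrm{Null}(\mathbf{W})$ to show the queried probabilities are preserved. Your explicit check that $(\mathbf{V}_{A^\perp}\mathbf{Q}_\perp)^\top\mathbf{w}_o=\mathbf{0}$, so that the augmented feasible set is exactly $\mathbf{w}_o+\mathbf{V}_{A^\perp}\mathbf{Q}_+\Boxi$, fills in a detail the paper leaves implicit, and deriving uniqueness from the augmented model's identifiability is equivalent to the paper's $\mathrm{Null}(\mathbf{W})\cap\mathrm{Range}(\mathbf{W})=\{\mathbf{0}\}$ step.
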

        Equation~\eqref{eq_projected_identify_rep} projects $\BoTheS$ onto the span of the selected query directions without changing the queried choice probabilities. 
        Unlike the classical ranking models, however, the comparison graph enters the information geometry through both the feature matrix $\mathfrak{A}$ and the null space of $\mathbf{A}$, so that connectivity alone is no longer decisive. For fixed $\mathfrak{A}$ and $\mathbf{A}$, the selected edges and their multiplicities determine $\mathbf{W}$, which approximates the FIM through~\eqref{eq_FIM_in_theta}-\eqref{eq_upperlower_sorogate_by_L}. In Sections~\ref{sec_minimax_LB} and~\ref{MLE_upper_bound}, we will describe this observation with finite-sample theory. 
		\section{Lower Bounds}
        \label{sec_minimax_LB}
        Classical likelihood theory states that the MLE is asymptotically normal, with covariance $\frac{\mathcal{I}(\BoTheS)^{-1}}{N}$ to first order~\cite[Chapters~5,~7]{AsympvanDerVaart} under regularity and non-singularity conditions, but does not specify the finite-sample accuracy of this approximation.
        In this section, we address part of these limitations from two complementary perspectives. In Proposition~\ref{prop_the_CRLB}, we derive the Cramér–Rao inequality for locally unbiased estimators at a fixed ground-truth parameter.
        Theorems~\ref{thm_minimax_loewe_bound_tr_inv} and~\ref{thm_residual_coherence_minimax} then establish minimax lower bounds over $\Theta_B$ for estimators with finite worst-case risk. Together, these results identify the information geometry relevant to the local estimation and worst-case finite-sample difficulty.
        We work in the reparameterized model\footnote{The reparameterization removes the linear equality constraints and avoids working with the relative interior of $\mathcal{W}_B$. 
        The same reduction applies to the minimax risk.}. 
        \subsection{The Cram\'{e}r-Rao Lower Bound}
        We begin with a classic local
        result stated for a fixed ground-truth parameter. Fix $q\in\mathbb{Z}_+$ with $q\leq d_\theta$, and consider a continuously differentiable vector-valued mapping $h:\Theta_B\to \mathbb{R}^q$. An estimator of $h(\Bothe)$ is a measurable mapping $\hat{\boldsymbol{\psi}}:(\mathcal{Y}_N,\mathscr{Y}_N)\to (\mathbb{R}^q,\mathcal{B}(\mathbb{R}^q))$. Given $\BoTheS\in\mathrm{int}(\Theta_B)$, we say $\hat{\boldsymbol{\psi}}$ is a locally unbiased estimator if\footnote{The $k$-th column of $\nabla h(\BoTheS)$ is $\nabla [h(\boldsymbol{\theta})]_k\in\mathbb{R}^{d_\theta}$, i.e., $\nabla h(\BoTheS) = [\nabla [h(\boldsymbol{\theta})]_1,\dots,\nabla [h(\boldsymbol{\theta})]_q]\in\mathbb{R}^{d_\theta\times q}$.}
        \begin{equation*}
            \mathbb{E}_{\BoTheS}^{(N)}[\hat{\boldsymbol{\psi}}]=h(\BoTheS), \quad\text{and} \quad \nabla_{\boldsymbol{\theta}}\mathbb{E}_{\Bothe}^{(N)}[\hat{\boldsymbol{\psi}}]\,\Big|_{\boldsymbol{\theta}=\BoTheS}
        =\nabla h(\BoTheS).
        \end{equation*}
        \begin{proposition}[CRLB]
        \label{prop_the_CRLB}
        Let Assumptions~\ref{assumpt1_well_Logit} and~\ref{assump_indepent_sample} hold, and $\mathcal{I}(\BoTheS)$ be defined as in~\eqref{eq_FIM_in_theta} where $\BoTheS\in\mathrm{int}(\Theta_B)$.
    Let $\hat{\boldsymbol{\psi}}: \mathcal{Y}_{N}\to \mathbb{R}^q$
    be any locally unbiased estimator with $\mathbb{E}_{\BoTheS}^{(N)}[\Vert\hat{\boldsymbol{\psi}}\Vert_2^2]<\infty$.
        Then 
        \begin{subequations}
        \begin{align}
            \label{eq_CRLB_Cov_form}
            \mathrm{Cov}_{\BoTheS}^{(N)}(\hat{\boldsymbol{\psi}}) &\succeq \frac{\nabla h(\BoTheS)^\top\mathcal{I}(\BoTheS)^\dagger\nabla h(\BoTheS)}{N},\\
            \label{eq_CRLB_EuclidNorm_form}
            \mathbb{E}_{\BoTheS}^{(N)} \left[\Vert\hat{\boldsymbol{\psi}} - h(\BoTheS)\Vert_2^2\right] &\geq \frac{\mathrm{tr}(\nabla h(\BoTheS)^\top\mathcal{I}(\BoTheS)^\dagger\nabla h(\BoTheS))}{N}.
        \end{align}           
        \end{subequations}
        \end{proposition}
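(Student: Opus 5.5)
The proof follows the classical covariance-inequality route, adapted to the pseudoinverse. The sample space $\mathcal{Y}_N$ is finite, so the joint mass function $f_N(\boldsymbol y\mid\Bothe)$ of~\eqref{eq_Y_N_likelihood}, viewed through $\mathbf{w}=\mathbf{w}_o+\mathbf{V}_{A^\perp}\Bothe$, is a finite product of smooth, strictly positive functions of $\Bothe$. Differentiation under the expectation is therefore trivial: it is a finite sum.

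\textbf{Step 1 (score and Fisher information).} Define the total score
\[
S(\Bothe):=\nabla_{\Bothe}\log f_N(Y^{(N)}\mid\Bothe)=-N\nabla\ell(\Bothe).
\]
By~\eqref{eq_grad_mle_in_theta},
\[
S(\Bothe)=\frac{1}{\sigma}\sum_{\Boalp}\bigl(Y_\Boalp-n_\Boalp\Psi'(\eta_\Boalp(\Bothe))\bigr)\mathbf{q}_\Boalp .
\]
Proposition~\ref{observ_Bernoulli_exp_calss} gives $\mathbb{E}_{\BoTheS}^{(N)}[S(\BoTheS)]=\mathbf{0}$. Assumption~\ref{assump_indepent_sample} and the variance formula in the same proposition give
\[
\mathrm{Cov}_{\BoTheS}^{(N)}(S)=\frac{1}{\sigma^2}\sum_{\Boalp} n_\Boalp\Psi''(\eta_\Boalp(\BoTheS))\,\mathbf{q}_\Boalp\mathbf{q}_\Boalp^\top=N\,\mathcal{I}(\BoTheS),
\]
which matches~\eqref{eq_hess_mle_in_theta}--\eqref{eq_FIM_in_theta}.

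\textbf{Step 2 (cross-covariance).} Write $\nabla_{\Bothe}\mathbb{E}_{\Bothe}^{(N)}[\hat{\boldsymbol\psi}]=\sum_{\boldsymbol y}\nabla_\Bothe f_N(\boldsymbol y\mid\Bothe)\,\hat{\boldsymbol\psi}(\boldsymbol y)^\top$ and use $\nabla f_N=f_N\,S$. Local unbiasedness then gives
\[
\mathbb{E}_{\BoTheS}^{(N)}\bigl[S(\BoTheS)\,(\hat{\boldsymbol\psi}-h(\BoTheS))^\top\bigr]=\nabla h(\BoTheS)=:\mathbf{G}.
\]
The centering by $h(\BoTheS)$ is harmless because $\mathbb{E}[S]=\mathbf 0$. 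Finite second moments of $\hat{\boldsymbol\psi}$ make all quantities well defined.

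\textbf{Step 3 (matrix Cauchy--Schwarz with a pseudoinverse).} Set $\mathbf{M}:=N\mathcal I(\BoTheS)$ and $\mathbf{Z}:=\hat{\boldsymbol\psi}-h(\BoTheS)-\mathbf{G}^\top\mathbf{M}^\dagger S$. Then
\[
0\preceq\mathrm{Cov}(\mathbf{Z})=\mathrm{Cov}(\hat{\boldsymbol\psi})-2\,\mathbf{G}^\top\mathbf{M}^\dagger\mathbf{G}+\mathbf{G}^\top\mathbf{M}^\dagger\mathbf{M}\mathbf{M}^\dagger\mathbf{G}.
\]
This is the main point needing care. We use $\mathbf{M}^\dagger\mathbf{M}\mathbf{M}^\dagger=\mathbf{M}^\dagger$, and the cross term collapses to $\mathbf{G}^\top\mathbf{M}^\dagger\mathbf{G}$ exactly, without needing $\mathbf{G}\in\mathrm{range}(\mathbf M)$. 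Indeed, $\mathrm{Cov}(S,\hat{\boldsymbol\psi})=\mathbf{G}$ by Step 2, so
\[
\mathrm{Cov}(\mathbf G^\top\mathbf M^\dagger S,\hat{\boldsymbol\psi})=\mathbf G^\top\mathbf M^\dagger\mathbf G .
\]
The inequality therefore becomes $\mathrm{Cov}(\hat{\boldsymbol\psi})\succeq\mathbf{G}^\top\mathbf{M}^\dagger\mathbf{G}$, and $(N\mathcal I)^\dagger=\mathcal I^\dagger/N$ yields~\eqref{eq_CRLB_Cov_form}. Under the identifiability condition, Corollary~\ref{coro_from_identifibality} and~\eqref{eq_upperlower_sorogate_by_L} make $\mathcal I(\BoTheS)$ invertible, so $\dagger$ is just the inverse. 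The argument does not need this.

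\textbf{Step 4 (trace form).} Since $\mathbb{E}[\hat{\boldsymbol\psi}]=h(\BoTheS)$, we have $\mathbb{E}\|\hat{\boldsymbol\psi}-h(\BoTheS)\|_2^2=\mathrm{tr}\,\mathrm{Cov}(\hat{\boldsymbol\psi})$. Taking traces in~\eqref{eq_CRLB_Cov_form}, using monotonicity of the trace on the Loewner order, gives~\eqref{eq_CRLB_EuclidNorm_form}.

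The only genuinely delicate points are the pseudoinverse algebra in Step 3 and the interchange of derivative and expectation. The interchange is immediate here because $\mathcal Y_N$ is finite and $\BoTheS\in\mathrm{int}(\Theta_B)$.
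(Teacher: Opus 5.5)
Your proof is correct and is essentially the argument the paper relies on. The paper omits the proof and cites the Schur-complement derivation of the Cram\'er--Rao inequality, and your residual $\mathbf{Z}=\hat{\boldsymbol{\psi}}-h(\BoTheS)-\mathbf{G}^\top\mathbf{M}^\dagger S$ is exactly the construction showing that the generalized Schur complement $\mathrm{Cov}(\hat{\boldsymbol{\psi}})-\mathbf{G}^\top\mathbf{M}^\dagger\mathbf{G}$ of the joint covariance of $(\hat{\boldsymbol{\psi}},S)$ equals $\mathrm{Cov}(\mathbf{Z})\succeq\mathbf{0}$. You are also right that only $\mathbf{M}^\dagger\mathbf{M}\mathbf{M}^\dagger=\mathbf{M}^\dagger$ is needed, with no range condition, and that the derivative interchange is trivial on the finite sample space.
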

        We omit the proof as it directly follows from the proof of the Cram\'{e}r–Rao inequality via Schur complement, see~\cite[Section 6.1.1]{PuntanenSchurProofCRLB}.
        Proposition~\ref{prop_the_CRLB} covers two specific cases when $\hat{\boldsymbol{\psi}}$ and $h(\Bothe)$ take particular forms, the next corollary addresses this. These lower error bounds will be used to compare with the minimax rates in Section~\ref{subsec_minimax_lower_bounds}.
        \begin{corollary}[Lower error bounds for locally unbiased estimators of $\BoTheS$]
            \label{coro_of_CRLB}
            Let $\hat{\Bothe}$ be any estimator of $\Bothe$ which is locally unbiased at $\BoTheS$.
            Then the following assertions hold.
            \begin{itemize}
                \item[(i)] 
                Let 
                $\Boalp\in\E_s$ and $\mathbf{q}_\Boalp$ be defined as in \eqref{eq_definition_q_alp}.
                 If $h(\Bothe)=\mathbf{q}_\Boalp^\top\Bothe$, then $\hat{\boldsymbol{\psi}}=\mathbf{q}_\Boalp^\top\hat{\Bothe}$ is a locally unbiased estimator of $h(\Bothe)$ at $\BoTheS$, 
                and $\hat{\Bothe}$ satisfies
                \begin{equation}
                \label{eq_CRLB_pesudo_leverage}
                    \mathbb{E}_{\BoTheS}^{(N)} \left[|\mathbf{q}_\Boalp^\top(\hat{\Bothe}-\BoTheS)|^2\right] \geq \frac{\mathbf{q}_\Boalp^\top\I(\BoTheS)^\dagger\mathbf{q}_\Boalp}{N}.
                \end{equation}

                \item[(ii)] If $\mathcal{I}(\BoTheS)\succ\mathbf{0}$ and $h(\boldsymbol{\theta})=\boldsymbol{\theta}$, then $\hat{\Bothe}$ satisfies
                \begin{equation}
                \label{eq_CRLB_full_rank_ell2}
                    \mathbb{E}_{\BoTheS}^{(N)}  \left[\Vert\hat{\Bothe} - \BoTheS\Vert_2^2\right] \geq \frac{\mathrm{tr}(\mathcal{I}(\BoTheS)^{-1})}{N}.
                \end{equation}
            \end{itemize}
        \end{corollary}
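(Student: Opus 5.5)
Both parts follow from Proposition~\ref{prop_the_CRLB} once we verify local unbiasedness of the induced estimator and evaluate $\nabla h$. Throughout, $\hat{\Bothe}$ is locally unbiased at $\BoTheS$: $\mathbb{E}_{\BoTheS}^{(N)}[\hat{\Bothe}]=\BoTheS$ and $\nabla_{\Bothe}\mathbb{E}_{\Bothe}^{(N)}[\hat{\Bothe}]|_{\Bothe=\BoTheS}=\mathbf{I}_{d_\theta}$, with the square-integrability $\mathbb{E}_{\BoTheS}^{(N)}[\Vert\hat{\Bothe}\Vert_2^2]<\infty$ implicit in the hypothesis. Note that $\mathbb{E}_{\Bothe}^{(N)}[\hat{\Bothe}]$ is a finite sum over $\mathcal{Y}_N=\{0,1\}^N$ of $\hat{\Bothe}(\boldsymbol{y})$ times products of smooth Bernoulli masses. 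Hence it is a smooth function of $\Bothe$, and differentiation commutes with expectation. The same finite-sum structure gives square-integrability of $\hat{\Bothe}$ automatically, since $\hat{\Bothe}$ takes finitely many values.

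For (i), take $h(\Bothe)=\mathbf{q}_\Boalp^\top\Bothe$, so $q=1$ and $\nabla h(\Bothe)=\mathbf{q}_\Boalp$ for all $\Bothe$. By linearity of expectation, $\mathbb{E}_{\Bothe}^{(N)}[\mathbf{q}_\Boalp^\top\hat{\Bothe}]=\mathbf{q}_\Boalp^\top\mathbb{E}_{\Bothe}^{(N)}[\hat{\Bothe}]$. At $\BoTheS$ this equals $h(\BoTheS)$. Its gradient at $\BoTheS$ is $(\nabla_{\Bothe}\mathbb{E}_{\Bothe}^{(N)}[\hat{\Bothe}])\,\mathbf{q}_\Boalp=\mathbf{q}_\Boalp$, using the paper's column convention for $\nabla$ of a vector map. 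Therefore $\hat{\boldsymbol{\psi}}=\mathbf{q}_\Boalp^\top\hat{\Bothe}$ is locally unbiased for $h$. Square-integrability follows from Cauchy--Schwarz. Applying \eqref{eq_CRLB_EuclidNorm_form} gives
\[
\mathbb{E}_{\BoTheS}^{(N)}\bigl[|\mathbf{q}_\Boalp^\top(\hat{\Bothe}-\BoTheS)|^2\bigr]\geq \frac{\mathbf{q}_\Boalp^\top\mathcal{I}(\BoTheS)^\dagger\mathbf{q}_\Boalp}{N},
\]
which is \eqref{eq_CRLB_pesudo_leverage}. Here the left side is the mean squared error of $\hat{\boldsymbol{\psi}}$ about $h(\BoTheS)=\mathbf{q}_\Boalp^\top\BoTheS$. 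No invertibility of $\mathcal{I}(\BoTheS)$ is needed, since the pseudoinverse form is used.

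For (ii), take $h(\Bothe)=\Bothe$, so $q=d_\theta$ and $\nabla h=\mathbf{I}_{d_\theta}$. Local unbiasedness of $\hat{\boldsymbol{\psi}}=\hat{\Bothe}$ is exactly the hypothesis. By \eqref{eq_CRLB_EuclidNorm_form}, the lower bound is $\mathrm{tr}(\mathcal{I}(\BoTheS)^\dagger)/N$. Because $\mathcal{I}(\BoTheS)\succ\mathbf{0}$, we have $\mathcal{I}(\BoTheS)^\dagger=\mathcal{I}(\BoTheS)^{-1}$, which yields \eqref{eq_CRLB_full_rank_ell2}. Positive definiteness holds, for instance, whenever the rank condition \eqref{eq_identifiable_rank_constraint} holds, by \eqref{eq_upperlower_sorogate_by_L} and Corollary~\ref{coro_from_identifibality}.

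The only delicate point is the interchange of differentiation and expectation when transferring local unbiasedness to the linear functional, and it is trivial here because the sample space is finite. The rest is bookkeeping of the gradient convention, so I anticipate no substantive obstacle.
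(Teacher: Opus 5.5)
Your proof is correct and follows the same route as the paper, which simply invokes Proposition~\ref{prop_the_CRLB} with $h(\Bothe)=\mathbf{q}_\Boalp^\top\Bothe$ and with $h(\Bothe)=\Bothe$. You also supply the details the paper leaves implicit: local unbiasedness of $\mathbf{q}_\Boalp^\top\hat{\Bothe}$ via the column convention for $\nabla h$, the moment and interchange conditions justified by the finite sample space, and the identity $\mathcal{I}(\BoTheS)^\dagger=\mathcal{I}(\BoTheS)^{-1}$ under positive definiteness; the hypotheses $\BoTheS\in\mathrm{int}(\Theta_B)$ and Assumptions~\ref{assumpt1_well_Logit}--\ref{assump_indepent_sample}, which you leave unstated, are inherited from the proposition.
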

        \begin{proof}
            The conclusion follows straightforwardly from Proposition~\ref{prop_the_CRLB}. \qedbox
        \end{proof}
        \subsection{Minimax Lower Bounds}
        \label{subsec_minimax_lower_bounds}
        To formally link the Fisher information geometry with the intrinsic difficulty of the estimation problem at hand, we follow~\cite[Chapter 2]{TsybakovNonPara} and~\cite[Chapter 15]{WainwrightHighDim} to introduce the minimax lower bound to the utility elicitation problem. Let $\rho:\mathbb{R}^{d_\theta}\times\mathbb{R}^{d_\theta}\to[0,\infty)$ be a measurable semi-distance, i.e., $\rho$ is $\mathcal{B}(\mathbb{R}^{d_\theta})\otimes\mathcal{B}(\mathbb{R}^{d_\theta})/\mathcal{B}([0,\infty))$-measurable, and satisfies all properties of a distance, except that $\rho(\Bothe,\Bothe^\prime)=0$ does not necessarily imply $\Bothe=\Bothe^\prime$. 
        Consider any estimator $\hat{\Bothe}$ of $\Bothe$ as a measurable mapping $\hat{\Bothe}:(\Y_N,\mathscr{Y}_N)\to(\mathbb{R}^{d_\theta},\B(\mathbb{R}^{d_\theta}))$. Define the class of candidate estimators 
        \begin{equation}
        \label{eq_admissible_estimators_A_N}
            \mathcal{A}_N:=\left\{\hat{\Bothe}:\mathcal{Y}_N\to \mathbb{R}^{d_\theta}\,\,\Big|\,\,\sup_{\Bothe\in\Theta_B} \mathbb{E}_\Bothe^{(N)}\left[\rho^2(\hat{\Bothe},\Bothe)\right]< \infty\right\}.
        \end{equation} 
      If $\Theta_B$ has finite $\rho-$diameter, i.e., $\sup_{\Bothe,\Bothe^\prime\in\Theta_B}\rho(\Bothe,\Bothe^\prime)<\infty$, then every finite-valued estimator belongs to $\mathcal{A}_N$. One special case is when the estimator is Lipschitz continuous w.r.t.~samples under the semi-distance $\rho$. To see this, let $\boldsymbol{y}^{(N)}$, $\tilde{\boldsymbol{y}}^{(N)}\in \mathcal{Y}_N$ be two realizations of the response sequence. If the estimator satisfies
      \begin{equation*}
          \rho(\hat{\Bothe}(\boldsymbol{y}^{(N)}),\hat{\Bothe}(\tilde{\boldsymbol{y}}^{(N)})) \leq L \sum_{\Boalp\in\E_s} n_\Boalp\bigg|\frac{\sum_{t=1}^{n_\Boalp}(y_\Boalp^{(t)}-\tilde{y}_\Boalp^{(t)})}{n_\Boalp}\bigg|,\,\,\forall\, \hat{\Bothe}(\boldsymbol{y}^{(N)}),\hat{\Bothe}(\tilde{\boldsymbol{y}}^{(N)})\in\Theta_B,
      \end{equation*}
      where $L<\infty$ is a constant independent of the realizations, then
      \begin{equation*}
          \rho\left(\hat{\Bothe}(\boldsymbol{y}^{(N)}),\Bothe\right)\leq \rho\left(\hat{\Bothe}(\boldsymbol{y}^{(N)}),\hat{\Bothe}(\tilde{\boldsymbol{y}}^{(N)})\right) + \rho\left(\hat{\Bothe}(\tilde{\boldsymbol{y}}^{(N)}),\Bothe\right) <\infty,\,\forall\,\Bothe\in\Theta_B,
      \end{equation*}
       which implies $\hat{\Bothe}\in\mathcal{A}_N$.
      \begin{definition}[Minimax Risk]
      For any $\hat{\Bothe}\in\mathcal{A}_N$ and $\Bothe\in\Theta_B$, let 
        \begin{equation}
		  \mathcal{R}_N(\hat{\Bothe},\Bothe) : = \mathbb{E}_{\Bothe}^{(N)} \left[\rho^2(\hat{\Bothe},\Bothe)\right],
		\end{equation}
        denote the risk when $\hat{\Bothe}$ is used to estimate $\Bothe$. The minimax risk under the squared {\color{black} semi-distance} $\rho^2$ 
        is defined by
        \begin{equation}
			\label{eq_minimax_general_def}
			\mathcal{R}^{\star}_N(\Theta_B,\rho^2):=\inf_{\hat{\Bothe}\in\mathcal{A}_N}\sup_{\Bothe\in\Theta_B}\mathcal{R}_N(\hat{\Bothe},\Bothe).
		\end{equation}   
        \end{definition}
Equation~\eqref{eq_minimax_general_def} describes the performance of the best possible estimator over the adversarial design of the ground truth. By specializing $\rho$ to be the Euclidean norm and assuming $\mathbf{W}\succ\mathbf{0}$\footnote{Indeed, $\mathcal{R}^{\star}_N(\Theta_B,\Vert\cdot\Vert_2^2)$ can be driven to infinity if $\mathbf{W}$ has non-trivial null space, i.e., $\sup_{\Bothe\in\Theta_B}\Vert\Bothe-\Bothe^\prime\Vert_2=\infty$ for any finite $\Bothe^\prime\in\mathbb{R}^{d_\theta}$, rendering any lower bound trivial.}, where $\mathbf{W}$ is defined as in \eqref{eq_def_of_matrix_K}, 
we have
        \begin{equation}
        \label{eq_minimax_risk_ThetaB_ell2}
		  \mathcal{R}^{\star}_N(\Theta_B,\Vert\cdot\Vert_2^2) = \inf_{\hat{\Bothe}\in\mathcal{A}_N}\sup_{\Bothe\in\Theta_B}\mathbb{E}_\Bothe^{(N)}\left[\Vert\hat{\Bothe}-\Bothe\Vert_2^2\right].
		\end{equation}
    The first result in this section aims to find a sharp lower bound on $\mathcal{R}^{\star}_N(\Theta_B,\Vert\cdot\Vert_2^2)$. To this end, we introduce the coherence level.
    \begin{definition}[Coherence Level]
        \label{Def_incoherent_level_of_leverages}
        Let $\mathbf{q}_\Boalp$ and  $\mathbf{W}$ be defined as in  \eqref{eq_definition_q_alp} and \eqref{eq_def_of_matrix_K} respectively. When $\mathbf{W}\succ\mathbf{0}$, the quantity
            \begin{equation}
            \label{eq:conh-level-mu_Q}
                \mu_Q:=\max_{\Boalp\in\E_s} \frac{1}{d_\theta}\mathbf{q}_\Boalp^\top \mathbf{W}^{-1} \mathbf{q}_\Boalp
            \end{equation}
            is called the {\em coherence level} of the questionnaire design. 
    \end{definition}
    $\mu_Q$ quantifies how evenly the selected query vectors cover the parameter space.  A similar concept has also appeared in~\cite[Section 3]{FanRankingWithCovar}, and our definition of $\mu_Q$ coincides with theirs. 
    By the standard matrix leverage identities~\cite[Definition 1.2]{CandesRechtMC}~\cite[Section II]{ChenIncoMC}, $\mu_Q$ satisfies $1\leq\mu_Q\leq \frac{N}{(\min_{\Boalp\in\E_s}n_\Boalp )d_\theta}$\footnote{To see this, observe that $\sum_{\Boalp\in\E_s} \frac{n_\Boalp}{N} \mathbf{q}_\Boalp^\top \mathbf{W}^{-1} \mathbf{q}_\Boalp = \mathrm{tr}\left(\mathbf{W}^{-1} \frac{1}{N}\sum_{\Boalp\in\E_s}n_\Boalp\mathbf{q}_\Boalp\mathbf{q}_\Boalp^\top \right) = d_\theta$, and therefore $\mu_Q \geq1$. On the other hand, since $\mathbf{W}\succeq \frac{n_\Boalp}{N} \mathbf{q}_\Boalp\mathbf{q}_\Boalp^\top$, multiplying both sides by $\mathbf{W}^{-1/2}$ yields the upper bounds, i.e., $\mathbf{q}_\Boalp^\top\mathbf{W}^{-1}\mathbf{q}_\Boalp \leq \frac{N}{n_\Boalp}\leq \frac{N}{\min_{\Boalp\in\E_s}n_\Boalp}$.}. 
    Both upper and lower bounds are attainable, as the following example shows.
\begin{example}
    Consider the setting of Example~\ref{ex_identify_disr_choice_model}, with $d_\theta=2$. Let $\mathbf{q}_{(1,2)} = [-1,-1]^\top$ and $\mathbf{q}_{(3,4)} = [-1,1]^\top$.
    With one observation per pair,
    \begin{equation*}
        \mathbf{W} = \frac{1}{2}\left(\mathbf{q}_{(1,2)}\mathbf{q}_{(1,2)}^\top + \mathbf{q}_{(3,4)}\mathbf{q}_{(3,4)}^\top\right) = \mathbf{I}_2\succ\mathbf{0},
    \end{equation*}
    and $\mathbf{q}_{(1,2)}^\top \mathbf{W}^{-1}\mathbf{q}_{(1,2)} = \mathbf{q}_{(3,4)}^\top \mathbf{W}^{-1}\mathbf{q}_{(3,4)} = 2$, therefore $\mu_Q =1$. Consider now the other case where $n_{(1,2)}=1$, $n_{(3,4)}\geq 2$, and $N=n_{(1,2)} + n_{(3,4)}$.
    In this case,
    \begin{equation*}
        \mathbf{W} = \frac{2}{N} \left( \frac{\mathbf{q}_{(1,2)}}{\sqrt{2}}\left(\frac{\mathbf{q}_{(1,2)}}{\sqrt{2}}\right)^\top + n_{(3,4)}\frac{\mathbf{q}_{(3,4)}}{\sqrt{2}} \left(\frac{\mathbf{q}_{(3,4)}}{\sqrt{2}}\right)^\top \right) \succ\mathbf{0},
    \end{equation*}
    where the middle term is the spectral decomposition of $\mathbf{W}$. Thus $\mathbf{q}_{(1,2)}^\top\mathbf{W}^{-1}\mathbf{q}_{(1,2)} = N$ and $\mathbf{q}_{(3,4)}^\top\mathbf{W}^{-1}\mathbf{q}_{(3,4)} = \frac{N}{n_{(3,4)}}$. Subsequently, $\mu_Q = \frac{1}{2}\max(N,\frac{N}{n_{(3,4)}})= \frac{N}{2}$, which attains the upper bound.
    Moreover, we can see that $\mu_Q$ is affected by both the structure of $\{\mathbf{q}_\Boalp\}_{\Boalp\in\E_s}$ and $\{n_\Boalp\}_{\Boalp\in\E_s}$.
\end{example}
    We will come back to discuss the role of $\mu_Q$ in Theorem~\ref{thm_residual_coherence_minimax}. With all the notation introduced above, we prove the following result, its proof is deferred to Section~\ref{proof_thm_minimax_loewe_bound_tr_inv}.
    \begin{theorem}
		\label{thm_minimax_loewe_bound_tr_inv}
		Let Assumptions~\ref{assumpt1_well_Logit}--\ref{assump_gap_wp_w_B_r0} hold, and $\mathbf{W}\succ\mathbf{0}$. Let $\mathcal{R}^{\star}_N(\Theta_B,\Vert\cdot\Vert_2^2)$ be defined as in~\eqref{eq_minimax_risk_ThetaB_ell2}.
        Then
        \begin{subequations}
        \label{eq_minimax_exp_prob_lowerbound}
            \begin{gather}
                \label{eq_minimax_exp_lowerbound}
				\mathcal{R}^{\star}_N(\Theta_B,\Vert\cdot\Vert_2^2) \gtrsim \sigma^2\mathrm{tr}(\mathbf{W}^{-1})\min\left\{\frac{1}{N}, \frac{\log^2(B/B_0)}{4\mu_Qd_\theta \log(4d)}\right\},\\
                \label{eq_minimax_prob_lowerbound}    \inf_{\hat{\Bothe}\in\mathcal{A}_N}\sup_{\Bothe\in\Theta_B}\mathbb{P}_{\Bothe}^{(N)}\left\{\Vert\hat{\Bothe}-\Bothe\Vert_2^2 \geq \frac{\sigma^2\mathrm{tr}(\mathbf{W}^{-1})}{16}\min\left\{ \frac{1}{N}, \frac{\log^2(B/B_0)}{4\mu_Qd_\theta \log(4d)} \right\} \right\}  \geq \frac{1}{15}.
            \end{gather}
        \end{subequations}
	\end{theorem}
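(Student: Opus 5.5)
The plan is to run Fano's method over a local packing of $\Theta_B$, shaped by the eigenbasis of $\mathbf{W}$. The probability bound~\eqref{eq_minimax_prob_lowerbound} is the primary statement. The expectation bound~\eqref{eq_minimax_exp_lowerbound} then follows from Markov's inequality, $\mathbb{E}\Vert\hat{\Bothe}-\Bothe\Vert_2^2\geq \delta^2\,\mathbb{P}\{\Vert\hat{\Bothe}-\Bothe\Vert_2^2\geq\delta^2\}$.

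\textbf{Base point and KL control.} First fix a center $\bar{\Bothe}$ with $\mathbf{w}_o+\mathbf{V}_{A^\perp}\bar{\Bothe}=\mathbf{w}_o$, i.e.\ $\bar{\Bothe}=\mathbf{0}$. By Assumption~\ref{assump_gap_wp_w_B_r0} it satisfies $|\eta_\Boalp(\mathbf{0})|\leq\log B_0$, which leaves a slack of $\log(B/B_0)$ in every queried direction. Perturbations $\Bothe=\bar{\Bothe}+\boldsymbol{\Delta}$ stay in $\Theta_B$ provided
\[
\max_{\Boalp\in\E_s}|\mathbf{q}_\Boalp^\top\boldsymbol{\Delta}|/\sigma\leq\log(B/B_0).
\]
For two such points, the KL divergence between product Bernoulli laws is bounded by the usual logistic estimate $\mathrm{KL}(\text{Bern}(\Psi'(\eta))\Vert \text{Bern}(\Psi'(\eta')))\leq \frac14(\eta-\eta')^2$. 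This gives
\[
\mathrm{KL}\bigl(\mathbb{P}^{(N)}_{\Bothe}\Vert\mathbb{P}^{(N)}_{\Bothe'}\bigr)\leq \frac{N}{4\sigma^2}\Vert\Bothe-\Bothe'\Vert_{\mathbf{W}}^2.
\]

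\textbf{Packing construction.} Write $\mathbf{W}=\sum_k\lambda_k\mathbf{u}_k\mathbf{u}_k^\top$. By Gilbert--Varshamov, take $M\geq 2^{d_\theta/8}$ sign vectors $\boldsymbol{\epsilon}^{(m)}\in\{\pm1\}^{d_\theta}$ with pairwise Hamming distance at least $d_\theta/8$. Set
\[
\boldsymbol{\Delta}^{(m)}=\frac{\delta}{\sqrt{d_\theta}}\,\mathbf{W}^{-1/2}\boldsymbol{\epsilon}^{(m)}\cdot c,
\]
with scale $\delta$ to be chosen. Then:
\begin{itemize}
\item[(a)] $\Vert\boldsymbol{\Delta}^{(m)}-\boldsymbol{\Delta}^{(m')}\Vert_{\mathbf{W}}^2\lesssim \delta^2$, so the KL divergences are $\lesssim N\delta^2/\sigma^2$.
\item[(b)] The Euclidean separation is $\Vert\boldsymbol{\Delta}^{(m)}-\boldsymbol{\Delta}^{(m')}\Vert_2^2=\frac{4c^2\delta^2}{d_\theta}\sum_{k\in S}\lambda_k^{-1}$, where $S$ is the set of disagreeing coordinates.
\end{itemize}
To make (b) comparable to $\mathrm{tr}(\mathbf{W}^{-1})$ for every pair, I would rather randomize: average over a uniform prior on the hypercube and apply Assouad's lemma coordinatewise. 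This gives a lower bound of $\sum_k$ (per-coordinate squared separation) $\times$ (1 minus TV), which yields exactly $\frac{c^2\delta^2}{d_\theta}\mathrm{tr}(\mathbf{W}^{-1})$. A Fano variant handles the probability statement. Choosing $\delta^2\asymp\sigma^2 d_\theta/N$ keeps each neighboring KL $O(1)$ and gives the rate $\sigma^2\mathrm{tr}(\mathbf{W}^{-1})/N$.

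\textbf{Feasibility (main obstacle).} I need $\max_\Boalp|\mathbf{q}_\Boalp^\top\mathbf{W}^{-1/2}\boldsymbol{\epsilon}|\leq \sqrt{d_\theta}\,\sigma\log(B/B_0)/(c\delta)$. Deterministically this needs $\Vert\mathbf{W}^{-1/2}\mathbf{q}_\Boalp\Vert_1$, which is too large. So I will draw the sign vectors randomly. By Hoeffding and a union bound over $|\E_s|\leq d^2$ pairs, with probability at least $1/2$,
\[
|\mathbf{q}_\Boalp^\top\mathbf{W}^{-1/2}\boldsymbol{\epsilon}|\leq \sqrt{2\log(4d^2)}\,\Vert\mathbf{W}^{-1/2}\mathbf{q}_\Boalp\Vert_2\leq 2\sqrt{\log(4d)\,\mu_Q d_\theta}.
\]
The random selection keeps an exponentially large packing, or is restricted to the feasible sub-cube in the Assouad version. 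Feasibility then requires
\[
\delta^2\lesssim \frac{\sigma^2\log^2(B/B_0)}{4\mu_Q\log(4d)}.
\]
Hence take
\[
\delta^2=\sigma^2\min\Bigl\{\frac{d_\theta}{N},\frac{\log^2(B/B_0)}{4\mu_Q\log(4d)}\Bigr\}.
\]
Multiplying by $\mathrm{tr}(\mathbf{W}^{-1})/d_\theta$ gives the stated $\min$. The constants $1/16$ and $1/15$ come from Fano's inequality once the KL is at most a small multiple of $\log M$. The delicate point is reconciling the random feasible subset with the packing and separation guarantees. I expect to handle it by conditioning on the good event and applying Gilbert--Varshamov inside it, losing only constant factors.
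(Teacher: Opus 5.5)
Your final architecture is the same as the paper's. You put random signs along the eigenvectors of $\mathbf{W}$ with per-coordinate scale $\tau\asymp\sigma/\sqrt{N}$, apply Assouad coordinatewise, and control feasibility by Hoeffding plus a union bound over the at most $d^2$ queries. Your scale choice reproduces the $\min\{\cdot,\cdot\}$ in the statement. One slip: the separation formula (b) and the coordinatewise split of the loss require $\boldsymbol{\Delta}=\tau\sum_k\epsilon_k\lambda_k^{-1/2}\mathbf{u}_k$, not $\mathbf{W}^{-1/2}\boldsymbol{\epsilon}$. With the latter, $\Vert\mathbf{W}^{-1/2}(\boldsymbol{\epsilon}-\boldsymbol{\epsilon}')\Vert_2^2\neq\sum_{k\in S}\lambda_k^{-1}$. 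Beyond that, two steps do not go through as written.

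\textbf{Infeasible vertices.} The feasible set $\mathcal{Q}_B$ is cut out by the linear constraints $|\mathbf{q}_\Boalp^\top\boldsymbol{\Delta}|\le\sigma\log(B/B_0)$. It is not a sub-cube, so Assouad cannot be run on it. Conditioning on it also destroys the uniform product prior that the pairing argument needs. The device that works has four steps:
\begin{itemize}
\item Lower-bound $\sup_{\Theta_B}$ by the average over $\mathcal{Q}_B$.
\item Replace the squared error by the decoded loss $L=\sum_j\tau^2\lambda_j^{-1}\ind\{\hat{s}_j\neq s_j\}$, with $\hat{s}_j=\mathrm{sign}(\mathbf{u}_j^\top\hat{\Bothe})$. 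This loss is bounded by $\tau^2\mathrm{tr}(\mathbf{W}^{-1})$.
\item Run Assouad over the whole cube. The laws $\mathbb{P}^{(N)}_{\Bothe^{(\mathbf{s})}}$ are defined for every $\mathbf{s}$, feasible or not.
\item Subtract $\mathbb{P}(\Boxi\notin\mathcal{Q}_B)\,\tau^2\mathrm{tr}(\mathbf{W}^{-1})$.
\end{itemize}
This is positive only if the bad-event probability is below Assouad's constant. That constant is $1/4$ when you use the sharp bound $\mathrm{KL}\le\frac{N}{8\sigma^2}\Vert\cdot\Vert_{\mathbf{W}}^2$; your $\frac14$ in place of $\frac18$ loses this. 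Your ``good event with probability at least $1/2$'' is therefore too weak. The factor $4$ and the $\log(4d)$ in the statement are calibrated so that Hoeffding gives $\mathbb{P}(\Boxi\notin\mathcal{Q}_B)\le1/8$, which leaves $\frac18\tau^2\mathrm{tr}(\mathbf{W}^{-1})$.

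\textbf{Probability bound.} Fano over a Gilbert--Varshamov packing can be built inside $\mathcal{Q}_B$, but it only certifies the pairwise separation $\tau^2\min_{|S|\ge d_\theta/8}\sum_{k\in S}\lambda_k^{-1}$. When a few eigenvalues dominate the trace, this is far below $\mathrm{tr}(\mathbf{W}^{-1})$. In that regime a uniform prior matches the dominant signs with constant probability, so no Fano-type argument reaches the trace. Ordinary Markov only runs from the probability bound to the expectation bound, not back.

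The missing step is a reverse Markov inequality for the bounded decoded loss, applied under the mixture over $\mathcal{Q}_B$:
\[
\mathbb{P}(L\ge t)\ge\frac{\mathbb{E}L-t}{\tau^2\mathrm{tr}(\mathbf{W}^{-1})-t}.
\]
Take $\mathbb{E}L\ge\frac18\tau^2\mathrm{tr}(\mathbf{W}^{-1})$ from the previous step and $t=\frac1{16}\tau^2\mathrm{tr}(\mathbf{W}^{-1})$. This gives exactly the threshold and the constant $1/15$ in \eqref{eq_minimax_prob_lowerbound}.
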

        We are particularly interested in the case where the minimum on the right-hand side of~\eqref{eq_minimax_exp_lowerbound} is attained by the first term, namely when $N \geq \frac{4\mu_Q d_\theta \log(4d)}{\log^2(B/B_0)}$, the minimax lower bound scales as $\frac{\sigma^2 \operatorname{tr}(\mathbf{W}^{-1})}{N}$. For fixed $B$, this isolates the geometry of the questionnaire design in the minimax lower bound, including the case when $\mathbf{W}$ is nearly singular. We will discuss this phenomenon in detail in Corollary~\ref{coro_euclidean_upper_in_detailed_terms}.
    
     To see necessities of the assumptions, we note that 
Assumptions~\ref{assumpt1_well_Logit} and~\ref{assump_indepent_sample} specify the basic probability model whereas Assumptions~\ref{auumpt1self_concor} and~\ref{assump_gap_wp_w_B_r0} play an important role. For fixed $\mathbf{W}$, the lower bound in~\eqref{eq_minimax_exp_prob_lowerbound} tends to zero when $B\downarrow B_0$. This trivial case is excluded by Assumption~\ref{assump_gap_wp_w_B_r0}. On the other hand, if we allow $B=\infty$ (violating Assumption~\ref{auumpt1self_concor}), then the following proposition holds. 
\begin{proposition}
    \label{prop_lim_B_to_infty_minimax_risk_Euclide_to_infty}
    Suppose Assumptions~\ref{assumpt1_well_Logit} and~\ref{assump_indepent_sample} hold, and let $\mathbf{W}\succ\mathbf{0}$ be fixed. Then for every fixed $N<\infty$, and any estimator $\hat{\Bothe}$ of $\Bothe$ that may depend on $B$
    \begin{equation}
        \label{eq_sent_B_infty_minimax_infty}
        \lim_{B\to\infty}\sup_{\Bothe\in\Theta_B}\mathbb{E}_\Bothe^{(N)}\left[\Vert\hat{\Bothe}-\Bothe\Vert_2^2\right]=\infty.
    \end{equation}
\end{proposition}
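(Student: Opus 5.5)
We argue by a two-point (Le Cam) construction whose separation grows with $\log B$, while the statistical distance between the two hypotheses stays bounded, uniformly in $B$, because the responses take values in the finite set $\mathcal{Y}_N=\{0,1\}^N$. Fix $N$ and write $\Theta_B$ as in~\eqref{eq_def_of_theta_B_set}. Since $\mathbf{W}\succ\mathbf{0}$, Corollary~\ref{coro_from_identifibality} gives that the map $\Bothe\mapsto(\eta_\Boalp(\Bothe))_{\Boalp\in\E_s}$ is injective, and its linear part has trivial kernel. We first show that $\Theta_B$ is a bounded polytope that expands as $B$ grows. Fix a unit vector $\mathbf{u}$ and set $\Bothe_B^{\pm}:=\pm t_B\mathbf{u}$, with $t_B:=\sup\{t\geq 0: \pm t\mathbf{u}\in\Theta_B\}$. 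Because $\max_\Boalp|\mathbf{q}_\Boalp^\top\mathbf{u}|/\sigma=:c_{\mathbf{u}}>0$ and $|\mathbf{w}_o^\top\mathbf{a}_\Boalp/\sigma|\leq \log B_0$, we get $t_B\geq(\log B-\log B_0)/c_{\mathbf{u}}\to\infty$ as $B\to\infty$. Hence $\Vert\Bothe_B^+-\Bothe_B^-\Vert_2=2t_B\to\infty$.

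Next we apply the standard two-point reduction. For any estimator $\hat{\Bothe}$ (possibly depending on $B$), and with $\delta_B:=t_B$, the triangle inequality shows that for every $\boldsymbol{y}$ at least one of $\Vert\hat{\Bothe}(\boldsymbol{y})-\Bothe_B^\pm\Vert_2\geq\delta_B$ holds. Therefore
\[
\sup_{\Bothe\in\Theta_B}\mathbb{E}_\Bothe^{(N)}\bigl[\Vert\hat{\Bothe}-\Bothe\Vert_2^2\bigr]\;\geq\;\frac{\delta_B^2}{2}\sum_{\boldsymbol{y}\in\mathcal{Y}_N}\min\bigl\{\mathbb{P}^{(N)}_{\Bothe_B^+}(\boldsymbol{y}),\mathbb{P}^{(N)}_{\Bothe_B^-}(\boldsymbol{y})\bigr\}.
\]
The right-hand sum is $1-\mathrm{TV}\bigl(\mathbb{P}^{(N)}_{\Bothe_B^+},\mathbb{P}^{(N)}_{\Bothe_B^-}\bigr)$.

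The main obstacle is that the two hypotheses may become nearly singular as $B\to\infty$: if $\eta_\Boalp(\Bothe_B^\pm)\to\pm\infty$, each Bernoulli law degenerates, and the overlap may vanish faster than $\delta_B^2$ grows. To control this, we bound the overlap from below by a single point, namely the configuration $\boldsymbol{y}^\ast$ that maximizes $\min\{\mathbb{P}^+,\mathbb{P}^-\}$. By~\eqref{eq_T_alpha_t_loss_as_Logit}, each such probability is at least $\prod_\Boalp(1+e^{|\eta_\Boalp|})^{-n_\Boalp}$. Along the ray this decays like $\exp(-N c_{\mathbf{u}} t_B)$, which only gives $t_B^2e^{-Nct_B}\to0$ and so fails. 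Instead of a fixed ray, we therefore choose the pair to exploit non-identifiability near the boundary. We take two points sharing the same extreme sign pattern: $\Bothe_B^{1}$ and $\Bothe_B^{2}=\Bothe_B^1+s_B\mathbf{v}$, both deep inside a cone on which every $\eta_\Boalp$ has a fixed sign, with all $|\eta_\Boalp|\geq M$ for a fixed large $M$. On such a region both laws put mass at least $(1+e^{-M})^{-N}\geq 1/2$ on the same deterministic outcome $\boldsymbol{y}^\ast$ (all responses equal to their sign indicators), so the overlap is at least $1/2$ uniformly in $B$, while $s_B$ can be of order $\log B$.

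It remains to verify that such a cone exists inside $\Theta_B$ with diameter tending to infinity. Pick any $\mathbf{u}$ with $\mathbf{q}_\Boalp^\top\mathbf{u}\neq0$ for all $\Boalp$; this is possible off a finite union of hyperplanes, since no $\mathbf{q}_\Boalp$ vanishes (each $\mathbf{q}_\Boalp\neq\mathbf{0}$ because $\mathbf{W}\succ\mathbf{0}$ only guarantees the span; queries with $\mathbf{q}_\Boalp=\mathbf{0}$ carry no information and are dropped). Set $\Bothe^1=\tau\mathbf{u}$ with $\tau$ fixed so that $|\eta_\Boalp|\geq M$, and $\Bothe^2=\tau_B\mathbf{u}$ with $\tau_B\asymp\log B$ chosen maximal subject to $\Bothe^2\in\Theta_B$. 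The signs of $\eta_\Boalp$ agree along the segment from $\Bothe^1$ to $\Bothe^2$ and $|\eta_\Boalp|\geq M$ throughout, so
\[
\sup\mathbb{E}\Vert\hat{\Bothe}-\Bothe\Vert_2^2\geq\tfrac14\bigl(\tfrac{\tau_B-\tau}{2}\bigr)^2\cdot\tfrac12\to\infty,
\]
which proves~\eqref{eq_sent_B_infty_minimax_infty}.
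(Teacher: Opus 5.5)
Your final construction is correct and is essentially the paper's argument: both fix a ray along which the sign of every $\eta_{\boldsymbol{\alpha}}$ is eventually constant, note that the sign-matching response configuration keeps probability bounded below along that ray (the paper uses $2^{-N}$, you use $(1+e^{-M})^{-N}$), and then take two points on the ray whose distance grows with $B$, so the two-point risk bound diverges. The symmetric pair $\pm t_B\mathbf{u}$ in your opening is a discarded false start, and the margin $M$ and the requirement $\mathbf{q}_{\boldsymbol{\alpha}}^\top\mathbf{u}\neq 0$ are unnecessary, since sign agreement alone already gives each response mass at least $1/2$ under both hypotheses.
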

The proof is deferred to Appendix~\ref{proof_prop_lim_B_to_infty_minimax_risk_Euclide_to_infty}. Equation~\eqref{eq_sent_B_infty_minimax_infty} implies that if $\Theta_B$ is unbounded, every finite-valued estimator has infinite worst-case risk. Moreover, when $B=\infty$, $\mathcal{A}_N=\varnothing$. Both cases render the minimax risk being infinity.
\begin{remark} 
        \label{remark_minimax_loewe_bound_tr_inv}
        It might be interesting to discuss the case when $B$ is fixed whereas $\mathbf{W}$ and $N$ are allowed to vary. We concentrate on the case when $N\geq \frac{4\mu_Qd_\theta \log(4d)}{\log^2(B/B_0)}$.
          \begin{itemize}
              \item[(i)] 
              Theorem~\ref{thm_minimax_loewe_bound_tr_inv} gives a lower bound on the worst-case risk over $\Theta_B$. In particular, it provides a non-asymptotic lower bound for the MLE analysis in Section~\ref{MLE_upper_bound}. From~\eqref{eq_FIM_in_theta}--\eqref{eq_def_of_matrix_K}, we have $\frac{1}{4\sigma^2}\mathbf{W}\succeq \mathcal{I}(\BoTheS)\succeq \frac{1}{4B\sigma^2}\mathbf{W}$, i.e., $\sigma^2\mathrm{tr}(\mathbf{W}^{-1})$ and $\mathrm{tr}(\mathcal{I}(\BoTheS)^{-1})$ are comparable when $B$ is treated as an absolute constant. Thus, the finite-sample Euclidean norm lower error bound of any admissible estimator can be characterized by the same Fisher-information geometry as in \eqref{eq_CRLB_full_rank_ell2}.

              \item[(ii)]  From the experimental design perspective, Theorem ~\ref{thm_minimax_loewe_bound_tr_inv} shows that the statistical quality of questionnaire design is underpinned by $\mathbf{W}=\mathbf{V}_{A^{\perp}}^\top\mathfrak{A}^\top \mathbf{L}\mathfrak{A}\mathbf{V}_{A^{\perp}}$ in~\eqref{eq_def_of_matrix_K}. 
              Unlike a graph Laplacian by itself, $\mathbf{W}$ can be positive definite even when the graph is disconnected, as proved in Proposition~\ref{prop_indentifibility_complete_obv}. Hence, for fixed $\mathfrak{A}$ and $\mathbf{A}$, an A-optimal design allocates the limited budget to comparisons that minimize $\mathrm{tr}(\mathbf{W}^{-1})$, thereby targeting the design criterion in~\eqref{eq_minimax_exp_prob_lowerbound}.
              
              \item[(iii)] Inequality~\eqref{eq_minimax_exp_lowerbound} improves the best known bound in pairwise ranking literature under the BTL model~\cite[Theorem 3]{LeeMinimaxGenPairComp} proved by a Bayesian approach~\cite[Lemma 10]{LeeMiniMaxGLM}. The proof strategy of Theorem~\ref{thm_minimax_loewe_bound_tr_inv} can be applied to generalized linear models with canonical link function and yields similar results by assuming that the cumulant function has a bounded second-order derivative. Inequality~\eqref{eq_minimax_prob_lowerbound} shows that the lower bound is not driven by rare events. In Section~\ref{MLE_upper_bound}, we further show that the canonical MLE attains this minimax benchmark up to logarithmic factors with high probability once $N$ exceeds an explicit design-dependent threshold involving $B$ and $\mu_Q$.
          \end{itemize}
\end{remark}
        We close the discussion of Theorem~\ref{thm_minimax_loewe_bound_tr_inv} by studying the minimax lower bound for the excess risk. Let $\Bov\in\mathbb{R}^{d_\theta}$ be any but fixed. Consider the population-level negative log-likelihood function $\mathcal{L}_\Bothe(\Bov):=\mathbb{E}_\Bothe^{(N)}[\ell(\Bov)]$, and the excess risk $\E_{\mathcal{L}}(\Bov,\Bothe):=\mathcal{L}_\Bothe(\Bov)-\mathcal{L}_\Bothe(\Bothe)$. We can mimic the proof of Theorem~\ref{thm_minimax_loewe_bound_tr_inv} to derive a lower bound for\footnote{Here, with slight abuse of notation,  we let $\mathcal{A}_N=\left\{
        \hat{\Bothe}:\mathcal{Y}_N\to\mathbb{R}^{d_\theta}\, \middle|\,\sup_{\Bothe\in\Theta_B} \mathbb{E}_{\Bothe}^{(N)}\left[\mathcal{E}_{\mathcal{L}}(\hat{\Bothe},\Bothe)\right]<\infty\right\}.$}
        \begin{equation}
        \label{eq_minimax_bound_in_excess_risk}
            \mathcal{R}_N^\star(\Theta_B,\E_{\mathcal{L}}(\cdot,\cdot)) : = \inf_{\hat{\Bothe}\in\mathcal{A}_N}\sup_{\Bothe\in\Theta_B} \mathbb{E}_\Bothe^{(N)}\left[ \mathcal{L}_{\Bothe}(\hat{\Bothe})-\mathcal{L}_{\Bothe}(\Bothe) \right].
        \end{equation}
        $\E_{\mathcal{L}}(\hat{\Bothe},\BoTheS)$ is 
        widely used to measure the performance of an estimator $\hat{\Bothe}$ of $\BoTheS$ in the literature of logistic regression, see e.g.,~\cite[Section 2]{ChardonLogitErrorFinSam} and~\cite[Theorem 1.1]{OstrovsBachSelfCMEst}. 
        The result presented below can be viewed as an analogue of~\cite[Theorem 1]{ShahEstPariCompGrapTop} but stated and proved in a different form. Its proof is deferred to Section~\ref{proof_coro_miniax_lowerbound_excess_risk}.
        \begin{corollary}
        \label{coro_miniax_lowerbound_excess_risk}
            Assume the settings and conditions of Theorem~\ref{thm_minimax_loewe_bound_tr_inv}, let $\mathcal{R}_N^\star(\Theta_B,\E_{\mathcal{L}}(\cdot,\cdot))$ be defined as in~\eqref{eq_minimax_bound_in_excess_risk}. Then
            \begin{subequations}
            \label{eq_minimax_excess_risk}
                \begin{gather}
                \label{eq_minimax_excess_risk_in_exp_form}
                    \mathcal{R}_N^\star(\Theta_B,\E_{\mathcal{L}}(\cdot,\cdot)) \gtrsim \frac{d_\theta}{B}\min\left\{ \frac{1}{N},\frac{\log^2(B/B_0)}{4\mu_Qd_\theta \log(4d)} \right\},\\
                    \label{eq_minimax_excess_risk_in_prob_form} \inf_{\hat{\Bothe}\in\mathcal{A}_N}\sup_{\Bothe\in\Theta_B} \bbp_{\Bothe}^{(N)} \left\{ \E_{\mathcal{L}}(\hat{\Bothe},\Bothe) \geq \frac{d_\theta}{128B}\min\left\{\frac{1}{N},\frac{\log^2(B/B_0)}{4\mu_Qd_\theta \log(4d)}\right\} \right\} \geq \frac{1}{15}.
                \end{gather}
            \end{subequations}
        \end{corollary}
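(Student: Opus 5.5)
The plan is to reduce the excess-risk lower bound to a Euclidean-type separation, using the local quadratic behaviour of $\mathcal{L}_\Bothe$, and then rerun the Fano/packing argument behind Theorem~\ref{thm_minimax_loewe_bound_tr_inv}. The Euclidean metric is replaced by the $\mathbf{W}$-weighted seminorm, which is why $\mathrm{tr}(\mathbf{W}^{-1})$ collapses to $d_\theta$.

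\textbf{Step 1: excess risk as a Bregman divergence.} Since $\mathbb{E}_\Bothe^{(N)}[Y_\Boalp]=n_\Boalp\Psi'(\eta_\Boalp(\Bothe))$, we have
$$\E_{\mathcal L}(\Bov,\Bothe)=\frac1N\sum_{\Boalp\in\E_s}n_\Boalp\,D_\Psi\bigl(\eta_\Boalp(\Bov),\eta_\Boalp(\Bothe)\bigr),$$
where $D_\Psi$ is the Bregman divergence of $\Psi$. Each summand equals $\tfrac12\Psi''(\tilde\eta)\,\bigl(\mathbf q_\Boalp^\top(\Bov-\Bothe)/\sigma\bigr)^2$ for some intermediate point $\tilde\eta$.

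\textbf{Step 2: restrict to a bounded-deviation subfamily.} If $|\eta_\Boalp(\Bov)|\le c\log B$ for all $\Boalp$, Proposition~\ref{prop_bound_dymic_range} gives $\Psi''\ge 1/(4B^{c})$. Hence $\E_{\mathcal L}(\Bov,\Bothe)\gtrsim \frac{1}{B\sigma^2}\|\Bov-\Bothe\|_{\mathbf W}^2$, and conversely $\E_{\mathcal L}\le\frac{1}{8\sigma^2}\|\Bov-\Bothe\|_{\mathbf W}^2$ always holds. Raw estimators need not lie in this region. I will therefore project the estimator onto the convex set $\Theta_{B}$ (or a slightly enlarged copy $\Theta_{B^2}$) in the $\mathbf W$-geometry. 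I will show the projection loses at most a constant factor in excess risk, using a quasi-triangle inequality for $D_\Psi$ that holds on bounded log-odds.

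\textbf{Step 3: packing.} Take the same packing used in Theorem~\ref{thm_minimax_loewe_bound_tr_inv}, written in whitened coordinates: $\Bothe_k=\Bothe_0+\delta\,\sigma\mathbf W^{-1/2}\boldsymbol{\omega}_k$, with $\boldsymbol\omega_k\in\{\pm1\}^{d_\theta}/\sqrt{d_\theta}$ from Varshamov–Gilbert. Here $\Bothe_0$ is chosen with $\eta_\Boalp(\Bothe_0)$ equal to the offset of $\mathbf w_o$, which is bounded by $\log B_0$. The deviations satisfy $|\mathbf q_\Boalp^\top(\Bothe_k-\Bothe_0)|/\sigma\le\delta\sqrt{\mathbf q_\Boalp^\top\mathbf W^{-1}\mathbf q_\Boalp}\le\delta\sqrt{\mu_Q d_\theta}$. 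Membership in $\Theta_B$ therefore needs $\delta^2\lesssim\log^2(B/B_0)/(\mu_Qd_\theta)$; the $\log(4d)$ factor enters if a random-sign/Hoeffding refinement is used, exactly as in the theorem. Pairwise separation is $\|\Bothe_k-\Bothe_l\|_{\mathbf W}^2\gtrsim\sigma^2\delta^2$. The KL divergence between hypotheses is $N\,\E_{\mathcal L}\le \frac{N}{8\sigma^2}\|\cdot\|_{\mathbf W}^2\lesssim N\delta^2$, so Fano with $\log M\asymp d_\theta$ permits $\delta^2\asymp d_\theta/N$. Here I take the packing radius with $\|\boldsymbol\omega\|_2^2$ normalised to $d_\theta$ rather than $1$, so $\delta^2$ carries $d_\theta$.

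\textbf{Step 4: combine.} Combining Steps 2 and 3, the separation in excess risk is $\gtrsim\frac{1}{B}\min\{d_\theta/N,\ \log^2(B/B_0)/(4\mu_Q\log(4d))\}=\frac{d_\theta}{B}\min\{\cdot\}$. Fano then yields the probability bound~\eqref{eq_minimax_excess_risk_in_prob_form}, with constant $1/15$ and the $128$ coming from the $16$ of the theorem times the curvature factors $1/(4\cdot2)$. Markov's inequality gives the expectation form~\eqref{eq_minimax_excess_risk_in_exp_form}.

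The main obstacle is Step 2. The excess risk is not a metric, and its curvature degrades for estimators far outside $\Theta_B$, so the reduction to a separation argument must handle it. I would first try the projection argument; failing that, I would use the generalized Fano lemma directly with the divergence $\E_{\mathcal L}$. That route only requires that $\E_{\mathcal L}(\Bov,\Bothe_k)+\E_{\mathcal L}(\Bov,\Bothe_l)\gtrsim\E$-separation for any $\Bov$, which follows from convexity of $D_\Psi$ in its first argument together with the lower curvature bound at the packing points.
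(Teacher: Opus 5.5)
The argument you keep as a fallback is the one that works, and it is exactly the paper's key step (Lemma~\ref{lema_sum_of_excess_loss_lower_bound_by_semi_one_sample}). For $a,b\in[-\log B,\log B]$ and every $z\in\mathbb{R}$, $D_\Psi(z,a)+D_\Psi(z,b)\ge (a-b)^2/(16B)$. The reason is that the minimiser $z^\star$ of the left side satisfies $\Psi'(z^\star)=\tfrac12(\Psi'(a)+\Psi'(b))$, so it lies between $a$ and $b$, where $\Psi''\ge 1/(4B)$. You need curvature on the whole segment between the two log-odds, not only at the packing points; it is available because both endpoints lie in $[-\log B,\log B]$. Apply this query by query at $z=\eta_{\boldsymbol\alpha}(\Bov)$ and sum with weights $n_{\boldsymbol\alpha}/N$. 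This gives $\mathcal{E}_{\mathcal{L}}(\Bov,\Bothe_1)+\mathcal{E}_{\mathcal{L}}(\Bov,\Bothe_2)\ge\frac{1}{16B\sigma^2}\|\Bothe_1-\Bothe_2\|_{\mathbf W}^2$ for all $\Bov\in\mathbb{R}^{d_\theta}$, so the estimator never has to be localised.

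The projection route you put first in Step~2 is a genuine gap. The quasi-triangle inequality for $D_\Psi$ needs bounded log-odds, but the unprojected estimator is precisely the point whose log-odds are unbounded. $\mathbf W$-nonexpansiveness bounds $\|\Pi\hat\Bothe-\Bothe\|_{\mathbf W}^2$ by $\|\hat\Bothe-\Bothe\|_{\mathbf W}^2$. However, $\mathcal{E}_{\mathcal{L}}(\hat\Bothe,\Bothe)$ can be far smaller than $\|\hat\Bothe-\Bothe\|_{\mathbf W}^2/(B\sigma^2)$, because $D_\Psi(\cdot,y)$ grows only linearly at infinity. Projecting onto $\Theta_{B^2}$ instead gives curvature $1/(4B^2)$ and loses a factor $B$. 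Drop the projection and make the two-point inequality the main step.

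Steps~3--4 need tightening in three places.

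\textbf{The $\log(4d)$ factor.} With a fixed Varshamov--Gilbert packing and the Cauchy--Schwarz feasibility check, the second term of the minimum is $\log^2(B/B_0)/(\mu_Q d_\theta^2)$, which is too small by a factor of order $d_\theta/\log d$. To get $\log(4d)$ you must work inside the set $\mathcal{Q}_B$ of feasible sign vectors. By Hoeffding and a union bound over $|\mathcal{E}_s|\le d^2$ queries, $\mathcal{Q}_B$ carries at least $7/8$ of the cube at the paper's $\delta$. For Fano you would then have to extract a Gilbert--Varshamov packing inside $\mathcal{Q}_B$; say so explicitly.

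\textbf{Assouad rather than Fano.} The paper proves Theorem~\ref{thm_minimax_loewe_bound_tr_inv} with Assouad, not Fano, and the corollary follows the same route. Take $\Bothe^{(\mathbf s)}=\delta\mathbf H\boldsymbol\Lambda^{-1/2}\mathbf s$ and the decoder $\hat{\mathbf s}=\arg\min_{\mathbf t\in\mathcal{Q}_B}\mathcal{E}_{\mathcal{L}}(\hat\Bothe,\Bothe^{(\mathbf t)})$. The two-point inequality then gives the deterministic bound $\mathcal{E}_{\mathcal{L}}(\hat\Bothe,\Bothe^{(\mathbf s)})\ge\frac{\delta^2}{8B\sigma^2}\rho_H(\mathbf s,\hat{\mathbf s})$ for $\mathbf s\in\mathcal{Q}_B$. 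Assouad over the full cube, minus $d_\theta\,\mathbb{P}_{\Boxi}\{\Boxi\notin\mathcal{Q}_B\}\le d_\theta/8$, gives average Hamming loss at least $d_\theta/8$, with no packing extraction needed.

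\textbf{The constants.} Fano will not reproduce the stated constants. In the paper, $1/15$ and $1/128$ come from the reverse-Markov Lemma~\ref{eq_bounded_Rv_prob_lower} applied to $\frac{\delta^2}{8B\sigma^2}\rho_H\in[0,\frac{\delta^2 d_\theta}{8B\sigma^2}]$. Its mean is at least $1/8$ of its maximum; at threshold $1/16$ of the maximum this gives $(\frac18-\frac1{16})/(1-\frac1{16})=\frac1{15}$. Your route would give~\eqref{eq_minimax_excess_risk_in_prob_form} only with other absolute constants.

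Finally, make the normalisation in Step~3 consistent: you define $\boldsymbol\omega_k$ with unit norm and then say it is normalised to $d_\theta$.
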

       
       The second result in this section concerns a minimax lower bound in the form of 
        \begin{equation}
        \label{eq_minimax_risk_ThetaB_Q_infty}
            \mathcal{R}_N^\star(\Theta_B,\Vert\cdot\Vert_{Q,\infty}^2) = \inf_{\hat{\Bothe}\in\mathcal{A}_N}\sup_{\Bothe\in\Theta_B}\mathbb{E}_\Bothe^{(N)}\left[\max_{\Boalp\in\E_s} \Big|\mathbf{q}_\Boalp^\top(\hat{\Bothe}-\Bothe)\Big|^2\right],
        \end{equation}
        where $\Vert\cdot\Vert_{Q,\infty}:=\max_{\Boalp\in\E_s} |\mathbf{q}_\Boalp^\top(\cdot)|$measures the largest error along queried directions, and it serves as an analogue of the $\ell_\infty$ norm bound used in ranking literature~\cite{YuxinSpectralMLETopK}. This geometry is also central to the MLE analysis in Section~\ref{MLE_upper_bound}, where we will show that controlling $\Vert\hat{\Bothe}-\BoTheS\Vert_{Q,\infty}$ bounds the drift of $\nabla^2\ell(\hat{\Bothe})$ when compared with $\nabla^2\ell(\BoTheS)$. 
        We assume $\mathbf{W}\succ\mathbf{0}$, 
        under which $\Vert\cdot\Vert_{Q,\infty}$ is a norm. This slightly restrictive assumption is well-justified by Corollary~\ref{coro_from_identifibality}. 
        \begin{theorem}
        \label{thm_residual_coherence_minimax}
              Let $\mathcal{R}_N^\star(\Theta_B,\Vert\cdot\Vert_{Q,\infty}^2)$ be defined as in~\eqref{eq_minimax_risk_ThetaB_Q_infty} and $\mathbf{W}\succ\mathbf{0}$.
             Under Assumptions~\ref{assumpt1_well_Logit}--\ref{assump_gap_wp_w_B_r0},
            \begin{subequations}
                \begin{gather}
                \label{eq_minimax_Q_infty_norm_Exp}
                    \mathcal{R}_N^\star(\Theta_B,\Vert\cdot\Vert_{Q,\infty}^2) \gtrsim \min\left\{ \frac{\mu_Qd_\theta\sigma^2}{N},\sigma^2 \log^2(B/B_0)\right\},\\
                    \label{eq_minimax_Q_infty_norm_Prob}
                    \inf_{\hat{\Bothe}\in\mathcal{A}_N}\sup_{\Bothe\in\Theta_B}\mathbb{P}_{\Bothe}^{(N)}\left\{\Vert\hat{\Bothe}-\Bothe\Vert_{Q,\infty}^2 \geq \min\left\{ \frac{\mu_Qd_\theta\sigma^2}{N},\sigma^2 \log^2(B/B_0)\right\} \right\}  \geq \frac{1}{4}.
                \end{gather}
            \end{subequations}
        \end{theorem}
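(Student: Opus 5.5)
Since the loss $\Vert\cdot\Vert_{Q,\infty}^2$ is a maximum over query directions, the plan is to reduce the minimax problem to a two-point (Le Cam) test along the single most coherent direction. Let $\Boalp^\dagger\in\E_s$ attain the maximum in \eqref{eq:conh-level-mu_Q}, so that $\mathbf{q}_{\Boalp^\dagger}^\top\mathbf{W}^{-1}\mathbf{q}_{\Boalp^\dagger}=\mu_Q d_\theta$. We take the base point $\Bothe_0:=\mathbf{0}$; then $\mathbf{w}=\mathbf{w}_o$, and by Assumption~\ref{assump_gap_wp_w_B_r0} every query satisfies $|\eta_\Boalp(\Bothe_0)|\le\log B_0$. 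The alternative is $\Bothe_1:=\delta\,\mathbf{W}^{-1}\mathbf{q}_{\Boalp^\dagger}/(\mathbf{q}_{\Boalp^\dagger}^\top\mathbf{W}^{-1}\mathbf{q}_{\Boalp^\dagger})^{1/2}$, with $\delta>0$ to be chosen. This perturbation is the Fisher-geometry direction that maximizes the shift along $\mathbf{q}_{\Boalp^\dagger}$ for a given $\mathbf{W}$-norm. For it,
\[
\Vert\Bothe_1-\Bothe_0\Vert_{\mathbf{W}}^2=\delta^2,\qquad \mathbf{q}_{\Boalp^\dagger}^\top(\Bothe_1-\Bothe_0)=\delta(\mu_Qd_\theta)^{1/2}.
\]
Hence $\Vert\Bothe_1-\Bothe_0\Vert_{Q,\infty}\ge\delta(\mu_Qd_\theta)^{1/2}$.

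\emph{Feasibility.} We need $\Bothe_1\in\Theta_B$. For each $\Boalp$, Cauchy--Schwarz in the $\mathbf{W}^{-1}$ inner product gives
\[
|\mathbf{q}_\Boalp^\top\Bothe_1|\le\delta\,(\mathbf{q}_\Boalp^\top\mathbf{W}^{-1}\mathbf{q}_\Boalp)^{1/2}\le\delta(\mu_Qd_\theta)^{1/2}.
\]
It therefore suffices to impose $\delta(\mu_Qd_\theta)^{1/2}\le\sigma\log(B/B_0)$, since then $|\eta_\Boalp(\Bothe_1)|\le\log B_0+\log(B/B_0)=\log B$.

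\emph{KL control.} Under Assumptions~\ref{assumpt1_well_Logit}--\ref{assump_indepent_sample} the KL divergence between the product laws \eqref{eq_joint_distribution_law_of_samples} is the Bregman divergence of the cumulant function,
\[
\mathrm{KL}\bigl(\bbp^{(N)}_{\Bothe_1}\,\|\,\bbp^{(N)}_{\Bothe_0}\bigr)=\sum_\Boalp n_\Boalp D_\Psi\bigl(\eta_\Boalp(\Bothe_0),\eta_\Boalp(\Bothe_1)\bigr).
\]
Since $\Psi''\le 1/4$, this is at most
\[
\frac{1}{8\sigma^2}\sum_\Boalp n_\Boalp\bigl(\mathbf{q}_\Boalp^\top(\Bothe_1-\Bothe_0)\bigr)^2=\frac{N}{8\sigma^2}\Vert\Bothe_1-\Bothe_0\Vert_{\mathbf{W}}^2=\frac{N\delta^2}{8\sigma^2}.
\]
Choose
\[
\delta^2=\min\Bigl\{c\,\frac{\sigma^2}{N},\ \frac{\sigma^2\log^2(B/B_0)}{\mu_Qd_\theta}\Bigr\}
\]
with a small absolute constant $c$ so that the KL is at most $1/8$, say. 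By Pinsker the total variation distance between the two laws is then at most $1/4$.

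\emph{Conclusion.} Le Cam's two-point lemma (\cite[Chapter~2]{TsybakovNonPara}), applied with the semi-distance $\Vert\cdot\Vert_{Q,\infty}$ and separation $2s$ where $s=\tfrac12\delta(\mu_Qd_\theta)^{1/2}$, gives for any estimator
\[
\max_{j=0,1}\bbp_{\Bothe_j}\bigl\{\Vert\hat\Bothe-\Bothe_j\Vert_{Q,\infty}\ge s\bigr\}\ge\tfrac12\bigl(1-\mathrm{TV}\bigr)\ge\tfrac38\ge\tfrac14.
\]
Here
\[
s^2=\tfrac14\mu_Qd_\theta\delta^2\asymp\min\Bigl\{\frac{\mu_Qd_\theta\sigma^2}{N},\ \sigma^2\log^2(B/B_0)\Bigr\},
\]
which yields \eqref{eq_minimax_Q_infty_norm_Prob}. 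The expectation bound \eqref{eq_minimax_Q_infty_norm_Exp} follows by Markov's inequality. The only delicate point is bookkeeping: the explicit constants in \eqref{eq_minimax_Q_infty_norm_Prob} carry no absolute factor, so we must verify that the two-point construction delivers exactly the threshold with probability $1/4$. If constant factors of $1/4$ appear, we enlarge $\delta$ within the feasibility and KL budgets, because both the KL and the feasibility constraints tolerate a constant rescaling when balanced against Pinsker.
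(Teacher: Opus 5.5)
Your route is the paper's: Le Cam's two-point method along $\mathbf{W}^{-1}\mathbf{q}_{\Boalp^\dagger}$, the KL bound from $\Psi''\le 1/4$, and feasibility via Cauchy--Schwarz in the $\mathbf{W}^{-1}$ inner product together with Assumption~\ref{assump_gap_wp_w_B_r0}. As written, this proves \eqref{eq_minimax_Q_infty_norm_Exp} up to an absolute constant. It also proves a weaker form of \eqref{eq_minimax_Q_infty_norm_Prob}, with threshold $\frac14\min\{\mu_Qd_\theta\sigma^2/N,\ \sigma^2\log^2(B/B_0)\}$.

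\textbf{The gap.} The statement \eqref{eq_minimax_Q_infty_norm_Prob} carries no absolute factor in the threshold, and your proposed repair does not work.

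\emph{KL side.} Enlarging $\delta$ is harmless here: with $\delta^2=4\sigma^2/N$ you get $\mathrm{KL}\le 1/2$, hence $\mathrm{TV}\le 1/2$ and probability at least $1/4$.

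\emph{Feasibility side.} The constraint $\delta(\mu_Qd_\theta)^{1/2}\le\sigma\log(B/B_0)$ is a hard constraint, not one that tolerates rescaling. It is already tight at your choice of $\delta$ whenever the logarithmic term is active. Moreover, it cannot be relaxed while one hypothesis sits at $\mathbf{0}$. For instance, if $\mathbf{w}_o=\mathbf{0}$ (so $B_0=1$), every $\Bothe_1\in\Theta_B$ satisfies $\Vert\Bothe_1-\mathbf{0}\Vert_{Q,\infty}\le\sigma\log B$. Any pair containing the origin is therefore separated by at most $\sigma\log(B/B_0)$, and Le Cam can never return a threshold above $\sigma^2\log^2(B/B_0)/4$.

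Consequently, your anchored construction reaches the stated threshold only in the regime $\log^2(B/B_0)\ge 4\mu_Qd_\theta/N$. It fails both when the logarithmic term is the minimum and in the intermediate regime $\mu_Qd_\theta/N\le\log^2(B/B_0)<4\mu_Qd_\theta/N$.

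\textbf{The fix} is to center the pair, which is what the paper does. Take
\[
\Bothe_\pm:=\pm\,\delta\,\frac{\mathbf{W}^{-1}\mathbf{q}_{\Boalp^\dagger}}{(\mu_Qd_\theta)^{1/2}},\qquad \delta=\min\Bigl\{\frac{\sigma}{\sqrt{N}},\ \frac{\sigma\log(B/B_0)}{(\mu_Qd_\theta)^{1/2}}\Bigr\}.
\]
Then:
\begin{itemize}
\item each of the two points satisfies exactly your feasibility condition;
\item the $\Vert\cdot\Vert_{Q,\infty}$-separation doubles to $2\delta(\mu_Qd_\theta)^{1/2}$;
\item the KL bound becomes $N\delta^2/(2\sigma^2)\le 1/2$, so $\mathrm{TV}\le 1/2$.
\end{itemize}
Lemma~\ref{lema_le_Cam_twopoints} then gives probability at least $\frac14$ at threshold exactly
\[
\delta^2\mu_Qd_\theta=\min\{\mu_Qd_\theta\sigma^2/N,\ \sigma^2\log^2(B/B_0)\},
\]
with no constant adjustment needed. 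The symmetric pair uses both sides of the slab $|\eta_{\Boalp^\dagger}(\Bothe)|\le\log B$, which a pair anchored at the origin cannot.
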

        Its proof is deferred to Section~\ref{proof_thm_residual_coherence_minimax}.
    \begin{remark} 
    Theorem~\ref{thm_residual_coherence_minimax} complements Theorem~\ref{thm_minimax_loewe_bound_tr_inv} and identifies the coherence level $\mu_Q$ as the design quantity affecting the query-wise minimax rate.
    Some subsequent conclusions can be drawn.
    \begin{itemize}
        \item[(i)] When $N\geq \frac{\mu_Qd_\theta}{\log^2(B/B_0)}$, 
        the lower bound in~\eqref{eq_minimax_Q_infty_norm_Exp} has scale $\frac{\mu_Qd_\theta\sigma^2}{N}$.
        By~\eqref{eq_FIM_in_theta}--\eqref{eq_upperlower_sorogate_by_L}, this is comparable, for fixed $B$, to the largest query-wise CRLB deduced in~\eqref{eq_CRLB_pesudo_leverage}.
        Together with Theorem~\ref{thm_minimax_loewe_bound_tr_inv}, this observation 
        implies that the trace- and coherence-type Fisher information geometry characterizes the best possible performance of any estimator in the non-asymptotic regime, when measured by the Euclidean and $\Vert\cdot\Vert_{Q,\infty}$ norms, respectively.
        
        \item[(ii)] Under
        the pairwise ranking model, the coherence level reduces, up to normalization, to the maximum effective resistance over queried edges~\cite{ChenYanXiRankRisistan}. One may check $\mu_Q=\mathcal{O}(1)$ with high probability under the widely used Erd\"os-R\'enyi comparison graph design, see, e.g.,~\cite{YuxinSpectralMLETopK}~\cite[Section 5.3.3]{TroppIntroMCIFoundT}. Moreover, $\Vert\hat{\Bothe} - \BoTheS\Vert_{Q,\infty}$ can be upper bounded by $\Vert\hat{\mathbf{w}}-\mathbf{w}^\star\Vert_\infty$ up to an absolute constant~\cite[Section~2.3]{ChenSuhSpeMLETopK}. Thus, together with the Erd\"os-R\'enyi graph design, Theorem~\ref{thm_residual_coherence_minimax} recovers a lower bound for the usual coordinatewise $\ell_\infty$ norm error bound up to logarithmic factors, see, e.g.,~\cite[Theorem 7]{YuxinSpectralMLETopK} and~\cite[Theorem 1]{ChenSuhSpeMLETopK}. For general query directions, Madan et al.~\cite{MadanCombiAlgOptiDesign} show that the discrete D-optimal construction yields an explicit bound on $\mu_Q$, see Section~\ref{sec_numerical_exp}.
\end{itemize}
\end{remark}
    
        \section{The MLE Error Rate}
        \label{MLE_upper_bound}
        With theoretical preparations for lower error bounds of general estimators which are applicable to MLE, we are ready to discuss the MLE error rate by deriving upper error bounds and conditions under which the upper and lower bounds meet up to $\log d$ and $B$ factors. A precondition in Section~\ref{sec_minimax_LB} is that the estimators are finite-valued. However, MLE may not exist (see e.g.,~\cite{KonisLPCheckSeperLogit, ALBERTExistMLELogit, LesaffreExistMLELogit}). This motivates us to investigate 
        existence of MLE in the first place before discussing its upper error bounds. In view of Corollary~\ref{coro_from_identifibality}, we focus on the identifiable case $\mathbf{W}\succ \mathbf{0}$ throughout this section.
        We start by specifying the linear stochastic and deterministic bias terms
        \begin{subequations}
            \label{eq_def_of_Delta_lin_and_Delta_bias}
                \begin{align}
                \label{eq_def_of_Delta_lin}
                \Delta_{lin} &:= -\nabla^2\ell(\BoTheS)^{-1}\nabla\ell(\BoTheS),\\
                \label{eq_def_of_Delta_bias}
                \Delta_{bias} & := -\nabla^2\ell(\BoTheS)^{-1}\left(\frac{1}{2N}\sum_{\Boalp\in\E_s}n_\Boalp \frac{\mathbf{q}_\Boalp^\top\nabla^2\ell(\BoTheS)^{-1}\mathbf{q}_\Boalp}{N\sigma^2}\Psi^{(3)}\left( \frac{\mathbf{w}_o^\top \mathbf{a}_{\Boalp} +\mathbf{q}_\Boalp^\top \BoTheS}{\sigma} \right) \frac{\mathbf{q}_\Boalp}{\sigma}\right),
            \end{align} 
        \end{subequations}
        where $\Psi^{(3)}(\cdot)$ is the third-order derivative of the smooth cumulant function $\Psi(\cdot)$.
        The two quantities in~\eqref{eq_def_of_Delta_lin_and_Delta_bias} will be extensively used throughout the analysis in this section. Observe that the linear term is centered, i.e., $\bbe_{\BoTheS}^{(N)} \left[\Delta_{lin}\right]=\mathbf{0}$, and its covariance satisfies
        \begin{equation*}
            \bbe_{\BoTheS}^{(N)} \left[\Delta_{lin}\Delta_{lin}^\top\right] = \frac{\nabla^2\ell(\BoTheS)^{-1}}{N},\,\bbe_{\BoTheS}^{(N)}\left[ \big|\mathbf{q}_\Boalp^\top\Delta_{lin}\big|^2 \right] = \frac{\mathbf{q}_\Boalp^\top \nabla^2\ell(\BoTheS)^{-1}\mathbf{q}_\Boalp}{N},\,\forall\,\Boalp\in\E_s.
        \end{equation*}
        Thus, $\Delta_{lin}$ has exactly the Fisher information geometry predicted by classical asymptotic likelihood theory. The quantity $\Delta_{bias}$ corresponds to the deterministic second-order bias of the canonical logistic regression, see, e.g.,~\cite[Chapter~4]{McCullaghGLM},~\cite[Section~4]{CordeiroMcCuBiasGLM} and~\cite{PortnoAsymExpFamily, FirthCorrect93, KosmidisFirth21}. It can be alternatively identified from the second-order Taylor expansion of the likelihood score equation, as detailed in Section~\ref{subsubsec_taylor_expasion_of_high_order_res}. For a fixed questionnaire design and ground truth, $\Delta_{bias}$ is deterministic. However, it may dominate the linear stochastic term over a nontrivial finite-sample regime in the absence of additional structural restrictions on the questionnaire (Lemmas~\ref{lema_Bias_Envelope_upperbound} and~\ref{lema_bias_term_in_semi_norm_upper}). 
        Consequently, the resulting sufficient sample size for the MLE to be minimax optimal depends quadratically on $d_\theta$.
        We refer interested readers to~\cite{LinSuJackQuadBarriZEst} and references therein for more discussion.
        \begin{theorem}
        \label{thm_suffic_sample_l2_upper_bound}
            Let Assumptions~\ref{assumpt1_well_Logit}--\ref{auumpt1self_concor} hold and $\mathbf{W}\succ\mathbf{0}$,
            let $\mu_Q$ be the coherence level defined in \eqref{eq:conh-level-mu_Q}. If the number of independent samples satisfies $N\gtrsim B^3\mu_Q d_\theta^{3/2}\log^3 d$, then the following hold.
            \begin{itemize}
                \item[(i)] There exists a positive absolute constant $c>1$, such that with probability at least $1-d^{-c}$, the MLE $\hat{\Bothe}$ in Definition~\ref{def_MLE_func} exists and is unique.
                \item[(ii)] The
                estimation error $\hat{\Bothe}-\BoTheS$ satisfies the following inequalities:
                \begin{subequations}
                \label{eq_ell_Q_seminorm_error_upper_hp}
                    \begin{align}
                    \label{eq_ell_Q_error_upper_hp}
                        \mathbb{P}_{\BoTheS}^{(N)}\left\{               \Vert\hat{\Bothe}-\BoTheS\Vert_{Q,\infty} \lesssim \sqrt{\frac{\sigma^2 B \mu_Q d_\theta \log d}{N}} + \frac{\sigma B\mu_Q d_\theta^{3/2}}{N}\right\} &\geq 1-d^{-c},\\
                        \label{eq_seminorm_error_upper_hp}
                        \mathbb{P}_{\BoTheS}^{(N)}\left\{\Vert\hat{\boldsymbol{\theta}} - \BoTheS\Vert_{\nabla^2\ell(\BoTheS)}\lesssim\sqrt{\frac{B d_\theta\log d}{N}}\right\} &\geq 1-d^{-c}.
                    \end{align}
                \end{subequations}
            \end{itemize}
		\end{theorem}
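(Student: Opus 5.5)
The plan is a fixed-point localization in a convex compact set around $\BoTheS$ that is defined through the query-wise norm $\Vert\cdot\Vert_{Q,\infty}$. Write $\mathbf{H}^\star:=\nabla^2\ell(\BoTheS)$; note that this equals $\mathcal{I}(\BoTheS)$, since the Hessian \eqref{eq_hess_mle_in_theta} does not depend on the responses. By \eqref{eq_upperlower_sorogate_by_L},
\[
\frac{1}{4B\sigma^2}\mathbf{W}\preceq\mathbf{H}^\star\preceq\frac{1}{4\sigma^2}\mathbf{W}.
\]
It follows that $\mathbf{q}_\Boalp^\top\mathbf{H}^{\star-1}\mathbf{q}_\Boalp\le 4B\sigma^2\mu_Q d_\theta$ for every $\Boalp\in\E_s$.

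Define the map
\[
T(\Bothe):=\Bothe-\mathbf{H}^{\star-1}\nabla\ell(\Bothe).
\]
Its fixed points are exactly the stationary points of $\ell$. Let $\mathcal{K}:=\{\Bothe:\Vert\Bothe-\BoTheS\Vert_{Q,\infty}\le r\}$ with $r\le\sigma$; since $\mathbf{W}\succ\mathbf{0}$, this set is convex and compact. On $\mathcal{K}$ every $|\eta_\Boalp(\Bothe)-\eta_\Boalp(\BoTheS)|\le1$. The bounds $|\Psi^{(3)}|\le\Psi''$ then give $e^{-1}\mathbf{H}^\star\preceq\nabla^2\ell(\Bothe)\preceq e\,\mathbf{H}^\star$, so $\ell$ is strictly convex on $\mathcal{K}$.

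By Taylor expansion of the score around $\BoTheS$, for $\Bothe=\BoTheS+\boldsymbol{\delta}$,
\[
T(\Bothe)-\BoTheS=\Delta_{lin}-\mathbf{H}^{\star-1}\mathbf{R}(\boldsymbol{\delta}),
\qquad
\mathbf{R}(\boldsymbol{\delta})=\frac{1}{N\sigma}\sum_{\Boalp\in\E_s} n_\Boalp\Big[\Psi'(\eta_\Boalp+\mathbf{q}_\Boalp^\top\boldsymbol{\delta}/\sigma)-\Psi'(\eta_\Boalp)-\Psi''(\eta_\Boalp)\mathbf{q}_\Boalp^\top\boldsymbol{\delta}/\sigma\Big]\mathbf{q}_\Boalp .
\]
The main work is to show $T(\mathcal{K})\subseteq\mathcal{K}$ with high probability. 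Brouwer's theorem then gives a fixed point in $\mathcal{K}$, hence a stationary point of $\ell$. Since $\ell$ is convex on $\mathbb{R}^{d_\theta}$ and strictly convex on a neighbourhood of that point, the point is the unique global minimizer. This proves (i).

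For the stochastic term, $\mathbf{q}_\Boalp^\top\Delta_{lin}$ is a weighted sum of independent centered Bernoulli variables. Its variance is $\mathbf{q}_\Boalp^\top\mathbf{H}^{\star-1}\mathbf{q}_\Boalp/N\lesssim B\sigma^2\mu_Q d_\theta/N$. Each summand is bounded by $\max_\Boalp\mathbf{q}_\Boalp^\top\mathbf{H}^{\star-1}\mathbf{q}_\Boalp/(N\sigma)$, which gives a lower-order Bernstein correction. Hoeffding or Bernstein plus a union bound over at most $d^2$ queries yields
\[
\Vert\Delta_{lin}\Vert_{Q,\infty}\lesssim\sqrt{\sigma^2B\mu_Qd_\theta\log d/N}
\]
with probability $1-d^{-c}$. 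For the $\mathbf{H}^\star$-seminorm, $\Vert\Delta_{lin}\Vert_{\mathbf{H}^\star}^2=\nabla\ell^\top\mathbf{H}^{\star-1}\nabla\ell$ is a quadratic form in a bounded independent vector. Its mean is $d_\theta/N$, and Hanson--Wright or a matrix Bernstein argument gives $\lesssim d_\theta\log d/N$ with high probability.

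For the remainder, $|\cdot|\le\frac{n_\Boalp}{2}\Psi''(\eta_\Boalp)e\,|\mathbf{q}_\Boalp^\top\boldsymbol{\delta}|^2/\sigma^2$ summand-wise. Using the weighted sum identity for the Hessian,
\[
|\mathbf{q}_\Boalp^\top\mathbf{H}^{\star-1}\mathbf{R}(\boldsymbol{\delta})|\le\frac{e}{2\sigma}\Vert\boldsymbol{\delta}\Vert_{Q,\infty}\,\sup_{\mathbf{u}}\ldots .
\]
Concretely, Cauchy--Schwarz in the $\mathbf{H}^\star$ geometry should give
\[
|\mathbf{q}_\Boalp^\top\mathbf{H}^{\star-1}\mathbf{R}|\lesssim\frac{1}{\sigma}\sqrt{\mathbf{q}_\Boalp^\top\mathbf{H}^{\star-1}\mathbf{q}_\Boalp}\;\Vert\boldsymbol{\delta}\Vert_{Q,\infty}\,\Vert\boldsymbol{\delta}\Vert_{\mathbf{H}^\star}.
\]
This is where the extra factor $d_\theta^{1/2}$ enters. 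To make it usable, I would run the induction on a two-norm set
\[
\mathcal{K}=\{\Vert\boldsymbol{\delta}\Vert_{Q,\infty}\le r_\infty,\ \Vert\boldsymbol{\delta}\Vert_{\mathbf{H}^\star}\le r_2\},
\]
with $r_2\asymp\sqrt{Bd_\theta\log d/N}$. The seminorm bound closes because $\Vert\mathbf{H}^{\star-1}\mathbf{R}\Vert_{\mathbf{H}^\star}\lesssim\Vert\boldsymbol{\delta}\Vert_{Q,\infty}\Vert\boldsymbol{\delta}\Vert_{\mathbf{H}^\star}/\sigma\le r_2/2$ once $r_\infty\lesssim\sigma$. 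The $Q,\infty$ bound closes if
\[
\sqrt{B\sigma^2\mu_Qd_\theta}\;r_\infty r_2/\sigma\le r_\infty/2,
\]
which is exactly $N\gtrsim B^2\mu_Qd_\theta^2\log d$-type scaling. Tracking constants (for instance replacing $r_2$ by the cruder $\ell_2$ localization together with union-bound logs) should produce the stated $B^3\mu_Qd_\theta^{3/2}\log^3d$ threshold.

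The second-order piece of $\mathbf{R}$, evaluated at $\boldsymbol{\delta}\approx\Delta_{lin}$, has expectation equal to $\Delta_{bias}$. Its $Q,\infty$ size is $\lesssim\sigma B\mu_Qd_\theta^{3/2}/N$, which explains the second term of \eqref{eq_ell_Q_error_upper_hp}. Because $\hat{\Bothe}$ is a fixed point in $\mathcal{K}$, the estimates \eqref{eq_ell_Q_seminorm_error_upper_hp} follow directly from the radii.

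The hardest step is closing the $Q,\infty$ self-map. Bounding the remainder along each query direction requires simultaneous control of $\Vert\boldsymbol{\delta}\Vert_{\mathbf{H}^\star}$, and reconciling the bias scale $d_\theta^{3/2}/N$ with the sample-size threshold in the statement. If the two-norm induction is too lossy, I would fall back on the pseudo self-concordance argument of Bach: first localize $\hat{\Bothe}$ in the $\mathbf{H}^\star$-ball directly, then convert to $Q,\infty$ through $\mu_Q$.
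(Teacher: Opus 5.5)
Your overall architecture (a Brouwer self-map, uniqueness from strict convexity, Bernstein plus a union bound for $\Delta_{lin}$) is sound. However, the argument you actually carry out does not reach the stated threshold, and the claim that ``tracking constants'' bridges the difference is where the proof fails.

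Your set is centred at $\BoTheS$ and $T(\BoTheS)=\BoTheS+\Delta_{lin}$. So any radius that makes $T$ a self-map must satisfy $r_2\ge\Vert\Delta_{lin}\Vert_{\mathbf{H}^\star}$, which is typically of order $\sqrt{d_\theta/N}$. Your query-wise estimate has the form $\Vert T(\Bothe)-\BoTheS\Vert_{Q,\infty}\le\Vert\Delta_{lin}\Vert_{Q,\infty}+b\,r_\infty$ with $b\asymp\sqrt{B\mu_Qd_\theta}\,r_2$. It closes only if $b<1$, however $r_\infty$ is chosen, and with your $r_2$ this means $N\gtrsim B^2\mu_Qd_\theta^2\log d$.

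None of the suggested repairs changes this:
constants and logarithms cannot change the power of $d_\theta$;
a cruder $\ell_2$ localization only enlarges $r_2$;
bounding the remainder by $\Vert\boldsymbol{\delta}\Vert_{Q,\infty}^2$ instead yields $N\gtrsim B^2\mu_Q^2d_\theta^2\log d$.
Your fallback does not rescue it either. The paper notes that the self-concordance bound \eqref{eq_self_concor_sourrgate_r_j_tDelta} is not sharp enough for $d_\theta^{3/2}$, and that Bach's localization gives only the quadratic scaling. As written, you prove (i)--(ii) under $N\gtrsim B^2\mu_Qd_\theta^2\log d$, not under $N\gtrsim B^3\mu_Qd_\theta^{3/2}\log^3 d$.

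The missing idea is to recentre the fixed-point problem at $\BoTheS+\Delta_{lin}+\Delta_{bias}$. Expand each remainder to second order (Lemma~\ref{lema_r_j_Delta_Taylor_expand}). Then rewrite the score equation as $\mathbf{h}=\Phi(\mathbf{h})$ for $\mathbf{h}:=\boldsymbol{\delta}-\Delta_{lin}-\Delta_{bias}$ (Lemma~\ref{lema_Residual_Expansion_with_Bias}).

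Let $Q_2(\Delta_{lin})$ denote the second-order part of your $\mathbf{R}$ evaluated at $\Delta_{lin}$. After recentring it no longer acts on the unknown; it is an $\mathbf{h}$-independent additive term. Its mean is exactly $-\mathbf{H}^\star\Delta_{bias}$, which is deterministic and of query-wise size $\lesssim\sigma B\mu_Qd_\theta^{3/2}/N$ by Lemma~\ref{lema_Bias_Envelope_upperbound}. Its centred part along each query is a Hanson--Wright quadratic form (Lemma~\ref{lema_Centered_Quadratic_Biases_all3}).

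The self-map can therefore be verified on $\mathcal{N}(r_H,r_\infty)$ with $r_H$ of the size of $\Vert Q_2(\Delta_{lin})\Vert_{\mathbf{H}^{\star-1}}\lesssim\sigma^{-1}\Vert\Delta_{lin}\Vert_{Q,\infty}\Vert\Delta_{lin}\Vert_{\mathbf{H}^\star}$. This is a product of two small quantities, of order $d_\theta/N$ rather than $\sqrt{d_\theta/N}$.

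Every $\mathbf{h}$-dependent term then carries the factor $\sqrt{B\mu_Qd_\theta}\,\Vert\mathbf{e}_{err}\Vert_{\mathbf{H}^\star}\lesssim\sqrt{B\mu_Qd_\theta}\,r_H$, where $\mathbf{e}_{err}=\Delta_{bias}+\mathbf{h}$. These terms are the cross term $Q_2(\Delta_{lin}+\mathbf{e}_{err})-Q_2(\Delta_{lin})$ and the cubic remainder. The factor is of order $d_\theta^{3/2}/N$ up to $B$, $\mu_Q$ and $\log d$ factors. It replaces your $b\asymp\sqrt{B\mu_Qd_\theta}\,r_2$ with $r_2\gtrsim\sqrt{d_\theta/N}$, and it is the true source of the $d_\theta^{3/2}$ threshold.

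This also explains why the bias term must be kept in $r_\infty$ and in \eqref{eq_ell_Q_error_upper_hp}: at this sample size it is not dominated by $\Vert\Delta_{lin}\Vert_{Q,\infty}$.
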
 
       We do not integrate~\eqref{eq_ell_Q_seminorm_error_upper_hp} to obtain error bounds in expectation form as in~\cite{ShahEstPariCompGrapTop}, since the unconstrained MLE may not exist. We assess minimax optimality by comparing our high-probability upper bounds with the corresponding probability lower bounds in~\eqref{eq_minimax_prob_lowerbound},~\eqref{eq_minimax_excess_risk_in_prob_form}, and~\eqref{eq_minimax_Q_infty_norm_Prob}. See, e.g.,~\cite[Section~3]{PinhanChaoOptimalMLESuboptSpect} for a similar treatment. 
       
       The proof of Theorem~\ref{thm_suffic_sample_l2_upper_bound} is deferred to Section~\ref{proof_thm_suffic_sample_l2_upper_bound}, where Assumption~\ref{auumpt1self_concor} enters the proof via Proposition~\ref{prop_bound_dymic_range} and Definition~\ref{Def_incoherent_level_of_leverages}, see Lemma~\ref{lema_property_of_A_matrix}.
        The first term on the r.h.s.~of~\eqref{eq_ell_Q_error_upper_hp} matches the minimax rate in Theorem~\ref{thm_residual_coherence_minimax} up to $\log d$ and $B$ factors. The second term in~\eqref{eq_ell_Q_error_upper_hp} signifies that the sufficient condition for existence alone does not immediately yield the same error rate with minimax scaling.
        By isolating the second-order bias and higher-order remainder, Theorem~\ref{thm_refinred_Q_infty_ell_2_residual_decompose_bound} below makes this distinction precise.
        In the present fixed-design setting, the $d_\theta^{3/2}$ sample size scaling improves upon the quadratic one by Ostrovskii and Bach~\cite[Theorem 1.1]{OstrovsBachSelfCMEst}, up to $\log d$ and $B$ factors. Consequently, inequalities~\eqref{eq_ell_Q_error_upper_hp} and~\eqref{eq_seminorm_error_upper_hp}, together with Lemma~\ref{lema_single_variate_self_concord}, yield the excess risk bound stated in the following corollary.
        \begin{corollary}
        \label{coro_excess_risk_from_hessian_seminorm}
            Assume the settings and conditions of Theorem~\ref{thm_suffic_sample_l2_upper_bound}. If the number of independent samples satisfies $N\gtrsim B^3\mu_Q d_\theta^{3/2}\log^3 d$, then there exists a positive absolute constant $c>1$, such that
            \begin{equation}
            \label{eq_excess_risk_whp_upper_bound}
                \bbp_{\BoTheS}^{(N)}\left\{ \mathcal{L}_{\BoTheS}(\hat{\Bothe}) - \mathcal{L}_{\BoTheS}(\BoTheS)\lesssim\frac{B d_\theta\log d}{N} \right\} \geq 1-d^{-c},
            \end{equation}
            where $\mathcal{L}_{\BoTheS}(\cdot):=\mathbb{E}_{\BoTheS}^{(N)}[\ell(\cdot)]$ is defined as in Corollary~\ref{coro_miniax_lowerbound_excess_risk}.
        \end{corollary}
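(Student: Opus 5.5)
The plan is to work on the high-probability event $\mathcal{G}$ of Theorem~\ref{thm_suffic_sample_l2_upper_bound}. On $\mathcal{G}$ the MLE $\hat{\Bothe}$ exists and is unique, and both \eqref{eq_ell_Q_error_upper_hp} and \eqref{eq_seminorm_error_upper_hp} hold; intersecting the two events only changes the absolute constant $c$. We then convert the Hessian-seminorm bound into an excess-risk bound via a second-order expansion of the population loss. The key observation is that the population Hessian coincides with the empirical one at every point: $\nabla^2\mathcal{L}_{\BoTheS}(\Bov)=\nabla^2\ell(\Bov)$ by \eqref{eq_hess_mle_in_theta}, since the Hessian does not depend on $Y$. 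Moreover $\nabla\mathcal{L}_{\BoTheS}(\BoTheS)=\mathbf{0}$ because $\bbe[Y_\Boalp]=n_\Boalp\Psi'(\eta_\Boalp(\BoTheS))$.

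Next, set $\Delta:=\hat{\Bothe}-\BoTheS$ and write the integral form of Taylor's theorem:
\begin{equation*}
\mathcal{L}_{\BoTheS}(\hat{\Bothe})-\mathcal{L}_{\BoTheS}(\BoTheS)=\int_0^1(1-s)\,\Delta^\top\nabla^2\ell(\BoTheS+s\Delta)\,\Delta\,\mathrm{d}s .
\end{equation*}
To compare $\nabla^2\ell(\BoTheS+s\Delta)$ with $\nabla^2\ell(\BoTheS)$ we use the univariate pseudo self-concordance bound of Lemma~\ref{lema_single_variate_self_concord}. Since $|\Psi^{(3)}|\le\Psi''$, one has $\Psi''(\eta+t)\le e^{|t|}\Psi''(\eta)$ for each query. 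Applying this with $t=s\,\mathbf{q}_\Boalp^\top\Delta/\sigma$ and summing over $\Boalp\in\E_s$ in \eqref{eq_hess_mle_in_theta} gives
\begin{equation*}
\nabla^2\ell(\BoTheS+s\Delta)\preceq \exp\!\left(\Vert\Delta\Vert_{Q,\infty}/\sigma\right)\nabla^2\ell(\BoTheS),\qquad s\in[0,1].
\end{equation*}
Consequently the excess risk is at most $\tfrac12 e^{\Vert\Delta\Vert_{Q,\infty}/\sigma}\Vert\Delta\Vert_{\nabla^2\ell(\BoTheS)}^2$. (Alternatively, Lemma~\ref{lema_single_variate_self_concord} may already be stated as a direct bound of the form $\Psi(\eta+t)-\Psi(\eta)-t\Psi'(\eta)\le \Psi''(\eta)\,(e^{|t|}-|t|-1)$; summing that over queries gives the same conclusion.)

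The remaining step, which I expect to be the only delicate point, is to show that $\Vert\Delta\Vert_{Q,\infty}/\sigma\lesssim1$ on $\mathcal{G}$, so that the exponential factor is an absolute constant. By \eqref{eq_ell_Q_error_upper_hp}, this requires both $\sqrt{B\mu_Qd_\theta\log d/N}\lesssim1$ and $B\mu_Qd_\theta^{3/2}/N\lesssim1$. Both follow from $N\gtrsim B^3\mu_Qd_\theta^{3/2}\log^3 d$, since $B>1$, $d_\theta\ge1$ and $\log d\ge$ const; by enlarging the implicit constant in the sample-size condition we can make the ratio at most $1$. Combining this with \eqref{eq_seminorm_error_upper_hp} then gives an excess risk $\lesssim Bd_\theta\log d/N$ with probability at least $1-d^{-c}$, which is \eqref{eq_excess_risk_whp_upper_bound}.
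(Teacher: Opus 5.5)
Your proposal is correct and follows the route the paper indicates. The paper omits this proof, saying only that it follows from \eqref{eq_ell_Q_error_upper_hp}, \eqref{eq_seminorm_error_upper_hp} and Lemma~\ref{lema_single_variate_self_concord} by mimicking Ostrovskii--Bach's Theorem~3.1. Your argument is exactly that: you write the excess risk as $\frac{1}{N}\sum_j\delta_j(\Delta)$, bound it by $\tfrac12 e^{\Vert\Delta\Vert_{Q,\infty}/\sigma}\Vert\Delta\Vert_{\nabla^2\ell(\BoTheS)}^2$ using pseudo self-concordance, and then use $\Vert\Delta\Vert_{Q,\infty}/\sigma\lesssim 1$. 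One small point: no intersection of events is needed, because the proof of Theorem~\ref{thm_suffic_sample_l2_upper_bound} already places $\hat{\Bothe}$ in $\mathfrak{W}_{\infty,1/2}$ on the same event that yields the seminorm bound.
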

        We omit the proof as it follows directly from mimicking the proof of~\cite[Theorem 3.1]{OstrovsBachSelfCMEst}. Under the assumptions of Corollary~\ref{coro_miniax_lowerbound_excess_risk}, the minimax lower bound measured in excess risk has scale $\frac{d_\theta}{BN}$ whenever $N\geq \frac{4\mu_Q d_\theta \log(4d)}{\log^2(B/B_0)}$ (see~\eqref{eq_minimax_excess_risk}). Comparing this with Corollary~\ref{coro_excess_risk_from_hessian_seminorm} shows the optimality of~\eqref{eq_excess_risk_whp_upper_bound} up to logarithmic and $B$ factors, see also~\cite[Section 2]{ChardonLogitErrorFinSam} for related excess risk guarantees in the logistic regression literature. 
        
        The main technical challenge in proving Theorem~\ref{thm_suffic_sample_l2_upper_bound} is that positive definiteness of $\nabla^2\ell(\Bothe^\star)$ does not provide a uniform non-trivial lower bound on $\nabla^2\ell(\Bothe)$ over $\mathbb{R}^{d_\theta}$, since $\Psi^{\prime\prime}(\eta)\downarrow 0$ as $|\eta|\to\infty$. We must localize the MLE near $\BoTheS$ before obtaining a curvature lower bound of the negative log-likelihood function.
         However, by performing a constrained MLE with prescribed compact feasible set as in~\cite{ShahEstPariCompGrapTop, ZhuRLHFBTLMLEError, ChenLiuVNMElicit, SuMouPropScoreDeBias}, we can derive an error bound of the form~\eqref{eq_seminorm_error_upper_hp} from the standard argument below.
		\begin{lemma}[Strong Convexity Localization]
        \label{lema_basic_cvx_loc_M_ester}
            Let $\Theta\subseteq\mathbb{R}^{d_{\theta}}$, $\BoTheS\in\Theta$, and $\nabla^2\ell(\BoTheS)\succ\mathbf{0}$. Assume that, for some $\kappa>0$, the negative log-likelihood function $\ell(\Bothe)$ is $\kappa$-strongly convex at $\BoTheS$ under the norm $\Vert\cdot\Vert_{\nabla^2\ell(\BoTheS)}$, i.e.,
            \begin{equation*}
                \label{eq_kappa_local_strong_cvx_semiH}
                \ell(\Bothe) - \ell(\BoTheS) -  \langle\nabla \ell(\BoTheS),\Bothe-\BoTheS\rangle \geq  \kappa\Vert\Bothe-\BoTheS\Vert_{\nabla^2\ell(\BoTheS)}^2,\,\forall\,\Bothe\in\Theta.
            \end{equation*}
            Then every $\Bothe\in\{\Bothe\in\Theta\mid \ell(\Bothe)\leq \ell(\BoTheS)\}$ satisfies $\Vert\Bothe-\BoTheS\Vert_{\nabla^2\ell(\BoTheS)}\leq \frac{1}{\kappa} \Vert\nabla \ell(\BoTheS)\Vert_{\nabla^2\ell(\BoTheS)^{-1}}$.
        \end{lemma}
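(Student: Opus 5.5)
We prove Lemma~\ref{lema_basic_cvx_loc_M_ester} directly, using only the assumed strong-convexity inequality and Cauchy--Schwarz in dual norms. Write $\mathbf{H}:=\nabla^2\ell(\BoTheS)\succ\mathbf{0}$ and $\Delta:=\Bothe-\BoTheS$ for an arbitrary $\Bothe\in\Theta$ with $\ell(\Bothe)\leq\ell(\BoTheS)$.

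First, rearranging the $\kappa$-strong convexity hypothesis and using $\ell(\Bothe)-\ell(\BoTheS)\leq 0$ gives
\begin{equation*}
\kappa\Vert\Delta\Vert_{\mathbf{H}}^2\leq \ell(\Bothe)-\ell(\BoTheS)-\langle\nabla\ell(\BoTheS),\Delta\rangle\leq -\langle\nabla\ell(\BoTheS),\Delta\rangle.
\end{equation*}

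Second, we bound the linear term by the generalized Cauchy--Schwarz inequality between the dual pair $\Vert\cdot\Vert_{\mathbf{H}}$ and $\Vert\cdot\Vert_{\mathbf{H}^{-1}}$. Inserting $\mathbf{H}^{-1/2}\mathbf{H}^{1/2}$, which is legitimate since $\mathbf{H}\succ\mathbf{0}$, yields
\begin{equation*}
|\langle\nabla\ell(\BoTheS),\Delta\rangle|=|\langle\mathbf{H}^{-1/2}\nabla\ell(\BoTheS),\mathbf{H}^{1/2}\Delta\rangle|\leq\Vert\nabla\ell(\BoTheS)\Vert_{\mathbf{H}^{-1}}\Vert\Delta\Vert_{\mathbf{H}}.
\end{equation*}
Combining the two displays gives $\kappa\Vert\Delta\Vert_{\mathbf{H}}^2\leq\Vert\nabla\ell(\BoTheS)\Vert_{\mathbf{H}^{-1}}\Vert\Delta\Vert_{\mathbf{H}}$.

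Third, we conclude by cases. If $\Vert\Delta\Vert_{\mathbf{H}}=0$, the claim is trivial. Otherwise we divide both sides by $\kappa\Vert\Delta\Vert_{\mathbf{H}}>0$, which gives $\Vert\Delta\Vert_{\mathbf{H}}\leq\kappa^{-1}\Vert\nabla\ell(\BoTheS)\Vert_{\mathbf{H}^{-1}}$, as claimed.

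There is no genuine obstacle here. The only points needing care are that positive definiteness of $\mathbf{H}$ makes both norms well defined, and that the sublevel condition is applied inside $\Theta$, where the strong-convexity inequality is assumed to hold.
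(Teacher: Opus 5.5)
Your proof is correct: combining the strong-convexity inequality with the sublevel condition $\ell(\Bothe)\leq\ell(\BoTheS)$, then applying Cauchy--Schwarz in the dual pair $\Vert\cdot\Vert_{\nabla^2\ell(\BoTheS)}$, $\Vert\cdot\Vert_{\nabla^2\ell(\BoTheS)^{-1}}$, and dividing (with the trivial case handled separately) is exactly the standard argument. The paper does not write out a proof of this lemma but defers to the localization lemmas in \cite[Lemma~9]{ShahEstPariCompGrapTop} and \cite[Lemma~12]{YuxinSpectralMLETopK}, which proceed in precisely this way.
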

        This localization strategy appears in~\cite[Lemma 9]{ShahEstPariCompGrapTop} and~\cite[Lemma 12]{YuxinSpectralMLETopK}, which underlies the seminorm bounds for constrained estimators. For the fixed design, $\Vert\nabla \ell(\BoTheS)\Vert_{\nabla^2\ell(\BoTheS)^{-1}}^2$ tightly concentrates around its mean, and $\sqrt{N}\nabla^2\ell(\BoTheS)^{-1/2}\nabla\ell(\BoTheS)$ is isotropic, 
        see e.g., Lemma~\ref{lema_Delta_lin_Q_infty_H_bounds}. Consequently, the main results in~\cite{ChenLiuVNMElicit, ShahEstPariCompGrapTop, ZhuRLHFBTLMLEError} may be interpreted, up to logarithmic and constant factors, as high-probability upper bounds in form of $\Vert\Delta\Vert_{\nabla^2\ell(\BoTheS)}^2\lesssim \frac{Bd_{\theta}}{N}$, thus $\Vert\Delta\Vert_2^2 \lesssim  \frac{Bd_{\theta}\Vert\nabla^2\ell(\BoTheS)^{-1}\Vert}{N}$, where $\Delta = \hat{\Bothe}-\BoTheS$. This bound can be substantially larger than the 
        minimax lower bound in Theorem~\ref{thm_minimax_loewe_bound_tr_inv}. We therefore seek conditions under which the MLE is minimax optimal under both $\Vert\cdot\Vert_{Q,\infty}$ and $\Vert\cdot\Vert_2$.
        \begin{theorem}
        \label{thm_refinred_Q_infty_ell_2_residual_decompose_bound}
            Assume the settings and conditions of Theorem~\ref{thm_suffic_sample_l2_upper_bound}. If the number of independent samples satisfies $N\gtrsim B^3\mu_Q d_\theta^{3/2}\log^3 d$, then there exists a positive absolute constant $c>1$, such that
            \begin{align}
                \label{eq_Delta_minu_Deltalin_minu_Delta_bias_Q_infty}
                \bbp_{\BoTheS}^{(N)}\Bigg\{\Vert \hat{\Bothe} - \BoTheS - \Delta_{lin} - \Delta_{bias}\Vert_{Q,\infty} \lesssim &\sigma\frac{B\mu_Q d_\theta^{3/2}}{N}\left[ \frac{1}{d_\theta^{1/4}} + \frac{1}{B^{3/2}\log^2 d}\right] \nonumber\\
                &\quad+ \sqrt{\frac{\sigma^2 B\mu_Q d_\theta \log d}{N}} \frac{1}{d_\theta^{1/4} \log d}\Bigg\}\geq 1-d^{-c}.
            \end{align}
            If, in addition, $N\gtrsim \max\left\{B^3\mu_Q d_\theta^{3/2}\log^3 d,B^4\mu_Q d_\theta \log d\right\}$, then
            \begin{align}
                \bbp_{\BoTheS}^{(N)}\Bigg\{\Vert\hat{\Bothe}-\BoTheS - \Delta_{lin} - \Delta_{bias}\Vert_2 &\lesssim \frac{\sqrt{\Vert\nabla^2\ell(\BoTheS)^{-1}\Vert B\mu_Q}d_\theta}{N}\left[ \frac{1}{d_\theta^{1/4}} + \frac{1}{B^{3/2}\log^{2}d} \right] \nonumber\\
                \label{eq_Delta_minus_lin_bias_upper_bound}
                & \quad\quad\quad\quad\,\,\,\,\,+ \sqrt{\frac{\mathrm{tr}(\nabla^2\ell(\BoTheS)^{-1})\log d_\theta}{N}}\Bigg\}\geq 1-d^{-c}.
            \end{align}
        \end{theorem}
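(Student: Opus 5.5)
We plan to work on the high-probability event of Theorem~\ref{thm_suffic_sample_l2_upper_bound}: the MLE $\hat{\Bothe}$ exists, is unique, and satisfies the localization bounds \eqref{eq_ell_Q_error_upper_hp}--\eqref{eq_seminorm_error_upper_hp}. Write $\Delta:=\hat{\Bothe}-\BoTheS$, $\mathbf{H}:=\nabla^2\ell(\BoTheS)$ and $\delta_\Boalp:=\mathbf{q}_\Boalp^\top\Delta/\sigma$. The first step is an exact third-order Taylor expansion of the score equation $\nabla\ell(\hat{\Bothe})=\mathbf{0}$ along each query direction, using \eqref{eq_grad_mle_in_theta}:
\begin{equation*}
\mathbf{0}=\nabla\ell(\BoTheS)+\mathbf{H}\Delta+\frac{1}{2N}\sum_{\Boalp\in\E_s}n_\Boalp\Psi^{(3)}(\eta_\Boalp(\BoTheS))\delta_\Boalp^2\frac{\mathbf{q}_\Boalp}{\sigma}+\mathbf{r},
\end{equation*}
where the remainder satisfies $\mathbf{r}=\frac{1}{6N}\sum_\Boalp n_\Boalp\Psi^{(4)}(\tilde\eta_\Boalp)\delta_\Boalp^3\mathbf{q}_\Boalp/\sigma$. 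Multiplying by $-\mathbf{H}^{-1}$ gives $\Delta=\Delta_{lin}+\mathbf{T}_2+\mathbf{T}_3$, where $\mathbf{T}_2$ is the quadratic term and $\mathbf{T}_3=-\mathbf{H}^{-1}\mathbf{r}$. Next we write $\delta_\Boalp^2=(\mathbf{q}_\Boalp^\top\Delta_{lin}/\sigma)^2+\bigl[\delta_\Boalp^2-(\mathbf{q}_\Boalp^\top\Delta_{lin}/\sigma)^2\bigr]$, and split the first piece further into its expectation $\mathbf{q}_\Boalp^\top\mathbf{H}^{-1}\mathbf{q}_\Boalp/(N\sigma^2)$ plus a centered fluctuation. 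The expectation part reproduces exactly $\Delta_{bias}$ in \eqref{eq_def_of_Delta_bias}. What remains to bound is
\begin{itemize}
\item[(a)] the centered quadratic chaos $\mathbf{H}^{-1}\frac{1}{2N}\sum n_\Boalp\Psi^{(3)}\bigl[(\mathbf{q}_\Boalp^\top\Delta_{lin})^2-\bbe(\cdot)\bigr]\mathbf{q}_\Boalp/\sigma^3$;
\item[(b)] the cross term $\delta_\Boalp^2-(\mathbf{q}_\Boalp^\top\Delta_{lin}/\sigma)^2=(\delta_\Boalp-\ell_\Boalp)(\delta_\Boalp+\ell_\Boalp)$, with $\ell_\Boalp:=\mathbf{q}_\Boalp^\top\Delta_{lin}/\sigma$;
\item[(c)] the cubic remainder $\mathbf{T}_3$.
\end{itemize}

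To convert sums of the form $\mathbf{H}^{-1}\frac1N\sum n_\Boalp c_\Boalp\mathbf{q}_\Boalp$ into $\Vert\cdot\Vert_{Q,\infty}$ bounds, we use the leverage bound $\mathbf{q}_\Boalp^\top\mathbf{H}^{-1}\mathbf{q}_\Boalp\le 4B\sigma^2\mu_Qd_\theta$ from Proposition~\ref{prop_bound_dymic_range} and \eqref{eq_upperlower_sorogate_by_L}, together with Cauchy--Schwarz. This gives $|\mathbf{q}_\beta^\top\mathbf{H}^{-1}\frac1N\sum n_\Boalp c_\Boalp\mathbf{q}_\Boalp|\le\sqrt{\mathbf{q}_\beta^\top\mathbf{H}^{-1}\mathbf{q}_\beta}\,\bigl(\frac1N\sum n_\Boalp c_\Boalp^2\Psi''^{-1}\bigr)^{1/2}\sigma$, which in turn requires control of an averaged $\ell_2$-type quantity $\frac1N\sum n_\Boalp\delta_\Boalp^2\lesssim B\Vert\Delta\Vert_{\mathbf{H}}^2$. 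For the Euclidean bound we instead use $\Vert\mathbf{H}^{-1}\mathbf{v}\Vert_2\le\Vert\mathbf{H}^{-1}\Vert^{1/2}\Vert\mathbf{v}\Vert_{\mathbf{H}^{-1}}$. The three terms are then handled as follows.
\begin{itemize}
\item[(c)] Since $|\Psi^{(4)}|\le\Psi''\le 1/4$, we get $|c_\Boalp|\lesssim|\delta_\Boalp|^3\le\Vert\Delta\Vert_{Q,\infty}^2|\delta_\Boalp|/\sigma^2$. Hence $\Vert\mathbf{T}_3\Vert_{Q,\infty}\lesssim\sqrt{B\mu_Qd_\theta}\,\Vert\Delta\Vert_{Q,\infty}^2\Vert\Delta\Vert_{\mathbf{H}}\cdot B^{1/2}/\sigma$. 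Inserting \eqref{eq_ell_Q_error_upper_hp}--\eqref{eq_seminorm_error_upper_hp} and the sample-size condition produces the $\frac{1}{B^{3/2}\log^2 d}$ and $\frac{1}{d_\theta^{1/4}}$ factors.
\item[(b)] Here we first bound $\Vert\Delta-\Delta_{lin}\Vert_{Q,\infty}$ and $\Vert\Delta-\Delta_{lin}\Vert_{\mathbf{H}}$ crudely, by the same expansion at second order, and then multiply by $\Vert\Delta\Vert_{Q,\infty}+\Vert\Delta_{lin}\Vert_{Q,\infty}$.
\item[(a)] This is a degree-two Bernoulli chaos in the independent responses. We plan to bound it with a Hanson--Wright inequality for each fixed $\beta$ and then take a union bound over $\E_s$, whose cardinality is polynomial in $d$; this accounts for the $\log d$.
\end{itemize}

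For the Euclidean statement \eqref{eq_Delta_minus_lin_bias_upper_bound}, the dominant piece of (a) must be measured in $\Vert\cdot\Vert_2$. To avoid the factor $\Vert\mathbf{H}^{-1}\Vert$, we will apply a matrix or vector Bernstein inequality directly to $\mathbf{H}^{-1}\sum(\cdots)\mathbf{q}_\Boalp$; its variance proxy involves $\mathrm{tr}(\mathbf{H}^{-1})$, which yields the $\sqrt{\mathrm{tr}(\mathbf{H}^{-1})\log d_\theta/N}$ term. The extra requirement $N\gtrsim B^4\mu_Qd_\theta\log d$ will be used to ensure that the chaos amplitude $\mu_QBd_\theta/N$ is $O(1)$ in the Bernstein tail. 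The terms (b) and (c) will be converted to $\Vert\cdot\Vert_2$ via $\Vert\mathbf{H}^{-1}\Vert^{1/2}$ times their $\mathbf{H}^{-1}$-dual norm, giving the first bracket.

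The main obstacle is term (a). We need the fluctuation of $\sum n_\Boalp\Psi^{(3)}[(\mathbf{q}_\Boalp^\top\Delta_{lin})^2-\bbe]\mathbf{q}_\Boalp$ to be smaller than the bias by a factor of $d_\theta^{-1/4}$, but $\Delta_{lin}$ is a dense linear combination of all responses, so the off-diagonal chaos matrix involves $\mathbf{M}=\mathbf{Q}^\top\mathbf{H}^{-1}\mathbf{q}_\beta$-weighted kernels. We will bound its Frobenius and operator norms using $\sum_\Boalp\frac{n_\Boalp}{N}\Psi''\mathbf{q}_\Boalp\mathbf{q}_\Boalp^\top=\mathbf{H}$ (the projection identity) and the leverage bound. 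If Hanson--Wright turns out to be too lossy, we will fall back on the crude bound $\Vert\Delta_{lin}\Vert_{Q,\infty}^2$ for the diagonal part and accept the resulting $d_\theta^{-1/4}$ rather than a sharper gain.
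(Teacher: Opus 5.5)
Your decomposition is the same as the paper's. After removing $\Delta_{lin}$ and the mean of the quadratic term (which is $\Delta_{bias}$ by \eqref{eq_identity_Exp_Q_2_Delta_bias}), your (a) is $\Delta_{quad}$ and (b)$+$(c) is $\Delta_{\geq 3}$. Hanson--Wright plus a union bound for (a) in $\Vert\cdot\Vert_{Q,\infty}$ is also exactly how the paper treats $Q_{lin}^m$.

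\textbf{The gap is in (b) and (c).} There you only use the \emph{conclusions} of Theorem~\ref{thm_suffic_sample_l2_upper_bound}. Write $\mathbf{x}_m=\mathbf{q}_m/\sigma$ and $\bar{A}:=4B\mu_Qd_\theta/N\geq A_{\max}$. Those conclusions read $\max_m|\mathbf{x}_m^\top\Delta|\lesssim\sqrt{\bar{A}\log d}+\bar{A}\sqrt{d_\theta}$ and $\Vert\Delta\Vert_{\nabla^2\ell(\BoTheS)}\lesssim\sqrt{Bd_\theta\log d/N}$, and your bound on (c) has the form $\sqrt{\bar{A}N}\,(\max_m|\mathbf{x}_m^\top\Delta|)^2\,\Vert\Delta\Vert_{\nabla^2\ell(\BoTheS)}$.
\begin{itemize}
\item The squared bias envelope $(\bar{A}\sqrt{d_\theta})^2$ then contributes $\bar{A}^{5/2}d_\theta^{3/2}\sqrt{B\log d}$.
\item At $N\asymp B^3\mu_Qd_\theta^{3/2}\log^3 d$, this exceeds each of the three terms on the right-hand side of \eqref{eq_Delta_minu_Deltalin_minu_Delta_bias_Q_infty} (divided by $\sigma$) by a factor growing polynomially in $d_\theta$. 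Relative to the $\frac{1}{B^{3/2}\log^2 d}$ term the factor is $d_\theta^{1/4}/(B\log^2 d)$.
\item So (c) as written does not give the claim. The same loss occurs in (b) if $\Vert\Delta-\Delta_{lin}\Vert_{\nabla^2\ell(\BoTheS)}$ is bounded ``crudely'' by $\max_m|\mathbf{x}_m^\top\Delta|\cdot\Vert\Delta\Vert_{\nabla^2\ell(\BoTheS)}$. It occurs again in your $\Vert\cdot\Vert_2$ bound for $\Delta_{\geq3}$.
\end{itemize}

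\textbf{The missing ingredient.} The quantity $\mathbf{e}:=\Delta-\Delta_{lin}=\Delta_{bias}+\hat{\mathbf{h}}$ is smaller than $\Delta$ in the Hessian seminorm by a factor $\sqrt{\bar{A}\log d}$:
\[
\Vert\mathbf{e}\Vert_{\nabla^2\ell(\BoTheS)}\lesssim\sqrt{\bar{A}Bd_\theta\log^2 d/N}.
\]
The paper reads this off from $\hat{\mathbf{h}}\in\mathcal{N}(r_H,r_\infty)$ in the \emph{proof} of Theorem~\ref{thm_suffic_sample_l2_upper_bound}, together with the projection bound of Lemma~\ref{lema_bias_term_in_semi_norm_upper}. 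You could also obtain it by a self-bounding argument: split $\Delta=\Delta_{lin}+\mathbf{e}$ inside $Q_2$ and $Q_3$, and absorb the terms linear in $\Vert\mathbf{e}\Vert_{\nabla^2\ell(\BoTheS)}$. Either way, the bias must never be routed through $\max_m|\mathbf{x}_m^\top\Delta|\cdot\Vert\Delta\Vert_{\nabla^2\ell(\BoTheS)}$.

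With this in hand:
\begin{itemize}
\item Split $|\mathbf{x}_j^\top\Delta|^3\le 4|\mathbf{x}_j^\top\Delta_{lin}|^3+4|\mathbf{x}_j^\top\mathbf{e}|^3$.
\item Split $(\mathbf{x}_j^\top\Delta)^2-(\mathbf{x}_j^\top\Delta_{lin})^2=2(\mathbf{x}_j^\top\Delta_{lin})(\mathbf{x}_j^\top\mathbf{e})+(\mathbf{x}_j^\top\mathbf{e})^2$.
\item Use $\Vert\mathbf{e}\Vert_{\nabla^2\ell(\BoTheS)}$, not $\Vert\Delta\Vert_{\nabla^2\ell(\BoTheS)}$, in every Cauchy--Schwarz step involving $\mathbf{e}$. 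This reproduces the paper's bounds on $I_2$ and $I_3$.
\item Replace $|\Psi^{(4)}|\le 1/4$ and the reweighting by $1/\Psi''\le 4B$ with $|\Psi^{(4)}(\tilde\eta_j)|\le\Psi''(\tilde\eta_j)\le e^{|\mathbf{x}_j^\top\Delta|}\Psi''(\eta_j^\star)$. Otherwise you pay extra powers of $B$ that the explicit $B^{-3/2}$ cannot absorb.
\end{itemize}

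\textbf{A second gap, in the Euclidean bound \eqref{eq_Delta_minus_lin_bias_upper_bound}.} Term (a) equals $\sum_{k,l}\mathbf{a}^{(kl)}(\xi_k\xi_l-\bbe[\xi_k\xi_l])$ with $\xi_k=\varepsilon_k/\sqrt{v_k^\star}$. This is a vector-valued second-order chaos, not a sum of independent vectors, so matrix or vector Bernstein does not apply to it directly. At best it applies after decoupling, and then the conditional variance $\sum_k\Vert\sum_l\mathbf{a}^{(kl)}\xi_l'\Vert_2^2$ must be controlled separately.

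The paper instead uses the vector-valued Hanson--Wright inequality (Lemma~\ref{lema_HilbertSpaceHW}). Via the projection $\mathbf{P}$ and $\Vert\mathbf{P}\circ\mathbf{P}\Vert\le A_{\max}/4$, it bounds
\[
T\lesssim\sqrt{BA_{\max}\mathrm{tr}(\nabla^2\ell(\BoTheS)^{-1})/N},\qquad U,V\lesssim\sqrt{BA_{\max}\Vert\nabla^2\ell(\BoTheS)^{-1}\Vert/N}.
\]
The condition $N\gtrsim B^4\mu_Qd_\theta\log d$ then absorbs the resulting $B^2$ prefactor into $\sqrt{\mathrm{tr}(\nabla^2\ell(\BoTheS)^{-1})\log d_\theta/N}$.

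Finally, your stated fallback for (a) would not work. Bounding the chaos by $\Vert\Delta_{lin}\Vert_{Q,\infty}^2$ times $\sum_j v_j^\star|\mathbf{A}_{mj}|\lesssim\sqrt{A_{\max}N}$ gives a term at least as large as the bias envelope, with no $d_\theta^{-1/4}$ gain.
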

        The proof is deferred to Section~\ref{proof_thm_refinred_Q_infty_ell_2_residual_decompose_bound}. The role of Theorem~\ref{thm_refinred_Q_infty_ell_2_residual_decompose_bound} is parallel to that of~\cite[Theorem~1]{LinSuJackQuadBarriZEst}, which refines the coarse error bound of Theorem~\ref{thm_suffic_sample_l2_upper_bound} by a second-order expansion and makes the transition toward the minimax regime explicit. By Lemmas~\ref{lema_Bias_Envelope_upperbound} and~\ref{lema_Delta_lin_Q_infty_H_bounds}
        \begin{equation*}
            \Vert\Delta_{bias}\Vert_{Q,\infty}\lesssim\sigma\frac{B\mu_Q d_\theta^{3/2}}{N},\quad \Vert\Delta_{lin}\Vert_{Q,\infty}\lesssim \sqrt{\frac{\sigma^2 B\mu_Q d_\theta \log d}{N}},
        \end{equation*}
        where the second inequality holds with high probability under the assumptions of Theorem~\ref{thm_suffic_sample_l2_upper_bound}. By comparing these bounds with~\eqref{eq_Delta_minu_Deltalin_minu_Delta_bias_Q_infty}, we can argue that the canonical MLE attains the $\Vert\cdot\Vert_{Q,\infty}$ minimax rate up to $\log d$ and $B$ factors whenever $N\gtrsim\max\left\{\frac{B\mu_Q d_\theta^2}{\log d}, B^3\mu_Q d_\theta^{3/2}\log^3 d\right\}$\footnote{Indeed, by adapting Bach's convex localization strategy~\cite{OstrovsBachSelfCMEst}, one may also show the MLE attains the minimax optimal rate up to $B$ and $\log d$ factors under $\Vert\cdot\Vert_{Q,\infty}$ when $N\gtrsim B^2\mu_Q d_\theta^2 \log d$.}, and the resulting $d_\theta^2$ dependency is consistent with the quadratic barrier for smooth plug-in estimators based on the MLE~\cite{LinSuJackQuadBarriZEst}. 
        
        We pause to relate Corollary~\ref{coro_excess_risk_from_hessian_seminorm} and Theorem~\ref{thm_refinred_Q_infty_ell_2_residual_decompose_bound} to Theorem~3.2 and Proposition~3.1 of~\cite{PortnoAsymExpFamily}, respectively. For a canonically parameterized exponential family with dimension $d_\theta$ and sample size $N$, Portnoy shows~\cite[Section~3]{PortnoAsymExpFamily}, under suitable moment conditions, that $\frac{d_\theta^{3/2}}{N}\to 0$ is sufficient to guarantee normal approximation of the centered likelihood ratio statistic, but it is in general not enough for obtaining the asymptotic normality of a fixed linear function of the uncorrected MLE, and hence for the query-wise linear predictor in the present setting. Inequality~\eqref{eq_Delta_minu_Deltalin_minu_Delta_bias_Q_infty} may be viewed as a design-dependent, non-asymptotic analogue of~\cite[Proposition~3.1]{PortnoAsymExpFamily}. Both results isolate the leading linear stochastic term and the second-order bias, while controlling the higher-order remainders. 
        On the other hand,~\eqref{eq_excess_risk_whp_upper_bound} bounds the excess risk, whereas~\cite[Theorem~3.2]{PortnoAsymExpFamily} gives a distributional likelihood-ratio approximation. Both results exhibit a $d_\theta^{3/2}/N$ dimensional scaling\footnote{In discussions of dimensional scaling, we omit $\log d$ or $\log d_\theta$ factors and treat $B$ and $\mu_Q$ as uniformly bounded and $\sigma$ as fixed, thus suppressing their effects on $N$.}. The same $d_\theta^{3/2}$ scaling has also appeared in recent analysis of broader classes of $Z$-estimators under stronger regularity conditions~\cite{SuMouPropScoreDeBias, LinSuJackQuadBarriZEst}, see Section~\ref{proof_thm_suffic_sample_l2_upper_bound} for further discussion. 
        
        Similar to~\eqref{eq_Delta_minu_Deltalin_minu_Delta_bias_Q_infty}, inequality~\eqref{eq_Delta_minus_lin_bias_upper_bound} admits a related but more geometry-sensitive structure stated in the following corollary. 
        The proof is deferred to Appendix~\ref{proof_coro_euclidean_upper_in_detailed_terms}.
        \begin{corollary}
        \label{coro_euclidean_upper_in_detailed_terms}
            Assume the settings and conditions of Theorem~\ref{thm_refinred_Q_infty_ell_2_residual_decompose_bound}. Suppose further 
            \bgeqn 
            \label{eq_sample_size_for_trace_Euc_upper_error}
            N\gtrsim \max\left\{ B^3\mu_Q d_\theta^{3/2}\log^3 d,\,
            B^4\mu_Q d_\theta \log d,\,
            \frac{B\mu_Qd_\theta \log d_\theta}{r_{\mathrm{eff}}},\,
            \frac{B\mu_Qd_\theta^2}{r_{\mathrm{eff}}\log d_\theta}\right\},
            \edeqn 
            where $r_{\mathrm{eff}}:=\frac{\mathrm{tr}(\nabla^2\ell(\BoTheS)^{-1})}{\Vert\nabla^2\ell(\BoTheS)^{-1}\Vert}$ denotes the effective rank of $\nabla^2\ell(\BoTheS)^{-1}$. Then there exists a positive absolute constant $c>1$, such that
            \begin{equation}
            \label{eq_delta_hat_upper_bound_ell2_to_minimax_rate}
                \bbp_{\BoTheS}^{(N)}\left\{\Vert\hat\Bothe-\BoTheS\Vert_2^2\lesssim\frac{\mathrm{tr}(\nabla^2\ell(\BoTheS)^{-1})\log d_\theta}{N}\right\} \geq 1-2d^{-c} - d_\theta^{-c}.
            \end{equation}
        \end{corollary}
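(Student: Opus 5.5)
We decompose the estimation error as $\hat{\Bothe}-\BoTheS = \Delta_{lin} + \Delta_{bias} + R$, where $R:=\hat{\Bothe}-\BoTheS-\Delta_{lin}-\Delta_{bias}$, and use
\begin{equation*}
\Vert\hat\Bothe-\BoTheS\Vert_2^2\le 3\Vert\Delta_{lin}\Vert_2^2+3\Vert\Delta_{bias}\Vert_2^2+3\Vert R\Vert_2^2 .
\end{equation*}
Each of the three terms will be compared with the target rate $\mathrm{tr}(\nabla^2\ell(\BoTheS)^{-1})\log d_\theta/N$.

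For $R$ we invoke \eqref{eq_Delta_minus_lin_bias_upper_bound} of Theorem~\ref{thm_refinred_Q_infty_ell_2_residual_decompose_bound}, which applies because the first two entries of \eqref{eq_sample_size_for_trace_Euc_upper_error} are exactly its sample-size hypothesis. Its second summand is already of the target order. In the first summand the bracket is at most an absolute constant. Squaring it and writing $\Vert\nabla^2\ell(\BoTheS)^{-1}\Vert=\mathrm{tr}(\nabla^2\ell(\BoTheS)^{-1})/r_{\mathrm{eff}}$ gives
\begin{equation*}
\frac{\mathrm{tr}(\nabla^2\ell(\BoTheS)^{-1})}{N}\cdot\frac{B\mu_Q d_\theta^2}{r_{\mathrm{eff}}N}.
\end{equation*}
This is at most the target once $N\gtrsim B\mu_Q d_\theta^2/(r_{\mathrm{eff}}\log d_\theta)$, which is the fourth entry of \eqref{eq_sample_size_for_trace_Euc_upper_error}. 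The event in question has probability at least $1-d^{-c}$.

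For $\Delta_{bias}$, the deterministic envelope (Lemma~\ref{lema_Bias_Envelope_upperbound}, or its Euclidean analogue) should give
\begin{equation*}
\Vert\Delta_{bias}\Vert_2\lesssim \sqrt{\Vert\nabla^2\ell(\BoTheS)^{-1}\Vert}\,\sqrt{B\mu_Q}\,d_\theta/N .
\end{equation*}
To see this, write $\Delta_{bias}=-\nabla^2\ell(\BoTheS)^{-1}\mathbf{g}$. Bound $\Vert\Delta_{bias}\Vert_2\le\Vert\nabla^2\ell(\BoTheS)^{-1}\Vert^{1/2}\Vert\mathbf{g}\Vert_{\nabla^2\ell(\BoTheS)^{-1}}$. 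Then control $\Vert\mathbf{g}\Vert_{\nabla^2\ell(\BoTheS)^{-1}}$ by duality against unit vectors $\mathbf{u}$, using Cauchy--Schwarz with weights $n_\Boalp\Psi''$, the bound $|\Psi^{(3)}|\le\Psi''$, and the leverage bound $\mathbf{q}_\Boalp^\top\nabla^2\ell(\BoTheS)^{-1}\mathbf{q}_\Boalp\lesssim B\sigma^2\mu_Q d_\theta$ together with $\sum_\Boalp n_\Boalp\Psi''\mathbf{q}_\Boalp^\top\nabla^2\ell(\BoTheS)^{-1}\mathbf{q}_\Boalp/(N\sigma^2)=d_\theta$. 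The squared bias is then handled by the same fourth sample-size condition as for $R$.

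For $\Delta_{lin}=\nabla^2\ell(\BoTheS)^{-1/2}\mathbf{z}/\sqrt N$, the vector $\mathbf{z}$ is an isotropic sum of independent bounded centered vectors. A Hanson--Wright or matrix Bernstein bound (in the form of Lemma~\ref{lema_Delta_lin_Q_infty_H_bounds}) then yields
\begin{equation*}
\Vert\Delta_{lin}\Vert_2^2\lesssim \frac{\mathrm{tr}(\nabla^2\ell(\BoTheS)^{-1})+\Vert\nabla^2\ell(\BoTheS)^{-1}\Vert\log d_\theta}{N}\cdot(\text{maximal leverage factor})
\end{equation*}
with probability at least $1-d_\theta^{-c}$. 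The delicate point is the sub-Gaussian/Bernstein deviation term. Its Bernstein part carries $\max_\Boalp$ of the scaled leverage $\sqrt{B\mu_Q d_\theta/N}$, and this must be absorbed into $\mathrm{tr}(\nabla^2\ell(\BoTheS)^{-1})\log d_\theta/N$. That absorption is what the third condition $N\gtrsim B\mu_Q d_\theta\log d_\theta/r_{\mathrm{eff}}$ is designed for, after again dividing by $\mathrm{tr}(\nabla^2\ell(\BoTheS)^{-1})$ and using $r_{\mathrm{eff}}$.

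I expect this linear-term concentration to be the main obstacle: producing a dimension-free trace bound rather than $d_\theta\Vert\nabla^2\ell(\BoTheS)^{-1}\Vert$. Finally, a union bound over the high-probability events behind \eqref{eq_Delta_minus_lin_bias_upper_bound} and Theorem~\ref{thm_suffic_sample_l2_upper_bound}, each of probability at least $1-d^{-c}$, together with the $\Delta_{lin}$ event of probability at least $1-d_\theta^{-c}$, gives the stated probability $1-2d^{-c}-d_\theta^{-c}$.
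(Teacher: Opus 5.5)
Your proposal follows the paper's proof. It uses the same split into $\Delta_{lin}$, $\Delta_{bias}$ and a remainder, takes the remainder from \eqref{eq_Delta_minus_lin_bias_upper_bound} and absorbs it with the fourth sample-size condition, and bounds the bias through $\Vert\Delta_{bias}\Vert_2\le\sqrt{\Vert\nabla^2\ell(\BoTheS)^{-1}\Vert}\,\Vert\Delta_{bias}\Vert_{\nabla^2\ell(\BoTheS)}$ and Lemma~\ref{lema_bias_term_in_semi_norm_upper}, which your duality and Cauchy--Schwarz argument reproduces. The one step you leave open is the linear term. Your displayed bound there is not correct as written: the leverage enters only additively, through the range term of a Bernstein inequality, not as a factor multiplying the whole bound.

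The paper settles this with matrix Bernstein (Lemma~\ref{lema_TroppMatrixConcen_ineq}) applied directly to $\Delta_{lin}=\sum_j\mathbf{z}_j$, where $\mathbf{z}_j:=\frac{1}{N}\nabla^2\ell(\BoTheS)^{-1}\mathbf{x}_j\eps_j$ is viewed as a $d_\theta\times 1$ matrix.
\begin{itemize}
\item The variance proxy is $\sum_j\mathbb{E}\Vert\mathbf{z}_j\Vert_2^2=\mathrm{tr}(\nabla^2\ell(\BoTheS)^{-1})/N$. This is exactly the dimension-free trace you were worried about.
\item The range is $\Vert\mathbf{z}_j\Vert_2\le\sqrt{\Vert\nabla^2\ell(\BoTheS)^{-1}\Vert A_{\max}/N}$, with $A_{\max}\le 4B\mu_Q d_\theta/N$ by Lemma~\ref{lema_property_of_A_matrix}.
\end{itemize}
With probability at least $1-d_\theta^{-c}$ this gives
\[
\Vert\Delta_{lin}\Vert_2\lesssim\sqrt{\frac{\mathrm{tr}(\nabla^2\ell(\BoTheS)^{-1})\log d_\theta}{N}}+\frac{\sqrt{B\mu_Q d_\theta\Vert\nabla^2\ell(\BoTheS)^{-1}\Vert}\,\log d_\theta}{N}.
\]
The squared second term divided by the target is $B\mu_Q d_\theta\log d_\theta/(r_{\mathrm{eff}}N)$, and this ratio is what the third condition controls.

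Avoid the two alternatives you mention. Lemma~\ref{lema_Delta_lin_Q_infty_H_bounds} only controls the $\nabla^2\ell(\BoTheS)$-seminorm, so it leads back to the $d_\theta\Vert\nabla^2\ell(\BoTheS)^{-1}\Vert$ bound you want to avoid. Hanson--Wright, with $\Vert\eps_i/\sqrt{v_i^\star}\Vert_{\psi_2}\lesssim\sqrt{B}$, gives a deviation bound containing a term $B\Vert\nabla^2\ell(\BoTheS)^{-1}\Vert\log d_\theta/N$. That term decays at the same $1/N$ rate as the target, so no condition on $N$ absorbs it; it costs a factor $B/r_{\mathrm{eff}}$. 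Bernstein works because its range term is second order in $1/N$.
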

        Comparing~\eqref{eq_delta_hat_upper_bound_ell2_to_minimax_rate} with~\eqref{eq_minimax_exp_prob_lowerbound},  we can see that the upper bound coincides with the lower bound up to $\log d_\theta$ and $B$ factors. Beyond the baseline sample-size requirements inherited from Theorems~\ref{thm_suffic_sample_l2_upper_bound} and~\ref{thm_refinred_Q_infty_ell_2_residual_decompose_bound}, the dependence of the sample-size requirement on $\nabla^2\ell(\BoTheS)$ is entirely summarized by the effective rank of $\nabla^2\ell(\BoTheS)^{-1}$. Since $1\leq r_{\rm eff}\leq d_\theta$, the resulting dimensional scaling in~\eqref{eq_sample_size_for_trace_Euc_upper_error} ranges from $d_\theta^{3/2}$ to $d_\theta^{2}$ if $B$ and $\mu_Q$ are independent of $d_\theta$. For example, when $r_{\rm eff}=\mathcal{O}(1)$, the quadratic dependence on $d_\theta$ in~\eqref{eq_sample_size_for_trace_Euc_upper_error} is then recovered. 
        Neither the smallest eigenvalue nor the condition number of $\nabla^2\ell(\BoTheS)$ appears in the sample complexity for the MLE to attain the Euclidean minimax rate in Theorem~\ref{thm_minimax_loewe_bound_tr_inv}.
        Consequently, near-singularity of the FIM can enlarge the absolute estimation error at the r.h.s.~of~\eqref{eq_delta_hat_upper_bound_ell2_to_minimax_rate} without causing a corresponding explosion of sample size at the r.h.s.~of~\eqref{eq_sample_size_for_trace_Euc_upper_error}.
        \section{Numerical Experiments}
        \label{sec_numerical_exp}
        This section presents some numerical studies that evaluate the finite-sample performance of the MLE and the Firth correction. We examine how the gap between empirical error of the MLE and the CRLB (\eqref{eq_CRLB_pesudo_leverage} and~\eqref{eq_CRLB_full_rank_ell2} in Corollary~\ref{coro_of_CRLB}) changes with the sample size. To this end, we first use synthetic query pools to control the design spectrum and parameter orientation, and then consider an
        experiment based on flight survey data in~\cite{SanguinettiGreenFlight}.  Figure~\ref{fig_numerical-flow} gives a flowchart of the key steps in 
        this section.
        \begin{figure}[t]
            \centering
            \resizebox{\linewidth}{!}{\input{Tikz_flow_charts/numerical_tikz_auto.tex}}
            \caption{Flowchart of the numerical experiments.}
            \label{fig_numerical-flow}
        \end{figure}
        \subsection{Implementation Details}
        \label{subsec_implemen_detail}
 		
 		We consider a linear utility model without equality constraints throughout this section, i.e., $\mathbf{V}_{A^\perp} = \mathbf{I}_p$, $\mathbf{w}_o = \mathbf{0}$, $\sigma=1$. 
        Let $\mathbb{I}=\{(i,j)\mid 1\leq i<j\leq d\}$. In each experiment, we specify a fixed pool of candidate queries $\{\mathbf{q}_\Boalp\}_{\Boalp\in\mathbb{I}}$ that spans $\mathbb{R}^{d_\theta}$ and a ground-truth parameter $\BoTheS\in\mathbb{R}^{d_\theta}$. Their constructions are given in Sections~\ref{subsec_Synthetic_Questionnaires} and~\ref{subsec_green_flight_question}. Since $\mathbf{V}_{A^\perp} = \mathbf{I}_p$, $\mathbf{q}_\Boalp=\mathbf{a}_{\Boalp}$ and $\BoTheS= {\bf w}^\star.$
        The MLE and Firth correction are solved by standard trust-region method. To obtain an identifiable design with controlled coherence level, we identify the edge weights $\{n_\Boalp\}_{\Boalp\in\E_s}$ by solving the discrete D-optimal problem~\cite{MadanCombiAlgOptiDesign}, i.e.,
        \begin{align}
        \label{eq_D_optimal_problem}
            \max_{n_\Boalp}\,& \log \det\left( \sum_{\Boalp\in\mathbb{I}} n_\Boalp \mathbf{q}_\Boalp\mathbf{q}_\Boalp^\top \right)\\
            \mathrm{s.t.}\,& \sum_{\Boalp\in\mathbb{I}}n_\Boalp \leq N_{base},\,n_\Boalp\geq 0,\,n_\Boalp\in\mathbb{Z},\,\forall\,\Boalp\in\mathbb{I}, \nonumber
        \end{align}
        where $N_{base}\asymp d_\theta$ is the design budget to be specified later. Using the solution, we define $\E_s:=\{\Boalp\in\mathbb{I}\mid n_\Boalp > 0\}$, and set
        \begin{equation}
        \label{eq_def_of_W_by_Doptimal_design_sol}
            \mathbf{W}:=\frac{1}{N_{base}}\sum_{\Boalp\in\E_s} n_\Boalp\mathbf{q}_\Boalp\mathbf{q}_\Boalp^\top.
        \end{equation} 
        We repeat the base blocks $\{n_\Boalp\mathbf{q}_\Boalp\mathbf{q}_\Boalp^\top\}_{\Boalp\in\E_s}$ by a replication factor $r\in \mathbb{Z}_+$ to increase the sample size from $N = N_{base}$ to $N = rN_{base}$ while keeping all other design-dependent coefficients unchanged. Alternatively, we may adopt D-optimal design without repetition\footnote{It can be solved by a similar method as in~\cite{SelinColGenExpDesign}, and the local search algorithm has an analogous approximation guarantee as the one that solves~\eqref{eq_D_optimal_problem}, see, e.g.,~\cite[Section~A.1]{MadanCombiAlgOptiDesign}.}, i.e., replace $n_\Boalp\in\mathbb{Z}$ by $n_\Boalp\in\{0,1\}$ in~\eqref{eq_D_optimal_problem}, but this would involve unnecessary subtlety in the design of numerical experiments for validating the effectiveness of both the MLE and Firth correction. 
        
        By the Kiefer-Wolfowitz theorem (see e.g.,~\cite{KieferWolfowitzThm},~\cite[Chapter~2.2]{ToddMVEEBook}, and~\cite[Chapter~21]{LattimoreBanditAlg}), and the strong duality between the convex relaxation of the D-optimal design and the minimum volume enclosing ellipsoid problem~\cite[Chapter~2]{ToddMVEEBook}, the solution to the convex relaxation of~\eqref{eq_D_optimal_problem} is automatically G-optimal. Ahipasaoglu et al.~\cite{SelinColGenExpDesign} use Fedorov's exchange method (see, e.g.~\cite[Section~4.1.1]{HuanJagalurOptiDesignForandComp}) to refine an initial integer allocation obtained by rounding a continuous solution of the convex relaxation of~\eqref{eq_D_optimal_problem}. We adopt their approach to solve~\eqref{eq_D_optimal_problem} due to its adaptation to candidate sets with large cardinality. Moreover, Algorithm~3 in~\cite{SelinColGenExpDesign} guarantees $\mu_Q \leq \min\left\{ \frac{N_{base}}{d_\theta},\frac{N_{base}}{N_{base}-d_\theta+1}\right\}<2$ for any $N_{base}\geq d_\theta$ based on intermediate results in~\cite[Section~2.2]{MadanCombiAlgOptiDesign}\footnote{Let $\mathbb{I}_s\subseteq \mathbb{I}$ be the search set containing $\E_s$. At a local optimum of the Fedorov's exchange method, Madan et al.~\cite[Lemma~9]{MadanCombiAlgOptiDesign} show that $\max_{\Boalp\in\mathbb{I}_s} \mathbf{q}_\Boalp^\top \mathbf{W}^{-1}\mathbf{q}_\Boalp \leq \frac{N_{base}d_\theta}{N_{base} - d_\theta +1}$. The upper bound on $\mu_Q$ can be derived from combining $\mu_Q\leq \frac{N_{base}}{d_\theta}$, and Madan et al.'s result.}. 
        
        For given $(d_\theta,N,\{\mathbf{q}_\Boalp\}_{\Boalp\in\E_s},\{n_\Boalp\}_{\Boalp\in\E_s})$, we simulate the response $Y_\Boalp^{(t)}\in\{0,1\}$ as a realization of a Bernoulli random variable with success rate $p_\Boalp(\BoTheS)$ for every $t\in[n_\Boalp], \Boalp\in\E_s$, independently. For each questionnaire design and fixed sample size, we run $L=3000$ independent Monte Carlo trials for the response vector $Y^{(N)}$ and calculate the MLE and Firth correction in each trial. Since MLE may not exist when the sample size is small, we use Konis’s linear programming method~\cite{KonisLPCheckSeperLogit} to test the existence of MLE in each trial. When there is no MLE, we use the optimal solution obtained in the box $[-100\Vert\BoTheS\Vert_\infty, 100\Vert\BoTheS\Vert_\infty]^{d_\theta}$. 
        For trial $l$, define 
        \begin{equation*}
            \Delta_l := \hat{\Bothe}^l - \BoTheS,\quad \Delta_{\mathrm{Firth}}^l := \hat{\Bothe}_{\mathrm{Firth}}^l - \BoTheS,
        \end{equation*}
        where $\hat{\Bothe}^l$ ($\hat{\Bothe}_{\mathrm{Firth}}^l$) denotes the MLE (Firth correction) for the $l$-th trial. Let $\mathfrak{L}\subseteq[L]$ index the trials included in a summary. We report the root mean square errors (RMSE)
        \begin{align*}
            \Vert \Delta \Vert_2 \text{ RMSE: }& \sqrt{\frac{1}{|\mathfrak{L}|}\sum_{l\in\mathfrak{L}}\Vert\Delta_l \Vert_2^2},\quad \Vert\Delta\Vert_{Q,\infty} \text{ RMSE: } \max_{\Boalp\in\E_s} \sqrt{\frac{1}{|\mathfrak{L}|}\sum_{l\in\mathfrak{L}} |\mathbf{q}_\Boalp^\top \Delta_l |^2}.
        \end{align*}
        The same definitions apply to $\Delta_{\mathrm{{Firth}}}$ and the random residuals identified in Section~\ref{MLE_upper_bound}. For RMSEs conditioned on MLE existence, $\mathfrak{L}$ contains only trials for which Konis's method reports no separation, so that the MLE exists. We compare the estimators' RMSEs with the corresponding CRLBs in Corollary~\ref{coro_of_CRLB}.

        \subsection{Synthetic Questionnaire}
        \label{subsec_Synthetic_Questionnaires}
        We construct synthetic query pools to examine how $\Delta_{lin}$, $\Delta_{bias}$, and the higher-order terms depend on the design spectrum (see~\eqref{eq_candidate_q_alpha_and_W_mat_def}) and parameter orientation (see~\eqref{eq_candidate_theta_star_def}) in the non-asymptotic regime.
        
        Let $\boldsymbol{g}_i\overset{i.i.d.}{\sim}\mathcal{N}(\mathbf{0},\mathbf{I}_{d_\theta})$, for $i\in[d]$. For a user-specified parameter $m\leq \binom{d}{2}$, we sample uniformly without replacement from the set $\mathbb{I}=\{(i,j)\mid 1\leq i<j\leq d\}$ to construct a subset $\tilde{\mathbb{I}}\subseteq\mathbb{I}$ with cardinality $m$, and solve~\eqref{eq_D_optimal_problem} with $\mathbb{I}$ replaced by $\tilde{\mathbb{I}}$.
        With a Rademacher sequence $\{\xi_{\Boalp}\}_{\Boalp\in\tilde{\mathbb{I}}}\in\{-1,1\}^{m}$, define
        \begin{equation*}
            \tilde{\mathbf{q}}_\Boalp : = \xi_{\Boalp}(\boldsymbol{g}_i - \boldsymbol{g}_j),\quad \tilde{\mathbf{W}} : = \frac{1}{m}\sum_{\Boalp\in\tilde{\mathbb{I}}}\tilde{\mathbf{q}}_{\Boalp}\tilde{\mathbf{q}}_{\Boalp}^\top.
        \end{equation*}
        Let $\mathbf{u}\sim\mathrm{Unif}\,(\mathbb{S}^{d_\theta-1})$ and $\mathbf{v}\sim\mathrm{Unif}\,(\mathbb{S}^{d_\theta-1}\cap \mathbf{u}^\perp)$ such that $\mathbf{u}^\top\mathbf{v}=0$.
        For $\epsilon\in(0,1]$, set $\mathbf{R} :=\mathbf{I}_{d_\theta} + (\sqrt{\epsilon}-1)\mathbf{uu}^\top$ and reshape the spectrum of $\tilde{\mathbf{W}}$ as
        \begin{equation}
        \label{eq_candidate_q_alpha_and_W_mat_def}
            \mathbf{q}_\Boalp := \mathbf{R}\tilde{\mathbf{W}}^{-1/2}\tilde{\mathbf{q}}_{\Boalp},\,\hat{\mathbf{W}}: = \frac{1}{m}\sum_{\Boalp\in\tilde{\mathbb{I}}}\mathbf{q}_\Boalp\mathbf{q}_\Boalp^\top = \mathbf{RR}^\top = \mathbf{I}_{d_\theta} + (\epsilon-1)\mathbf{uu}^\top.
        \end{equation}
        \begin{table}[t]
        \centering
        \caption{Exponent $\alpha_{\mathrm{cross}}:=\log_{d_\theta}N_{\mathrm{cross}}$ across the parameter grid. Each entry reports the Monte Carlo mean $\pm$ standard deviation of $\alpha_{\mathrm{cross}}$.}
        \label{table_CRLB_bias_Euc_crossover_mean_std}
        \scriptsize
        \setlength{\tabcolsep}{1.5pt}
        \renewcommand{\arraystretch}{1}
        \begin{minipage}[t]{0.49\textwidth}
        \vspace{0pt}
        \centering
        \textbf{(a)} $d_\theta=30$, $N_{\mathrm{base}}=45$\par
        \vspace{2pt}
        \resizebox{\linewidth}{!}{%
        \begin{tabular}{@{}c@{\hspace{2pt}}|@{\hspace{2pt}}*{5}{c}@{}}
          \toprule
          \multicolumn{1}{c}{} &
          \multicolumn{5}{c}{$r_{e,tar}$} \\
          \cmidrule(lr){2-6}
          \multicolumn{1}{c}{$\rho$} & 2 & 3.94 & 7.75 & 15.2 & 30 \\
          \midrule
          0.5 & $1.13 \pm 0.037$ & $0.99 \pm 0.031$ & $0.88 \pm 0.028$ & $0.80 \pm 0.028$ & $0.74 \pm 0.029$ \\
          0.6 & $1.22 \pm 0.032$ & $1.07 \pm 0.028$ & $0.93 \pm 0.027$ & $0.82 \pm 0.027$ & $0.74 \pm 0.028$ \\
          0.7 & $1.30 \pm 0.030$ & $1.13 \pm 0.027$ & $0.98 \pm 0.026$ & $0.85 \pm 0.027$ & $0.74 \pm 0.028$ \\
          0.8 & $1.36 \pm 0.029$ & $1.19 \pm 0.027$ & $1.03 \pm 0.027$ & $0.87 \pm 0.027$ & $0.74 \pm 0.028$ \\
          0.9 & $1.42 \pm 0.029$ & $1.25 \pm 0.026$ & $1.07 \pm 0.027$ & $0.90 \pm 0.028$ & $0.74 \pm 0.030$ \\
          \bottomrule
        \end{tabular}%
        }
      \end{minipage}\hfill
      \begin{minipage}[t]{0.49\textwidth}
        \vspace{0pt}
        \centering
        \textbf{(b)} $d_\theta=50$, $N_{\mathrm{base}}=75$\par
        \vspace{2pt}
        \resizebox{\linewidth}{!}{%
        \begin{tabular}{@{}c@{\hspace{2pt}}|@{\hspace{2pt}}*{5}{c}@{}}
          \toprule
          \multicolumn{1}{c}{} &
          \multicolumn{5}{c}{$r_{e,tar}$} \\
          \cmidrule(lr){2-6}
          \multicolumn{1}{c}{$\rho$} & 2 & 4.47 & 10 & 22.4 & 50 \\
          \midrule
          0.5 & $1.18 \pm 0.026$ & $1.02 \pm 0.020$ & $0.88 \pm 0.017$ & $0.78 \pm 0.016$ & $0.72 \pm 0.016$ \\
          0.6 & $1.26 \pm 0.025$ & $1.09 \pm 0.019$ & $0.93 \pm 0.017$ & $0.80 \pm 0.016$ & $0.72 \pm 0.016$ \\
          0.7 & $1.33 \pm 0.025$ & $1.15 \pm 0.019$ & $0.98 \pm 0.017$ & $0.83 \pm 0.016$ & $0.72 \pm 0.016$ \\
          0.8 & $1.39 \pm 0.024$ & $1.21 \pm 0.019$ & $1.03 \pm 0.017$ & $0.86 \pm 0.017$ & $0.72 \pm 0.017$ \\
          0.9 & $1.44 \pm 0.024$ & $1.26 \pm 0.019$ & $1.07 \pm 0.017$ & $0.89 \pm 0.017$ & $0.72 \pm 0.018$ \\
          \bottomrule
        \end{tabular}%
        }
      \end{minipage}
    
      \vspace{0.8em}
    
      \begin{minipage}[t]{0.49\textwidth}
        \vspace{0pt}
        \centering
        \textbf{(c)} $d_\theta=70$, $N_{\mathrm{base}}=105$\par
        \vspace{2pt}
        \resizebox{\linewidth}{!}{%
        \begin{tabular}{@{}c@{\hspace{2pt}}|@{\hspace{2pt}}*{5}{c}@{}}
          \toprule
          \multicolumn{1}{c}{} &
          \multicolumn{5}{c}{$r_{e,tar}$} \\
          \cmidrule(lr){2-6}
          \multicolumn{1}{c}{$\rho$} & 2 & 4.86 & 11.8 & 28.8 & 70 \\
          \midrule
          0.5 & $1.21 \pm 0.019$ & $1.04 \pm 0.014$ & $0.89 \pm 0.013$ & $0.78 \pm 0.011$ & $0.71 \pm 0.011$ \\
          0.6 & $1.29 \pm 0.018$ & $1.11 \pm 0.013$ & $0.94 \pm 0.012$ & $0.80 \pm 0.012$ & $0.71 \pm 0.011$ \\
          0.7 & $1.35 \pm 0.017$ & $1.17 \pm 0.013$ & $0.99 \pm 0.012$ & $0.83 \pm 0.012$ & $0.71 \pm 0.012$ \\
          0.8 & $1.41 \pm 0.017$ & $1.22 \pm 0.012$ & $1.03 \pm 0.012$ & $0.86 \pm 0.012$ & $0.71 \pm 0.012$ \\
          0.9 & $1.46 \pm 0.017$ & $1.27 \pm 0.012$ & $1.08 \pm 0.012$ & $0.89 \pm 0.013$ & $0.71 \pm 0.013$ \\
          \bottomrule
        \end{tabular}%
        }
      \end{minipage}\hfill
      \begin{minipage}[t]{0.49\textwidth}
        \vspace{0pt}
        \centering
        \textbf{(d)} $d_\theta=90$, $N_{\mathrm{base}}=135$\par
        \vspace{2pt}
        \resizebox{\linewidth}{!}{%
        \begin{tabular}{@{}c@{\hspace{2pt}}|@{\hspace{2pt}}*{5}{c}@{}}
          \toprule
          \multicolumn{1}{c}{} &
          \multicolumn{5}{c}{$r_{e,tar}$} \\
          \cmidrule(lr){2-6}
          \multicolumn{1}{c}{$\rho$} & 2 & 5.18 & 13.4 & 34.7 & 90 \\
          \midrule
          0.5 & $1.24 \pm 0.017$ & $1.05 \pm 0.013$ & $0.89 \pm 0.012$ & $0.78 \pm 0.010$ & $0.71 \pm 0.010$ \\
          0.6 & $1.31 \pm 0.016$ & $1.12 \pm 0.012$ & $0.95 \pm 0.011$ & $0.80 \pm 0.010$ & $0.71 \pm 0.010$ \\
          0.7 & $1.37 \pm 0.015$ & $1.18 \pm 0.011$ & $0.99 \pm 0.010$ & $0.83 \pm 0.010$ & $0.71 \pm 0.010$ \\
          0.8 & $1.43 \pm 0.015$ & $1.23 \pm 0.010$ & $1.04 \pm 0.010$ & $0.86 \pm 0.010$ & $0.71 \pm 0.010$ \\
          0.9 & $1.47 \pm 0.015$ & $1.28 \pm 0.010$ & $1.08 \pm 0.009$ & $0.89 \pm 0.009$ & $0.71 \pm 0.009$ \\
          \bottomrule
        \end{tabular}%
        }
      \end{minipage}
    \end{table}
    Thus, $\hat{\mathbf{W}}$ has eigenvalue $\epsilon$ along $\mathbf{u}$ and eigenvalue $1$ on $\{\mathbf{u}\}^\perp$. Its inverse has effective rank
     \begin{equation*}
         r_{e,tar}:=\frac{\mathrm{tr}(\hat{\mathbf{W}}^{-1})}{\Vert\hat{\mathbf{W}}^{-1}\Vert} = 1 + (d_\theta-1)\epsilon.
     \end{equation*}
     Note that $\hat{\mathbf{W}}$ uniformly averages all queries in the candidate pool, whereas $\mathbf{W}$ in~\eqref{eq_def_of_W_by_Doptimal_design_sol} uses the selected D-optimal design, and $\mathcal{I}(\BoTheS)$ additionally incorporates the ground-truth variance, see~\eqref{eq_FIM_in_theta}--\eqref{eq_def_of_matrix_K}. The effective ranks of their inverses therefore may not coincide. We specify the ground truth by
        \begin{equation}
        \label{eq_candidate_theta_star_def}
            \BoTheS = \mathbf{R}^{-1}\left(\rho \mathbf{u} + \sqrt{1-\rho^2}\mathbf{v}\right),\,\rho\in[0,1].
        \end{equation}
        For fixed $\{\tilde{\mathbf{q}}_\Boalp\}_{\Boalp\in\tilde{\mathbb{I}}}$, the pair $(\mathbf{u},\mathbf{v})$ and $\rho$, 
        \begin{equation*}
            \eta_\Boalp(\BoTheS) =\frac{\mathbf{w_o}^\top\mathbf{a}_\Boalp + (\mathbf{V}_{A^\perp}\Bothe^\star)^\top\mathbf{a}_\Boalp}{\sigma}= \mathbf{q}_\Boalp^\top \BoTheS = \tilde{\mathbf{q}}_\Boalp^\top \tilde{\mathbf{W}}^{-1/2}\left(\rho \mathbf{u} + \sqrt{1-\rho^2}\mathbf{v}\right)
        \end{equation*}
        and it satisfies $\frac{1}{m}\sum_{\Boalp\in\tilde{\mathbb{I}}} \eta_\Boalp(\BoTheS)^2 = 1$. Since $\eta_\Boalp(\BoTheS)$ does not depend on $\epsilon$, $\epsilon$ does not affect the choice probabilities in~\eqref{eq_BTL_basic_scaled_alpha}. From direct calculation, $\frac{|\mathbf{u}^\top \BoTheS|}{\Vert\BoTheS\Vert_2} = \frac{\rho}{\sqrt{\rho^2 + \epsilon(1-\rho^2)}}$, thus $\rho$ and $\epsilon$ jointly control the alignment between $\BoTheS$ and $\mathbf{u}$.
        Moreover, the solution of~\eqref{eq_D_optimal_problem} is also invariant to $\epsilon$ because
        \begin{equation}
        \label{eq_Doptimal_sol_not_effected_by_eps_and_reff}
            \log\det\left( \sum_{\Boalp\in\tilde{\mathbb{I}}} n_\Boalp \mathbf{q}_\Boalp\mathbf{q}_\Boalp^\top \right) = \log\epsilon + \log\det\left( \sum_{\Boalp\in\tilde{\mathbb{I}}} n_\Boalp \tilde{\mathbf{W}}^{-1/2}\tilde{\mathbf{q}}_{\Boalp}\tilde{\mathbf{q}}_{\Boalp}^\top\tilde{\mathbf{W}}^{-1/2}\right).
        \end{equation}
  We draw $100$ independent realizations of the candidate query pool and the ground-truth parameter as described above with $r_{e,tar}=d_\theta$\footnote{The choice $r_{e,tar}=d_\theta$ implies $\epsilon=1$, from which $\mathbf{R}$ reduces to $\mathbf{I}_{d_\theta}$, and the ground truth is purely determined by $\rho$ and the random directions $\mathbf{u}$, $\mathbf{v}$.}, $\rho=0.5$, $m=2000$, $d=4d_\theta$, $d_\theta \in \{30,50,70,90\}$ and solve~\eqref{eq_D_optimal_problem} with $N_{base}=1.5d_\theta$. According to~\eqref{eq_candidate_q_alpha_and_W_mat_def},~\eqref{eq_candidate_theta_star_def} and~\eqref{eq_Doptimal_sol_not_effected_by_eps_and_reff}, the solution of~\eqref{eq_D_optimal_problem} is invariant to $(r_{e,tar}, \rho)$. Moreover, for a fixed design $\{n_\Boalp\}_{\Boalp\in\E_s}$, the CRLB (see~\eqref{eq_CRLB_full_rank_ell2}) and $\Vert\Delta_{bias}\Vert_2$ (see~\eqref{eq_def_of_Delta_bias}) are functions of $(r_{e,tar}, \rho)$. Thus, for different $(r_{e,tar}, \rho)$ values, we can calculate the CRLB and $\Vert\Delta_{bias}\Vert_2$ based on the case when $(r_{e,tar}, \rho) = (d_\theta,0.5)$.
  Note that when a base query design is replicated, $\mathcal{I}(\BoTheS)$ remains unchanged ($n_\Boalp$ and $N$ has the same scale in~\eqref{eq_FIM_in_theta}). Thus 
  \begin{equation*}
      \sqrt{\frac{\mathrm{tr}(\mathcal{I}(\BoTheS)^{-1})}{N}}= \sqrt{\frac{\mathrm{tr}(\mathcal{I}(\BoTheS)^{-1})}{N_{base}}} \frac{1}{\sqrt{r}},\quad
  \Vert\Delta_{bias}(N)\Vert_2 =\frac{\Vert\Delta_{bias}(N_{base})\Vert_2}{r}.
  \end{equation*}
  The two quantities are equal when
  $N_{cross} = N_{base}^2 \frac{\Vert\Delta_{bias}(N_{base})\Vert_2^2}{\mathrm{tr}(\mathcal{I}(\BoTheS)^{-1})}$. We report the mean values and standard deviations of $\alpha_{\mathrm{cross}}:=\log_{d_\theta}N_{\mathrm{cross}}$ in Table~\ref{table_CRLB_bias_Euc_crossover_mean_std} for different values of $(r_{e,tar}, \rho)$. In general, $N_{cross}$ increases with $\rho$ and decreases with $r_{e,tar}$, and can be substantially larger than $d_\theta$. When $r_{e,tar} = d_\theta$, the pool spectrum is isotropic and the distribution of $N_{cross}$ is invariant to $\rho$.
        \begin{figure}[p]
	       \centering
            \setlength{\abovecaptionskip}{0cm}
            \setlength{\belowcaptionskip}{0cm}
            \begin{minipage}{15.5cm}
            \centering
            \begin{subfigure}[t]{0.5\linewidth}
            \centering
            \includegraphics[width=\linewidth]{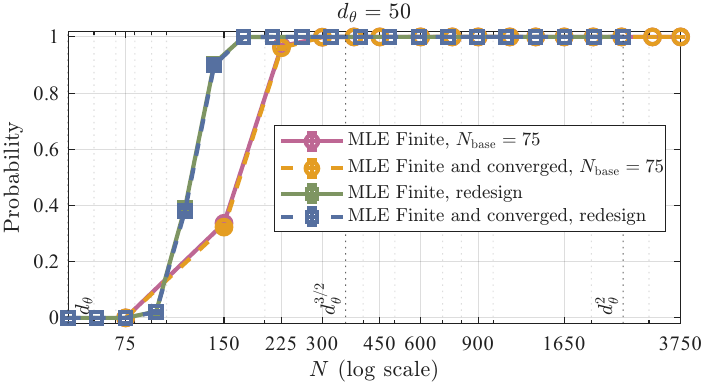}
            \caption{}
            \end{subfigure}\hfill
            \begin{subfigure}[t]{0.5\linewidth}
            \centering
            \includegraphics[width=\linewidth]{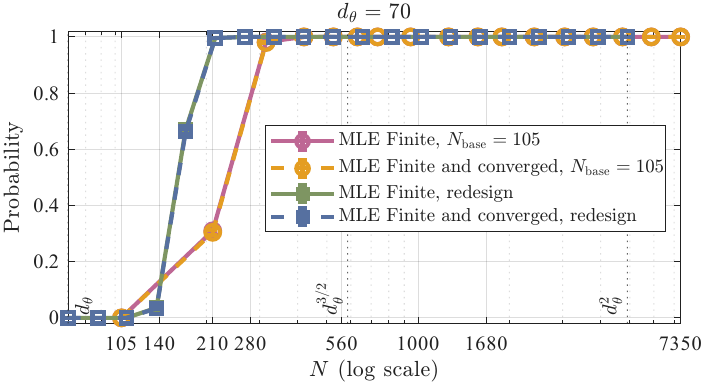}
            \caption{}
            \end{subfigure}
            \end{minipage}
	       \caption{MLE existence and trust-region convergence probabilities versus $N$ under base-design replication and D-optimal redesign, estimated from 3000 Monte Carlo trials, using the synthetic query pool: (a) $d_\theta = 50$; (b) $d_\theta = 70$. An MLE is classified as finite when Konis’s method reports no separation. An MLE is classified as converged if the trust-region method returns a solution with gradient norm below $10^{-6}$. The vertical dashed lines indicate the values of $d_\theta$, $d_\theta^{3/2}$ and $d_\theta^2$.}
	       \label{fig_Dopt_MLE_existence_convergence_transition}
        \end{figure}
        \begin{figure}[p]
        \centering
        \setlength{\abovecaptionskip}{0cm}
        \setlength{\belowcaptionskip}{0cm}
            \begin{minipage}{15.5cm}
            \centering
            \begin{subfigure}[t]{0.5\linewidth}
            \centering
            \includegraphics[width=\linewidth]{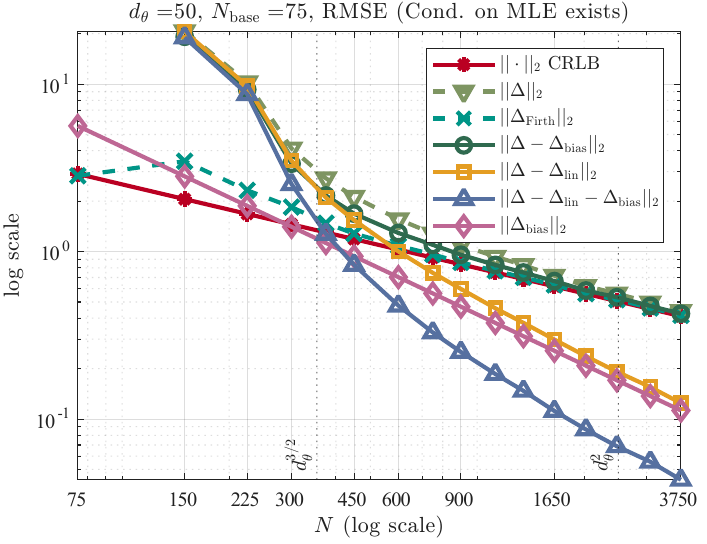}
            \caption{}
            \end{subfigure}\hfill
            \begin{subfigure}[t]{0.5\linewidth}
            \centering
            \includegraphics[width=\linewidth]{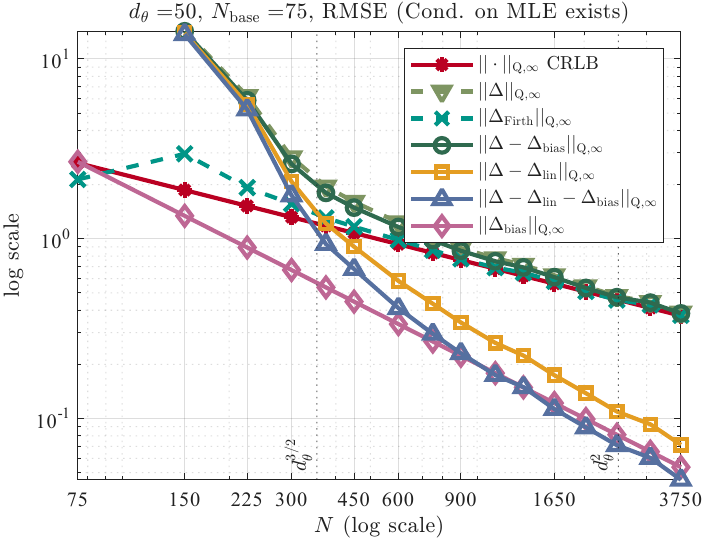}
            \caption{}
            \end{subfigure}
            \par\vspace{2pt}
            \begin{subfigure}[t]{0.5\linewidth}
            \centering
            \includegraphics[width=\linewidth]{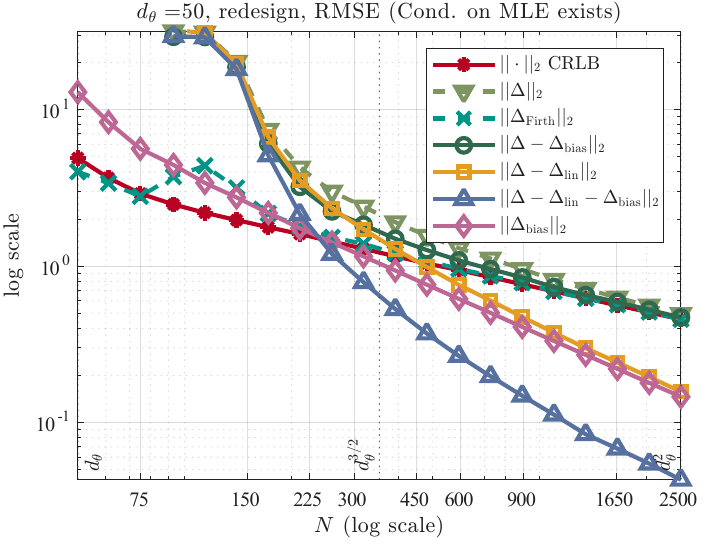}
            \caption{}
            \end{subfigure}\hfill
            \begin{subfigure}[t]{0.5\linewidth}
            \centering
            \includegraphics[width=\linewidth]{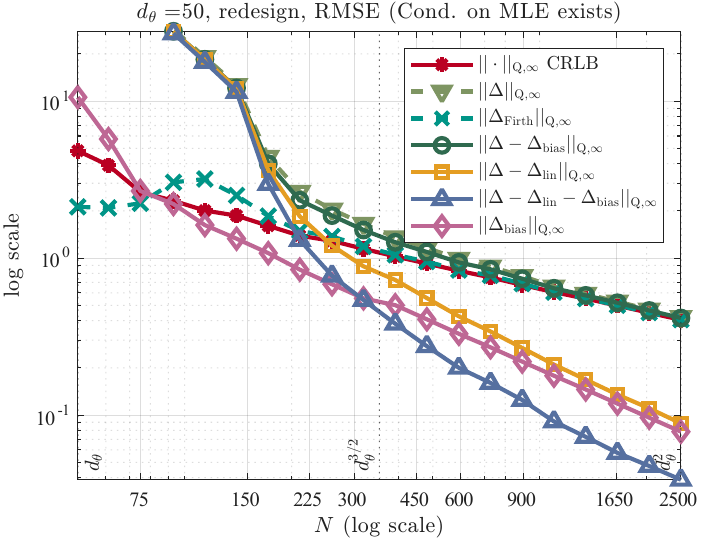}
            \caption{}
            \end{subfigure}
            \end{minipage}
        \caption{Finite-sample performance using the synthetic query pool under a replicated D-optimal design with $(d_\theta,N_{base}) = (50,75)$ (subplots (a)--(b)) and a D-optimal design recomputed for each $N$ (subplots (c)--(d)). These plots compare the MLE and Firth RMSEs with the CRLB, the second-order bias, and higher-order residuals. For each $N$, the MLE-based RMSEs are calculated conditional on the existence of a finite MLE, and are omitted when the MLE existence probability is less than $0.02$. Firth RMSEs are computed over all 3000 trials. The vertical dashed lines indicate the values of $d_\theta$, $d_\theta^{3/2}$ and $d_\theta^2$. For compact figure labels, ``$\Vert\cdot\Vert_{Q,\infty}$ CRLB'' denotes the square root of the r.h.s. of~\eqref{eq_CRLB_pesudo_leverage}, maximized over $\Boalp\in\E_s$.}
        \label{fig_Dopt_d_50_Nbase_75}
        \end{figure}
        
        We next fix $\rho=0.9$, $r_{e,tar} = 2$ and examine $d_\theta \in \{50, 70\} $ with $N_{base} = 1.5d_\theta$. 
        For each $d_\theta$, we select the underlying realization in the Table~\ref{table_CRLB_bias_Euc_crossover_mean_std} experiment whose $N_{cross}$ is closest to the empirical median,
        and fix the corresponding candidate query pool and ground-truth parameter in the subsequent Monte Carlo simulation. We record the trials in which the MLE exists and the trust-region algorithm converges, and plot the corresponding empirical probabilities in Figure~\ref{fig_Dopt_MLE_existence_convergence_transition}.
        To determine whether the observed behavior is an artifact of replicating a base design, we compare the replicated design with a discrete D-optimal design recomputed for each $N$. 
        Under both schemes, we evaluate the MLE and Firth correction using the RMSE criterion defined in Section~\ref{subsec_implemen_detail}, with results for $d_\theta=50$ and $70$ plotted in Figures~\ref{fig_Dopt_d_50_Nbase_75} and~\ref{fig_Dopt_d_70_Nbase_105}, respectively.

        \begin{figure}[!t]
	       \centering
            \setlength{\abovecaptionskip}{0cm}
            \setlength{\belowcaptionskip}{0cm}
            \begin{minipage}{15.5cm}
            \centering
            \begin{subfigure}[t]{0.5\linewidth}
            \centering
            \includegraphics[width=\linewidth]{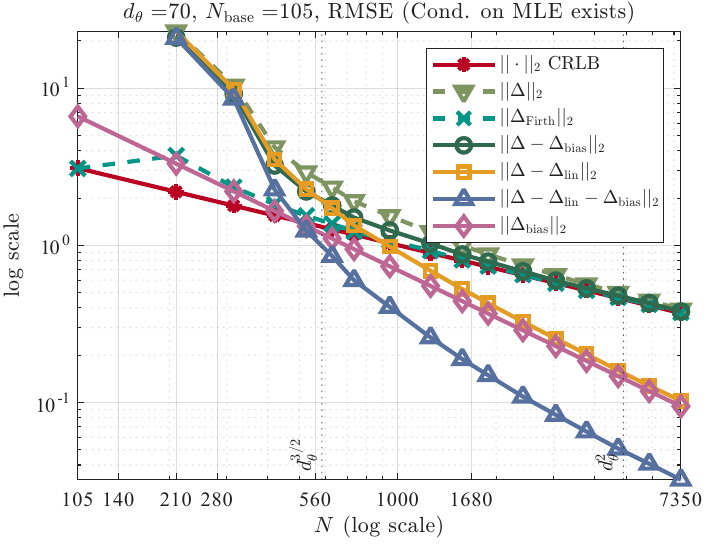}
            \caption{}
            \end{subfigure}\hfill
            \begin{subfigure}[t]{0.5\linewidth}
            \centering
            \includegraphics[width=\linewidth]{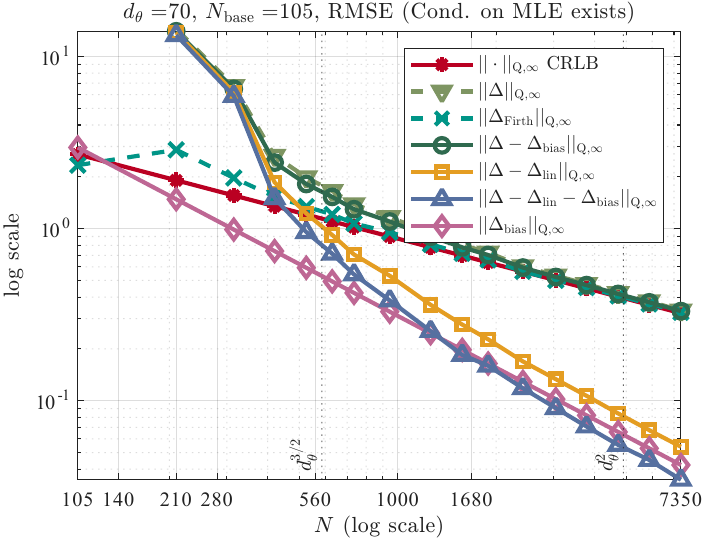}
            \caption{}
            \end{subfigure}
            \par\vspace{2pt}
            \begin{subfigure}[t]{0.5\linewidth}
            \centering
            \includegraphics[width=\linewidth]{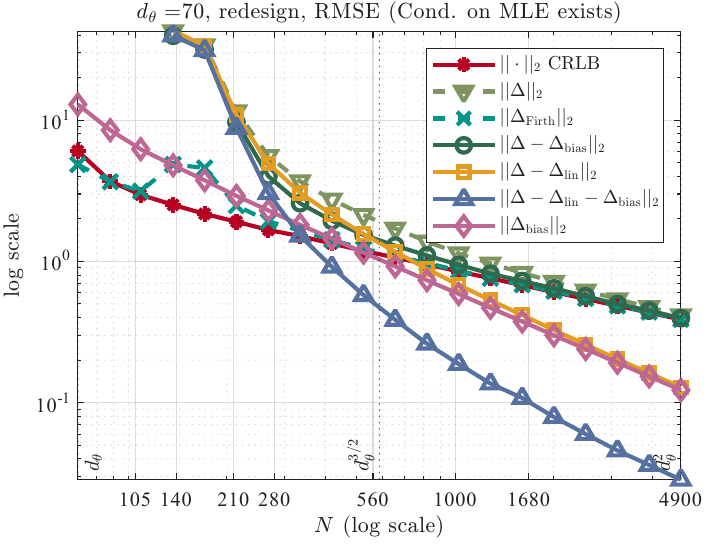}
            \caption{}
            \end{subfigure}\hfill
            \begin{subfigure}[t]{0.5\linewidth}
            \centering
            \includegraphics[width=\linewidth]{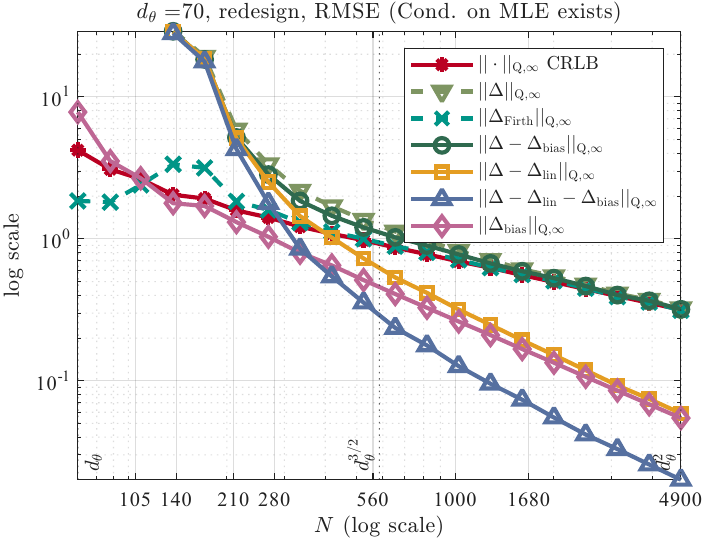}
            \caption{}
            \end{subfigure}
            \end{minipage}
	       \caption{Same as Figure~\ref{fig_Dopt_d_50_Nbase_75}, but with $(d_\theta, N_{base}) = (70,105)$.}
	       \label{fig_Dopt_d_70_Nbase_105}
        \end{figure}
        Figures~\ref{fig_Dopt_MLE_existence_convergence_transition}--\ref{fig_Dopt_d_70_Nbase_105} reveal that the sample size required for finiteness of the MLE is smaller than the sample size required for statistical efficiency (i.e., its RMSE approaches the CRLB).
        Figure~\ref{fig_Dopt_MLE_existence_convergence_transition} shows that the MLE does not exist in most Monte Carlo trials when $d_\theta\leq N\leq 2 d_\theta$ ($d_\theta = 50$ in Figure~\ref{fig_Dopt_MLE_existence_convergence_transition}~(a), and $d_\theta = 70$ in Figure~\ref{fig_Dopt_MLE_existence_convergence_transition}~(b)).
        On the other hand, the probability of the MLE being finite approaches one before $N$ reaches $d_\theta^{3/2}$. Thus, the sufficient $d_\theta^{3/2}$-scale condition in Theorem~\ref{thm_suffic_sample_l2_upper_bound} is conservative for existence in this specific instance. Next, we use Figure~\ref{fig_Dopt_d_50_Nbase_75}~(a) to describe the finite-sample behavior of the MLE. The RMSE curves involving $\Delta$ are computed conditional on MLE existence and are displayed only when the MLE existence probability exceeds $0.02$. The $\Vert\cdot\Vert_2$ CRLB, $\Vert\Delta_{bias}\Vert_2$, and $\Vert\Delta_{\mathrm{Firth}}\Vert_2$ RMSE curves are plotted over the full sample-size range. The $\Vert\Delta\Vert_2$ RMSE curve can be described in roughly four stages according to sample size. When $N\leq 150$, the MLE does not exist in most trials. For $150\leq N\leq 450$, the decrease in $\Vert\Delta\Vert_2$ RMSE primarily reflects the decay of the nonlinear residuals, as shown by the $\Vert\Delta-\Delta_{lin}-\Delta_{bias}\Vert_2$ and $\Vert\Delta-\Delta_{lin}\Vert_2$ RMSE curves. 
        At $N\approx 450$ (about $1.27d_\theta^{3/2}$), the $\Vert\Delta-\Delta_{lin}-\Delta_{bias}\Vert_2$ RMSE falls below $\Vert\Delta_{bias}\Vert_2$ and tends to vanish, while the $\Vert\Delta-\Delta_{lin}\Vert_2$ RMSE eventually approaches $\Vert\Delta_{bias}\Vert_2$ as the sample size grows. This observation is consistent with the error bound in Theorem~\ref{thm_refinred_Q_infty_ell_2_residual_decompose_bound}. 
        In the case $450\leq N\leq 900$, the log-log slope of the $\Vert\Delta\Vert_2$ RMSE curve closely tracks that of $\Vert\Delta_{bias}\Vert_2$, although the bias is already smaller than the CRLB. Finally, for $N\geq 1650$, the $\Vert\Delta\Vert_2$ RMSE nearly coincides with the CRLB. This final transition is consistent with Corollary~\ref{coro_euclidean_upper_in_detailed_terms}, which requires both the higher-order remainder and the second-order bias to be small relative to the linear stochastic term. Similar behavior is observed under the query-wise metric and when the D-optimal design is recomputed for each $N$, as shown in the remaining subplots of Figures~\ref{fig_Dopt_d_50_Nbase_75} and~\ref{fig_Dopt_d_70_Nbase_105}.

        By contrast, as seen from Figure~\ref{fig_Dopt_d_50_Nbase_75}~(a), the Firth correction exhibits more stable finite-sample behavior. When $150\leq N\leq 450$, the $\Vert\Delta_{\mathrm{Firth}}\Vert_2$ RMSE is comparable to $\Vert\Delta_{bias}\Vert_2$, and then it approaches the CRLB as $N$ increases beyond $450$. Analogous phenomena are observed in the remaining subplots of Figures~\ref{fig_Dopt_d_50_Nbase_75} and~\ref{fig_Dopt_d_70_Nbase_105}. These results support the Firth correction as a more reliable default for the finite-sample, fixed-design preference elicitation problem considered here, and we conjecture that it is minimax optimal when $N\gtrsim d_\theta^{3/2}$, up to $B$, $\mu_Q$, and logarithmic factors. To the best of our knowledge, however, non-asymptotic error bounds for the Firth correction in high-dimensional settings are not currently available.
        \subsection{The Green-Flight Questionnaire}
        \label{subsec_green_flight_question}
        \begin{figure}[t]
            \centering
            \setlength{\abovecaptionskip}{0cm}
            \setlength{\belowcaptionskip}{0cm}
            \begin{minipage}{15.5cm}
            \centering
            \begin{subfigure}[t]{0.5\linewidth}
            \centering
            \includegraphics[width=\linewidth]{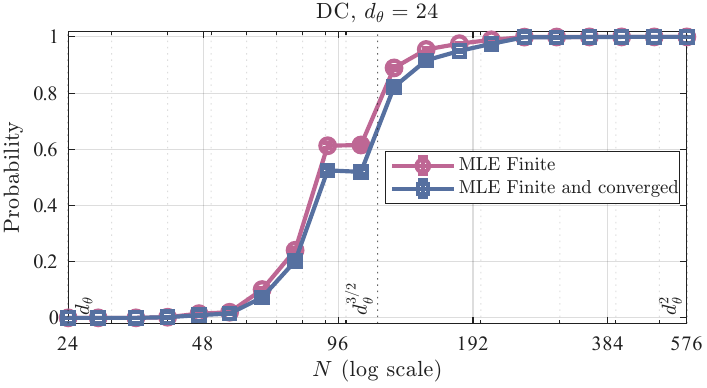}
            \caption{}
            \end{subfigure}\hfill
            \begin{subfigure}[t]{0.5\linewidth}
            \centering
            \includegraphics[width=\linewidth]{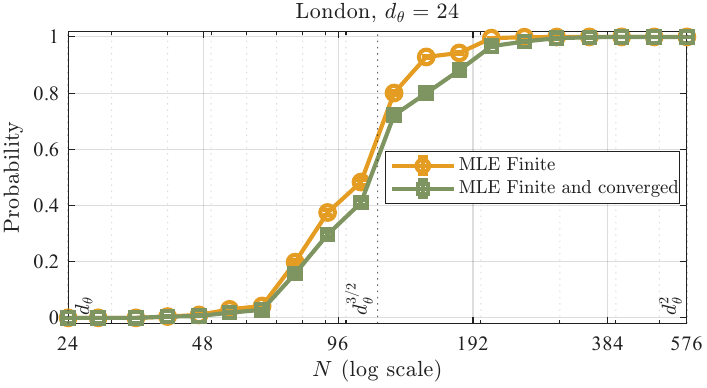}
            \caption{}
            \end{subfigure}
            \end{minipage}
            \caption{MLE existence and trust-region convergence probabilities versus $N$ under D-optimal redesign estimated from $3000$ Monte Carlo trials, using the Green-Flight query pool: (a) DC; (b) London. An MLE is classified as finite when Konis’s method reports no separation. An MLE is classified as converged if the trust-region method returns a solution with gradient norm below $10^{-6}$. The vertical dashed lines indicate the values of $d_\theta$, $d_\theta^{3/2}$ and $d_\theta^2$.
            }
            \label{fig_Greenfly_MLE_existence_convergence_transition}
        \end{figure}
        \begin{figure}[t]
	       \centering
	       \setlength{\abovecaptionskip}{0cm}
	       \setlength{\belowcaptionskip}{0cm}
           \begin{minipage}{15.5cm}
            \centering
            \begin{subfigure}[t]{0.5\linewidth}
            \centering
            \includegraphics[width=\linewidth]{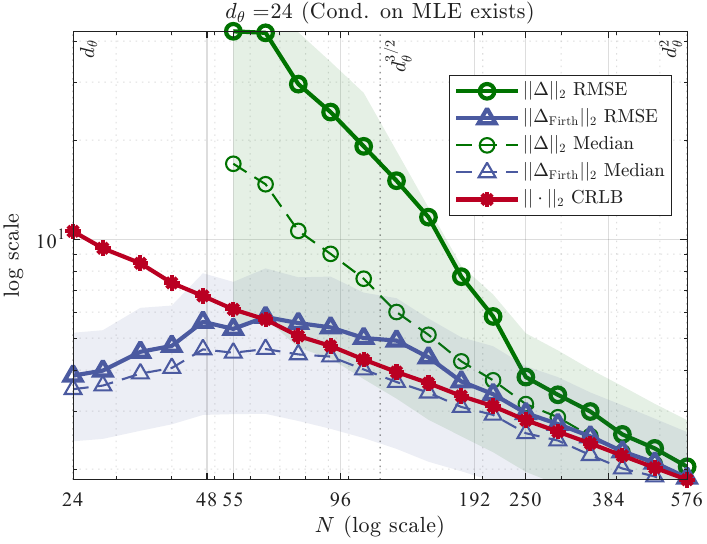}
            \caption{}
            \end{subfigure}\hfill
            \begin{subfigure}[t]{0.5\linewidth}
            \centering
            \includegraphics[width=\linewidth]{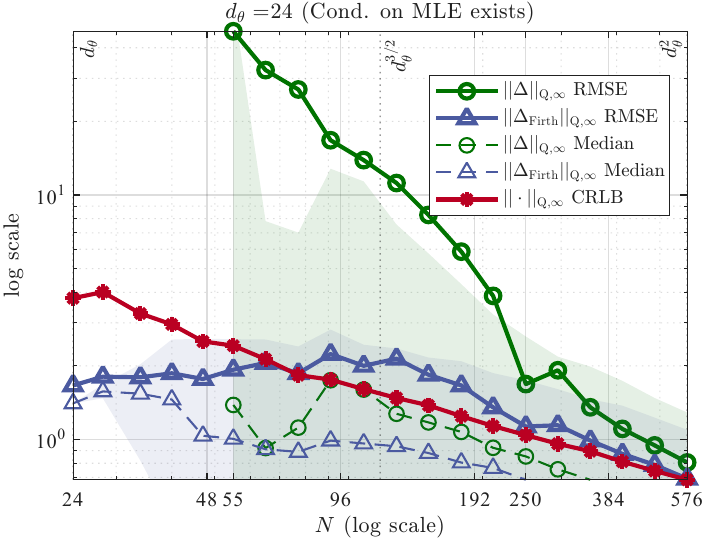}
            \caption{}
            \end{subfigure}
            \par\vspace{2pt}
            \begin{subfigure}[t]{0.5\linewidth}
            \centering
            \includegraphics[width=\linewidth]{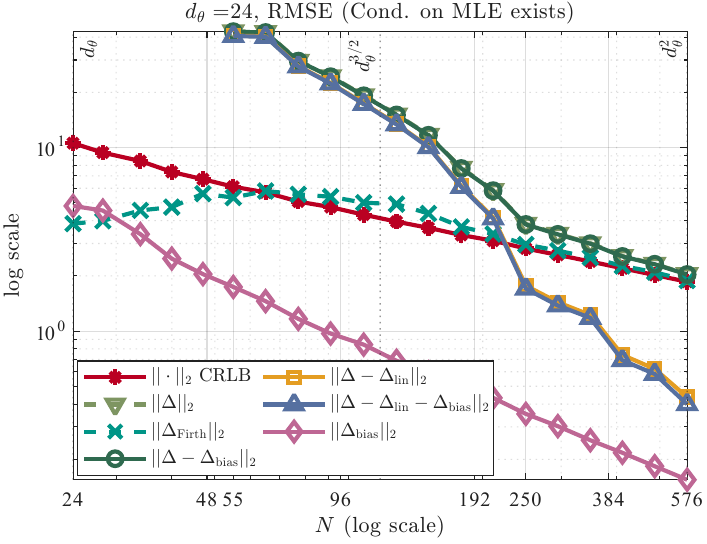}
            \caption{}
            \end{subfigure}\hfill
            \begin{subfigure}[t]{0.5\linewidth}
            \centering
            \includegraphics[width=\linewidth]{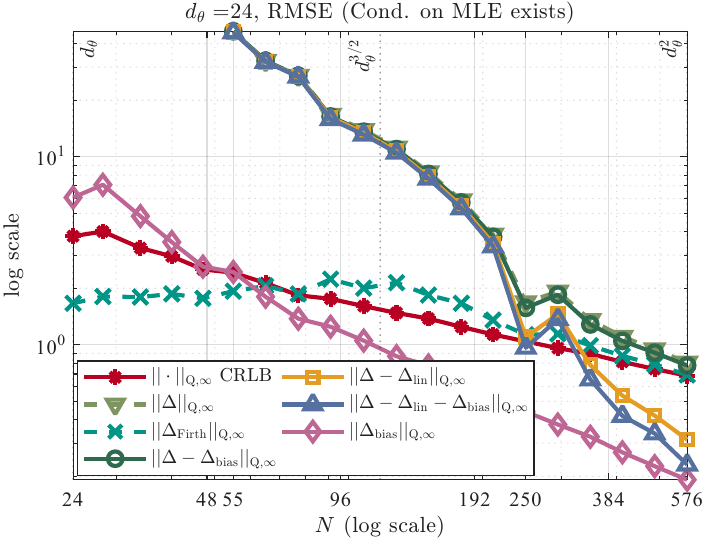}
            \caption{}
            \end{subfigure}
            \end{minipage}
	       \caption{Finite-sample performance for DC flights using the Green-Flight query pool, with the D-optimal design recomputed for each $N$. MLE-based RMSEs are calculated conditional on the existence of a finite MLE and are omitted when the MLE existence probability is below $0.02$. Firth RMSEs are computed over all $3000$ trials. The dashed curves and shaded regions in subplots (a)–-(b) report the corresponding medians and empirical $10\%–90\%$ percentile ranges, using the same trial sets as their RMSE counterparts. For each $N$, the query-wise medians and percentile intervals are computed from the absolute errors along the query direction that attains the corresponding maximum RMSE. Subplots (c)--(d) show the error decompositions defined as in Figure~\ref{fig_Dopt_d_50_Nbase_75}. The vertical dashed lines indicate the values of $d_\theta$, $d_\theta^{3/2}$ and $d_\theta^2$.}
	       \label{fig_DC_VarN_design}
        \end{figure}
        We construct a semi-synthetic experiment from a real-world survey conducted among 450 employees of the University of California, Davis~\cite{SanguinettiGreenFlight}. Respondents compared hypothetical flights to Washington DC and London, described by price (in \$), $\mathrm{CO}_2$ emissions (in $\mathrm{lb}$), departure airport (SMF or SFO), and non-stop status. Certain features of the respondents are also recorded, from which we retain age, number of flights during the past year, preferred airport (SMF or SFO), current position in the university (staff, faculty, post-doc, graduate student, and others), and preferred flight arrangement method (institution-mediated or via web portal), as the contextual information of a respondent. The survey data are publicly available at~\url{https://doi.org/10.25338/B81S5M}.
        
        After excluding respondents with missing contextual information or incomplete response sequences, we obtain a dataset with $367$ respondents and $48$ pairwise flight ticket templates for each destination. Each respondent answered $5$ DC and $6$ London questions, yielding $367\times 5 =1835$ and $367\times 6 =2202$ observed choices, respectively. For a destination $\mathfrak{D}\in\{\mathrm{DC},\mathrm{London}\}$, a pairwise flight ticket template $\Boalp$, and an alternative $j\in\{A,B\}$, define feature vectors of tickets $\mathbf{q}_{\mathfrak{D},\Boalp_j}$ and respondents $\mathbf{c}_n$ as
        \begin{align*}
            \mathbf{q}_{\mathfrak{D},\Boalp_j} &= \left[\frac{{\mathrm{CO}_2}_{\mathfrak{D},\Boalp_j}}{1000 \text{ lb}}, \frac{\mathrm{Cost}_{\mathfrak{D},\Boalp_j}}{100\text{ \$}}, \mathds{1}\{\text{non-stop}\},\mathds{1}\{ \text{departure from SFO} \}\right]^\top\in\mathbb{R}^4,\\
            \mathbf{c}_n & = \left[ 1,\text{age}_n,\text{flights}_n, \mathds{1}\{\text{preferred SFO}\},\mathds{1}\{\text{Faculty}\},\mathds{1}\{\text{institution-mediated}\}\right]^\top \in\mathbb{R}^6,
        \end{align*}
        where the last indicator in $\mathbf{c}_n$ equals one when booking is made through the Aggie Travel portal or arranged by administrative staff, and $\Boalp_A$ ($\Boalp_B$) denotes the first (second) ticket in a pairwise template. Following the contextual choice model in~\cite{LeeLowrankContexPrefer}, we adopt a bilinear utility specification with the query vector
        \begin{equation}
        \label{eq_contextual_ques_pool_set}
            \mathbf{q}_{\mathfrak{D},\Boalp,n} = \mathrm{vec}\left((\mathbf{q}_{\mathfrak{D},\Boalp_A}-\mathbf{q}_{\mathfrak{D},\Boalp_B})\mathbf{c}_n^\top\right)\in\mathbb{R}^{24},
        \end{equation}
        which encodes interactions between the ticket feature differences in template $\Boalp$ and contextual information of respondent $n$. With $\boldsymbol{\Theta}_\mathfrak{D}^\star\in\mathbb{R}^{4\times 6}$ and $\boldsymbol{\theta}^\star_\mathfrak{D} : =\mathrm{vec}(\boldsymbol{\Theta}^\star_\mathfrak{D})$, the corresponding utility score difference is
        \begin{equation*}
            U(\mathbf{q}_{\mathfrak{D},\Boalp_A},\mathbf{c}_n) - U(\mathbf{q}_{\mathfrak{D},\Boalp_B},\mathbf{c}_n) := \left(\mathbf{q}_{\mathfrak{D},\Boalp_A}-\mathbf{q}_{\mathfrak{D},\Boalp_B}\right)^\top\boldsymbol{\Theta}_\mathfrak{D}^\star \mathbf{c}_n = (\boldsymbol{\theta}^\star_\mathfrak{D})^\top\mathbf{q}_{\mathfrak{D},\Boalp,n}.
        \end{equation*}
        Here we assume that there is no model mis-specification error. Statistical guarantees of this contextual choice model then follow from results established in Sections~\ref{sec_minimax_LB}--\ref{MLE_upper_bound} with $d_\theta = 24$.
        \begin{figure}[!t]
            \centering
            \setlength{\abovecaptionskip}{0cm}
            \setlength{\belowcaptionskip}{0cm}
            \begin{minipage}{15.5cm}
            \centering
            \begin{subfigure}[t]{0.5\linewidth}
            \centering
            \includegraphics[width=\linewidth]{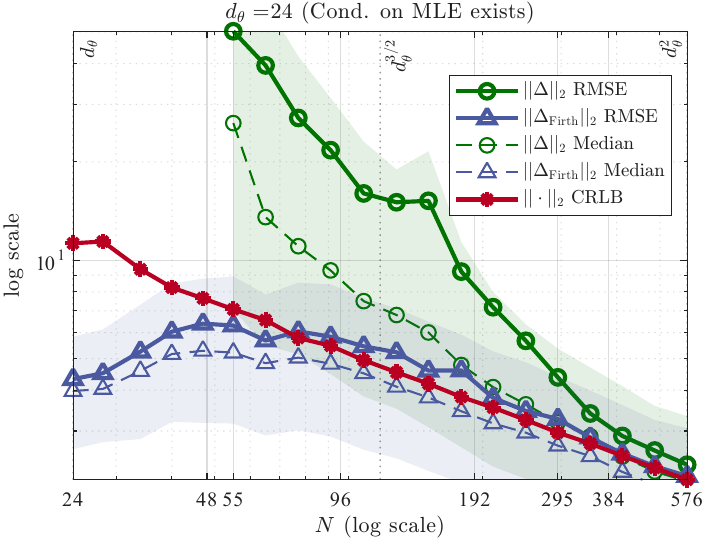}
            \caption{}
            \end{subfigure}\hfill
            \begin{subfigure}[t]{0.5\linewidth}
            \centering
            \includegraphics[width=\linewidth]{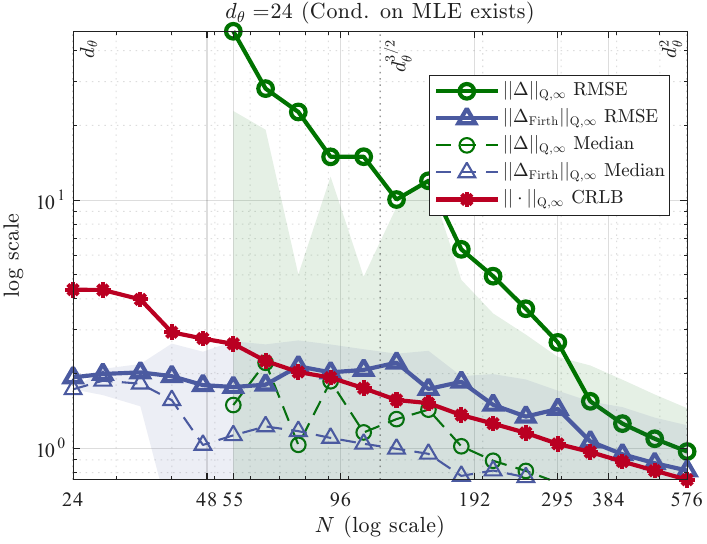}
            \caption{}
            \end{subfigure}
            \par\vspace{2pt}
            \begin{subfigure}[t]{0.5\linewidth}
            \centering
            \includegraphics[width=\linewidth]{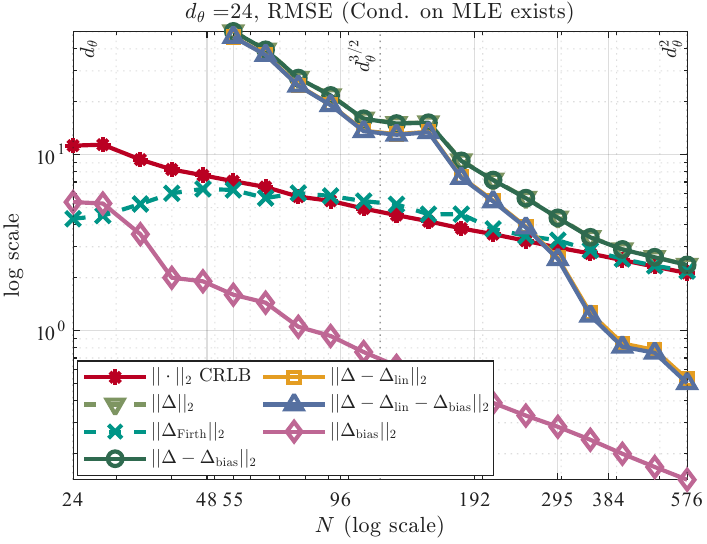}
            \caption{}
            \end{subfigure}\hfill
            \begin{subfigure}[t]{0.5\linewidth}
            \centering
            \includegraphics[width=\linewidth]{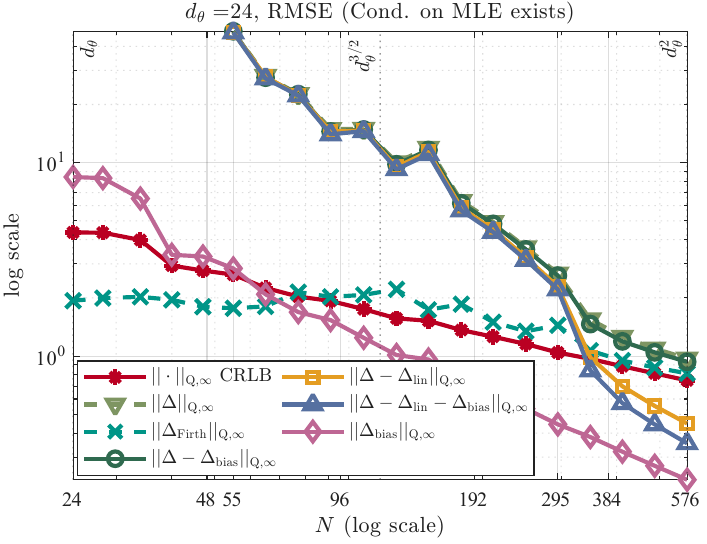}
            \caption{}
            \end{subfigure}
            \end{minipage}
            \caption{Same as Figure~\ref{fig_DC_VarN_design}, but for flights to London.}
            \label{fig_London_VarN_design}
        \end{figure}
        
        For a destination $\mathfrak{D}\in\{\mathrm{DC},\mathrm{London}\}$, we cross each of the $367$ retained $\mathbf{c}_n$ with all $48$ pairwise-comparison templates $\mathbf{q}_{\mathfrak{D},\Boalp_A} - \mathbf{q}_{\mathfrak{D},\Boalp_B}$ as in~\eqref{eq_contextual_ques_pool_set}, yielding $367\times 48 = 17616$ candidate query instances, which include ticket–respondent combinations that are not observed in the survey. We then solve the D-optimal design problem over this candidate query pool with design budget $N$.
        Since the ground truth is unknown, we follow~\cite[Section~6]{MukherjeeOptDesRLHF} and use the Firth correction to estimate $\boldsymbol{\theta}^\star_{\mathrm{DC}}$ and $\boldsymbol{\theta}^\star_{\mathrm{London}}$ 
        based on the $1835$ and $2202$ real-world responses. 
        We rescale $\boldsymbol{\theta}^\star_{\mathrm{DC}}$ ($\boldsymbol{\theta}^\star_{\mathrm{London}}$) to ensure $\log B =3$ over its candidate query pool and use the resulting parameter as the ground truth for simulation. Responses are then generated from the corresponding BTL model.
        
        Figure~\ref{fig_Greenfly_MLE_existence_convergence_transition} plots the existence and convergence probabilities of the canonical MLE, while Figures~\ref{fig_DC_VarN_design} and~\ref{fig_London_VarN_design} report the RMSEs, medians, and empirical $10\%$--$90\%$ percentile ranges of the Euclidean and query-wise errors over $3000$ Monte Carlo trials for the DC and London query pools, respectively. Figure~\ref{fig_Greenfly_MLE_existence_convergence_transition} shows that when $24 \leq N\leq 48$, the MLE fails to exist in almost all trials, and the MLE existence probabilities increase with $N$ and become close to one by approximately $N=192$, but the Euclidean and query-wise RMSEs remain substantially above the corresponding CRLBs at this sample size, as depicted by Figures~\ref{fig_DC_VarN_design}~(a)--(b) and~\ref{fig_London_VarN_design}~(a)--(b). For example, Figure~\ref{fig_DC_VarN_design}~(a) shows a gap between the $\Vert\Delta\Vert_2$ RMSE and the corresponding median for $55\leq N\leq295$, together with wide empirical percentile ranges depicted by the shaded-in-green region. This observation indicates that large variability across trials and right-skewed errors inflate the $\Vert\Delta\Vert_2$ RMSE even when the MLE exists.
        
        We apply the same decomposition method as in Section~\ref{subsec_Synthetic_Questionnaires} when plotting Figure~\ref{fig_DC_VarN_design}~(c) and examine the source of these large errors. After rescaling the ground-truth parameter, $\Vert\Delta_{bias}\Vert_2$ lies below the $\Vert\cdot\Vert_2$ CRLB throughout the plotted sample-size range, see~\eqref{eq_def_of_Delta_bias}. 
        When $N$ is small, the $\Vert\Delta-\Delta_{lin}\Vert_2$ and $\Vert\Delta - \Delta_{lin} - \Delta_{bias}\Vert_2$ RMSE curves nearly coincide and remain close to the $\Vert\Delta\Vert_2$ RMSE curve. Thus, the large error in this regime is driven primarily by higher-order nonlinear terms. 
        For $N\geq 250$, these nonlinear residuals fall below the CRLB and continue to decay more rapidly, after which the $\Vert\Delta\Vert_2$ RMSE approaches the CRLB. Similar behavior is observed for destination London in Figure~\ref{fig_London_VarN_design}~(c) and under the query-wise metric in Figures~\ref{fig_DC_VarN_design}~(d) and~\ref{fig_London_VarN_design}~(d). The Firth correction, however, exhibits substantially more stable behavior. Its Euclidean RMSE decreases steadily for $N\geq128$ and nearly coincides with the CRLB at $N=576=d_\theta^2$.

        \section{Concluding Remarks}
        \label{sec_conclusion_remarks}
In this paper, we investigate error rates of the maximum likelihood estimator for eliciting linear-in-parameter utility functions under 
the BTL framework.
Under some moderate 
conditions, we derive the Cram\'{e}r-Rao and minimax lower bounds for general estimators including MLE. We then establish existence of the MLE and conditions under which the upper and lower bounds meet up to some constants.

There are several promising directions for future research. First, the feasible parameter set considered in this paper is characterized by a system of linear equations. Extending the current analysis to more general polyhedral parameter spaces would accommodate linear inequalities that encode monotonicity or convexity of the utility function~\cite{HuJianPLAPRO, ChenLiuVNMElicit}.
Second, our results are developed for linear-in-parameter multivariate utility functions under the BTL framework. It remains unclear whether analogous theoretical treatments apply to nonlinear-in-parameter univariate utility functions or to the Plackett-Luce model. Third, the MLE provides a point estimate of the unknown parameters. An important direction for future work is to investigate whether the theoretical results can be extended to Bayesian learning approaches~\cite{SaureElliposDopt, BonillaGuoGPPrefElict, JiapengBayesPieceLinPreLean} to describe posterior concentration rates with finite samples, which may also include posterior based on Jeffreys' invariant prior, whose mode yields the Firth correction, as one special case.
        \section{Proofs of Main Results}
        \label{sec_proof_of_main_results}
        We now turn to the proofs of our main results developed in Sections~\ref{sec_minimax_LB} and~\ref{MLE_upper_bound}.
        \subsection{Proofs of Minimax Lower Bounds}
        We use two standard tools to transform the minimax risk into a hypothesis-testing problem, namely, Assouad’s lemma and Le Cam’s two-point method.
        \begin{lemma}[{\cite[Lemma~2.12]{TsybakovNonPara}}]
	    \label{lemma_Assouad_lemma}
		Let $\mathcal{S}=\{-1,1\}^m$ be the set of all $\{\pm1\}$ sequences of length $m\in\mathbb{Z}_+$, and equip this space with Hamming distance $\rho_H(\mathbf{s}^1,\mathbf{s}^2):=\sum_{j=1}^m\ind\{s_j^1\neq s_j^2\}$, for any $\mathbf{s}_1$, $\mathbf{s}_2\in\mathcal{S}$. Let $\{\bbp_\mathbf{s}^{(N)}\mid \mathbf{s}\in\mathcal{S}\}\subset \mathscr{P}(\mathcal{Y}_N,\mathscr{Y}_N)$ contain $2^m$ probability measures indexed by $\mathbf{s}\in\mathcal{S}$. With slight notation abuse, let $\bbe_\mathbf{s}^{(N)}[\cdot]$ denote expectation under $\bbp_\mathbf{s}^{(N)}$. Consider all measurable estimators $\hat{\mathbf{s}}:(\mathcal{Y}_N,\mathscr{Y}_N)\to (\mathcal{S},2^\mathcal{S})$ and some positive weights $\{a_j\}_{j=1}^m$. Then
            \begin{equation}
				\frac{1}{2^m}\sum_{\mathbf{s}\in\mathcal{S}}\mathbb{E}_\mathbf{s}^{(N)}\left[\sum_{j=1}^m a_j\ind\left\{\hat{s}_j\neq s_j\right\}\right] \geq \frac{1}{2}\sum_{j=1}^m a_j \left(1-\sup_{\substack{\mathbf{s},\mathbf{s}^{\prime}\in\mathcal{S}\\
                \rho_H(\mathbf{s},\mathbf{s}^{\prime})=1}}\dd_{TV}\left(\bbp_\mathbf{s}^{(N)},\bbp_{\mathbf{s}^{\prime}}^{(N)}\right)\right).
			\end{equation}
		\end{lemma}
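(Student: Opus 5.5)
Since Lemma~\ref{lemma_Assouad_lemma} is a classical result quoted from Tsybakov, I would reproduce the standard argument: reduce the averaged weighted Hamming risk to a sum of pairwise (coordinate-wise) testing problems, then bound each testing error by total variation.

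\emph{Step 1: coordinatewise decomposition.} By linearity of expectation, the left-hand side equals
\[
\sum_{j=1}^m a_j\,\frac{1}{2^m}\sum_{\mathbf{s}\in\mathcal{S}}\bbp_\mathbf{s}^{(N)}\{\hat{s}_j\neq s_j\}.
\]
Fix $j$. Pair every $\mathbf{s}$ having $s_j=1$ with $\mathbf{s}^{(j)}$, the sequence obtained from $\mathbf{s}$ by flipping its $j$-th entry, so that $\rho_H(\mathbf{s},\mathbf{s}^{(j)})=1$. The $2^m$ sequences split into $2^{m-1}$ such pairs. Hence the inner average for coordinate $j$ equals
\[
\frac{1}{2^{m-1}}\sum_{\mathbf{s}:\,s_j=1}\tfrac12\Big(\bbp_\mathbf{s}^{(N)}\{\hat s_j\neq 1\}+\bbp_{\mathbf{s}^{(j)}}^{(N)}\{\hat s_j\neq -1\}\Big).
\]

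\emph{Step 2: two-point testing bound.} For any two measures $P,Q$ and any measurable set $E$ (here $E=\{\hat s_j=1\}$),
\[
P(E^c)+Q(E)\ \ge\ 1-\big(P(E)-Q(E)\big)\ \ge\ 1-\dd_{TV}(P,Q).
\]
This follows from the definition $\dd_{TV}(P,Q)=\sup_E|P(E)-Q(E)|$. The step that needs care is that $\{\hat s_j=1\}$ is measurable, which holds because $\hat{\mathbf{s}}$ is measurable into $(\mathcal{S},2^{\mathcal{S}})$. Applying the bound to each pair gives
\[
\tfrac12\Big(\bbp_\mathbf{s}^{(N)}\{\hat s_j\neq 1\}+\bbp_{\mathbf{s}^{(j)}}^{(N)}\{\hat s_j\neq -1\}\Big)\ \ge\ \tfrac12\Big(1-\dd_{TV}\big(\bbp_\mathbf{s}^{(N)},\bbp_{\mathbf{s}^{(j)}}^{(N)}\big)\Big).
\]

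\emph{Step 3: uniformize and sum.} Each pair is at Hamming distance one, so each total variation term is at most the supremum over Hamming-distance-one pairs. Averaging over the $2^{m-1}$ pairs therefore yields, for each $j$, the bound $\tfrac12\big(1-\sup_{\rho_H(\mathbf{s},\mathbf{s}')=1}\dd_{TV}\big)$. Multiplying by $a_j>0$ and summing over $j$ gives the claimed inequality.

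There is no genuine obstacle. The only delicate point is the bookkeeping in Step 1: the pairing must be a bijection between $\{s_j=1\}$ and $\{s_j=-1\}$, so that the normalization $2^{-m}\cdot 2 = 2^{-(m-1)}$ produces exactly the factor $\tfrac12$.
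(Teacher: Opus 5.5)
Your proof is correct and is essentially the argument the paper defers to, namely the proof of Tsybakov's Lemma~2.12. Both average over the hypercube, split the weighted Hamming loss coordinatewise, pair sequences that differ only in coordinate $j$, and bound each two-point testing error from below by $1-\dd_{TV}$. The only cosmetic difference is that Tsybakov phrases the two-point step through the affinity $\int\min(dP,dQ)=1-\dd_{TV}(P,Q)$, while you use the set-wise definition of $\dd_{TV}$ directly; keeping $a_j$ inside the coordinate sum then yields exactly the weighted form stated.
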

        This weighted form of Assouad’s lemma follows from the proof of~\cite[Lemma~2.12]{TsybakovNonPara}.
        \begin{lemma}[{\cite[Theorem~2.2]{TsybakovNonPara}}]
                \label{lema_le_Cam_twopoints}
                Let $\rho$ be a semi-distance, and $\{\bbp_{\Bothe}^{(N)}\mid \Bothe\in\{\Bothe_1,\Bothe_2\}\}\subset\mathscr{P}(\mathcal{Y}_N,\mathscr{Y}_N)$ be the set of parametric probability measures indexed by $\{\Bothe_1,\Bothe_2\}$. 
                If there exists a $2\delta-$separated pair $\Bothe_1,\Bothe_2\in\Theta_B$ with  $\rho(\Bothe_1,\Bothe_2)\geq 2\delta$, then  
                \begin{subequations}
                    \begin{gather}
                    \label{eq_LeCam_two_point_Exp}
                        \mathcal{R}_N^\star(\Theta_B,\rho^2)\geq \frac{\delta^2}{2}(1-\dd_{TV}(\bbp_{\Bothe_1}^{(N)},\bbp_{\Bothe_2}^{(N)})),\\
                        \label{eq_LeCam_two_point_Prob}
                        \inf_{\hat{\Bothe}\in\mathcal{A}_N}\sup_{\Bothe\in\Theta_B}\mathbb{P}_{\Bothe}^{(N)}\left\{\rho^2(\hat{\Bothe},\Bothe) \geq \delta^2 \right\} \geq \frac{1}{2}(1-\dd_{TV}(\bbp_{\Bothe_1}^{(N)},\bbp_{\Bothe_2}^{(N)})).
                    \end{gather}
                \end{subequations}
        \end{lemma}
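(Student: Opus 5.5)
The plan is the classical reduction of estimation to a two-point test, using only the triangle inequality for $\rho$ and the definition of total variation distance. Fix an arbitrary $\hat{\Bothe}\in\mathcal{A}_N$. Since $\Bothe_1,\Bothe_2\in\Theta_B$, it suffices to lower bound the worst case over the pair $\{\Bothe_1,\Bothe_2\}$:
\begin{equation*}
\sup_{\Bothe\in\Theta_B}\mathbb{P}_{\Bothe}^{(N)}\{\rho(\hat{\Bothe},\Bothe)\geq\delta\}\;\geq\;\max_{k=1,2}\mathbb{P}_{\Bothe_k}^{(N)}\{\rho(\hat{\Bothe},\Bothe_k)\geq\delta\}.
\end{equation*}
The maximum is at least the average of the two probabilities, so the task is to bound that sum from below.

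Consider the event $E:=\{\rho(\hat{\Bothe},\Bothe_1)<\delta\}$. It is measurable because $\rho$ is jointly Borel measurable and $\hat{\Bothe}$ is measurable. On $E$, the triangle inequality for the semi-distance (only the triangle inequality and symmetry are needed, not definiteness) gives
\begin{equation*}
\rho(\hat{\Bothe},\Bothe_2)\geq\rho(\Bothe_1,\Bothe_2)-\rho(\hat{\Bothe},\Bothe_1)>2\delta-\delta=\delta .
\end{equation*}
Hence $\mathbb{P}_{\Bothe_2}^{(N)}\{\rho(\hat{\Bothe},\Bothe_2)\geq\delta\}\geq\mathbb{P}_{\Bothe_2}^{(N)}(E)$. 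By the definition of $\dd_{TV}$,
\begin{equation*}
\mathbb{P}_{\Bothe_2}^{(N)}(E)\geq\mathbb{P}_{\Bothe_1}^{(N)}(E)-\dd_{TV}\bigl(\bbp_{\Bothe_1}^{(N)},\bbp_{\Bothe_2}^{(N)}\bigr).
\end{equation*}
Adding $\mathbb{P}_{\Bothe_1}^{(N)}\{\rho(\hat{\Bothe},\Bothe_1)\geq\delta\}=1-\mathbb{P}_{\Bothe_1}^{(N)}(E)$, the sum of the two error probabilities is at least $1-\dd_{TV}$. Therefore their maximum is at least $\frac12(1-\dd_{TV})$. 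The right-hand side does not depend on $\hat{\Bothe}$, so taking the infimum over $\mathcal{A}_N$ yields~\eqref{eq_LeCam_two_point_Prob}, since $\{\rho^2\geq\delta^2\}=\{\rho\geq\delta\}$.

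For~\eqref{eq_LeCam_two_point_Exp}, apply Markov's inequality pointwise in $\Bothe$:
\begin{equation*}
\mathbb{E}_{\Bothe}^{(N)}[\rho^2(\hat{\Bothe},\Bothe)]\geq\delta^2\,\mathbb{P}_{\Bothe}^{(N)}\{\rho^2(\hat{\Bothe},\Bothe)\geq\delta^2\}.
\end{equation*}
Take the supremum over $\Theta_B$, use the probability bound just proved, and then take the infimum over $\hat{\Bothe}\in\mathcal{A}_N$. The finiteness condition defining $\mathcal{A}_N$ plays no role beyond ensuring the risk is well defined. There is no genuine obstacle here. The only points requiring care are:
\begin{itemize}
\item the measurability of $E$, which follows from the measurability assumption on $\rho$;
\item the fact that the triangle inequality is all that is used from $\rho$, so the argument is valid for semi-distances.
\end{itemize}
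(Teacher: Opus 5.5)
Your argument is correct. It is the standard proof of the cited Theorem~2.2 of Tsybakov; the paper gives no proof of its own and simply invokes that theorem. The steps are: restrict the supremum to $\{\Bothe_1,\Bothe_2\}$, use the triangle inequality to turn $\{\rho(\hat{\Bothe},\Bothe_1)<\delta\}$ into an error event under $\Bothe_2$, lower-bound the sum of the two error probabilities by $1-\dd_{TV}(\bbp_{\Bothe_1}^{(N)},\bbp_{\Bothe_2}^{(N)})$, and pass to the squared risk via Markov's inequality. Your event $E$ plays the role of Tsybakov's minimum-distance test, so the two routes coincide up to presentation.
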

        We also need to upper bound the TV distance between two joint Bernoulli distributions $\bbp_{\Bothe_1}^{(N)},\bbp_{\Bothe_2}^{(N)}$ induced by some $\Bothe_1,\,\Bothe_2\in\mathbb{R}^{d_\theta}$. By Pinsker's inequality \cite[Lemma 2.5]{TsybakovNonPara}, i.e., $\dd_{TV}(\rP,\mathrm{Q})\leq \sqrt{\dd_{KL}(\rP,\mathrm{Q})/2}$, and the additivity of KL divergence for product measures, i.e., $\dd_{KL}(\bbp_{\Bothe_1}^{(N)},\bbp_{\Bothe_2}^{(N)}) = \sum_{\Boalp\in\E_s}n_\Boalp \dd_{KL}(\rP_{\Bothe_1}^{\alpha},\rP_{\Bothe_2}^{\alpha})$, it is enough to find an upper bound on $\dd_{KL}(\rP_{\Bothe_1}^{\alpha},\rP_{\Bothe_2}^{\alpha})$. This estimate is standard and due to~\cite{ShahEstPariCompGrapTop}. 
        \begin{lemma}[{\cite[Lemma~8]{ShahEstPariCompGrapTop}}]
		      \label{lemma_KL_between_general_exp_fam}
		      Let $\Bothe_1,\Bothe_2\in\mathbb{R}^{d_\theta}$ be arbitrary but fixed. For two joint distributions of the full observation $Y^{(N)}$, i.e., $\bbp_{\Bothe_1}^{(N)}$, $\bbp_{\Bothe_2}^{(N)}$ defined as in \eqref{eq_joint_distribution_law_of_samples}, we have 
        \begin{equation}
		  \dd_{KL}(\bbp_{\Bothe_1}^{(N)},\bbp_{\Bothe_2}^{(N)}) \leq \frac{N}{8\sigma^2}(\boldsymbol{\theta}_1-\boldsymbol{\theta}_2)^\top \mathbf{V}_{A^{\perp}}^\top\mathfrak{A}^\top\mathbf{L}\mathfrak{A}\mathbf{V}_{A^{\perp}} (\boldsymbol{\theta}_1-\boldsymbol{\theta}_2).
		\end{equation}
		\end{lemma}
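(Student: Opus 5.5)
The plan is to reduce the claim to a one-dimensional estimate for a single Bernoulli pair, using the exponential-family structure in Proposition~\ref{observ_Bernoulli_exp_calss}, and then reassemble the design matrix $\mathbf{W}$ of~\eqref{eq_def_of_matrix_K}.

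\emph{Step 1 (additivity).} By Assumption~\ref{assump_indepent_sample} and the product form~\eqref{eq_joint_distribution_law_of_samples}, the KL divergence tensorizes:
\begin{equation*}
\dd_{KL}(\bbp_{\Bothe_1}^{(N)},\bbp_{\Bothe_2}^{(N)})=\sum_{\Boalp\in\E_s}n_\Boalp\,\dd_{KL}(\rP_{\Bothe_1}^{\Boalp},\rP_{\Bothe_2}^{\Boalp}).
\end{equation*}
It therefore suffices to bound each single-query term.

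\emph{Step 2 (single query as a Bregman divergence).} Write $\eta_k:=\eta_\Boalp(\Bothe_k)$ for $k=1,2$, using~\eqref{eq_definition_q_alp}. From the density~\eqref{eq_T_alpha_t_loss_as_Logit} and $\bbe_{\Bothe_1}^{\Boalp}[Y]=\Psi'(\eta_1)$, a direct computation gives
\begin{equation*}
\dd_{KL}(\rP_{\Bothe_1}^{\Boalp},\rP_{\Bothe_2}^{\Boalp})=\Psi(\eta_2)-\Psi(\eta_1)-\Psi'(\eta_1)(\eta_2-\eta_1),
\end{equation*}
which is the Bregman divergence of the cumulant $\Psi$. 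Taylor's theorem with Lagrange remainder yields the value $\tfrac12\Psi''(\bar\eta)(\eta_2-\eta_1)^2$ for some intermediate point $\bar\eta$. Since $\Psi''(\eta)=\Psi'(\eta)(1-\Psi'(\eta))\le\tfrac14$ for every $\eta\in\mathbb{R}$, we get
\begin{equation*}
\dd_{KL}(\rP_{\Bothe_1}^{\Boalp},\rP_{\Bothe_2}^{\Boalp})\le\tfrac18(\eta_1-\eta_2)^2=\frac{1}{8\sigma^2}\big(\mathbf{q}_\Boalp^\top(\Bothe_1-\Bothe_2)\big)^2 .
\end{equation*}
The offset $\mathbf{w}_o^\top\mathbf{a}_\Boalp$ cancels in the difference $\eta_1-\eta_2$. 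No dynamic-range assumption is needed, because the bound on $\Psi''$ is global.

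\emph{Step 3 (reassembling $\mathbf{W}$).} Summing with weights $n_\Boalp$ and factoring out $N$ gives the bound
\begin{equation*}
\frac{N}{8\sigma^2}(\Bothe_1-\Bothe_2)^\top\Big(\frac1N\sum_{\Boalp\in\E_s} n_\Boalp\mathbf{q}_\Boalp\mathbf{q}_\Boalp^\top\Big)(\Bothe_1-\Bothe_2).
\end{equation*}
Now $\mathbf{q}_\Boalp=\mathbf{V}_{A^\perp}^\top\mathfrak{A}^\top(\mathbf{e}_i-\mathbf{e}_j)$ by~\eqref{eq_def_of_U_phi_matrix_b} and~\eqref{eq_definition_q_alp}. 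Hence the bracketed matrix equals $\mathbf{V}_{A^\perp}^\top\mathfrak{A}^\top\mathbf{L}\mathfrak{A}\mathbf{V}_{A^\perp}$, with $\mathbf{L}$ as in~\eqref{eq_graph_laplace_def}, and this is the claimed bound.

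No step is a genuine obstacle. The only point needing care is Step 2: one should use the Bregman/Taylor form with the global bound $\Psi''\le\tfrac14$, rather than a chi-square bound. A chi-square bound would introduce a $B$-dependent factor through the variances in the denominator.
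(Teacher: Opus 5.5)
Your proof is correct and follows essentially the route the paper indicates: it reduces the claim, via additivity of KL over the product measure~\eqref{eq_joint_distribution_law_of_samples}, to a single-query bound that it attributes to \cite[Lemma~8]{ShahEstPariCompGrapTop}, and in the proof of Lemma~\ref{lema_sum_of_excess_loss_lower_bound_by_semi_one_sample} it uses your Bregman (first-order Taylor remainder of $\Psi$) form of the Bernoulli KL. Your use of the global bound $\Psi''\le\tfrac14$ then yields the stated constant $\tfrac{N}{8\sigma^2}$ for arbitrary $\Bothe_1,\Bothe_2\in\mathbb{R}^{d_\theta}$, with no dynamic-range assumption.
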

        The following univariate lower bound on the sum of two KL divergences between Bernoulli distributions complements Lemma~\ref{lemma_KL_between_general_exp_fam}. 
            \begin{lemma}
            \label{lema_sum_of_excess_loss_lower_bound_by_semi_one_sample}
                Let $a,b\in[-\log B,\log B]$, $z\in\mathbb{R}$, and $\rP_{\Psi^\prime}^a$, $\rP_{\Psi^\prime}^b$, $\rP_{\Psi^\prime}^z$ be Bernoulli distributions with success rates $\Psi^\prime(a)$, $\Psi^\prime(b)$, and $\Psi^\prime(z)$, respectively. Then
                \begin{equation*}
                    \dd_{KL}(\rP_{\Psi^\prime}^a,\rP_{\Psi^\prime}^z) + \dd_{KL}(\rP_{\Psi^\prime}^b,\rP_{\Psi^\prime}^z) \geq \frac{1}{16B}(a-b)^2.
                \end{equation*}
                Moreover, let $\Bothe_1$, $\Bothe_2\in\Theta_B$, and let
                \begin{align*}
                    \mathcal{L}^{\Boalp}_{\Bothe_1}(\Bov) &= -\Psi^\prime(\frac{\mathbf{w}_o^\top\mathbf{a}_{\Boalp} + \mathbf{q}_\Boalp^\top \Bothe_1}{\sigma}) \frac{\mathbf{w}_o^\top\mathbf{a}_{\Boalp} + \mathbf{q}_\Boalp^\top \Bov}{\sigma} + \Psi(\frac{\mathbf{w}_o^\top\mathbf{a}_{\Boalp} + \mathbf{q}_\Boalp^\top \Bov}{\sigma}),\\
                    \mathcal{L}^{\Boalp}_{\Bothe_2}(\Bov) &= -\Psi^\prime(\frac{\mathbf{w}_o^\top\mathbf{a}_{\Boalp} + \mathbf{q}_\Boalp^\top \Bothe_2}{\sigma})\frac{\mathbf{w}_o^\top\mathbf{a}_{\Boalp} + \mathbf{q}_\Boalp^\top \Bov}{\sigma} + \Psi(\frac{\mathbf{w}_o^\top\mathbf{a}_{\Boalp} + \mathbf{q}_\Boalp^\top \Bov}{\sigma}),
                \end{align*}
                denote the population-level negative log-likelihood of one observation associated with query $\Boalp\in\E_s$, evaluated at some $\boldsymbol{v}\in\mathbb{R}^{d_\theta}$. Then 
                \begin{equation}
                \label{eq_sum_of_excess_loss_lower_bound_by_semi_one_sample}
                    \mathcal{L}^{\Boalp}_{\Bothe_1}(\Bov) -\mathcal{L}^{\Boalp}_{\Bothe_1}(\Bothe_1) + \mathcal{L}^{\Boalp}_{\Bothe_2}(\Bov)-\mathcal{L}^{\Boalp}_{\Bothe_2}(\Bothe_2) \geq \frac{1}{16B\sigma^2}\Big|\mathbf{q}_\Boalp^\top(\Bothe_1-\Bothe_2)\Big|^2, \,\forall\,\Bov\in\mathbb{R}^{d_\theta}. 
                \end{equation}
            \end{lemma}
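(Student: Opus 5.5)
The proof has two parts. First a scalar inequality for Bernoulli KL divergences, then the likelihood statement obtained by identifying the excess population losses with such divergences.

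\textbf{Scalar inequality.} Set $g(z):=\dd_{KL}(\rP_{\Psi^\prime}^a,\rP_{\Psi^\prime}^z)+\dd_{KL}(\rP_{\Psi^\prime}^b,\rP_{\Psi^\prime}^z)$. For the canonical Bernoulli family,
\[
\dd_{KL}(\rP_{\Psi^\prime}^a,\rP_{\Psi^\prime}^z)=\Psi(z)-\Psi(a)-\Psi^\prime(a)(z-a),
\]
which is the Bregman divergence of $\Psi$. Hence $g$ is convex and smooth in $z$, with
\[
g^\prime(z)=2\Psi^\prime(z)-\Psi^\prime(a)-\Psi^\prime(b).
\]
Because $\Psi^\prime$ is strictly increasing, $g^\prime$ has a unique root $z^\ast$, determined by $\Psi^\prime(z^\ast)=\tfrac12(\Psi^\prime(a)+\Psi^\prime(b))$. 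The right-hand side is a value of $\Psi^\prime$ lying between $\Psi^\prime(a)$ and $\Psi^\prime(b)$, so $z^\ast$ lies between $a$ and $b$. In particular $z^\ast\in[-\log B,\log B]$.

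It therefore suffices to bound $g(z)$ below for $z$ between $a$ and $b$. By Taylor's theorem with integral remainder,
\[
\dd_{KL}(\rP_{\Psi^\prime}^a,\rP_{\Psi^\prime}^z)=\int_a^z (z-t)\Psi^{\prime\prime}(t)\,dt .
\]
For $t\in[-\log B,\log B]$ we have $\Psi^{\prime\prime}(t)\ge \frac{1}{4B}$, since $\Psi^{\prime\prime}(t)=e^t/(1+e^t)^2\ge e^{-|t|}/4$ (the same bound as in Proposition~\ref{prop_bound_dymic_range}). All points between $a$ and $z$ lie in this interval, so
\[
\dd_{KL}(\rP_{\Psi^\prime}^a,\rP_{\Psi^\prime}^z)\ge \frac{(z-a)^2}{8B},
\]
and likewise for $b$. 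Adding the two bounds gives $g(z)\ge \frac{1}{8B}\big((z-a)^2+(z-b)^2\big)$. Minimizing over $z$ gives $(z-a)^2+(z-b)^2\ge \tfrac12(a-b)^2$, so $g(z)\ge\frac{(a-b)^2}{16B}$. Since $g$ is convex with global minimizer $z^\ast$, this lower bound holds for every $z\in\mathbb{R}$.

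The main care point is the localization of $z^\ast$. The curvature lower bound is only valid on $[-\log B,\log B]$, so we cannot apply Taylor's theorem at an arbitrary $z$ that may lie far outside. This is resolved by reducing to the minimizer, which is trapped between $a$ and $b$.

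\textbf{Likelihood statement.} Write $\eta_i:=\eta_\Boalp(\Bothe_i)$ and $z:=\eta_\Boalp(\Bov)$. Then
\[
\mathcal{L}^{\Boalp}_{\Bothe_1}(\Bov)-\mathcal{L}^{\Boalp}_{\Bothe_1}(\Bothe_1)=\Psi(z)-\Psi(\eta_1)-\Psi^\prime(\eta_1)(z-\eta_1)=\dd_{KL}(\rP_{\Psi^\prime}^{\eta_1},\rP_{\Psi^\prime}^{z}),
\]
and the same identity holds with $\Bothe_2$. Since $\Bothe_1,\Bothe_2\in\Theta_B$, both $\eta_1$ and $\eta_2$ lie in $[-\log B,\log B]$, so the scalar inequality applies with $a=\eta_1$, $b=\eta_2$. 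Finally $\eta_1-\eta_2=\mathbf{q}_\Boalp^\top(\Bothe_1-\Bothe_2)/\sigma$, which yields~\eqref{eq_sum_of_excess_loss_lower_bound_by_semi_one_sample} for every $\Bov\in\mathbb{R}^{d_\theta}$.
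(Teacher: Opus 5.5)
Your proof is correct and follows essentially the same route as the paper. Both identify the KL divergence with the Bregman divergence of $\Psi$ and reduce to the minimizer $z^\ast$, defined by $\Psi^\prime(z^\ast)=\tfrac12(\Psi^\prime(a)+\Psi^\prime(b))$, which lies between $a$ and $b$. Both then apply the curvature bound $\Psi^{\prime\prime}\ge 1/(4B)$ on $[-\log B,\log B]$ and finish with $(z-a)^2+(z-b)^2\ge\tfrac12(a-b)^2$. The only difference is cosmetic: you use the integral form of the Taylor remainder, while the paper uses the mean-value form.
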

            The proof is deferred to Appendix~\ref{proof_lema_sum_of_excess_loss_lower_bound_by_semi_one_sample}.
        \subsubsection{Proof of Theorem \ref{thm_minimax_loewe_bound_tr_inv}}
        \label{proof_thm_minimax_loewe_bound_tr_inv}
        \noindent \textbf{Proof of \eqref{eq_minimax_exp_lowerbound}.} 
        The proof proceeds in two main steps:
        we first define a binary precoder/decoder pair and relate the mean square error to Hamming distance, and then use Lemmas~\ref{lemma_Assouad_lemma} and~\ref{lemma_KL_between_general_exp_fam} to 
        obtain a lower bound on the weighted Hamming distance and upper bound on the TV distance of two properly designed probability measures, respectively.

        \noindent\underline{\textbf{STEP 1.}} We start by defining the precoder, which is similar to those in~\cite[Theorem 1]{ShahEstPariCompGrapTop} and 
        \cite[Theorem 5]{LIl_inf_bound_BTL_GenGraph}, but we use a Rademacher concentration argument to retain a sufficiently large subset of feasible points in $\Theta_B$. Let $\mathcal{S}:=\{-1,1\}^{d_{\theta}}$, and $(\mathcal{S},2^\mathcal{S},\bbp_{\Boxi})$ denote the probability space that supports a Rademacher sequence $\Boxi:=[\xi_1,\dots,\xi_{d_\theta}]^\top$. That is, for $A\subseteq\mathcal{S}$, $\bbp_{\Boxi}(A) = 2^{-d_\theta}|A|$, and $j\in [d_\theta]$, $\xi_j:\mathcal{S}\to\{-1,1\}$ is the coordinate projection. We denote a realization of $\Boxi$ as $\mathbf{s}\in\mathcal{S}$, and set $\bbe_{\Boxi}[\cdot]$ be the expectation under $\bbp_{\Boxi}$. 
        Let $\mathbf{W}:=\mathbf{H}\boldsymbol{\Lambda}\mathbf{H}^\top$ denote the 
        spectral decomposition of $\mathbf{W}$, with $\boldsymbol{\Lambda}=\mathrm{diag}(\lambda_1,\dots,\lambda_{d_\theta})$ and $\mathbf{H} = [\mathbf{h}_1,\dots,\mathbf{h}_{d_\theta}]$, where $\mathbf{h}_j\in\mathbb{R}^{d_\theta}$. For some $\delta>0$, define
        \begin{equation}
            \label{eq_def_of_theta_s_w_s}
			\Bothe^{(\Boxi)} := \delta\mathbf{H}\boldsymbol{\Lambda}^{-1/2}\Boxi = \sum_{j=1}^{d_\theta}\frac{\delta \xi_j\mathbf{h}_j}{\sqrt{\lambda_j}},\quad \mathbf{w}^{(\Boxi)} := \mathbf{w}_o + \mathbf{V}_{A^{\perp}} \Bothe^{(\Boxi)}. 
		\end{equation}
        Consider the set $\mathcal{Q}_B:=\{\mathbf{s}\in\mathcal{S}\mid \max_{\Boalp\in\E_s}\big| \mathbf{q}_\Boalp^\top \Bothe^{(\mathbf{s})} \big|\leq \sigma\log(B/B_0) \}$. By Assumption~\ref{assump_gap_wp_w_B_r0}, for any $\mathbf{s}\in\mathcal{Q}_B$
        \begin{equation*}
            \max_{\Boalp\in\E_s}\frac{\big|\mathbf{w}_o^\top \mathbf{a}_{\Boalp} + \mathbf{q}_\Boalp^\top \Bothe^{(\mathbf{s})}\big|}{\sigma} \leq \log B_0 + \log(B/B_0) = \log B.
        \end{equation*}
        Thus, $\{\Bothe^{(\mathbf{s})}\mid \mathbf{s}\in\mathcal{Q}_B\}\subseteq \Theta_B$ by Assumption~\ref{auumpt1self_concor} and~\eqref{eq_def_of_theta_B_set}, from which we have
        \begin{align}
            \sup_{\Bothe\in\Theta_B}\bbe_\Bothe^{(N)}\left[ \Vert\hat{\Bothe}-\Bothe\Vert_2^2 \right] &\geq \frac{1}{|\mathcal{Q}_B|} \sum_{\mathbf{s}\in\mathcal{Q}_B} \bbe_{\Bothe^{(\mathbf{s})}}^{(N)} \left[ \Vert\hat{\Bothe}-\Bothe^{(\mathbf{s})}\Vert_2^2 \right]\nonumber\\
            & = \frac{2^{-d_\theta}\sum_{\mathbf{s}\in\mathcal{Q}_B}\bbe_{\Bothe^{(\mathbf{s})}}^{(N)}\left[ \Vert\hat{\Bothe}-\Bothe^{(\mathbf{s})}\Vert_2^2 \right]}{2^{-d_\theta}|\mathcal{Q}_B|} = \frac{2^{-d_\theta} \sum_{\mathbf{s}\in\mathcal{S}} \ind\{\mathbf{s}\in \mathcal{Q}_B\} \bbe_{\Bothe^{(\mathbf{s})}}^{(N)}\left[ \Vert\hat{\Bothe}-\Bothe^{(\mathbf{s})}\Vert_2^2 \right]}{2^{-d_\theta}|\mathcal{Q}_B|}\nonumber\\
            \label{eq_minimax_lower_by_Cond_on_feasible_theta}
            & = \frac{\bbe_{\Boxi}\left[ \ind\{\mathbf{\Boxi}\in \mathcal{Q}_B\} \bbe_{\Bothe^{(\Boxi)}}^{(N)}\left[\Vert\hat{\Bothe}-\Bothe^{(\Boxi)}\Vert_2^2\right] \right]}{\bbp_{\Boxi}\{\Boxi\in \mathcal{Q}_B\}}.
        \end{align}
        Then our primary interest 
        is on a single realization $\mathbf{s}$ of $\Boxi$ and $\Bothe^{(\mathbf{s})}$. By~\eqref{eq_def_of_theta_s_w_s} and the projection property of $\mathbf{HH}^\top$, we have\footnote{For any estimator $\hat{\mathbf{w}}$ of $\mathbf{w}\in\W_B$, a similar lower bound argument as in \eqref{eq_MSE_w_s_to_decoder_1} for $\Vert\hat{\mathbf{w}}-\mathbf{w}^{(\mathbf{s})}\Vert_2^2$ can be obtained by noticing the projection property of $(\mathbf{V}_{A^{\perp}}\mathbf{H})(\mathbf{V}_{A^{\perp}}\mathbf{H})^\top$.} 
        \begin{align}
			\Vert\hat{\Bothe}-\Bothe^{(\mathbf{s})}\Vert_2^2 &
			\geq \Vert\mathbf{HH}^\top(\hat{\Bothe}-\Bothe^{(\mathbf{s})})\Vert_2^2=\Vert\mathbf{H}^\top\hat{\Bothe}-\mathbf{H}^\top \Bothe^{(\mathbf{s})}\Vert_2^2\nonumber\\
            \label{eq_MSE_w_s_to_decoder_1}
			& = \Vert\mathbf{H}^\top\hat{\Bothe}-\delta \boldsymbol{\Lambda}^{-1/2}\mathbf{s}\Vert_2^2
			= \sum_{j=1}^{d_\theta} \left( \mathbf{h}_j^\top\hat{\Bothe} - \frac{\delta}{\sqrt{\lambda_j}}s_j \right)^2,\,\forall\,\mathbf{s}\in\mathcal{S}.
		\end{align}
        We use $\mathbf{h}_j^\top \hat{\Bothe}$ 
        to define the decoder
        \begin{equation*}
			\hat{s}_j :=\begin{cases}
			    1,\,&\text{ if } \mathbf{h}_j^\top \hat{\Bothe} \geq 0,\\
                -1,\,&\text{ if } \mathbf{h}_j^\top \hat{\Bothe} < 0.
			\end{cases}
		\end{equation*}
        For any $j\in[d_\theta]$, if $\hat{s}_j\neq s_j$, then $\Big|\mathbf{h}_j^\top\hat{\Bothe}-\frac{\delta}{\sqrt{\lambda_j}}s_j\Big|\geq \frac{\delta}{\sqrt{\lambda_j}}$. Thus 
        \begin{equation}
			\label{eq_MSE_w_s_to_decoder_2}
			\sum_{j=1}^{d_\theta} \left( \mathbf{h}_j^\top\hat{\Bothe} - \frac{\delta}{\sqrt{\lambda_j}}s_j \right)^2\geq \sum_{j=1}^{d_\theta}\frac{\delta^2}{\lambda_j}\ind\{\hat{s}_j\neq s_j\}, \,\forall\,\mathbf{s}\in\mathcal{S}.
		\end{equation}
 By substituting~\eqref{eq_MSE_w_s_to_decoder_1} and~\eqref{eq_MSE_w_s_to_decoder_2} into~\eqref{eq_minimax_lower_by_Cond_on_feasible_theta}, we note that it is enough to derive an upper bound on $\bbp_{\Boxi}\{\mathbf{\Boxi}\notin \mathcal{Q}_B\}$ and a lower bound on $\bbe_{\Boxi}\left[ \bbe_{\Bothe^{(\Boxi)}}^{(N)}\left[\sum_{j=1}^{d_\theta}\frac{\delta^2}{\lambda_j}\ind\{\hat{s}_j\neq \xi_j\}\right] \right]$. Indeed 
        \begin{align}
             \sup_{\Bothe\in\Theta_B}\bbe_\Bothe^{(N)}\left[ \Vert\hat{\Bothe}-\Bothe\Vert_2^2 \right]&\geq \bbe_{\Boxi}\left[ \ind\{\mathbf{\Boxi}\in \mathcal{Q}_B\} \bbe_{\Bothe^{(\Boxi)}}^{(N)}\left[\sum_{j=1}^{d_\theta}\frac{\delta^2}{\lambda_j}\ind\{\hat{s}_j\neq \xi_j\}\right] \right] \nonumber\\
             &= \bbe_{\Boxi}\left[ \bbe_{\Bothe^{(\Boxi)}}^{(N)}\left[\sum_{j=1}^{d_\theta}\frac{\delta^2}{\lambda_j}\ind\{\hat{s}_j\neq \xi_j\}\right] \right] - \bbe_{\Boxi}\left[ \ind\{\mathbf{\Boxi}\notin \mathcal{Q}_B\}\bbe_{\Bothe^{(\Boxi)}}^{(N)}\left[\sum_{j=1}^{d_\theta}\frac{\delta^2}{\lambda_j}\ind\{\hat{s}_j\neq \xi_j\}\right] \right]\nonumber\\
             \label{eq_lower_on_minimax_Euclid_diff_by_two_terms}
             &\geq \bbe_{\Boxi}\left[ \bbe_{\Bothe^{(\Boxi)}}^{(N)}\left[\sum_{j=1}^{d_\theta}\frac{\delta^2}{\lambda_j}\ind\{\hat{s}_j\neq \xi_j\}\right] \right] - \bbp_{\Boxi}\{\mathbf{\Boxi}\notin \mathcal{Q}_B\} \delta^2\mathrm{tr}(\mathbf{W}^{-1}),
        \end{align}
        where the last inequality is from the deterministic upper bound $\sum_{j=1}^{d_\theta}\frac{\delta^2}{\lambda_j}\ind\{\hat{s}_j\neq \xi_j\} \leq \delta^2\mathrm{tr}(\mathbf{W}^{-1})$.

        \noindent\underline{\textbf{STEP 2.}} We first derive an upper bound for $\bbp_{\Boxi}\{\mathbf{\Boxi}\notin \mathcal{Q}_B\}$.
        By Hoeffding's inequality~\cite[Theorem~2.2.5]{VershyninHighDimBook}, 
        \begin{equation}
        \label{eq_upper_bound_on_xi_notin_C_b_set}
            \bbp_{\Boxi}\{\mathbf{\Boxi}\notin \mathcal{Q}_B\} = \bbp_{\Boxi}\left\{ \max_{\Boalp\in\E_s}\big|\mathbf{q}_\Boalp^\top \Bothe^{(\Boxi)}\big| >\sigma\log(B/B_0)\right\} \leq 2d^2\exp\left( -\frac{\sigma^2\log^2(B/B_0)}{2\delta^2\mu_Q d_\theta} \right),
        \end{equation}
        where the last inequality follows from $\Vert\boldsymbol{\Lambda}^{-1/2}\mathbf{H}^\top\mathbf{q}_\Boalp\Vert_2^2 \leq \max_{\Boalp\in\E_s}\mathbf{q}_\Boalp^\top\mathbf{W}^{-1}\mathbf{q}_\Boalp = \mu_Q d_\theta$ for any $\Boalp\in\E_s$, and the fact that $|\E_s|\leq d^2$. 
        By setting $\delta=\min\left\{ \frac{\sigma}{\sqrt{N}},\frac{\sigma\log(B/B_0)}{\sqrt{4\mu_Qd_\theta \log(4d)}} \right\}$, 
        we obtain by
        inequality~\eqref{eq_upper_bound_on_xi_notin_C_b_set} 
        that $\bbp_{\Boxi}\{\mathbf{\Boxi}\notin \mathcal{Q}_B\}\leq \frac{1}{8}$. 
        
        Next, by Lemma~\ref{lemma_Assouad_lemma} and Pinsker’s inequality
        \begin{align}
            \bbe_{\Boxi}\left[ \bbe_{\Bothe^{(\Boxi)}}^{(N)}\left[\sum_{j=1}^{d_\theta}\frac{\delta^2}{\lambda_j}\ind\{\hat{s}_j\neq \xi_j\}\right] \right] &= \frac{1}{2^{d_\theta}} \sum_{\mathbf{s}\in\mathcal{S}} \bbe_{\Bothe^{(\mathbf{s})}}^{(N)}\left[\sum_{j=1}^{d_\theta}\frac{\delta^2}{\lambda_j}\ind\{\hat{s}_j\neq s_j\}\right]\nonumber\\
            &\geq \frac{1}{2}\sum_{j=1}^{d_\theta} \frac{\delta^2}{\lambda_j}\left(1-\sup_{\mathbf{s},\mathbf{s}^{\prime}:\rho_H(\mathbf{s},\mathbf{s}^{\prime})=1}\dd_{TV}(\bbp_{\Bothe^{(\mathbf{s})}}^{(N)},\bbp_{\Bothe^{(\mathbf{s}^{\prime})}}^{(N)})\right)\nonumber\\
            \label{eq_two_exp_decoder_lower_by_KL}
            &\geq  \frac{1}{2}\sum_{j=1}^{d_\theta} \frac{\delta^2}{\lambda_j}\left(1-\sup_{s,s^{\prime}:\rho_H(s,s^{\prime})=1}\sqrt{\frac{1}{2}\dd_{KL}(\bbp_{\Bothe^{(\mathbf{s})}}^{(N)}, \bbp_{\Bothe^{(s^{\prime})}}^{(N)})}\right).
        \end{align}
        We then use Lemma~\ref{lemma_KL_between_general_exp_fam} to derive an upper bound for the KL divergence term
        \begin{align}
			\sup_{\mathbf{s},\mathbf{s}^{\prime}:\rho_H(\mathbf{s},\mathbf{s}^{\prime})=1}\dd_{KL}(\bbp_{\Bothe^{(\mathbf{s})}}^{(N)},\bbp_{\Bothe^{(\mathbf{s}^{\prime})}}^{(N)}) &\leq \sup_{\mathbf{s},\mathbf{s}^{\prime}:\rho_H(\mathbf{s},\mathbf{s}^{\prime})=1} \frac{N}{8\sigma^2}(\Bothe^{(\mathbf{s})}-\Bothe^{(\mathbf{s}^{\prime})})^\top \mathbf{W}(\Bothe^{(\mathbf{s})}-\Bothe^{(\mathbf{s}^{\prime})})\nonumber\\
			\label{eq_d_TV_sup_upperbound}
			& = \sup_{\mathbf{s},\mathbf{s}^{\prime}:\rho_H(\mathbf{s},\mathbf{s}^{\prime})=1} \frac{N\delta^2}{8\sigma^2}\Vert\mathbf{s}-\mathbf{s}^{\prime}\Vert_2^2 = \frac{N\delta^2}{2\sigma^2}\leq \frac{1}{2},
		\end{align}
    where the last equality is from the fact that $\mathbf{s},\mathbf{s}^\prime\in\{\pm1\}^{d_\theta}$, and the last inequality 
    is due to 
    the 
    specification of $\delta$. 
    By combining~\eqref{eq_lower_on_minimax_Euclid_diff_by_two_terms}
    -\eqref{eq_d_TV_sup_upperbound}, and the choice of $\delta$, we have
    \begin{align*}
        \sup_{\Bothe\in\Theta_B}\bbe_\Bothe^{(N)}\left[ \Vert\hat{\Bothe}-\Bothe\Vert_2^2 \right] &\geq \frac{1}{4}\sum_{j=1}^{d_\theta} \frac{\delta^2}{\lambda_j} - \frac{1}{8}\delta^2\mathrm{tr}(\mathbf{W}^{-1}) = \frac{1}{8}\delta^2\mathrm{tr}(\mathbf{W}^{-1})\\
        & =\frac{\sigma^2\mathrm{tr}(\mathbf{W}^{-1})}{8}\min\left\{\frac{1}{N}, \frac{\log^2(B/B_0)}{4\mu_Qd_\theta \log(4d)}\right\}.
    \end{align*}
    Taking infimum over $\hat{\Bothe}$ on both sides of the above inequality 
    gives rise to~\eqref{eq_minimax_exp_lowerbound}.

        \noindent \textbf{Proof of \eqref{eq_minimax_prob_lowerbound}.} First, by~\eqref{eq_MSE_w_s_to_decoder_1} and~\eqref{eq_MSE_w_s_to_decoder_2}
		\begin{equation}
        \label{eq_theta_hat_theta_s_lower_by_hamming}
			\Vert\hat{\Bothe}-\Bothe^{(\mathbf{s})}\Vert_2^2\geq \sum_{j=1}^{d_\theta}\frac{\delta^2}{\lambda_j}\ind\{\hat{s}_j\neq s_j\}, \,\forall \,\mathbf{s}\in\mathcal{Q}_B,
		\end{equation}
		$\bbp_{\Bothe^{(\mathbf{s})}}^{(N)}$ almost surely, where
        \begin{equation}
            \label{eq_decoding_error_bounded_as}
		  0\leq\sum_{j=1}^{d_\theta}\frac{\delta^2}{\lambda_j}\ind\{\hat{s}_j\neq s_j\} \leq \delta^2\mathrm{tr}(\mathbf{W}^{-1}).
		\end{equation}
        We adopt the same choice of $\delta$ as in the derivation above. That is, for all $t\in(0,\delta^2\mathrm{tr}(\mathbf{W}^{-1}))$
        \begin{align}
            \sup_{\Bothe\in\Theta_B} \bbp_{\Bothe}^{(N)}\left\{ \Vert\hat{\Bothe} - \Bothe\Vert_2^2 \geq t \right\} &\geq \frac{1}{|\mathcal{Q}_B|}\sum_{\mathbf{s}\in\mathcal{Q}_B}\bbp_{\Bothe^{(\mathbf{s})}}^{(N)}\left\{ \Vert\hat{\Bothe} - \Bothe^{(\mathbf{s})}\Vert_2^2 \geq t\right\}\nonumber\\
            &\geq \frac{1}{|\mathcal{Q}_B|}\sum_{\mathbf{s}\in\mathcal{Q}_B}\bbp_{\Bothe^{(\mathbf{s})}}^{(N)}\left\{ \sum_{j=1}^{d_\theta}\frac{\delta^2}{\lambda_j}\ind\{\hat{s}_j\neq s_j\} \geq t\right\}\nonumber\\
            &\geq \frac{|\mathcal{Q}_B|^{-1}\sum_{\mathbf{s}\in\mathcal{Q}_B}\bbe_{\Bothe^{(\mathbf{s})}}^{(N)}\left[ \sum_{j=1}^{d_\theta}\frac{\delta^2}{\lambda_j}\ind\{\hat{s}_j\neq s_j\} \right] - t }{\delta^2\mathrm{tr}(\mathbf{W}^{-1})-t}\nonumber\\
            & = \frac{ \frac{2^{d_\theta}}{|\mathcal{Q}_B|}\cdot \bbe_{\Boxi}\left[ \ind\{\Boxi\in\mathcal{Q}_B\}  \bbe_{\Bothe^{(\Boxi)}}^{(N)}\left[ \sum_{j=1}^{d_\theta}\frac{\delta^2}{\lambda_j}\ind\{\hat{s}_j\neq \xi_j\} \right]\right] - t }{\delta^2\mathrm{tr}(\mathbf{W}^{-1})-t}\nonumber\\
            \label{eq_minimax_euc_prob_lower_in_t}
            & \geq \frac{\frac{1}{8}\delta^2\mathrm{tr}(\mathbf{W}^{-1}) - t}{\delta^2\mathrm{tr}(\mathbf{W}^{-1})-t},
        \end{align}
        where the third inequality is from Lemma~\ref{eq_bounded_Rv_prob_lower}, and in the last inequality we use $\frac{2^{d_\theta}}{|\mathcal{Q}_B|}\geq 1$ and the lower bound on the r.h.s.~of~\eqref{eq_lower_on_minimax_Euclid_diff_by_two_terms} deduced in the proof of~\eqref{eq_minimax_exp_lowerbound}. By setting $t= \frac{\delta^2\mathrm{tr}(\mathbf{W}^{-1})}{16}$ and $\delta=\min\left\{ \frac{\sigma}{\sqrt{N}},\frac{\sigma\log(B/B_0)}{\sqrt{4\mu_Qd_\theta \log(4d)}} \right\}$, inequality~\eqref{eq_minimax_euc_prob_lower_in_t} yields
        \begin{align*}
             \sup_{\Bothe\in\Theta_B} \bbp_{\Bothe}^{(N)}\left\{ \Vert\hat{\Bothe} - \Bothe\Vert_2^2 \geq \frac{\delta^2\mathrm{tr}(\mathbf{W}^{-1})}{16} \right\} &= \sup_{\Bothe\in\Theta_B} \bbp_{\Bothe}^{(N)}\left\{ \Vert\hat{\Bothe} - \Bothe\Vert_2^2 \geq \frac{\sigma^2\mathrm{tr}(\mathbf{W}^{-1})}{16}\min\left\{ \frac{1}{N}, \frac{\log^2(B/B_0)}{4\mu_Qd_\theta \log(4d)} \right\} \right\}\\
             &\geq \frac{\delta^2\mathrm{tr}(\mathbf{W}^{-1})/8 - \delta^2\mathrm{tr}(\mathbf{W}^{-1})/16}{\delta^2\mathrm{tr}(\mathbf{W}^{-1})-\delta^2\mathrm{tr}(\mathbf{W}^{-1})/16} = \frac{1}{15},
        \end{align*}
        taking infimum over $\hat{\Bothe}$ on both sides proves~\eqref{eq_minimax_prob_lowerbound}.
        \qedbox

        \subsubsection{Proof of Corollary~\ref{coro_miniax_lowerbound_excess_risk}}
        \label{proof_coro_miniax_lowerbound_excess_risk}

        A direct corollary from Lemma~\ref{lema_sum_of_excess_loss_lower_bound_by_semi_one_sample} lower bounds the excess risk, i.e.,
            \begin{align}
                \E_{\mathcal{L}}(\Bov,\Bothe_1) + \E_{\mathcal{L}}(\Bov,\Bothe_2) &= \mathcal{L}_{\Bothe_1}(\Bov) - \mathcal{L}_{\Bothe_1}(\Bothe_1) + \mathcal{L}_{\Bothe_2}(\Bov) - \mathcal{L}_{\Bothe_2}(\Bothe_2)\nonumber\\
                & = \frac{1}{N}\sum_{\Boalp\in\E_s} n_\Boalp \left( \mathcal{L}_{\Bothe_1}^\Boalp(\Bov) - \mathcal{L}_{\Bothe_1}^\Boalp(\Bothe_1) + \mathcal{L}_{\Bothe_2}^\Boalp(\Bov) - \mathcal{L}_{\Bothe_2}^\Boalp(\Bothe_2) \right)\nonumber\\
                \label{eq_sum_of_excess_loss_lower_bound_by_semi}
                & \geq \frac{1}{16B\sigma^2} \sum_{\Boalp\in\E_s} \frac{n_\Boalp}{N} \Big|\mathbf{q}_\Boalp^\top(\Bothe_1-\Bothe_2)\Big|^2 = \frac{1}{16B\sigma^2} \Vert \Bothe_1-\Bothe_2 \Vert_{\mathbf{W}}^2,
            \end{align}
            for all $\Bothe_1,\Bothe_2\in\Theta_B$ and $\boldsymbol{v}\in\mathbb{R}^{d_\theta}$. We adopt the notation introduced in Section~\ref{proof_thm_minimax_loewe_bound_tr_inv}.

            First, let $\Bothe^{(\Boxi)}$ be defined as 
            in~\eqref{eq_def_of_theta_s_w_s}.
            Thus 
            $\bbp_{\Boxi}\{\Boxi\notin\mathcal{Q}_B\}\leq \frac{1}{8}$ when $\delta=\min\left\{ \frac{\sigma}{\sqrt{N}},\frac{\sigma\log(B/B_0)}{\sqrt{4\mu_Qd_\theta \log(4d)}} \right\}$. 
            Fix any $\hat{\Bothe}\in\mathcal{A}_N$, and define the decoder
            \begin{equation*}
                \hat{\mathbf{s}}:=\arg\min_{\mathbf{t}\in\mathcal{Q}_B}\E_{\mathcal{L}}(\hat{\Bothe},\Bothe^{(\mathbf{t})}),
            \end{equation*}
            where we 
            break ties according to a fixed ordering
            of $\mathcal{Q}_B$, and we have $\Bothe^{(\hat{\mathbf{s}})}\in\Theta_B$. Thus
            \begin{align}
                \sup_{\Bothe\in\Theta_B} \mathbb{E}_\Bothe^{(N)}\left[ \E_{\mathcal{L}}(\hat{\Bothe},\Bothe) \right]&
                \geq \frac{1}{|\mathcal{Q}_B|}\sum_{\mathbf{s}\in\mathcal{Q}_B} \bbe_{\Bothe^{(\mathbf{s})}}^{(N)} \left[ \E_{\mathcal{L}}(\hat{\Bothe},\Bothe^{(\mathbf{s})})\right]
                \nonumber\\
                &\geq \frac{1}{2|\mathcal{Q}_B|} \sum_{\mathbf{s}\in\mathcal{Q}_B} \mathbb{E}_{\Bothe^{(\mathbf{s})}}^{(N)} \left[ \E_{\mathcal{L}}(\hat{\Bothe},\Bothe^{(\mathbf{s})}) + \E_{\mathcal{L}}(\hat{\Bothe},\Bothe^{(\hat{\mathbf{s}})}) \right] \nonumber\\
                &\overset{\eqref{eq_sum_of_excess_loss_lower_bound_by_semi}}{\geq} \frac{1}{32B\sigma^2|\mathcal{Q}_B|}  \sum_{\mathbf{s}\in\mathcal{Q}_B} \mathbb{E}_{\Bothe^{(\mathbf{s})}}^{(N)} \left[\Vert \Bothe^{(\mathbf{s})} - \Bothe^{(\hat{\mathbf{s}})} \Vert_{\mathbf{W}}^2\right] 
                \nonumber\\
                &=\frac{\delta^2}{32B\sigma^2|\mathcal{Q}_B|}  \sum_{\mathbf{s}\in\mathcal{Q}_B} \mathbb{E}_{\Bothe^{(\mathbf{s})}}^{(N)} \left[\Vert \mathbf{s} - \hat{\mathbf{s}} \Vert_2^2\right] \nonumber\\
                & = \frac{\delta^2}{8B\sigma^2|\mathcal{Q}_B|}  \sum_{\mathbf{s}\in\mathcal{Q}_B} \mathbb{E}_{\Bothe^{(\mathbf{s})}}^{(N)} \left[\rho_H(\mathbf{s},\hat{\mathbf{s}})\right] \nonumber\\
                &= \frac{\delta^2}{8B\sigma^2}\frac{\bbe_{\Boxi}\left[\ind\{\Boxi\in\mathcal{Q}_B\}\bbe_{\Bothe^{(\Boxi)}}^{(N)}\left[ \rho_H(\Boxi,\hat{\mathbf{s}}) \right]\right]}{\bbp_{\Boxi}\{\Boxi\in\mathcal{Q}_B\}} \nonumber\\
                \label{eq_sup_excess_risk_lower_by_minimax_Hamming}
                &\geq \frac{\delta^2}{8B\sigma^2} \bbe_{\Boxi}\left[\ind\{\Boxi\in\mathcal{Q}_B\}\bbe_{\Bothe^{(\Boxi)}}^{(N)}\left[ \rho_H(\Boxi,\hat{\mathbf{s}}) \right]\right].
            \end{align}
            According to~\eqref{eq_lower_on_minimax_Euclid_diff_by_two_terms}, the r.h.s.~of~\eqref{eq_sup_excess_risk_lower_by_minimax_Hamming} can be lower bounded as
            \begin{align}
                \bbe_{\Boxi}\left[\ind\{\Boxi\in\mathcal{Q}_B\}\bbe_{\Bothe^{(\Boxi)}}^{(N)}\left[ \rho_H(\Boxi,\hat{\mathbf{s}}) \right]\right] &= \bbe_{\Boxi}\left[\bbe_{\Bothe^{(\Boxi)}}^{(N)}\left[ \rho_H(\Boxi,\hat{\mathbf{s}}) \right]\right] -\bbe_{\Boxi}\left[\ind\{\Boxi\notin\mathcal{Q}_B\}\bbe_{\Bothe^{(\Boxi)}}^{(N)}\left[ \rho_H(\Boxi,\hat{\mathbf{s}}) \right]\right]\nonumber\\
                & = \frac{1}{2^{d_\theta}}\sum_{\mathbf{s}\in\mathcal{S}}\bbe_{\Bothe^{(\mathbf{s})}}^{(N)}\left[ \rho_H(\mathbf{s},\hat{\mathbf{s}}) \right] -\bbe_{\Boxi}\left[\ind\{\Boxi\notin\mathcal{Q}_B\}\bbe_{\Bothe^{(\Boxi)}}^{(N)}\left[ \rho_H(\Boxi,\hat{\mathbf{s}}) \right]\right]\nonumber\\
                &\geq \frac{d_\theta}{2}\left( 1-\sup_{\mathbf{s},\mathbf{s}^{\prime}:\rho_H(\mathbf{s},\mathbf{s}^{\prime})=1}\dd_{TV}(\bbp_{\Bothe^{(\mathbf{s})}}^{(N)},\bbp_{\Bothe^{(\mathbf{s}^{\prime})}}^{(N)}) \right) - d_\theta\bbp_{\Boxi}\{\Boxi\notin\mathcal{Q}_B\}\nonumber\\
                \label{eq_exp_xi_minimax_Hamming_lower_by_TV_Pxi}
                &\geq \frac{d_\theta}{4} - d_\theta\bbp_{\Boxi}\{\Boxi\notin\mathcal{Q}_B\} \geq \frac{d_\theta}{8},
            \end{align}
            where the first inequality is from Lemma~\ref{lemma_Assouad_lemma} and the worst-case upper bound $\rho_H(\Boxi,\hat{\mathbf{s}})\leq d_\theta$ over all $\Boxi,\hat{\mathbf{s}}\in\mathcal{S}$. The second and last inequalities in~\eqref{eq_exp_xi_minimax_Hamming_lower_by_TV_Pxi} follow from analogous derivation as in~\eqref{eq_d_TV_sup_upperbound} and~\eqref{eq_upper_bound_on_xi_notin_C_b_set}, respectively.

            By substituting~\eqref{eq_exp_xi_minimax_Hamming_lower_by_TV_Pxi} into~\eqref{eq_sup_excess_risk_lower_by_minimax_Hamming}, we obtain 
            \begin{equation*}
                \sup_{\Bothe\in\Theta_B} \mathbb{E}_\Bothe^{(N)}\left[ \E_{\mathcal{L}}(\hat{\Bothe},\Bothe) \right] \geq \frac{\delta^2d_\theta}{64B\sigma^2} = \frac{d_\theta}{64B}\min\left\{ \frac{1}{N},\frac{\log^2(B/B_0)}{4\mu_Qd_\theta \log(4d)} \right\}.
            \end{equation*}
            Taking infimum over $\hat{\Bothe}$ on both sides gives rise to~\eqref{eq_minimax_excess_risk_in_exp_form}. Inequality~\eqref{eq_minimax_excess_risk_in_prob_form} follows from
            the deterministic inequality implied  by~\eqref{eq_sup_excess_risk_lower_by_minimax_Hamming}
            \begin{equation*}
                \E_{\mathcal{L}}(\hat{\Bothe},\Bothe^{(\mathbf{s})})\geq \frac{\delta^2}{8B\sigma^2}\rho_H(\mathbf{s},\hat{\mathbf{s}}),\,\forall\,\mathbf{s}\in\mathcal{Q}_B,
            \end{equation*}
            and
            \begin{equation*}
                0\leq \rho_H(\mathbf{s},\hat{\mathbf{s}})\leq d_\theta,\,\forall\,\mathbf{s}\in\mathcal{S},
            \end{equation*}
            and then mimicking  the proof of~\eqref{eq_minimax_prob_lowerbound} with $t=\frac{\delta^2 d_\theta}{128B\sigma^2}$. 
            \qedbox

		\subsubsection{Proof of Theorem~\ref{thm_residual_coherence_minimax}}
        \label{proof_thm_residual_coherence_minimax}

        It is enough to prove \eqref{eq_minimax_Q_infty_norm_Exp} via Lemma~\ref{lema_le_Cam_twopoints}. For any but fixed $\Boalp\in\E_s$ such that $\mathbf{q}_\Boalp\neq \mathbf{0}$, let $\mu_\Boalp:=\frac{1}{d_\theta}\mathbf{q}_\Boalp^\top\mathbf{W}^{-1}\mathbf{q}_\Boalp$ and $\mathbf{h}_\Boalp:=\frac{\mathbf{W}^{-1}\mathbf{q}_\Boalp}{\sqrt{\mu_\Boalp d_\theta}}$. Suppose that we can find $\delta>0$ such that $\Bothe_1:=\delta\mathbf{h}_\Boalp$, $\Bothe_2:=-\delta\mathbf{h}_\Boalp\in\Theta_B$. Then
            \begin{align*}
                \Vert\Bothe_1-\Bothe_2\Vert_{Q,\infty} &= \max_{\boldsymbol{\beta}\in\E_s} |\mathbf{q}_{\boldsymbol{\beta}}^\top(\Bothe_1-\Bothe_2)| = 2\delta\max_{\boldsymbol{\beta}\in\E_s} |\mathbf{q}_{\boldsymbol{\beta}}^\top\mathbf{h}_\Boalp| \\
                &\geq  2\delta |\mathbf{q}_\Boalp^\top\mathbf{h}_\Boalp|= 2\delta \frac{\mathbf{q}_\Boalp^\top\mathbf{W}^{-1}\mathbf{q}_\Boalp}{\sqrt{\mu_\Boalp d_\theta}} = 2\delta\sqrt{\mu_\Boalp d_\theta},
            \end{align*}
            from which we construct a $2\delta\sqrt{\mu_\Boalp d_\theta}-$separated pair measured in $\Vert\cdot\Vert_{Q,\infty}$.
            By Lemma \ref{lemma_KL_between_general_exp_fam} and setting $\delta = \frac{\sigma}{\sqrt{N}}$
            \begin{align*}
                \dd_{TV}(\bbp_{\Bothe_1}^{(N)},\bbp_{\Bothe_2}^{(N)})&\leq \sqrt{\dd_{KL} (\bbp_{\Bothe_1}^{(N)},\bbp_{\Bothe_2}^{(N)})/2}\leq \sqrt{\frac{N}{16\sigma^2}(\Bothe_1-\Bothe_2)^\top\mathbf{W}(\Bothe_1-\Bothe_2)} \\
                &= \sqrt{\frac{\delta^2N}{4\sigma^2}\mathbf{h}_\Boalp^\top\mathbf{W}\mathbf{h}_\Boalp} = \sqrt{\frac{\delta^2N}{4\sigma^2}}= \frac{1}{2},
            \end{align*}
            where the first inequality is from Pinsker's inequality.
            Therefore, invoking the first part of Lemma \ref{lema_le_Cam_twopoints} yields
            \begin{align}
                \mathcal{R}_N^\star(\Theta_B,\Vert\cdot\Vert_{Q,\infty}^2) &\geq \max_{\Boalp\in\E_s}\frac{\delta^2\mu_\Boalp d_\theta}{2}\left(1- \dd_{TV}(\bbp_{\Bothe_1}^{(N)},\bbp_{\Bothe_2}^{(N)})\right) \nonumber\\
                \label{eq_minimax_Q_infty_mu_bound_1}
                &\geq \max_{\Boalp\in\E_s}\frac{\delta^2\mu_\Boalp d_\theta}{4}= \max_{\Boalp\in\E_s}\frac{\mu_\Boalp d_\theta \sigma^2}{4N} = \frac{\mu_Q d_\theta \sigma^2}{4N}.
            \end{align}
            Next, we verify the claim that $\Bothe_1,\Bothe_2\in\Theta_B$. That is, by setting $\delta=\min\left\{\frac{\sigma}{\sqrt{N}},\frac{\sigma\log(B/B_0)}{\sqrt{\mu_Q d_\theta}}\right\}$
            \begin{align*}
                \max_{\Bobet\in\E_s}\frac{|\mathbf{a}_\Bobet^\top\mathbf{w}_o + \mathbf{a}_\Bobet^\top\mathbf{V}_{A^\perp}\Bothe_i|}{\sigma} &\leq \log(B_0) + \max_{\Bobet\in\E_s} \frac{|\mathbf{q}_\Bobet^\top\Bothe_i|}{\sigma}  = \log(B_0) + \max_{\Bobet\in\E_s} \frac{\delta \big|\mathbf{q}_\Bobet^\top\mathbf{W}^{-1}\mathbf{q}_\Boalp\big|}{\sigma\sqrt{\mu_\Boalp d_\theta}} \\
                & \leq \log(B_0) + \max_{\Bobet\in\E_s} \frac{\delta\Vert\mathbf{W}^{-1/2}\mathbf{q}_\Bobet\Vert_2 \Vert\mathbf{W}^{-1/2}\mathbf{q}_\Boalp\Vert_2}{\sigma\sqrt{\mu_\Boalp d_\theta}}\\
                & \leq \log(B_0) + \frac{\delta\sqrt{\mu_Qd_\theta}}{\sigma} \leq \log(B),
            \end{align*}
            for $i=1,2$, where the last inequality follows by setting $\delta\leq \frac{\sigma\log(B/B_0)}{\sqrt{\mu_Q d_\theta}}$. Whenever $\frac{\sigma\log(B/B_0)}{\sqrt{\mu_Q d_\theta}}\leq \frac{\sigma}{\sqrt{N}}$, we have $\dd_{TV}(\bbp_{\Bothe_1}^{(N)},\bbp_{\Bothe_2}^{(N)}) \leq \frac{1}{2}$. This, together with \eqref{eq_minimax_Q_infty_mu_bound_1}, implies that
            \begin{equation*}
                \mathcal{R}_N^\star(\Theta_B,\Vert\cdot\Vert_{Q,\infty}^2) \geq \min\left\{ \frac{\mu_Qd_\theta\sigma^2}{4N},\frac{\sigma^2\log^2(B/B_0)}{4}\right\}.
            \end{equation*}
            Inequality~\eqref{eq_minimax_Q_infty_norm_Prob} then follows from~\eqref{eq_LeCam_two_point_Prob} and simple adjustment of the constants.
            \qedbox
		\subsection{Proofs of MLE Error Bounds}
        \label{subsec_proofs_of_MLE_error_bounds}
        We begin with some extra notations that will be used throughout the proof. It is convenient to reindex the observations by a single sample index $k$. Since $N=\sum_{\Boalp\in\E_s} n_\Boalp$, fix an arbitrary bijection $\pi:[N]\rightarrow\left\{(\Boalp,t):\Boalp\in\E_s,\ t\in[n_\Boalp]\right\},\,\pi(k)=(\Boalp_k,t_k)$. We then write $Y_k:=Y_{\Boalp_k}^{(t_k)},\, k\in[N]$. For each $\Boalp\in\mathcal E_s$, define
        \begin{equation}
            \label{eq_flattenIndex_Sep_N_into_Salpha}
            S_\Boalp := \{k\in[N] \mid \Boalp_k=\Boalp\},\,[N] = \bigcup_{\Boalp\in\E_s} S_\Boalp,\,|S_\Boalp| = n_\Boalp.
        \end{equation}
        The particular choice of the bijection $\pi$ is immaterial. For each $k\in[N]$, define $\mathbf{q}_k := \mathbf{q}_{\Boalp_k}$, and
        \begin{equation}
            \label{eq_flatten_index_qk_eps_k_v_kstar}
            \varepsilon_{k}:=Y_{k}-\mathbb{E}_\BoTheS^{\Boalp_k}[Y_{k}] = Y_k - \Psi^{\prime}(\eta_{\Boalp_k}(\BoTheS)),\quad v_k^\star:=\var_\BoTheS^{\Boalp_k}(Y_{k})=\Psi^{\prime\prime}(\eta_{\Boalp_k}(\BoTheS)),
        \end{equation}
        where $\eta_{\Boalp}(\Bothe)$ is defined in~\eqref{eq_definition_q_alp}. Let the collection of the design vectors and ground-truth variances be
        \begin{equation}
        \label{eq_def_of_x_k_X_V_star}
            \mathbf{x}_k : = \frac{\mathbf{q}_k}{\sigma},\quad\mathbf{X}:=[\mathbf{x}_1,\dots,\mathbf{x}_N]^\top\in\mathbb{R}^{N\times d_\theta},\quad \mathbf{V}^\star:=\mathrm{diag}(v_1^\star,\dots,v_N^\star),
        \end{equation}
        respectively. It is also convenient to introduce $\eta_k^\star:=\eta_{\Boalp_k}(\BoTheS)$. Moreover, consider the matrix $\mathbf{A}\in\mathbb{R}^{N\times N}$ with its $(i,j)$-th entry $\mathbf{A}_{ij}$ being $\frac{\mathbf{x}_i^\top\nabla^2\ell(\BoTheS)^{-1}\mathbf{x}_j}{N}$, and let $A_{\max}:=\max_{i\in[N],j\in[N]}|\mathbf{A}_{ij}|$.
        \subsubsection{Proof of Theorem~\ref{thm_suffic_sample_l2_upper_bound}}
        \label{proof_thm_suffic_sample_l2_upper_bound}
        The core of the proof is to establish~\eqref{eq_ell_Q_error_upper_hp}. This reduces to understanding the components of $\Delta$, for which the high-level road-map is partly inspired by~\cite[Theorem 1]{SuMouPropScoreDeBias} and~\cite[Theorem 1]{LinSuJackQuadBarriZEst}, where the authors show that a sample complexity on the order of $d_\theta^{3/2}$ suffices to isolate the second-order bias effect for the canonical MLE with sub-Gaussian design vectors~\cite[Section 2.2]{SuMouPropScoreDeBias} and for the corresponding Z-estimator with deterministic design that admits a solution in the interior of a compact set~\cite[Section 2.1]{LinSuJackQuadBarriZEst}, respectively. However, these two regularity conditions cannot hold in our context, and we construct a compact and convex set on which the residual map is a self-map and obtain a finite MLE from its fixed point (Lemma~\ref{lema_Self-mapping_Property}).
        Within this set, the Hessian of the negative log-likelihood function tends to be well-behaved, resulting in $\ell(\Bothe)$ exhibiting local strong convexity. For this step, Chen~\cite{ChenYanXiRankRisistan} and Yang et al.'s~\cite{YangCongTopKMonoAd, YangRaschRandomMLE} strategy is the most analogous to ours among existing literature. They consider running preconditioned gradient descent, starting from the ground truth
        \begin{equation*}
            \Bothe_{t+1} = \Bothe_t - \eta \nabla^2\ell(\BoTheS)^{-1} \nabla\ell(\Bothe_t),\quad\Bothe_0 = \BoTheS.
        \end{equation*}
        Performing this recursively, it yields~\cite{YangRaschRandomMLE}
        \begin{equation}
        \label{eq_precond_GD_iteration_recursive}
            \Delta^{t+1} =-[1-(1-\eta)^{t+1}]\underbrace{\nabla^2\ell(\BoTheS)^{-1}\nabla\ell(\BoTheS)}_{-\Delta_{lin}} - \eta \sum_{j=0}^{t}(1-\eta)^{t-j}\underbrace{ \nabla^2\ell(\BoTheS)^{-1}[\bar{\mathbf{H}}^j - \nabla^2\ell(\BoTheS)]\Delta^j}_{\text{Quadratic and higher-order residuals}},
        \end{equation}
        where $\bar{\mathbf{H}}^t = \int_0^1 \nabla^2\ell(\BoTheS + s\Delta_t)\mathrm{d} s$ and $\Delta^{t}:=\Bothe^{t}-\BoTheS$. The localization step is accomplished by proving by induction that $\Bothe_t$ will remain close to $\BoTheS$, and this procedure will converge to the MLE solution. Yet the induction steps in~\cite[Appendix B]{YangCongTopKMonoAd} heavily rely on the specific graphical structure of the pairwise ranking model, which in general does not hold under our setup. Taking~\eqref{eq_ell_Q_error_upper_hp} as given, we can establish local strong convexity within a small region around the ground truth, from which we prove~\eqref{eq_seminorm_error_upper_hp}. This step utilizes some corollaries from Bach et al.~\cite{OstrovsBachSelfCMEst, BachSelfconcordantLogit}. 
        
        \paragraph{Taylor Expansion of High-Order Residuals}
        \label{subsubsec_taylor_expasion_of_high_order_res}
        The first step of the proof is to rephrase the ``main order + residual" structure depicted in~\eqref{eq_precond_GD_iteration_recursive} as a nonlinear equality system that encodes the first-order optimality condition, and write out the second-order bias explicitly. To ease the notation, let the nonlinear remainder around $\Psi^\prime(\eta_j^\star)$ be defined as 
        \begin{equation}
        \label{eq_nonlinear_remainder_r_j_Delta}
            r_j(\Delta):=\Psi^\prime(\eta_j^\star + \mathbf{x}_j^\top\Delta) - \Psi^\prime(\eta_j^\star) - \Psi^{\prime\prime}(\eta_j^\star)\mathbf{x}_j^\top\Delta,\,\forall\,j\in[N],
        \end{equation}
        where $\Delta=\Bothe-\BoTheS$ and $\Bothe\in\mathbb{R}^{d_\theta}$. 
        By construction,~\eqref{eq_precond_GD_iteration_recursive} can be viewed as a Taylor expansion of the likelihood score
        \begin{align}
            \nabla\ell(\BoTheS + \Delta) &= \nabla^2\ell(\BoTheS)\left(\Delta - \Delta_{lin}\right) + \int_0^1 \left[\nabla^2\ell(\BoTheS + s\Delta) - \nabla^2\ell(\BoTheS)\right]\Delta\mathrm{d}s \nonumber\\
            \label{eq_original_residual_expansion_into_second_order}
            & = \nabla^2\ell(\BoTheS)\left(\Delta - \Delta_{lin}\right) + \frac{1}{N}\sum_{j=1}^N r_j(\Delta) \mathbf{x}_j.
        \end{align}
        We are particularly interested in the structure of the second term on the r.h.s.~of~\eqref{eq_original_residual_expansion_into_second_order}, the following elementary expansion separates the leading quadratic component of $r_j(\Delta)$.
        \begin{lemma}
        \label{lema_r_j_Delta_Taylor_expand}
            Let $\Bothe\in\mathbb{R}^{d_\theta}$, and $\Delta=\Bothe-\BoTheS$. For any $j\in[N]$, the nonlinear remainder defined in~\eqref{eq_nonlinear_remainder_r_j_Delta} can be equivalently expressed as 
            \begin{equation}
            \label{eq_taylor_expansion_third_order_integral_remainder}
                r_j(\Delta) = \frac{1}{2}\Psi^{(3)}(\eta_j^\star)|\mathbf{x}_j^\top\Delta|^2 + \frac{1}{2}\int_0^1 (1-s)^2 \Psi^{(4)}(\eta_j^\star + s\mathbf{x}_j^\top\Delta)(\mathbf{x}_j^\top\Delta)^3 \mathrm{d}s. 
            \end{equation}
        \end{lemma}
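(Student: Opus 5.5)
The identity in~\eqref{eq_taylor_expansion_third_order_integral_remainder} is Taylor's theorem with integral remainder, applied to a univariate smooth function. I would fix $j\in[N]$, set $u:=\mathbf{x}_j^\top\Delta\in\mathbb{R}$, and define $g(s):=\Psi^\prime(\eta_j^\star+su)$ for $s\in[0,1]$. Since $\Psi(\eta)=\log(1+e^\eta)$ is $C^\infty$ on $\mathbb{R}$, $g$ is $C^\infty$ on $[0,1]$. Its derivatives are $g^\prime(s)=\Psi^{\prime\prime}(\eta_j^\star+su)\,u$, $g^{\prime\prime}(s)=\Psi^{(3)}(\eta_j^\star+su)\,u^2$ and $g^{(3)}(s)=\Psi^{(4)}(\eta_j^\star+su)\,u^3$.

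By the definition~\eqref{eq_nonlinear_remainder_r_j_Delta}, $r_j(\Delta)=g(1)-g(0)-g^\prime(0)$. Taylor's formula with integral remainder of order two gives
\[
g(1)=g(0)+g^\prime(0)+\tfrac{1}{2}g^{\prime\prime}(0)+\tfrac{1}{2}\int_0^1(1-s)^2 g^{(3)}(s)\,\mathrm{d}s .
\]
To confirm this, I would integrate $\int_0^1(1-s)^2g^{(3)}(s)\,\mathrm{d}s$ by parts twice. The first step gives $-g^{\prime\prime}(0)+2\int_0^1(1-s)g^{\prime\prime}(s)\,\mathrm{d}s$, and the second gives $\int_0^1(1-s)g^{\prime\prime}(s)\,\mathrm{d}s=-g^\prime(0)+g(1)-g(0)$. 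Rearranging these yields the displayed formula.

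Substituting the derivative expressions with $u=\mathbf{x}_j^\top\Delta$ gives $\tfrac12\Psi^{(3)}(\eta_j^\star)|\mathbf{x}_j^\top\Delta|^2$ for the quadratic term. The remainder becomes $\tfrac12\int_0^1(1-s)^2\Psi^{(4)}(\eta_j^\star+s\mathbf{x}_j^\top\Delta)(\mathbf{x}_j^\top\Delta)^3\,\mathrm{d}s$, which is exactly~\eqref{eq_taylor_expansion_third_order_integral_remainder}.

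There is no real obstacle here. The only care needed is keeping the factor $\tfrac12$ and the power $(1-s)^2$ consistent in the remainder. It is also worth recording that the integrand is bounded by $\Psi^{\prime\prime}(\cdot)|\mathbf{x}_j^\top\Delta|^3$, using $|\Psi^{(4)}|\le\Psi^{\prime\prime}$ from the discussion after Proposition~\ref{observ_Bernoulli_exp_calss}. This bound is what later controls the higher-order remainder.
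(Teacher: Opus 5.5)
Your proposal is correct and follows the paper's route: the paper also applies Taylor's formula with third-order integral remainder to $g=\Psi^\prime$ at $a=\eta_j^\star$ with increment $h=\mathbf{x}_j^\top\Delta$. Your reparametrization $g(s)=\Psi^\prime(\eta_j^\star+su)$ on $[0,1]$ and your explicit double integration by parts are just a slightly more detailed version of the same argument.
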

        \begin{proof}
            This follows directly from Taylor expansion of a three-times continuously differentiable function $g:\mathbb{R}\to\mathbb{R}$ to the third term with integral remainder, i.e.,
            \begin{equation*}
                g(a+h) = g(a) + g^\prime(a)h + \frac{1}{2}g^{\prime\prime}(a)h^2 + \int_0^1 \frac{h^3}{2} (1-s)^2 g^{(3)}(a+sh)\mathrm{d}s,
            \end{equation*}
            for $g(\cdot)=\Psi^\prime(\cdot)$ with $a=\eta_j^\star$ and $h=\mathbf{x}_j^\top \Delta$. \qedbox
        \end{proof}
        
        \noindent Lemma~\ref{lema_r_j_Delta_Taylor_expand} also explains the particular form of the deterministic bias correction in~\eqref{eq_def_of_Delta_bias}. Indeed, by the definition of $\Delta_{lin}$ and $\mathbf{A}$, we have
        \begin{align}
            \bbe_{\BoTheS}^{(N)} \left[ \nabla^2\ell(\BoTheS)^{-1}\left[\frac{1}{2N} \sum_{j=1}^N \Psi^{(3)}(\eta_j^\star)|\mathbf{x}_j^\top\Delta_{lin}|^2 \mathbf{x}_j\right] \right] &= \nabla^2\ell(\BoTheS)^{-1}\left[\frac{1}{2N}\sum_{j=1}^N \Psi^{(3)}(\eta_j^\star)\mathbf{A}_{jj}\mathbf{x}_j\right]\nonumber\\
            \label{eq_identity_Exp_Q_2_Delta_bias}
            &= -\Delta_{bias}.
        \end{align} 
        Inspired by similar results under stronger regularity conditions~\cite[Section 2.2]{LinSuJackQuadBarriZEst} and the asymptotic rate proved by Portnoy~\cite[Section 3]{PortnoAsymExpFamily}, we infer that $\Delta - \Delta_{lin} - \Delta_{bias}$ is sufficiently small, when measured in the $\Vert\cdot\Vert_{Q,\infty}$ norm with high probability, provided that $\Bothe$ satisfies the first-order optimality condition (see Section~\ref{subsec_Existence_of_Fixed_Point}). 
        The following lemma converts the likelihood score equation into a fixed-point system for the modified residual vector $\Delta - \Delta_{lin} - \Delta_{bias}$. 
        \begin{lemma}[Residual Expansion with Bias]
        \label{lema_Residual_Expansion_with_Bias}
            For any $\Delta\in\mathbb{R}^{d_\theta}$, and $\Delta_{lin}$, $\Delta_{bias}$ defined as in~\eqref{eq_def_of_Delta_lin},~\eqref{eq_def_of_Delta_bias}, respectively, let $\Bothe:=\Delta+\BoTheS$, and $\mathbf{h}: = \Delta -\Delta_{lin} - \Delta_{bias}$. Then $\Bothe$ satisfies the first-order optimality condition $\nabla\ell(\Bothe)=\mathbf{0}$ iff $\mathbf{h}$ satisfies
            \begin{equation}
            \label{eq_h_Lin_Bias_fixed_point}
                \mathbf{h} = -\nabla^2\ell(\BoTheS)^{-1}\left[\frac{1}{N}\sum_{j=1}^N\mathbf{x}_jr_j(\Delta_{lin} + \Delta_{bias} + \mathbf{h}) - \frac{1}{2N}\sum_{j=1}^N \mathbf{A}_{jj}\Psi^{(3)}(\eta_j^\star) \mathbf{x}_j \right].
            \end{equation}
        \end{lemma}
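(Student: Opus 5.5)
The proof is a direct algebraic rearrangement of the expansion \eqref{eq_original_residual_expansion_into_second_order}. That identity holds for every $\Delta\in\mathbb{R}^{d_\theta}$, with $\Bothe=\BoTheS+\Delta$:
\begin{equation*}
\nabla\ell(\Bothe)=\nabla^2\ell(\BoTheS)(\Delta-\Delta_{lin})+\frac{1}{N}\sum_{j=1}^N r_j(\Delta)\mathbf{x}_j.
\end{equation*}
First I will verify it briefly. Rewrite \eqref{eq_grad_mle_in_theta} in the flattened index of \eqref{eq_flattenIndex_Sep_N_into_Salpha}--\eqref{eq_def_of_x_k_X_V_star} as $\nabla\ell(\Bothe)=\frac{1}{N}\sum_j[-Y_j+\Psi'(\eta_j^\star+\mathbf{x}_j^\top\Delta)]\mathbf{x}_j$. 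Then substitute the definition \eqref{eq_nonlinear_remainder_r_j_Delta} of $r_j$. The constant part gives $\nabla\ell(\BoTheS)=-\nabla^2\ell(\BoTheS)\Delta_{lin}$, and the linear part gives $\frac{1}{N}\sum_j\Psi''(\eta_j^\star)\mathbf{x}_j\mathbf{x}_j^\top\Delta=\nabla^2\ell(\BoTheS)\Delta$ by \eqref{eq_hess_mle_in_theta}.

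Next, since $\mathbf{W}\succ\mathbf{0}$ and $\Psi''>0$, the matrix $\nabla^2\ell(\BoTheS)$ is invertible. Hence $\nabla\ell(\Bothe)=\mathbf{0}$ holds iff
\begin{equation*}
\Delta-\Delta_{lin}=-\nabla^2\ell(\BoTheS)^{-1}\frac{1}{N}\sum_j r_j(\Delta)\mathbf{x}_j .
\end{equation*}
Substituting $\Delta=\Delta_{lin}+\Delta_{bias}+\mathbf{h}$ turns the left side into $\Delta_{bias}+\mathbf{h}$. It then remains to move $\Delta_{bias}$ to the right-hand side using its explicit form.

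For that step, note that $\mathbf{A}_{jj}=\mathbf{x}_j^\top\nabla^2\ell(\BoTheS)^{-1}\mathbf{x}_j/N=\mathbf{q}_{\Boalp_j}^\top\nabla^2\ell(\BoTheS)^{-1}\mathbf{q}_{\Boalp_j}/(N\sigma^2)$. Grouping the flattened indices by $S_\Boalp$, each block of size $n_\Boalp$, \eqref{eq_def_of_Delta_bias} reads
\begin{equation*}
\Delta_{bias}=-\nabla^2\ell(\BoTheS)^{-1}\frac{1}{2N}\sum_{j=1}^N\mathbf{A}_{jj}\Psi^{(3)}(\eta_j^\star)\mathbf{x}_j .
\end{equation*}
This is also consistent with \eqref{eq_identity_Exp_Q_2_Delta_bias}. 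Subtracting it from both sides gives exactly \eqref{eq_h_Lin_Bias_fixed_point}. Every step is an equivalence, because the map $\Delta\mapsto\mathbf{h}$ is a bijective translation and multiplication by an invertible matrix preserves the equations, so the ``iff'' follows.

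There is no real obstacle here. The only point needing care is the bookkeeping between the query-indexed sum in \eqref{eq_def_of_Delta_bias} and the sample-indexed sum, together with the sign conventions for $\Delta_{lin}$ and $\Delta_{bias}$.
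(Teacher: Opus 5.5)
Your proof is correct and takes essentially the same route as the paper. You expand $\nabla\ell(\BoTheS+\Delta)=\nabla\ell(\BoTheS)+\nabla^2\ell(\BoTheS)\Delta+\frac{1}{N}\sum_j r_j(\Delta)\mathbf{x}_j$, substitute $\Delta=\Delta_{lin}+\Delta_{bias}+\mathbf{h}$, use $\nabla^2\ell(\BoTheS)\Delta_{lin}=-\nabla\ell(\BoTheS)$ and $\nabla^2\ell(\BoTheS)\Delta_{bias}=-\frac{1}{2N}\sum_j\mathbf{A}_{jj}\Psi^{(3)}(\eta_j^\star)\mathbf{x}_j$, and rearrange; your explicit query-to-sample reindexing of $\Delta_{bias}$ and the invertibility of $\nabla^2\ell(\BoTheS)$, which makes each step an equivalence, are details the paper leaves implicit.
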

        \begin{proof}
            See Appendix~\ref{proof_lema_Residual_Expansion_with_Bias}.\qedbox
        \end{proof}
        
        We next recall the pseudo self-concordance property of the cumulant function  $\Psi(\cdot)$, introduced by Bach et al.~\cite{BachSelfconcordantLogit, OstrovsBachSelfCMEst}, which will be utilized to control both the higher-order remainder and the variation of the Hessian of the negative log-likelihood function.
        \begin{definition}
            Let $g:[0,1]\to \mathbb{R}$ be a three-times continuously differentiable function. If $g^{\prime\prime}(0)>0$, and there exists some $S\in[0,\infty)$ such that 
            \begin{equation*}
            \label{eq_pseudo_self_concor_R_func}
                |g^{(3)}(t)|\leq S g^{\prime\prime}(t),\,\forall\, t\in[0,1],
            \end{equation*}
            then $g$ is said to be pseudo self-concordant with parameter $S$ (see ~\cite{BachSelfconcordantLogit, OstrovsBachSelfCMEst}). 
        \end{definition}
        By the chain rule and the fact that $|\Psi^{(3)}(\eta)|\leq \Psi^{\prime\prime}(\eta)$, it is straightforward to verify that for any $t\in[0,1]$, $\Psi(\eta_j^\star + t\mathbf{x}_j^\top \Delta)$, as a function of $t$, is pseudo self-concordant with parameter $|\mathbf{x}_j^\top \Delta|$, i.e.,
        \begin{equation}
        \label{eq_Psi_j_link_pseudo_self_concor_func}
            \Big|\Psi^{(3)}(\eta_j^\star + t\mathbf{x}_j^\top \Delta)\cdot (\mathbf{x}_j^\top \Delta)^3\Big| \leq |\mathbf{x}_j^\top \Delta| \cdot\left(\Psi^{\prime\prime}(\eta_j^\star + t\mathbf{x}_j^\top \Delta)|\mathbf{x}_j^\top \Delta|^2\right),\,\forall\, t\in[0,1],\,\forall\,j\in[N].
        \end{equation}
        Analogously, define the nonlinear remainder around $\Psi(\eta_j^\star)$ by
        \begin{equation}
            \delta_j(\Delta): = \Psi(\eta_j^\star + \mathbf{x}_j^\top \Delta) - \Psi(\eta_j^\star) - \Psi^\prime(\eta_j^\star)\mathbf{x}_j^\top \Delta,\,\forall\,j\in[N].
        \end{equation}
        Then, recursively integrating both sides of \eqref{eq_Psi_j_link_pseudo_self_concor_func} yields the following result.
        \begin{lemma}[{\cite[Lemma~1]{BachSelfconcordantLogit}}]
        \label{lema_single_variate_self_concord}
            Let $\Bothe\in\mathbb{R}^{d_\theta}$, $\Delta=\Bothe-\BoTheS$, and $t\in[0,1]$. Then for any $j\in[N]$\footnote{We adopt the limit definition when $\mathbf{x}_j^\top\Delta=0$, that is, for fixed $t$, $\lim_{S\downarrow 0 }\frac{1-e^{-St}}{S} =\lim_{S\downarrow 0 }\frac{e^{St}-1}{S} = t$.} 
            \begin{subequations}
                \begin{align}
                \label{eq_self_concor_on_R_gppt}
                e^{-|\mathbf{x}_j^\top\Delta|t} \Psi^{\prime\prime}(\eta_j^\star) &\leq \Psi^{\prime\prime}(\eta_j^\star + t\mathbf{x}_j^\top \Delta) \leq e^{|\mathbf{x}_j^\top\Delta|t}\Psi^{\prime\prime}(\eta_j^\star),\\
                \label{eq_self_concor_sourrgate_int_Psi_secnd}
                \frac{1-e^{-|\mathbf{x}_j^\top\Delta|t}}{|\mathbf{x}_j^\top\Delta|}\Psi^{\prime\prime}(\eta_j^\star)&\leq \int_{0}^t \Psi^{\prime\prime}(\eta_j^\star + s\mathbf{x}_j^\top \Delta)\mathrm{d}s\leq \frac{e^{|\mathbf{x}_j^\top\Delta|t}-1}{|\mathbf{x}_j^\top\Delta|}\Psi^{\prime\prime}(\eta_j^\star),\\
                \label{eq_self_concor_sourrgate_r_j_tDelta}
                (e^{-|\mathbf{x}_j^\top\Delta|t}+|\mathbf{x}_j^\top\Delta|t -1)\Psi^{\prime\prime}(\eta_j^\star) &\leq \delta_j(t\Delta) \leq (e^{|\mathbf{x}_j^\top\Delta|t}-|\mathbf{x}_j^\top\Delta|t -1)\Psi^{\prime\prime}(\eta_j^\star).
            \end{align}
            \end{subequations}
            When $\mathbf{x}_j^\top \Delta=0$, all three inequalities hold trivially.
        \end{lemma}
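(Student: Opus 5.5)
The plan is to reduce everything to a one-dimensional statement about the smooth convex function $g(t):=\Psi(\eta_j^\star+t\,\mathbf{x}_j^\top\Delta)$, $t\in[0,1]$, and then integrate a differential inequality three times in succession. Write $u:=\mathbf{x}_j^\top\Delta$ and $S:=|u|$. When $u=0$ every expression is constant in $t$ and all three inequalities are equalities under the limit convention in the footnote, so assume $S>0$. By the chain rule, $g''(t)=u^2\Psi''(\eta_j^\star+tu)$ and $g^{(3)}(t)=u^3\Psi^{(3)}(\eta_j^\star+tu)$. Since $|\Psi^{(3)}|\le\Psi''$ (recorded after Proposition~\ref{observ_Bernoulli_exp_calss}), we have $|g^{(3)}(t)|\le S\,g''(t)$; this is exactly \eqref{eq_Psi_j_link_pseudo_self_concor_func}. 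Dividing by $u^2$, the function $\phi(t):=\Psi''(\eta_j^\star+tu)>0$ satisfies $|\phi'(t)|\le S\phi(t)$.

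\textbf{Step 1, proving \eqref{eq_self_concor_on_R_gppt}.} Since $\phi>0$, the function $\log\phi$ is differentiable and $|(\log\phi)'|\le S$. Integrating from $0$ to $t$ gives $|\log\phi(t)-\log\phi(0)|\le St$. Exponentiating yields $e^{-St}\Psi''(\eta_j^\star)\le\phi(t)\le e^{St}\Psi''(\eta_j^\star)$. This is a Grönwall-type step, and it is the only point where the structure of $\Psi$ enters.

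\textbf{Step 2, proving \eqref{eq_self_concor_sourrgate_int_Psi_secnd}.} Integrate the two-sided bound of Step 1 over $s\in[0,t]$. Because $\int_0^t e^{\mp Ss}\,\mathrm{d}s$ equals $(1-e^{-St})/S$ and $(e^{St}-1)/S$ respectively, the bounds follow immediately.

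\textbf{Step 3, proving \eqref{eq_self_concor_sourrgate_r_j_tDelta}.} Use the integral form of the Taylor remainder:
\[
\delta_j(t\Delta)=\int_0^t (t-s)\,u^2\,\phi(s)\,\mathrm{d}s .
\]
This holds because $\frac{\mathrm{d}^2}{\mathrm{d}s^2}\Psi(\eta_j^\star+su)=u^2\phi(s)$, and $\delta_j(t\Delta)$ is the first-order Taylor remainder of this map at $s=0$, evaluated at $s=t$. The kernel $(t-s)u^2$ is nonnegative, so we may plug in the bounds of Step 1 directly. It then remains to compute
\[
u^2\int_0^t (t-s)e^{\pm Ss}\,\mathrm{d}s=\pm\bigl(e^{\pm St}\mp St-1\bigr)
\]
by elementary integration, using $u^2=S^2$. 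This gives $(e^{-St}+St-1)\Psi''(\eta_j^\star)$ from below and $(e^{St}-St-1)\Psi''(\eta_j^\star)$ from above.

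I do not expect a genuine obstacle. The only point needing care is Step 3: the weight $(t-s)$ must be nonnegative so that the pointwise bounds transfer through the integral, and the exponential integrals must be evaluated with the correct signs. Note also that Step 1 needs $\phi>0$, which holds because $\Psi''(\eta)=e^\eta/(1+e^\eta)^2>0$. The argument mirrors \cite[Lemma~1]{BachSelfconcordantLogit}, which could alternatively be cited directly.
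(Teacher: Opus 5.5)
Your proof is correct and follows the paper's route: the paper also obtains all three bounds by successively integrating the pseudo self-concordance inequality \eqref{eq_Psi_j_link_pseudo_self_concor_func}, deferring to Bach's Lemma~1, just as your Steps 1--3 do. One small slip: the unified formula in Step 3 should read $u^2\int_0^t (t-s)e^{\pm Ss}\,\mathrm{d}s = e^{\pm St}\mp St-1$ without the outer $\pm$ (with it, the lower-sign case comes out negative), though the two bounds you actually conclude are correct.
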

        Bach's convex localization strategy~\cite[Proposition~B.4]{OstrovsBachSelfCMEst} can be generalized to the unconstrained case. However, inequality~\eqref{eq_self_concor_sourrgate_r_j_tDelta} is not sharp enough to derive sample complexity of order $d_\theta^{3/2}$. Furthermore, Lemma~\ref{lema_single_variate_self_concord} indicates that the core of the proof lies in proper control of the term $\max_{m\in[N]}|\mathbf{x}_m^\top (\hat{\Bothe}-\BoTheS)|$. This will be the main focus of the next section.
        \paragraph{Existence of Fixed Point}
        \label{subsec_Existence_of_Fixed_Point}
        The existence of canonical MLE can be derived by existence of a fixed point of the nonlinear equality system~\eqref{eq_h_Lin_Bias_fixed_point}, underpinned by Brouwer's fixed point theorem. Specifically, Theorem~\ref{thm_suffic_sample_l2_upper_bound} builds on the following lemma.

        \begin{lemma}[Self-mapping Property]
        \label{lema_Self-mapping_Property}
                When $N\gtrsim B^2\mu_Q d_{\theta}^{3/2}\log^3 d$, there exist some positive constants $r_\infty$ and $r_H$, where
                \begin{align}
                \label{eq_r_infty_in_self_mapping_set}
                    r_\infty &\asymp \sqrt{A_{\max}\log d} + A_{\max}\sqrt{d_\theta} + BA_{\max}\log d,\\
                    r_H &\asymp \sqrt{\frac{A_{\max}d_\theta}{N}} + \sqrt{\frac{A_{\max}d_\theta B\log^2 d}{N}} + A_{\max}\log(d)\sqrt{\frac{d_\theta B\log d}{N}},
                \end{align}
                such that with probability at least $1-d^{-c}$,
                \begin{equation*}
                    \Phi(\mathbf{h}):=-\nabla^2\ell(\BoTheS)^{-1}\left[\frac{1}{N}\sum_{j=1}^N\mathbf{x}_jr_j(\Delta_{lin} + \Delta_{bias} + \mathbf{h}) - \frac{1}{2N}\sum_{j=1}^N \mathbf{A}_{jj}\Psi^{(3)}(\eta_j^\star) \mathbf{x}_j \right]
                \end{equation*}
                is a self-mapping on the non-empty, closed, convex and compact set 
                \begin{equation}
                    \mathcal{N}(r_H,r_\infty):=\left\{\mathbf{h}\in\mathbb{R}^{d_\theta}\,\Big|\, \Vert\mathbf{h}\Vert_{\nabla^2\ell(\BoTheS)}\leq r_H,\,\max_{m\in[N]}|\mathbf{x}_m^\top\mathbf{h}|\leq r_\infty\right\},
                \end{equation}
                that is, 
                \begin{equation}
                \label{eq_highprob_self_mapping}
                    \bbp_{\BoTheS}^{(N)}\Big\{\forall\,\mathbf{h}\in\mathcal{N}(r_H,r_\infty),\,\Phi(\mathbf{h})\in\mathcal{N}(r_H,r_\infty)\Big\}\geq 1-d^{-c}.
                \end{equation}
        \end{lemma}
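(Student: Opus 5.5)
The plan is to prove \eqref{eq_highprob_self_mapping} by first fixing a single high-probability event $\mathcal{G}$ that does not depend on $\mathbf{h}$. On $\mathcal{G}$ we then verify, deterministically and uniformly over $\mathbf{h}\in\mathcal{N}(r_H,r_\infty)$, that $\Phi(\mathbf{h})$ satisfies both defining constraints. The set $\mathcal{N}(r_H,r_\infty)$ is the intersection of an ellipsoid (bounded because $\nabla^2\ell(\BoTheS)\succ\mathbf{0}$, by Corollary~\ref{coro_from_identifibality} and \eqref{eq_upperlower_sorogate_by_L}) with finitely many slabs. It is therefore non-empty, closed, convex and compact, and $\Phi$ is continuous, so the later appeal to Brouwer only needs the self-map property.

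Write $H:=\nabla^2\ell(\BoTheS)$, $\Delta:=\Delta_{lin}+\Delta_{bias}+\mathbf{h}$ and $u_j:=\mathbf{x}_j^\top\Delta$. Recall $\mathbf{A}=\mathbf{X}H^{-1}\mathbf{X}^\top/N$ with $H=\mathbf{X}^\top\mathbf{V}^\star\mathbf{X}/N$, so that $\mathbf{A}\mathbf{V}^\star\mathbf{A}=\mathbf{A}$; this identity drives all variance computations. By Lemma~\ref{lema_r_j_Delta_Taylor_expand}, split $-\Phi(\mathbf{h})$ into three parts:
\begin{equation*}
-\Phi(\mathbf{h})= T_1+T_2(\mathbf{h})+T_3(\mathbf{h}),\qquad T_1:=H^{-1}\frac{1}{2N}\sum_{j=1}^N\Psi^{(3)}(\eta_j^\star)\left[(\mathbf{x}_j^\top\Delta_{lin})^2-\mathbf{A}_{jj}\right]\mathbf{x}_j .
\end{equation*}
Here $T_2$ collects the remaining quadratic pieces $\frac12\Psi^{(3)}(\eta_j^\star)\bigl(u_j^2-(\mathbf{x}_j^\top\Delta_{lin})^2\bigr)$, i.e.\ the cross terms and the squares involving $\Delta_{bias}+\mathbf{h}$. $T_3$ is the cubic integral remainder.

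\textbf{The event $\mathcal{G}$.} Since $\mathbf{x}_m^\top\Delta_{lin}=-\sum_k\mathbf{A}_{mk}\varepsilon_k$ has variance $(\mathbf{A}\mathbf{V}^\star\mathbf{A})_{mm}=\mathbf{A}_{mm}$, Hoeffding plus a union bound over the at most $d^2$ distinct query directions gives $\max_m|\mathbf{x}_m^\top\Delta_{lin}|\lesssim\sqrt{A_{\max}\log d}$. Likewise $\Vert\Delta_{lin}\Vert_H\lesssim\sqrt{d_\theta\log d/N}$ by the isotropy of $\sqrt N H^{-1/2}\nabla\ell(\BoTheS)$.

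For the deterministic bias, $\mathbf{x}_m^\top\Delta_{bias}=-\frac12\sum_j\mathbf{A}_{mj}\mathbf{A}_{jj}\Psi^{(3)}(\eta_j^\star)$. Using $|\Psi^{(3)}|\le\Psi''$ and Cauchy--Schwarz weighted by $v_j^\star$, together with $\sum_j\mathbf{A}_{jj}v_j^\star=d_\theta$, this gives $|\mathbf{x}_m^\top\Delta_{bias}|\lesssim A_{\max}\sqrt{d_\theta}$ and $\Vert\Delta_{bias}\Vert_H\lesssim A_{\max}\sqrt{d_\theta}$ up to the normalization.

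The genuinely stochastic piece is $T_1$, which is a centered quadratic form (order-two chaos) in the independent bounded $\varepsilon_k$. Its query-wise projections are $\mathbf{x}_m^\top T_1=\frac12\sum_j\mathbf{A}_{mj}\Psi^{(3)}_j\bigl[(\sum_k\mathbf{A}_{jk}\varepsilon_k)^2-\mathbf{A}_{jj}\bigr]$. I would control these with Hanson--Wright, using $\mathbf{A}\mathbf{V}^\star\mathbf{A}=\mathbf{A}$ and $\mathbf{A}_{jj}v_j^\star\le1$ to bound the Frobenius and operator norms of the kernel, plus a union bound over $m$. For $\Vert T_1\Vert_H$ I would use a vector-valued (or $\varepsilon$-net over $\mathbb{S}^{d_\theta-1}$) Hanson--Wright. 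These should produce the $BA_{\max}\log d$ term in $r_\infty$ and the $\sqrt{A_{\max}d_\theta B\log^2 d/N}$ term in $r_H$. The factor $B$ enters through $v_j^\star\ge 1/(4B)$ (Proposition~\ref{prop_bound_dymic_range}) when converting unweighted sums into $\mathbf{V}^\star$-weighted ones. This step, with its two-norm bookkeeping and the exact log-powers, is where I expect the main difficulty.

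\textbf{Deterministic control on $\mathcal{G}$.} For $\mathbf{h}\in\mathcal{N}$ we have $\max_j|u_j|\le\sqrt{A_{\max}\log d}+A_{\max}\sqrt{d_\theta}+r_\infty\lesssim r_\infty$. The threshold $N\gtrsim B^2\mu_Qd_\theta^{3/2}\log^3 d$, combined with $A_{\max}\le 4B\mu_Qd_\theta/N$ (Lemma-type consequence of Definition~\ref{Def_incoherent_level_of_leverages} and \eqref{eq_upperlower_sorogate_by_L}), makes $r_\infty\ll1$. Lemma~\ref{lema_single_variate_self_concord} then gives $|\Psi^{(4)}|\lesssim\Psi''(\eta_j^\star)$ along the segment, so $|T_3$-integrand$|\lesssim v_j^\star|u_j|^3$. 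The $T_2$ integrand is bounded by $v_j^\star|u_j-\mathbf{x}_j^\top\Delta_{lin}|\,(|u_j|+|\mathbf{x}_j^\top\Delta_{lin}|)$.

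For any coefficients $c_j$ with $|c_j|\le v_j^\star\gamma_j$, we have $|\mathbf{x}_m^\top H^{-1}\frac1N\sum_j\mathbf{x}_jc_j|=|\sum_j\mathbf{A}_{mj}c_j|$. This is at most $\sqrt{\mathbf{A}_{mm}}\,(\sum_jv_j^\star\gamma_j^2)^{1/2}$, and $\Vert H^{-1}\frac1N\sum\mathbf{x}_jc_j\Vert_H\le(\frac1N\sum_jv_j^\star\gamma_j^2)^{1/2}$. Inserting $\gamma_j\lesssim r_\infty\cdot\max_j|u_j|$ for $T_2$ and $\gamma_j\lesssim r_\infty^2\,|u_j|$ for $T_3$, and splitting $\sum_jv_j^\star u_j^2\le N\Vert\Delta\Vert_H^2$, both contributions are a small multiple of $r_\infty$ and of $r_H$ respectively.

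\textbf{Closing the argument.} Choosing the absolute constants in $r_\infty,r_H$ large relative to the bounds for $T_1$, $\Delta_{lin}$ and $\Delta_{bias}$, and using the sample-size condition to make the $T_2$ and $T_3$ contributions at most half of each radius, yields $\Phi(\mathbf{h})\in\mathcal{N}(r_H,r_\infty)$ for all $\mathbf{h}\in\mathcal{N}(r_H,r_\infty)$ on $\mathcal{G}$. Since $\mathbb{P}(\mathcal{G})\ge1-d^{-c}$, this establishes \eqref{eq_highprob_self_mapping}.
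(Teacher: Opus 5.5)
Your overall architecture matches the paper: a fixed event $\mathcal{G}$, the split $T_1+T_2+T_3$, and Hanson--Wright plus a union bound for $\mathbf{x}_m^\top T_1$. The gap is in the deterministic control of $T_2$ and $T_3$ on $\mathcal{G}$. As written, it does not close at $N\gtrsim B^2\mu_Q d_\theta^{3/2}\log^3 d$, and that threshold is the whole content of the lemma.

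Write $H:=\nabla^2\ell(\BoTheS)$. You bound the pulled-out factors of $u_j$ by the uniform envelope $r_\infty$, and put the remaining factor into $\sum_j v_j^\star u_j^2\le N\Vert\Delta\Vert_H^2$. Two facts make this too lossy:
\begin{itemize}
\item $\Delta$ contains $\Delta_{lin}$, so $\Vert\Delta\Vert_H$ is typically of size $\sqrt{d_\theta/N}$, since the mean square of $\Vert\Delta_{lin}\Vert_H$ is exactly $d_\theta/N$.
\item $r_\infty\gtrsim A_{\max}\sqrt{d_\theta}$ carries the bias envelope, and $A_{\max}\ge 4\mu_Q d_\theta/N$ because $H\preceq \mathbf{W}/(4\sigma^2)$.
\end{itemize}
For $T_3$, your estimate becomes $|\mathbf{x}_m^\top T_3|\lesssim\sqrt{A_{\max}N}\,r_\infty^2\Vert\Delta\Vert_H\asymp\sqrt{A_{\max}d_\theta}\,r_\infty^2$. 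This is below $r_\infty/4$ only if $A_{\max}^{3/2}d_\theta\lesssim 1$, i.e.\ $N\gtrsim\mu_Q d_\theta^{5/3}$. Likewise, $\Vert T_3\Vert_H\lesssim r_\infty^2\sqrt{d_\theta/N}$ fits under $r_H$ only under the same $d_\theta^{5/3}$-type condition. For $T_2$, the choice $\gamma_j\lesssim r_\infty\max_j|u_j|$ gives $|\mathbf{x}_m^\top T_2|\lesssim\sqrt{A_{\max}N}\,r_\infty^2$. That requires $\sqrt{A_{\max}N}\,r_\infty\ll1$, which forces $N\gtrsim d_\theta^2$. The alternative $\gamma_j\lesssim r_\infty|u_j|$ is not valid either: the coefficient $\tfrac12\Psi^{(3)}(\eta_j^\star)\bigl(u_j^2-(\mathbf{x}_j^\top\Delta_{lin})^2\bigr)$ need not vanish when $u_j=0$.

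The missing idea is to split $u_j=a_j+b_j$, with $a_j:=\mathbf{x}_j^\top\Delta_{lin}$, $b_j:=\mathbf{x}_j^\top\mathbf{e}_{err}$ and $\mathbf{e}_{err}:=\Delta_{bias}+\mathbf{h}$. The goal is that $\Vert\Delta_{lin}\Vert_H$ is never multiplied by the bias envelope. Use $|u_j^2-a_j^2|\le(2|a_j|+|b_j|)|b_j|$ and $|u_j|^3\le 4|a_j|^3+4|b_j|^3$.
\begin{itemize}
\item In every term involving $b_j$, keep one factor $|b_j|$ inside the $v^\star$-weighted sum. There $\frac1N\sum_j v_j^\star b_j^2=\Vert\mathbf{e}_{err}\Vert_H^2\le(\tfrac12\sqrt{A_{\max}d_\theta/N}+r_H)^2$, smaller than $\Vert\Delta_{lin}\Vert_H^2$ by a factor of order $BA_{\max}\log^2 d$.
\item In the pure-$\Delta_{lin}$ cubic term, pull out only $\max_j a_j^2\lesssim A_{\max}\log d$, not $r_\infty^2$. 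Its $H$-norm contribution is then $(\max_j|a_j|)^2\Vert\Delta_{lin}\Vert_H\lesssim A_{\max}\log d\sqrt{Bd_\theta\log d/N}$, exactly the third term of $r_H$.
\end{itemize}
The closing conditions then reduce to conditions such as $A_{\max}\sqrt{Bd_\theta}\log d\ll1$ and $\sqrt{A_{\max}N}\,r_H\ll1$. Both are of order $B^{3/2}\mu_Q d_\theta^{3/2}\log d/N$, hence small at the stated sample size. This is how the paper organizes its bounds on the quadratic and cubic residuals.

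Relatedly, $\Vert T_1\Vert_H$ needs no chaos bound. The paper bounds the $H$-norm of $H^{-1}\frac{1}{2N}\sum_j\Psi^{(3)}(\eta_j^\star)a_j^2\mathbf{x}_j$ by $\frac12\max_m|a_m|\,\Vert\Delta_{lin}\Vert_H$, and adds $\Vert\Delta_{bias}\Vert_H\le\frac12\sqrt{A_{\max}d_\theta/N}$. Your $\varepsilon$-net option would actually hurt here. The union bound over $e^{Cd_\theta}$ directions multiplies the sub-exponential Hanson--Wright term by $d_\theta$ instead of $\log d$, which again breaks the threshold.

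A minor point: Hoeffding only gives $\max_m|a_m|\lesssim\sqrt{BA_{\max}\log d}$. The variance-sensitive $\sqrt{A_{\max}\log d}$ needs Bernstein, as in the paper.
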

        The proof is deferred to Appendix~\ref{proof_lema_Self-mapping_Property}.

        \noindent 
        \textbf{Proof of Theorem~\ref{thm_suffic_sample_l2_upper_bound}.}
        With Lemmas~\ref{lema_Residual_Expansion_with_Bias} and~\ref{lema_Self-mapping_Property},
        we are ready to prove Theorem~\ref{thm_suffic_sample_l2_upper_bound}. First,  
        observe that $\Phi:\mathcal{N}(r_H,r_\infty)\to\mathcal{N}(r_H,r_\infty)$ is continuous because each $r_j(\cdot)$ is continuous, and $\Phi(\cdot)$ is a finite linear combination of $r_j(\Delta_{lin}+\Delta_{bias}+(\cdot))$.
        Since $\mathcal{N}(r_H,r_\infty)$ is nonempty, compact and convex, 
        the self-mapping property
        enables us to apply Brouwer's fixed-point theorem which guarantees 
        existence of 
        a fixed point $\hat{\mathbf{h}}\in \mathcal{N}(r_H,r_\infty)$ such that
        \begin{equation}
        \label{eq_existence_of_fix_point_hat_h}
            \hat{\mathbf{h}} = \Phi(\hat{\mathbf{h}}).
        \end{equation}
        Let $\hat{\Delta}:=\Delta_{lin} + \Delta_{bias} + \hat{\mathbf{h}}$
        and 
        accordingly 
        $\hat{\Bothe}:= \BoTheS + \hat{\Delta}$. By Lemma~\ref{lema_Residual_Expansion_with_Bias}, the fixed point identity~\eqref{eq_existence_of_fix_point_hat_h} implies $\nabla\ell(\hat{\Bothe})=\mathbf{0}$. 
        Moreover, since $\Psi^{\prime\prime}(\eta)>0$ for any $\eta\in\mathbb{R}$ and $\mathbf{W}\succ\mathbf{0}$ by assumption, the negative log-likelihood function $\ell(\Bothe)$ is strictly convex, which means that $\hat{\Bothe}$ 
        is a unique global minimizer. Therefore the fixed point $\hat{\mathbf{h}}$ in~\eqref{eq_existence_of_fix_point_hat_h} is unique. 
        Finally, since $\hat{\mathbf{h}}\in \mathcal{N}(r_H,r_\infty)$, with probability at least $1-d^{-c}$, then
        \begin{alignat*}{2}
            \max_{m\in[N]}|\mathbf{x}_m^\top\hat{\Delta}|
            &\leq \max_{m\in[N]}|\mathbf{x}_m^\top\Delta_{lin}| + \max_{m\in[N]}|\mathbf{x}_m^\top\Delta_{bias}| + r_\infty\quad&\quad&\\
            &\lesssim \sqrt{A_{max}\log d} + A_{max}\log d + \frac{1}{2}A_{\max}\sqrt{d_\theta} + r_\infty\quad&\text{(by Lemmas~\ref{lema_Delta_lin_Q_infty_H_bounds} and~\ref{lema_Bias_Envelope_upperbound})}&\\
            &\lesssim \sqrt{A_{\max}\log d} + A_{\max}\sqrt{d_\theta} + BA_{\max}\log d\quad&\text{(by Lemma~\ref{lema_Self-mapping_Property})}&\\
            &\lesssim \sqrt{\frac{B\mu_Q d_\theta \log d}{N}} + \frac{B\mu_Q d_\theta^{3/2}}{N},
        \end{alignat*}
        where the last inequality is from Lemma~\ref{lema_property_of_A_matrix},
        for $N\gtrsim B^3 \mu_Q d_\theta \log d$. By noticing $\Vert\hat{\Bothe}-\BoTheS\Vert_{Q,\infty}=\Vert\hat{\Delta}\Vert_{Q,\infty} = \sigma \max_{m\in[N]}|\mathbf{x}_m^\top\hat{\Delta}|$, we 
        obtain \eqref{eq_ell_Q_error_upper_hp}.
       \paragraph{Local Strong Convexity}
        \label{subsubsec_restricted_strong_cvxity}
            Inequality~\eqref{eq_seminorm_error_upper_hp} is proved by the strong convexity of $\ell(\Bothe)$ over the set
            \begin{equation*}
                \mathfrak{W}_{\infty,1/2}:=\left\{\Bothe\in\mathbb{R}^{d_\theta}\,\Big|\, \max_{m\in[N]}|\mathbf{x}_m^\top(\Bothe-\BoTheS)|\leq \frac{1}{2}\right\}.
            \end{equation*}
            To prove this assertion, let $\Delta=\Bothe-\BoTheS$ for any $\Bothe\in\mathbb{R}^{d_\theta}$ and by the first inequality in~\eqref{eq_self_concor_on_R_gppt} and the structure of $\nabla^2\ell(\Bothe)$ in~\eqref{eq_hess_mle_in_theta}
            \begin{align*}
                \Delta^\top\nabla^2\ell(\BoTheS+ t\Delta)\Delta&=\frac{1}{N}\sum_{j=1}^N \Psi^{\prime\prime}(\eta_j^\star + t\mathbf{x}_j^\top \Delta)|\mathbf{x}_j^\top \Delta|^2  \geq \frac{1}{N}\sum_{j=1}^N e^{-|\mathbf{x}_j^\top \Delta|t} \Psi^{\prime\prime}(\eta_j^\star)|\mathbf{x}_j^\top \Delta|^2 \\
                &\geq \exp\left(-t\max_{m\in[N]}|\mathbf{x}_m^\top\Delta|\right)\frac{1}{N} \sum_{j=1}^N \Psi^{\prime\prime}(\eta_j^\star)|\mathbf{x}_j^\top \Delta|^2 \\
                &= \exp\left(-t\max_{m\in[N]}|\mathbf{x}_m^\top\Delta|\right) \Vert\Delta\Vert_{\nabla^2\ell(\BoTheS)}^2,\,\forall\,t\in[0,1].
            \end{align*}
            By integrating both sides of the above inequality twice w.r.t.~$t$, we obtain
            (see~\eqref{eq_self_concor_sourrgate_r_j_tDelta} or \cite[Proposition~1]{BachSelfconcordantLogit}~\cite[Proposition B.3]{OstrovsBachSelfCMEst})\footnote{Similarly, we take the limit definition when $\Delta=\mathbf{0}$, i.e., $\lim_{S\downarrow 0}\frac{e^{-S}+S -1}{S^2}=\frac{1}{2}.$}
            \begin{equation}
            \label{eq_strong_convxity_by_x_m_Delta}
                \ell(\Bothe) - \ell(\BoTheS) - \langle \nabla\ell(\BoTheS),\Delta \rangle \geq \frac{e^{-\max_{m\in[N]}|\mathbf{x}_m^\top\Delta|}+\max_{m\in[N]}|\mathbf{x}_m^\top\Delta|-1}{(\max_{m\in[N]}|\mathbf{x}_m^\top\Delta|)^2} \Vert\Delta\Vert_{\nabla^2\ell(\BoTheS)}^2.
            \end{equation}
            Using the basic inequality $\frac{x^2}{4}\leq e^{-x}+x-1$ 
            for $0\leq x\leq \frac{1}{2}$, we deduce from \eqref{eq_strong_convxity_by_x_m_Delta} that 
            \begin{equation}
            \label{eq_restricted_strong_convxity_in_N_infty_12}
                \ell(\Bothe) - \ell(\BoTheS) - \langle \nabla\ell(\BoTheS),\Delta \rangle \geq \frac{1}{4}\Vert\Delta\Vert_{\nabla^2\ell(\BoTheS)}^2,\,\forall\,\Bothe\in \mathfrak{W}_{\infty,1/2},
            \end{equation}
            where we prove the $\frac{1}{4}$-local strong convexity at $\BoTheS$ of $\ell(\Bothe)$.
            Moreover, by Lemma~\ref{lema_basic_cvx_loc_M_ester} and the assumption that $\nabla^2\ell(\BoTheS)\succ\mathbf{0}$, 
            \begin{equation}
                \Vert\Bothe-\BoTheS\Vert_{\nabla^2\ell(\BoTheS)} \leq 4 \Vert\nabla\ell(\BoTheS)\Vert_{\nabla^2\ell(\BoTheS)^{-1}},\,\forall\,\Bothe\in \mathfrak{W}_{\infty,1/2},\,\ell(\Bothe)\leq \ell(\BoTheS).
            \end{equation}
            Finally, under the assumptions of Lemma~\ref{lema_Self-mapping_Property}, by~\eqref{eq_r_infty_in_self_mapping_set}, the fact that $\ell(\hat{\Bothe})\leq \ell(\BoTheS)$, and Lemma~\ref{lema_Delta_lin_Q_infty_H_bounds}, we conclude that
            with probability at least $1-d^{-c}$, $\hat{\Bothe}\in \mathfrak{W}_{\infty,1/2}$, and
            \begin{equation*}
                \Vert\hat{\Bothe}-\BoTheS\Vert_{\nabla^2\ell(\BoTheS)} \lesssim \left(\sqrt{\frac{d_\theta}{N}}+ \sqrt{\frac{Bd_\theta \log d}{N}}  \right)  \lesssim \sqrt{\frac{Bd_\theta \log d}{N}}. 
            \end{equation*}
            This concludes the proof of~\eqref{eq_seminorm_error_upper_hp}. \qedbox
            \subsubsection{Proof of Theorem~\ref{thm_refinred_Q_infty_ell_2_residual_decompose_bound}}
            \label{proof_thm_refinred_Q_infty_ell_2_residual_decompose_bound}
            The proof relies on the proof of Lemma~\ref{lema_Self-mapping_Property}, and we will use the notation introduced there. Denote the MLE solution derived in Theorem~\ref{thm_suffic_sample_l2_upper_bound} and the fixed point deduced in~\eqref{eq_existence_of_fix_point_hat_h} by $\hat{\Bothe}$ and $\mathbf{\hat{h}}$, respectively. Let $\hat{\Delta}:=\hat{\Bothe} - \BoTheS = \Delta_{lin} + \Delta_{bias} + \mathbf{\hat{h}}$ and denote $\hat{\mathbf{e}}_{err}:=\Delta_{bias} + \mathbf{\hat{h}}$. By the derivation in Section~\ref{subsec_Existence_of_Fixed_Point}, $\hat{\mathbf{h}}$ satisfies the equality system~\eqref{eq_h_Lin_Bias_fixed_point}. This, together with Lemma~\ref{lema_r_j_Delta_Taylor_expand} and  identity~\eqref{eq_identity_Exp_Q_2_Delta_bias} gives
            \begin{align}
                 \label{eq_full_MLE_residual_expansion}
                    \hat{\Delta} &= \Delta_{lin} + \Delta_{bias} + \Phi(\hat{\mathbf{h}})\nonumber\\
                    & = \Delta_{lin} + \Delta_{bias} -\nabla^2\ell(\BoTheS)^{-1}\left[ Q_2(\Delta_{lin} + \hat{\mathbf{e}}_{err}) + Q_3(\Delta_{lin} + \hat{\mathbf{e}}_{err}) - \mathbb{E}_\BoTheS^{(N)}[Q_2(\Delta_{lin})]\right]\nonumber\\
                    &= \Delta_{lin} + \Delta_{bias} + \Delta_{quad} + \Delta_{\geq3},
            \end{align}
            where
            \begin{subequations}
                \begin{align}
                \label{eq_def_of_residual_quad_order}
                \Delta_{quad}&: = -\nabla^2\ell(\BoTheS)^{-1}\left[ Q_2(\Delta_{lin}) - \mathbb{E}_\BoTheS^{(N)}\left[Q_2(\Delta_{lin})\right] \right],\\
                \label{eq_def_of_residual_geq_3_order}
                \Delta_{\geq3} &: = -\nabla^2\ell(\BoTheS)^{-1}\left[ Q_2(\Delta_{lin} + \hat{\mathbf{e}}_{err}) - Q_2(\Delta_{lin}) + Q_3(\Delta_{lin} + \hat{\mathbf{e}}_{err})\right].
                \end{align}
            \end{subequations}
            \noindent\textbf{Proof of~\eqref{eq_Delta_minu_Deltalin_minu_Delta_bias_Q_infty}.} It suffices to derive the upper bounds on $\Vert\Delta_{quad}\Vert_{Q,\infty}$ and $\Vert\Delta_{\geq 3}\Vert_{Q,\infty}$. From STEP 2 in the proof of Lemma~\ref{lema_Self-mapping_Property} and the assumption $N\gtrsim B^3\mu_Q d_\theta^{3/2}\log^3 d$,
                \begin{align*}
                    \max_{m\in[N]} |\mathbf{x}_m^\top\Delta_{\geq 3}| &\leq \max_{m\in[N]} \Bigg|\mathbf{x}_m^\top \nabla^2\ell(\BoTheS)^{-1} \left[ Q_2(\Delta_{lin}+\hat{\mathbf{e}}_{err}) - Q_2(\Delta_{lin}) \right] \Bigg| + \max_{m\in[N]}\Bigg| \mathbf{x}_m^\top \nabla^2\ell(\BoTheS)^{-1} Q_3(\Delta_{lin}+\hat{\mathbf{e}}_{err})\Bigg| \\
                    &\lesssim \sqrt{A_{\max}N}\Vert \hat{\mathbf{e}}_{err} \Vert_{\nabla^2\ell(\BoTheS)}\left[ \sqrt{BA_{\max}\log d} + \max_{l\in[N]}|\mathbf{x}_l^\top \hat{\mathbf{e}}_{err}|\right] \\
                    & + \left( \sqrt{A_{\max}\log d} + A_{\max}\log d\right)\left( BA_{\max}\log d +A_{\max}\sqrt{d_\theta} \right) \\
                    & + \sqrt{A_{\max}N}\left( \max_{l\in[N]}|\mathbf{x}_l^\top\hat{\mathbf{e}}_{err}| \right)^2\Vert \hat{\mathbf{e}}_{err} \Vert_{\nabla^2\ell(\BoTheS)}\\
                    & \lesssim \left( \sqrt{A_{\max}\log d} + A_{\max}\log d\right)\left( BA_{\max}\log d +A_{\max}\sqrt{d_\theta} \right)\\
                    & + \sqrt{A_{\max}N}\left( \sqrt{\frac{A_{\max}d_\theta}{N}} + r_H \right) \left[ \sqrt{BA_{\max}\log d} + \frac{1}{2}A_{\max}\sqrt{d_\theta} +r_\infty + \left( A_{\max}\sqrt{d_\theta} + r_\infty \right)^2 \right]\\
                    &\lesssim BA_{\max}^{3/2}\log^{3/2}d +A_{\max}^{3/2}\sqrt{d_\theta \log d} + BA_{\max}^{3/2}\sqrt{d_\theta}\log^{3/2}d + \sqrt{B}A_{\max}^2 d_\theta\log d\\
                    &\lesssim BA_{\max}^{3/2}\sqrt{d_\theta}\log^{3/2}d + \sqrt{B}A_{\max}^2d_\theta\log d,
                \end{align*}
                where the second inequality follows by the upper bounds on $I_2$ and $I_3$ derived in Appendix~\ref{subsubsec_proof_of_Self_mapping_property}.
            On the other hand, by~\eqref{eq_I_1_Delta_quad_linear_form_upper}, $\max_{m\in[N]} |\mathbf{x}_m^\top \Delta_{quad}|  \lesssim BA_{\max}\log d$. Then, by combining the above two upper bounds and substituting the upper bound on $A_{\max}$, we obtain
            \begin{align*}
                \Vert\hat{\Delta} - \Delta_{lin} - \Delta_{bias}\Vert_{Q,\infty} &\leq \sigma\left(\max_{m\in[N]} |\mathbf{x}_m^\top\Delta_{\geq 3}| + \max_{m\in[N]} |\mathbf{x}_m^\top \Delta_{quad}|\right)\\
                &\lesssim \frac{\sigma B^2 \mu_Q d_\theta \log d}{N} \left[ \sqrt{\frac{B\mu_Q d_\theta^2 \log d}{N}} + \frac{\sqrt{B}\mu_Q d_\theta^2}{N} + 1 \right].
            \end{align*}
            Inequality~\eqref{eq_Delta_minu_Deltalin_minu_Delta_bias_Q_infty} is then directly recovered when $N\gtrsim B^3\mu_Q d_\theta^{3/2}\log^3 d$.
            
            \noindent\textbf{Proof of~\eqref{eq_Delta_minus_lin_bias_upper_bound}.} By~\eqref{eq_full_MLE_residual_expansion}, it suffices to find upper bounds on $\Vert\Delta_{\geq 3}\Vert_2$ and $\Vert\Delta_{quad}\Vert_2$. While the former can be derived via $\Vert\Delta_{\geq 3}\Vert_{\nabla^2\ell(\BoTheS)}$, i.e.,~\eqref{eq_upper_bound_H2_in_seminorm_selfmap} and~\eqref{eq_upper_bound_H3_in_seminorm_selfmap}, the latter requires some non-trivial tools for bounding the supremum of a collection of sub-Gaussian quadratic forms. Here, we utilize the Hanson-Wright inequality in vector space (which can be viewed as a corollary of the general theory developed by Adamczak et al.~\cite{AdamczakLatalaBanachHW}, restated in Lemma~\ref{lema_HilbertSpaceHW}). We refer interested readers to the paper and the references therein.

            \noindent \textbf{Bound for }$\Vert\Delta_{quad}\Vert_2$. We first recast $\Delta_{quad}$ in the form depicted in Lemma~\ref{lema_HilbertSpaceHW}. From the proof of~\eqref{eq_Centered_Quadratic_Biases1} in Lemma~\ref{lema_Centered_Quadratic_Biases_all3}, we can verify $\mathbb{E}_\BoTheS^{(N)}[\Delta_{quad}] = \mathbf{0}$, and $\Delta_{quad}$ can be written as follows
            \begin{align*}
                \Delta_{quad} &= -\frac{1}{2N}\sum_{j=1}^N \frac{\Psi^{(3)}(\eta_j^\star)}{v_j^\star} \nabla^2\ell(\BoTheS)^{-1}\mathbf{x}_j\left[ (\mathbf{e}_j^\top\mathbf{P}\boldsymbol{\xi})^2 - \mathbf{P}_{jj} \right]\\
                & = \sum_{(k,l)\in[N]^2} \mathbf{a}^{(kl)} \left(\xi_k\xi_l - \mathbb{E}_\BoTheS^{(N)}\left[\xi_k\xi_l\right]\right),
            \end{align*}
            where $\xi_i = \eps_i/\sqrt{v_i^\star}$, $i\in[N]$ are independent, zero mean, isotropic, and $\sqrt{B}$-sub-Gaussian, and $\boldsymbol{\xi}:=[\xi_1,\dots,\xi_N]^\top$, and $\mathbf{a}^{(kl)}:=-\frac{1}{2N}\sum_{j=1}^N  \frac{\Psi^{(3)}(\eta_j^\star)}{v_j^\star}\mathbf{P}_{jk}\mathbf{P}_{jl}\nabla^2\ell(\BoTheS)^{-1}\mathbf{x}_j\in\mathbb{R}^{d_\theta}$. In what follows, we derive bounds for the terms $T^2 := \sum_{(k,l)\in[N]^2}\Vert\mathbf{a}^{(kl)}\Vert_2^2$ and the $U,V$ defined in Lemma~\ref{lema_HilbertSpaceHW}. 

            For $T^2$, we use the fact that $\mathbf{P}\succeq
            \mathbf{0}$ and $\mathbf{P}^2 =\mathbf{P}$,
            \begin{align*}
                T^2 & = \frac{1}{4N^2} \sum_{(m,n)\in[N]^2} \frac{\Psi^{(3)}(\eta_m^\star)}{v_m^\star} \frac{\Psi^{(3)}(\eta_n^\star)}{v_n^\star} \left(\mathbf{x}_m^\top \nabla^2\ell(\BoTheS)^{-2}\mathbf{x}_n\right) \sum_{k\in[N]}\mathbf{P}_{mk}\mathbf{P}_{kn} \sum_{l\in[N]}\mathbf{P}_{ml}  \mathbf{P}_{ln}\\
                & = \frac{1}{4N^2} \left\langle \mathbf{P}\circ\mathbf{P}, \left(\frac{\Psi^{(3)}(\eta_m^\star)}{v_m^\star} \frac{\Psi^{(3)}(\eta_n^\star)}{v_n^\star} \mathbf{x}_m^\top \nabla^2\ell(\BoTheS)^{-2}\mathbf{x}_n\right)_{m,n\leq N} \right\rangle\\
                & \leq \frac{1}{4N^2}\Vert \mathbf{P}\circ\mathbf{P} \Vert \sum_{j=1}^N \left(\frac{\Psi^{(3)}(\eta_j^\star)}{v_j^\star}\right)^2\mathbf{x}_j^\top \nabla^2\ell(\BoTheS)^{-2}\mathbf{x}_j\leq \frac{1}{4N^2}\Vert \mathbf{P}\circ\mathbf{P} \Vert \sum_{j=1}^N\mathbf{x}_j^\top \nabla^2\ell(\BoTheS)^{-2}\mathbf{x}_j,
            \end{align*}
            where the second last inequality follows from the fact that both matrices in the inner product are positive semidefinite. Then, by the Gershgorin circle theorem and Lemma~\ref{lema_property_of_A_matrix}
            \begin{align*}
                \Vert \mathbf{P}\circ\mathbf{P} \Vert &\leq \max_{i\in[N]}\sum_{j=1}^N \Big|[\mathbf{P}\circ\mathbf{P}]_{ij}\Big| = \max_{i\in[N]} \sum_{j=1}^N v_i^\star v_j^\star \mathbf{A}_{ij}^2 =\max_{i\in[N]} v_i^\star \mathbf{A}_{ii} =\max_{i\in[N]} \mathbf{P}_{ii} \\
                & \leq \frac{A_{\max}}{4}.
            \end{align*}
            On the other hand, by Proposition~\ref{prop_bound_dymic_range}, $\sum_{j=1}^N \mathbf{x}_j\mathbf{x}_j^\top \preceq 4B \mathbf{X}^\top \mathbf{V}^\star \mathbf{X} = 4B N \nabla^2\ell(\BoTheS)$. Thus
            \begin{align*}
                \sum_{j=1}^N\mathbf{x}_j^\top \nabla^2\ell(\BoTheS)^{-2}\mathbf{x}_j &= \left\langle \nabla^2\ell(\BoTheS)^{-2}, \sum_{j=1}^N \mathbf{x}_j\mathbf{x}_j^\top \right\rangle \\
                &\leq \langle \nabla^2\ell(\BoTheS)^{-2}, 4B N \nabla^2\ell(\BoTheS)\rangle = 4BN \mathrm{tr}(\nabla^2\ell(\BoTheS)^{-1}),
            \end{align*}
            from which we have 
            \begin{equation}
            \label{eq_Hlibert_T_final_upper_bound}
                T \leq \frac{1}{2}\sqrt{\frac{BA_{\max}\mathrm{tr}(\nabla^2\ell(\BoTheS)^{-1})}{N}}.
            \end{equation}

            Next, we derive an upper bound for the second term in $U$. Observe
            \begin{align*}
                U_2 &:=\sup_{\Vert\mathbf{M}\Vert_F\leq 1} \Bigg\Vert\sum_{(k,l)\in[N]^2} \mathbf{a}^{(kl)} [\mathbf{M}]_{kl} \Bigg\Vert_2 = \sup_{\Vert\mathbf{v}\Vert_2\leq 1} \sup_{\Vert\mathbf{M}\Vert_F\leq 1}\Bigg|\sum_{(k,l)\in[N]^2} \mathbf{v}^\top\mathbf{a}^{(kl)} [\mathbf{M}]_{kl} \Bigg|\\
                & = \sup_{\Vert\mathbf{v}\Vert_2\leq 1} \Bigg\Vert\left(  \mathbf{v}^\top\mathbf{a}^{(kl)}\right)\Bigg\Vert_F = \sup_{\Vert\mathbf{v}\Vert_2\leq 1} \frac{1}{2N}\Vert\mathbf{P}\mathbf{D}_v \mathbf{P}\Vert_F,
            \end{align*}
            where $\mathbf{D}_v = \mathrm{diag}(\mathbf{d}_v)$, and $\mathbf{d}_v : = \left( (\frac{\Psi^{(3)}(\eta_i^\star)}{v_i^\star})\mathbf{v}^\top \nabla^2\ell(\BoTheS)^{-1}\mathbf{x}_i \right)_{i\leq N}$. Thus
            \begin{align}
                (U_2)^2 &= \sup_{\Vert\mathbf{v}\Vert_2\leq 1} \frac{1}{4N^2} \mathrm{tr}(\mathbf{D}_v\mathbf{P}\mathbf{D}_v\mathbf{P}) = \sup_{\Vert\mathbf{v}\Vert_2\leq 1} \frac{1}{4N^2}\mathbf{d}_v^\top (\mathbf{P}\circ\mathbf{P}) \mathbf{d}_v \nonumber\\
                & \leq \sup_{\Vert\mathbf{v}\Vert_2\leq 1} \frac{1}{4N^2} \Vert \mathbf{P}\circ\mathbf{P} \Vert \Vert\mathbf{d}_v\Vert_2^2 \leq \frac{A_{\max}}{16N^2} \sup_{\Vert\mathbf{v}\Vert_2\leq 1}\Vert\mathbf{d}_v\Vert_2^2 \nonumber\\
                & \leq \frac{A_{\max}}{16N^2} \sup_{\Vert\mathbf{v}\Vert_2\leq 1} \sum_{j=1}^N \left(\mathbf{v}^\top\nabla^2\ell(\BoTheS)^{-1}\mathbf{x}_j\right)^2 \leq \frac{BA_{\max}}{4N} \sup_{\Vert\mathbf{v}\Vert_2\leq 1} \mathbf{v}^\top \nabla^2\ell(\BoTheS)^{-1}\mathbf{v} \nonumber\\
                \label{eq_Hlibert_U2_final_upper_bound}
                &\leq \frac{BA_{\max}\Vert\nabla^2\ell(\BoTheS)^{-1}\Vert}{4N}. 
            \end{align}
            We then find an upper bound on $V$. Let $\Vert\mathbf{u}\Vert_2,\Vert\mathbf{v}\Vert_2\leq 1$ and $\mathbf{y}$ be a unit vector. Then
            \begin{align*}
                \Bigg\Vert\sum_{(k,l)\in[N]^2}\mathbf{a}^{(kl)}\mathbf{u}_k\mathbf{v}_l\Bigg\Vert_2 & = \sup_{\Vert\mathbf{y}\Vert_2=1} \frac{1}{2N}\Bigg| \sum_{j=1}^N \frac{\Psi^{(3)}(\eta_j^\star)}{v_j^\star}\left(\mathbf{y}^\top \nabla^2\ell(\BoTheS)^{-1}\mathbf{x}_j\right) \left( \mathbf{e}_j^\top\mathbf{Pu}\right)\left(\mathbf{e}_j^\top\mathbf{Pv}\right)\Bigg|\\
                & \leq \sup_{\Vert\mathbf{y}\Vert_2=1} \frac{1}{2N} \max_{m\in[N]}\Bigg| \mathbf{y}^\top \nabla^2\ell(\BoTheS)^{-1}\mathbf{x}_m \Bigg| \sum_{j=1}^N \Big|\mathbf{e}_j^\top\mathbf{Pu}\Big| \cdot \Big|\mathbf{e}_j^\top\mathbf{Pv}\Big|\\
                & \leq \frac{\max_{m\in[N]} \Vert\nabla^2\ell(\BoTheS)^{-1}\mathbf{x}_m\Vert_2}{2N} \Vert\mathbf{Pu}\Vert_2 \Vert\mathbf{Pv}\Vert_2,
            \end{align*}
            from which we have 
            \begin{align}
                V&:= \sup_{\Vert\mathbf{u}\Vert_2,\Vert\mathbf{v}\Vert_2\leq 1}\Bigg\Vert\sum_{(k,l)\in[N]^2}\mathbf{a}^{(kl)}\mathbf{u}_k\mathbf{v}_l\Bigg\Vert_2  \nonumber\\
                \label{eq_Hlibert_V_final_upper_bound}
                &\leq \frac{\max_{m\in[N]} \Vert\nabla^2\ell(\BoTheS)^{-1}\mathbf{x}_m\Vert_2}{2N}\leq \frac{1}{2} \sqrt{\frac{A_{\max}\Vert\nabla^2\ell(\BoTheS)^{-1}\Vert}{N}}.
            \end{align}
            Finally, for the first term in $U$, let $\Vert\mathbf{y}\Vert_2\leq 1$ and $\mathbf{d}_y :=\left(  \frac{\Psi^{(3)}(\eta_i^\star)}{v_i^\star} \right)_{i\leq N}  \circ \mathbf{Py}$. Then
            \begin{align*}
               \Bigg\Vert \sum_{k=1}^N \mathbf{a}^{(kl)}\mathbf{y}_k \Bigg\Vert_2^2 
               & = \frac{1}{4N^2} \Bigg\Vert \sum_{j=1}^N \frac{\Psi^{(3)}(\eta_j^\star)}{v_j^\star}\nabla^2\ell(\BoTheS)^{-1}\mathbf{x}_j\mathbf{P}_{jl}(\mathbf{e}_j^\top\mathbf{Py}) \Bigg\Vert_2^2\\
               & = \frac{1}{4N^2} \Bigg\Vert \nabla^2\ell(\BoTheS)^{-1}\mathbf{X}^\top \mathrm{diag}(\mathbf{d}_y)\mathbf{P}\mathbf{e}_l\Bigg\Vert_2^2,
            \end{align*}
            from which we have
            \begin{align}
                \sum_{l=1}^N \Bigg\Vert \sum_{k=1}^N \mathbf{a}^{(kl)}\mathbf{y}_k \Bigg\Vert_2^2 
                & = \frac{1}{4N^2} \Bigg\Vert \nabla^2\ell(\BoTheS)^{-1}\mathbf{X}^\top \mathrm{diag}(\mathbf{d}_y)\mathbf{P} \Bigg\Vert_F^2 \leq  \frac{1}{4N^2} \Bigg\Vert \nabla^2\ell(\BoTheS)^{-1}\mathbf{X}^\top \mathrm{diag}(\mathbf{d}_y)\Bigg\Vert_F^2 \nonumber\\
                & = \frac{1}{4N^2} \sum_{j=1}^N \left(  \frac{\Psi^{(3)}(\eta_j^\star)}{v_j^\star} \right)^2(\mathbf{e}_j^\top \mathbf{Py})^2 \mathbf{x}_j^\top \nabla^2\ell(\BoTheS)^{-2}\mathbf{x}_j \nonumber\\
                & \leq \frac{1}{4N^2}\max_{m\in[N]}\mathbf{x}_m^\top \nabla^2\ell(\BoTheS)^{-2}\mathbf{x}_m \Vert\mathbf{Py}\Vert_2^2 \nonumber\\
                \label{eq_U_1_term_final_upper_bound}
                & \leq \frac{1}{4N^2}\max_{m\in[N]}\mathbf{x}_m^\top \nabla^2\ell(\BoTheS)^{-2}\mathbf{x}_m \leq \frac{A_{\max}\Vert\nabla^2\ell(\BoTheS)^{-1}\Vert}{4N}.
            \end{align}
            Therefore
            \begin{equation*}
                \sup_{\Vert\mathbf{y}\Vert_2\leq 1}\sqrt{\sum_{l=1}^N \Bigg\Vert \sum_{k=1}^N \mathbf{a}^{(kl)}\mathbf{y}_k \Bigg\Vert_2^2} \leq \frac{1}{2}\sqrt{\frac{A_{\max}\Vert\nabla^2\ell(\BoTheS)^{-1}\Vert}{N}}.
            \end{equation*}
            Observe that every single term $\Vert \mathbf{a}^{(kl)}\mathbf{y}_k \Vert_2^2$ can be upper bounded by the r.h.s.~of~\eqref{eq_U_1_term_final_upper_bound} up to a constant factor, that is 
            \begin{align}
                U_1 &:= \sup_{\Vert\mathbf{y}\Vert_2\leq 1}\sqrt{\sum_{l=1}^N \Bigg\Vert \sum_{k\neq l}^N \mathbf{a}^{(kl)}\mathbf{y}_k \Bigg\Vert_2^2} \nonumber\\
                &\leq \sup_{\Vert\mathbf{y}\Vert_2\leq 1}\sqrt{\sum_{l=1}^N \Bigg\Vert \sum_{k= 1}^N \mathbf{a}^{(kl)}\mathbf{y}_k \Bigg\Vert_2^2} + \sup_{\Vert\mathbf{y}\Vert_2\leq1} \sqrt{\sum_{l=1}^N\Vert \mathbf{a}^{(ll)}\mathbf{y}_l \Vert_2^2} \nonumber\\
                &\leq \frac{1}{2}\sqrt{\frac{A_{\max}\Vert\nabla^2\ell(\BoTheS)^{-1}\Vert}{N}} + \max_{l\in[N]} \Vert\mathbf{a}^{(ll)}\Vert_2\nonumber\\
                \label{eq_Hlibert_U1_final_upper_bound}
                &\leq \frac{1}{2}\sqrt{\frac{A_{\max}\Vert\nabla^2\ell(\BoTheS)^{-1}\Vert}{N}} + V \leq\sqrt{\frac{A_{\max}\Vert\nabla^2\ell(\BoTheS)^{-1}\Vert}{N}}.
            \end{align}
            Now, using Lemma~\ref{lema_HilbertSpaceHW} and the aforementioned upper bounds~\eqref{eq_Hlibert_T_final_upper_bound},~\eqref{eq_Hlibert_U2_final_upper_bound},~\eqref{eq_Hlibert_U1_final_upper_bound}, and~\eqref{eq_Hlibert_V_final_upper_bound}, we have with probability at least $1-2e^{-t}$
            \begin{align}
                \Vert\Delta_{quad}\Vert_2&\lesssim B\left\{ T + U\sqrt{t} + Vt\right\}\nonumber\\
                \label{eq_final_upper_on_Delta_quad}
                &\lesssim \sqrt{\frac{B^3A_{\max}\mathrm{tr}(\nabla^2\ell(\BoTheS)^{-1})}{N}} + \sqrt{\frac{t B^3A_{\max}\Vert\nabla^2\ell(\BoTheS)^{-1}\Vert}{N}} + \sqrt{\frac{t^2 B^2A_{\max}\Vert\nabla^2\ell(\BoTheS)^{-1}\Vert}{N}}.
            \end{align}

        \noindent \textbf{Bound for }$\Vert\Delta_{\geq 3}\Vert_2$. It suffices to find upper bounds on $\Vert\Delta_{\geq 3}\Vert_{\nabla^2\ell(\BoTheS)}$, which, by the analysis in Section~\ref{subsubsec_proof_of_Self_mapping_property}, reduces to the upper bounds on $\Vert Q_2(\hat{\Delta}) - Q_2(\Delta_{lin})\Vert_{\nabla^2\ell(\BoTheS)^{-1}}$ and $\Vert Q_3(\hat{\Delta}) \Vert_{\nabla^2\ell(\BoTheS)^{-1}}$. By~\eqref{eq_upper_bound_H2_in_seminorm_selfmap}, we have
        \begin{equation*}
            \Vert Q_2(\hat{\Delta}) - Q_2(\Delta_{lin})\Vert_{\nabla^2\ell(\BoTheS)^{-1}} \lesssim \left[ \max_{m\in[N]} |\mathbf{x}_m^\top \Delta_{lin}| + \frac{1}{2}\max_{l\in[N]}|\mathbf{x}_l^\top\hat{\mathbf{e}}_{err}|\right] \Vert\hat{\mathbf{e}}_{err}\Vert_{\nabla^2\ell(\BoTheS)}.
        \end{equation*}
        Moreover, by Theorem~\ref{thm_suffic_sample_l2_upper_bound}, $\max_{m\in[N]}|\mathbf{x}_m^\top\hat{\Delta}|\leq \frac{1}{2}$ with probability at least $1-d^{-c}$ for sufficiently large $N$. Then by following the argument used to prove~\eqref{eq_upper_bound_H3_in_seminorm_selfmap}, we have
        \begin{equation*}
            \Vert Q_3(\hat{\Delta}) \Vert_{\nabla^2\ell(\BoTheS)^{-1}} \lesssim \left(\max_{m\in[N]}|\mathbf{x}_m^\top \Delta_{lin}|\right)^2 \Vert\Delta_{lin}\Vert_{\nabla^2\ell(\BoTheS)} + \left(\max_{m\in[N]}|\mathbf{x}_m^\top (\hat{\mathbf{e}}_{err})|\right)^2 \Vert\hat{\mathbf{e}}_{err}\Vert_{\nabla^2\ell(\BoTheS)}.
        \end{equation*}
        By Lemma~\ref{lema_Self-mapping_Property}, whenever Theorem~\ref{thm_suffic_sample_l2_upper_bound} holds, with probability at least $1-d^{-c}$, we have
        \begin{align*}
            \max_{m\in[N]} |\mathbf{x}_m^\top \Delta_{lin}| &\lesssim \sqrt{A_{\max}\log d} + A_{\max}\log d,\\
            \max_{m\in[N]} |\mathbf{x}_m^\top \hat{\mathbf{e}}_{err}| &\lesssim \sqrt{A_{\max}\log d} + A_{\max}\sqrt{d_\theta} + BA_{\max} \log d,\\
            \Vert \Delta_{lin} \Vert_{\nabla^2\ell(\BoTheS)} & \lesssim \sqrt{\frac{Bd_\theta\log d}{N}},
        \end{align*}
        where to obtain the second inequality, we also utilize Lemma~\ref{lema_Bias_Envelope_upperbound}. Under the setting of Theorem~\ref{thm_suffic_sample_l2_upper_bound}, with probability at least $1-d^{-c}$, we have
        \begin{alignat*}{2}
            \Vert\Delta_{\geq 3}\Vert_{\nabla^2\ell(\BoTheS)} &\lesssim \Vert Q_2(\hat{\Delta}) - Q_2(\Delta_{lin})\Vert_{\nabla^2\ell(\BoTheS)^{-1}} + \Vert Q_3(\hat{\Delta}) \Vert_{\nabla^2\ell(\BoTheS)^{-1}} \\
            & \lesssim\left[ \sqrt{A_{\max}\log d} + A_{\max}\sqrt{d_\theta} + BA_{\max} \log d \right]\left( \Vert \Delta_{bias} \Vert_{\nabla^2\ell(\BoTheS)} + \Vert \Delta_{quad} \Vert_{\nabla^2\ell(\BoTheS)} + \Vert \Delta_{\geq 3} \Vert_{\nabla^2\ell(\BoTheS)} \right) \\
            & + A_{\max}\log d \Vert\Delta_{lin}\Vert_{\nabla^2\ell(\BoTheS)}\\
            & \lesssim \left[ \sqrt{A_{\max}\log d} + A_{\max}\sqrt{d_\theta} + BA_{\max} \log d \right]\left( \Vert \Delta_{bias} \Vert_{\nabla^2\ell(\BoTheS)} + \Vert \Delta_{quad} \Vert_{\nabla^2\ell(\BoTheS)} \right) \\
            & + A_{\max}\log d \Vert\Delta_{lin}\Vert_{\nabla^2\ell(\BoTheS)}\\
            & \lesssim \left[\sqrt{BA_{\max} \log d} + A_{\max}\sqrt{d_\theta} \right]\sqrt{\frac{A_{\max} Bd_\theta \log^2 d}{N}} + A_{\max}\log d \sqrt{\frac{Bd_\theta\log d}{N}}\\
            &\lesssim BA_{\max}\sqrt{\frac{d_\theta\log^3 d}{N}} + \sqrt{B}A_{\max}^{3/2} \frac{d_\theta\log d}{\sqrt{N}},
        \end{alignat*}
        where the second inequality is from driving $N$ sufficiently large such that the square bracketed term is smaller than $\frac{1}{2}$, thus the $\Vert \Delta_{\geq 3} \Vert_{\nabla^2\ell(\BoTheS)}$ can be absorbed into the l.h.s.
        Using Lemma~\ref{lema_property_of_A_matrix}, we conclude that 
        \begin{align}
            \Vert\Delta_{\geq 3}\Vert_2 &\leq \Vert\Delta_{\geq 3}\Vert_{\nabla^2\ell(\BoTheS)}\sqrt{\Vert\nabla^2\ell(\BoTheS)^{-1}\Vert} \nonumber\\
            \label{eq_final_upper_on_Delta_geq_3}
            &\lesssim \sqrt{\Vert\nabla^2\ell(\BoTheS)^{-1}\Vert}\left[\frac{B^2\mu_Q d_\theta^{3/2}\log^{3/2}d}{N^{3/2}} + \frac{B^2 \mu_Q^{3/2}d_\theta^{5/2}\log d}{N^2}\right]. 
        \end{align}
        
        \noindent\textbf{Combining the Bounds.} Setting $t=C\log d$ for some sufficiently large $C>0$ in~\eqref{eq_final_upper_on_Delta_quad}, combining it with~\eqref{eq_final_upper_on_Delta_geq_3}, and using Lemma~\ref{lema_property_of_A_matrix} again to upper bound $A_{\max}$ gives
        \begin{align*}
            \Vert\hat{\Delta} - \Delta_{lin} - \Delta_{bias}\Vert_2 &\lesssim \frac{\sqrt{\Vert\nabla^2\ell(\BoTheS)^{-1}\Vert B\mu_Q} d_\theta}{N} \left[ \frac{B^{3/2}\sqrt{\mu_Q d_\theta}\log^{3/2} d}{\sqrt{N}} + \frac{B^{3/2}\mu_Q d_\theta^{3/2}\log d}{N}\right]\\
            & + \sqrt{\frac{\mathrm{tr}(\nabla^2\ell(\BoTheS)^{-1})\log d_{\theta}}{N}} B^2 \left[\sqrt{\frac{\mu_Q d_\theta}{N\log d_{\theta}}} + \sqrt{\frac{\mu_Q d_\theta \log d}{N\log d_\theta}} + \sqrt{\frac{\mu_Q d_\theta \log^2 d}{BN\log d_\theta}}\right],
        \end{align*}
        where we use the fact that $\Vert\nabla^2\ell(\BoTheS)^{-1}\Vert\leq \mathrm{tr}(\nabla^2\ell(\BoTheS)^{-1})$ to further simplify~\eqref{eq_final_upper_on_Delta_quad}. 
        Then by setting $N\gtrsim \max\left\{B^3\mu_Q d_\theta^{3/2}\log^3 d,B^4\mu_Q d_\theta \log d\right\}$ the above inequality yields~\eqref{eq_Delta_minus_lin_bias_upper_bound}.\qedbox
\appendix
        \section{Proofs of Auxiliary Results}
        \label{appendi_A}
        In this appendix, we include proofs of a number of intermediate results needed in the main part of this paper.
        \subsection{Proof of Proposition \ref{prop_indentifibility_complete_obv}}
        \label{proof_prop_indentifibility_complete_obv}
        The reparameterization map $\Bothe\mapsto \mathbf{w}_o + \mathbf{V}_{A^\perp}\Bothe$ is injective, so it suffices to work with $\Bothe\in\mathbb{R}^{d_\theta}$. 
        Since the function $\frac{1}{1+e^{-t}}$ is strictly increasing in $t$, for any two feasible parameters $\mathbf{w} = \mathbf{w}_o + \mathbf{V}_{A^\perp}\Bothe$ and $\mathbf{w}^\prime = \mathbf{w}_o + \mathbf{V}_{A^\perp}\Bothe^\prime$, the condition $p_\Boalp(\mathbf{w})=p_\Boalp(\mathbf{w}^\prime),\,\forall\,\Boalp\in\E_s$ is equivalent to 
        \begin{equation}
            \label{eq_z_equality_on_connt_comp}
            (\mathbf{e}_i-\mathbf{e}_j)^\top \mathfrak{A}\mathbf{V}_{A^\perp}(\Bothe-\Bothe^\prime)=0,\,\forall\,\Boalp=(i,j)\in\mathcal{E}_s.
        \end{equation}
        Equation~\eqref{eq_z_equality_on_connt_comp} 
        states that the vector $\mathfrak{A}\mathbf{V}_{A^\perp}(\Bothe-\Bothe^\prime)$ is constant on connected components of $\mathcal{G}([d],\mathcal{E}_s,1)$, i.e., there exist coefficients $[c_1,\dots,c_K]$ such that 
        \begin{equation}
            \label{eq_UVperpA_belongs_to_connectC_I}
            \mathfrak{A}\mathbf{V}_{A^\perp}(\Bothe-\Bothe^\prime) = \sum_{t=1}^K c_t\mathds{1}_{\mathcal{C}_t}\Leftrightarrow\, 
            \begin{bmatrix}
                \mathbf{C}_I & \mathfrak{A}\mathbf{V}_{A^\perp}
            \end{bmatrix}
            \begin{bmatrix}
                -[c_1,\dots,c_K]^\top\\
                \Bothe-\Bothe^\prime
            \end{bmatrix}=\mathbf{0}.
        \end{equation}
        We first prove~\eqref{eq_identifiable_rank_constraint} implies identifiability. Suppose that $[\mathbf{C}_I,\,\mathfrak{A}\mathbf{V}_{A^\perp}]\in\mathbb{R}^{d\times (d_\theta + K)}$ has full column rank $d_\theta + K$. Then~\eqref{eq_UVperpA_belongs_to_connectC_I} implies $[c_1,\dots,c_K]^\top=\mathbf{0}$ and $\Bothe-\Bothe^\prime=\mathbf{0}$. Since the reparameterization is injective, we have $\mathbf{w}=\mathbf{w}^\prime$ accordingly. 
        
        Conversely, suppose that~\eqref{eq_identifiable_rank_constraint} fails. Then $\mathrm{rank}([\mathbf{C}_I,\,\mathfrak{A}\mathbf{V}_{A^\perp}])<d_\theta+K$ and hence there exists a non-zero solution $\left[-[c_1,\dots,c_K], \Bothe-\Bothe^\prime\right]^\top$ satisfying the linear system of equations at the r.h.s. of~\eqref{eq_UVperpA_belongs_to_connectC_I}.
        Since the columns of $\mathbf{C}_I$ are linearly independent, we assert that $\Bothe\neq \Bothe^\prime$, because otherwise, we would have $\mathbf{C}_I[c_1,\dots,c_K]^\top=\mathbf{0}$ for $[c_1,\dots,c_K]^\top\neq \mathbf{0}$. Let $\mathbf{w}, \mathbf{w}^\prime\in\{\mathbf{w}\in\mathbb{R}^p\mid\mathbf{Aw}=\mathbf{b}\}$ be defined according to the $\Bothe,\,\Bothe^\prime$. Then by the equivalence between~\eqref{eq_UVperpA_belongs_to_connectC_I} and~\eqref{eq_z_equality_on_connt_comp}, $\mathbf{a}_{\Boalp}^\top\mathbf{w} = \mathbf{a}_{\Boalp}^\top\mathbf{w}^\prime$ holds for all $\Boalp\in\E_s$. This further implies $p_\Boalp(\mathbf{w})=p_\Boalp(\mathbf{w}^\prime),\,\forall\,\Boalp\in\E_s$ for $\mathbf{w}\neq \mathbf{w}^\prime$, which means that the model is not identifiable over $\{\mathbf{w}\in\mathbb{R}^p\mid\mathbf{Aw}=\mathbf{b}\}$.
        \qedbox
        \subsection{Proof of Corollary \ref{coro_from_identifibality}}
        \label{proof_coro_from_identifibality}
        The first part is derived  from identifying the null space of $\mathbf{W}$. From standard results in spectral graph theory, $\mathrm{span}(\{\mathbf{1}_{\mathcal{C}_t}\}_{t=1}^K) = \mathrm{span}(\mathbf{C}_I)=\mathrm{Null}(\mathbf{L})$. Suppose that there exists $\Bothe\neq\mathbf{0}$ such that $\Bothe^\top\mathbf{W}\Bothe=0$. Since $\mathbf{L}\succeq\mathbf{0}$, equivalently, this gives $(\mathfrak{A}\mathbf{V}_{A^\perp}\Bothe)^\top\mathbf{L}(\mathfrak{A}\mathbf{V}_{A^\perp}\Bothe)=0$, and $\mathfrak{A}\mathbf{V}_{A^\perp}\Bothe\in\mathrm{Null}(\mathbf{L})$. Thus
        \begin{equation}
        \label{eq_nullspace_of_K_explicit}
            \mathrm{Null}(\mathbf{W})=\{\Bothe\in\mathbb{R}^{d_\theta}\mid \mathfrak{A}\mathbf{V}_{A^\perp}\Bothe\in \mathrm{span}(\mathbf{C}_I)\}
        \end{equation}
        Since the matrix $\mathbf{C}_I$ has linearly independent columns, whenever \eqref{eq_identifiable_rank_constraint} holds, we will have 
        \begin{equation}
        \label{eq_indentifibility_complete_obv_equalform}
            \mathrm{span}(\mathbf{C}_I)\,\cap\,\mathrm{Range}(\mathfrak{A}\mathbf{V}_{A^\perp}) = \{\mathbf{0}\},\quad \mathrm{rank}(\mathfrak{A}\mathbf{V}_{A^\perp})=d_\theta,
        \end{equation}
        and vice versa. If $\mathrm{Null}(\mathbf{W})= \{\mathbf{0}\}$, then~\eqref{eq_nullspace_of_K_explicit} yields~\eqref{eq_indentifibility_complete_obv_equalform}, and consequently~\eqref{eq_identifiable_rank_constraint}. Conversely, suppose that~\eqref{eq_identifiable_rank_constraint} holds. Then~\eqref{eq_indentifibility_complete_obv_equalform} implies that the only $\Bothe$ which satisfies~\eqref{eq_nullspace_of_K_explicit} is identically zero. Thus $\mathrm{Null}(\mathbf{W})= \{\mathbf{0}\}$.

        For the second part, let $\mathrm{rank}(\mathbf{W})=r<d_\theta$, and the full eigen decomposition of $\mathbf{W}$ be
        \begin{equation*}
            \mathbf{W}  = [\mathbf{Q}_+,\mathbf{Q}_\perp]
            \begin{bmatrix}
                \boldsymbol{\Lambda} & \mathbf{0}\\
                \mathbf{0} & \mathbf{0}
            \end{bmatrix}
            [\mathbf{Q}_+,\mathbf{Q}_\perp]^\top.
        \end{equation*}
        Then, the refined linear equality system can be expressed as 
        \begin{equation*}
            \tilde{\mathbf{A}}\mathbf{w}=\tilde{\mathbf{b}},\quad \tilde{\mathbf{A}}:=
            \begin{bmatrix}
                \mathbf{A}\\
                \mathbf{Q}_\perp^\top \mathbf{V}_{A^\perp}^\top
            \end{bmatrix},\quad
            \tilde{\mathbf{b}}:=
            \begin{bmatrix}
                \mathbf{b}\\
                \mathbf{0}
            \end{bmatrix}.
        \end{equation*}
        By following the same procedure and reparameterizing as in \eqref{eq_transfrom_w_to_theta}, the counterpart of $\mathbf{W}$ in the refined model is
        \begin{equation*}
            \tilde{\mathbf{W}}:= \mathbf{Q}_+^\top\mathbf{V}_{A^{\perp}}^\top\mathfrak{A}^\top \mathbf{L}\mathfrak{A}\mathbf{V}_{A^{\perp}}\mathbf{Q}_+ = \mathbf{Q}_+^\top\mathbf{W}\mathbf{Q}_+=\boldsymbol{\Lambda}\succ\mathbf{0}.
        \end{equation*}
        By the first part of the proof, we know the refined model enforces \eqref{eq_identifiable_rank_constraint} and $\mathrm{rank}([\mathbf{C}_I,\mathfrak{A}\mathbf{V}_{A^\perp}\mathbf{Q}_+])=r+K$, thus the refined model is identifiable.

        Finally, fix an arbitrary $\mathbf{w}^\star$ that satisfies Assumption \ref{assumpt1_well_Logit}, and set $\tilde{\mathbf{w}}^\star=\mathbf{w}_o+\mathbf{V}_{A^\perp}\mathbf{Q}_+\mathbf{Q}_+^\top\BoTheS$. We verify that: (i) $p_\Boalp(\tilde{\mathbf{w}}^\star)=p_\Boalp(\mathbf{w}^\star),\,\forall\,\Boalp\in\E_s$, and (ii) $\tilde{\mathbf{w}}^\star$ is unique under the refined model, thereby completing the proof. For $(i)$, notice that $\Bothe^\star-\mathbf{Q}_+\mathbf{Q}_+^\top\BoTheS =\mathbf{Q}_\perp\mathbf{Q}_\perp^\top\BoTheS\in\mathrm{Null}(\mathbf{W})$. By \eqref{eq_nullspace_of_K_explicit} we know $\mathfrak{A}\mathbf{V}_{A^\perp}(\Bothe^\star-\mathbf{Q}_+\mathbf{Q}_+^\top\BoTheS)\in\mathrm{span}(\mathbf{C}_I)$, thus for any $\Boalp = (i,j)\in\E_s$
        \begin{equation*}
            (\mathbf{e}_i-\mathbf{e}_j)^\top\mathfrak{A}\mathbf{V}_{A^\perp}(\Bothe^\star-\mathbf{Q}_+\mathbf{Q}_+^\top\BoTheS)= (\mathbf{e}_i-\mathbf{e}_j)^\top\mathfrak{A}(\mathbf{w}^\star-\tilde{\mathbf{w}}^\star) = \mathbf{a}_{\Boalp}^\top(\mathbf{w}^\star-\tilde{\mathbf{w}}^\star)=0,
        \end{equation*}
        which proves (i). For (ii), suppose there exists $\tilde{\mathbf{w}}^\star_o = \mathbf{w}_o + \mathbf{V}_{A^\perp}\mathbf{Q}_+\mathbf{Q}_+^\top\BoTheS_o$ that satisfies $\tilde{\mathbf{A}}\mathbf{w}=\tilde{\mathbf{b}}$ and induces the same probability distribution, then following \eqref{eq_UVperpA_belongs_to_connectC_I}, we have $\mathfrak{A}\mathbf{V}_{A^\perp}(\mathbf{Q}_+\mathbf{Q}_+^\top\BoTheS-\mathbf{Q}_+\mathbf{Q}_+^\top\BoTheS_o)\in\mathrm{span}(\mathbf{C}_I)$. By \eqref{eq_nullspace_of_K_explicit} and the definition of $\mathbf{Q}_+$, this implies $\mathbf{Q}_+\mathbf{Q}_+^\top(\BoTheS-\BoTheS_o)\in\mathrm{Null}(\mathbf{W})\bigcap\mathrm{Range}(\mathbf{W}) = \{\mathbf{0}\}$ hence $\tilde{\mathbf{w}}^\star = \tilde{\mathbf{w}}_o^\star$. \qedbox
            \subsection{Proof of Proposition~\ref{prop_lim_B_to_infty_minimax_risk_Euclide_to_infty}}
            \label{proof_prop_lim_B_to_infty_minimax_risk_Euclide_to_infty}
            This result can be viewed as a generalization of~\cite[Proposition~17]{ShahEstPariCompGrapTop}. Fix a unit vector $\Bothe_o\in\mathbb{R}^{d_\theta}$ and consider the feasible ray $\{\Bothe^\prime\in\mathbb{R}^{d_\theta} \mid \Bothe^\prime = c\Bothe_o, c\geq 0\}$. For query $\Boalp$ the utility difference along the ray is $\mathbf{w}_o^\top \mathbf{a}_{\Boalp} + c\mathbf{q}_\Boalp^\top\Bothe_o$. Since $\E_s$ is a finite set, there exists some $c_0<\infty$ such that for every $\Boalp\in\E_s$, $\mathrm{sgn}(\mathbf{w}_o^\top \mathbf{a}_{\Boalp} + c\mathbf{q}_\Boalp^\top\Bothe_o)$ remains unchanged for all $c\geq c_0$. Hence, by~\cite[Proposition~17]{ShahEstPariCompGrapTop}, there exists a realization $\tilde{\boldsymbol{y}}^{(N)}\in\{0,1\}^{N}$ of the response sequence $Y^{(N)}$ agreeing with these signs such that 
            \begin{equation*}
                \bbp_{c\Bothe_o}^{(N)}\left\{ Y^{(N)} = \tilde{\boldsymbol{y}}^{(N)} \right\} = \prod_{\Boalp\in\E_s}\prod_{t\in[n_\Boalp]}\rP_{c\Bothe_o}^{\Boalp}\left\{ Y_\Boalp^{(t)} =  \tilde{y}_{\Boalp}^{(t)}\right\} \geq \frac{1}{2^N},\,\forall\,c\geq c_0.
            \end{equation*}
            Fix any $\bar{c}\geq c_0$, then $\{\Bothe^\prime\in\mathbb{R}^{d_\theta}\mid \Bothe^\prime=c\Bothe_o, c\in[c_0,\bar{c}]\}\subseteq\Theta_B$ for every sufficiently large $B$. Consequently, for any estimator $\hat{\Bothe}$
            \begin{align*}
                \sup_{\Bothe\in\Theta_B}\mathbb{E}_\Bothe^{(N)}\left[\Vert\hat{\Bothe}-\Bothe\Vert_2^2\right]&\geq \sup_{c\in[c_0,\bar{c}]}\mathbb{E}_{c\Bothe_o}^{(N)}\left[\Vert\hat{\Bothe}-c\Bothe_o\Vert_2^2\right] = \sup_{c\in[c_0,\bar{c}]}\sum_{\boldsymbol{y}^{(N)}\in\mathcal{Y}^N}\bbp_{c\Bothe_o}^{(N)}\left\{Y^{(N)}=\boldsymbol{y}^{(N)}\right\}\Vert\hat{\Bothe}(\boldsymbol{y}^{(N)})-c\Bothe_o\Vert_2^2\\
                &\geq \sup_{c\in[c_0,\bar{c}]} \bbp_{c\Bothe_o}^{(N)}\left\{Y^{(N)}=\tilde{\boldsymbol{y}}^{(N)}\right\}\Vert\hat{\Bothe}(\tilde{\boldsymbol{y}}^{(N)})-c\Bothe_o\Vert_2^2 \geq \frac{1}{2^N}\sup_{c\in[c_0,\bar{c}]} \Vert\hat{\Bothe}(\tilde{\boldsymbol{y}}^{(N)})-c\Bothe_o\Vert_2^2\\
                &\geq \frac{1}{2^N}\max\left\{ \Vert \hat{\Bothe}(\tilde{\boldsymbol{y}}^{(N)}) -c_0\Bothe_o \Vert_2^2, \Vert \hat{\Bothe}(\tilde{\boldsymbol{y}}^{(N)}) -\bar{c}\Bothe_o \Vert_2^2 \right\} \geq \frac{(\bar{c}-c_0)^2}{2^{N+2}}.
            \end{align*} 
            Since $\bar{c}>c_0$ is arbitrary, for every $M>0$, one may choose $\bar{c}$ sufficiently large such that $\frac{(\bar{c}-c_0)^2}{2^{N+2}}>M$. Consequently, $\lim_{B\to\infty}\sup_{\Bothe\in\Theta_B}\mathbb{E}_\Bothe^{(N)}\left[\Vert\hat{\Bothe}-\Bothe\Vert_2^2\right] = \infty$.
            \qedbox
            \subsection{Proof of Lemma~\ref{lema_sum_of_excess_loss_lower_bound_by_semi_one_sample}}
            \label{proof_lema_sum_of_excess_loss_lower_bound_by_semi_one_sample}
            The KL divergence between Bernoulli distributions can be written as the first-order Taylor remainder of $\Psi(\cdot)$, see, e.g.~\cite{ShahEstPariCompGrapTop}. That is 
                \begin{align*}
                    \dd_{KL}(\rP_{\Psi^\prime}^a,\rP_{\Psi^\prime}^z) &= \mathbb{E}_{\rP_{\Psi^\prime}^a} \left[\log\left(\frac{\mathrm{d}\rP_{\Psi^\prime}^a(Y)}{\mathrm{d}\rP_{\Psi^\prime}^z(Y)}\right)\right] \\
                    & = \mathbb{E}_{\rP_{\Psi^\prime}^a} \left[ Ya - \Psi(a) - Yz + \Psi(z) \right] = \Psi(z) - \Psi(a) - \Psi^\prime(a)(z-a),
                \end{align*}
                where $\mathbb{E}_{\rP_{\Psi^\prime}^a}$ denotes the expectation under $\rP_{\Psi^\prime}^a$, and the last two equalities are from Proposition~\ref{observ_Bernoulli_exp_calss}. Observe that 
                \begin{align}
                    \dd_{KL}(\rP_{\Psi^\prime}^a,\rP_{\Psi^\prime}^z) + \dd_{KL}(\rP_{\Psi^\prime}^b,\rP_{\Psi^\prime}^z) &= \Psi(z) - \Psi(a) - \Psi^\prime(a)(z-a) + 
                    \Psi(z) - \Psi(b) - \Psi^\prime(b)(z-b)\nonumber\\
                    & \geq \inf_{z\in\mathbb{R}} \Psi(z) - \Psi(a) - \Psi^\prime(a)(z-a) + 
                    \Psi(z) - \Psi(b) - \Psi^\prime(b)(z-b), \nonumber
                \end{align}
                and the infimum is attained at $z^\star$ such that $\Psi^{\prime}(z^\star) = \frac{\Psi^{\prime}(a) + \Psi^{\prime}(b)}{2}$. Since $\Psi^\prime(\cdot)$ is strictly increasing, w.l.o.g.~assume that $a\leq b$, we have $z^\star\in[a,b]$. Thus, there exist $\xi_1,\xi_2 \in [-\log B,\log B]$ such that
                \begin{align}
                    \dd_{KL}(\rP_{\Psi^\prime}^a,\rP_{\Psi^\prime}^z) + \dd_{KL}(\rP_{\Psi^\prime}^b,\rP_{\Psi^\prime}^z) & \geq \frac{1}{2}\left( \Psi^{\prime\prime}(\xi_1)(z^\star-a)^2 + \Psi^{\prime\prime}(\xi_2)(z^\star-b)^2\right) \nonumber\\
                    \label{eq_Bernoulli_Jensen_shannon_divergence_one_dim}
                    & \geq \frac{1}{8B}\left( (z^\star-a)^2 +(z^\star-b)^2 \right) \geq \frac{1}{16B}(a-b)^2,
                \end{align}
                where the second last inequality is by Proposition~\ref{prop_bound_dymic_range} and the fact that $\mathrm{Var}_{\rP_{\Psi^\prime}^a}(Y) = \Psi^{\prime\prime}(a)$, and the last inequality follows from $(z^\star-a)^2 +(z^\star-b)^2 = 2(z^\star - \frac{a+b}{2})^2 + \frac{(a-b)^2}{2}\geq \frac{(a-b)^2}{2}$.
                Next, observe that 
                \begin{align*}
                    \mathcal{L}^{\Boalp}_{\Bothe_1}(\Bov) -\mathcal{L}^{\Boalp}_{\Bothe_1}(\Bothe_1) &= \Psi\left(\frac{\mathbf{w}_o^\top\mathbf{a}_{\Boalp} + \mathbf{q}_\Boalp^\top \Bov}{\sigma}\right) - \Psi\left(\frac{\mathbf{w}_o^\top\mathbf{a}_{\Boalp} + \mathbf{q}_\Boalp^\top \Bothe_1}{\sigma}\right) - \Psi^\prime\left(\frac{\mathbf{w}_o^\top\mathbf{a}_{\Boalp} + \mathbf{q}_\Boalp^\top \Bothe_1}{\sigma}\right)\frac{\mathbf{q}_\Boalp^\top(\Bov - \Bothe_1)}{\sigma}\\
                    & = \dd_{KL}(\rP_{\Bothe_1}^\Boalp,\rP_{\Bov}^\Boalp),\\
                    \mathcal{L}^{\Boalp}_{\Bothe_2}(\Bov) -\mathcal{L}^{\Boalp}_{\Bothe_2}(\Bothe_2) &= \Psi\left(\frac{\mathbf{w}_o^\top\mathbf{a}_{\Boalp} + \mathbf{q}_\Boalp^\top \Bov}{\sigma}\right) - \Psi\left(\frac{\mathbf{w}_o^\top\mathbf{a}_{\Boalp} + \mathbf{q}_\Boalp^\top \Bothe_2}{\sigma}\right) - \Psi^\prime\left(\frac{\mathbf{w}_o^\top\mathbf{a}_{\Boalp} + \mathbf{q}_\Boalp^\top \Bothe_2}{\sigma}\right)\frac{\mathbf{q}_\Boalp^\top(\Bov - \Bothe_2)}{\sigma}\\
                    & = \dd_{KL}(\rP_{\Bothe_2}^\Boalp,\rP_{\Bov}^\Boalp).
                \end{align*}
                Thus, by~\eqref{eq_Bernoulli_Jensen_shannon_divergence_one_dim}
                \begin{align*}
                    \mathcal{L}^{\Boalp}_{\Bothe_1}(\Bov) -\mathcal{L}^{\Boalp}_{\Bothe_1}(\Bothe_1) + \mathcal{L}^{\Boalp}_{\Bothe_2}(\Bov) -\mathcal{L}^{\Boalp}_{\Bothe_2}(\Bothe_2) &=\dd_{KL}(\rP_{\Bothe_1}^\Boalp,\rP_{\Bov}^\Boalp) +\dd_{KL}(\rP_{\Bothe_2}^\Boalp,\rP_{\Bov}^\Boalp) \\
                    &\geq \frac{1}{16B\sigma^2}\Big|\mathbf{q}_\Boalp^\top(\Bothe_1-\Bothe_2)\Big|^2,
                \end{align*}
                which proves~\eqref{eq_sum_of_excess_loss_lower_bound_by_semi_one_sample}.
                \qedbox
        \subsection{Proof of Lemma~\ref{lema_Residual_Expansion_with_Bias}}
        \label{proof_lema_Residual_Expansion_with_Bias}
        First, by adding and subtracting terms, the gradient $\nabla\ell(\BoTheS+\Delta)$ can be equivalently written as
            \begin{align*}
                \nabla\ell(\BoTheS+\Delta)  &=\nabla\ell(\BoTheS) + \nabla^2\ell(\BoTheS)\Delta + \left( \nabla\ell(\BoTheS+\Delta)- \nabla\ell(\BoTheS)- \nabla^2\ell(\BoTheS)\Delta\right) \\
                & = \nabla\ell(\BoTheS) + \nabla^2\ell(\BoTheS)\Delta + \frac{1}{N}\sum_{j=1}^N\mathbf{x}_jr_j(\Delta),
            \end{align*}
            where $r_j(\Delta)$ is defined in~\eqref{eq_nonlinear_remainder_r_j_Delta}.
            Using the relation $\Delta = \Delta_{lin} + \Delta_{bias} + \mathbf{h}$, and the definition of $\Delta_{lin}, \Delta_{bias}$ in~\eqref{eq_def_of_Delta_lin_and_Delta_bias}, we can present the first-order optimality condition as
            \begin{align*}
                \mathbf{0}= \nabla\ell(\BoTheS+\Delta) &= \nabla\ell(\BoTheS) + \nabla^2\ell(\BoTheS)\Delta_{lin} + \nabla^2\ell(\BoTheS)\Delta_{bias} + \nabla^2\ell(\BoTheS)\mathbf{h} + \frac{1}{N}\sum_{j=1}^N\mathbf{x}_jr_j(\Delta)\\
                & = - \frac{1}{2N}\sum_{j=1}^N \mathbf{A}_{jj}\Psi^{(3)}(\eta_j^\star) \mathbf{x}_j + \nabla^2\ell(\BoTheS)\mathbf{h} + \frac{1}{N}\sum_{j=1}^N\mathbf{x}_jr_j(\Delta)\\
                & = - \frac{1}{2N}\sum_{j=1}^N \mathbf{A}_{jj}\Psi^{(3)}(\eta_j^\star) \mathbf{x}_j + \nabla^2\ell(\BoTheS)\mathbf{h} + \frac{1}{N}\sum_{j=1}^N\mathbf{x}_jr_j(\Delta_{lin} + \Delta_{bias} + \mathbf{h}),
            \end{align*}
            where the second last equality follows from the definition of $\Delta_{lin}$ and $\Delta_{bias}$. Rearranging the terms gives rise to~\eqref{eq_h_Lin_Bias_fixed_point}. \qedbox
            \subsection{Proof of Lemma~\ref{lema_Self-mapping_Property}}
            \label{proof_lema_Self-mapping_Property}
                Lemma~\ref{lema_Self-mapping_Property} relies on several intermediate results stated below, which control the $\Vert\cdot\Vert_{\nabla^2\ell(\BoTheS)}$ and $\Vert\cdot
                \Vert_{Q,\infty}$ norms of the deterministic second-order bias, and tail behavior of certain sub-Gaussian quadratic forms. We use the notation introduced at the beginning of Section~\ref{subsec_proofs_of_MLE_error_bounds}.
                \subsubsection{Deterministic residuals}
            \begin{lemma}
            \label{lema_property_of_A_matrix}
            Let $\mathbf{A}:=\frac{1}{N}\mathbf{X}\nabla^2\ell(\BoTheS)^{-1}\mathbf{X}^\top\in\mathbb{R}^{N\times N}$. 
            Then the following assertions hold.
            \begin{itemize}
           
            \item[(i)]            $\mathbf{A}\mathbf{V}^\star\mathbf{A}=\mathbf{A}$, i.e.,
            \begin{equation}
            \label{eq_cycle_propert_of_A}
                \sum_{j=1}^N v_j^\star\mathbf{A}_{ij}^2 = \mathbf{A}_{ii},\quad\sum_{j=1}^N v_j^\star\mathbf{A}_{jj} = d_\theta.
            \end{equation}
          
          \item[(ii)] $
          A_{\max}:=\max_{i\in[N],j\in[N]}|\mathbf{A}_{ij}| = \max_{i\in[N]}\mathbf{A}_{ii}\leq \frac{4B\mu_Q d_\theta}{N}$.
               
        \end{itemize}
        \end{lemma}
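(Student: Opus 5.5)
Both assertions reduce to the identity $\nabla^2\ell(\BoTheS)=\frac{1}{N}\mathbf{X}^\top\mathbf{V}^\star\mathbf{X}$, which is \eqref{eq_hess_mle_in_theta} rewritten in the flattened notation of \eqref{eq_def_of_x_k_X_V_star}. Part (ii) then uses positive semidefiniteness of $\mathbf{A}$ together with Proposition~\ref{prop_bound_dymic_range}. We abbreviate $\mathbf{H}^\star:=\nabla^2\ell(\BoTheS)$, which is invertible because $\mathbf{H}^\star\succeq\frac{1}{4B\sigma^2}\mathbf{W}\succ\mathbf{0}$ by \eqref{eq_upperlower_sorogate_by_L}.

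For (i), compute directly:
\begin{equation*}
\mathbf{A}\mathbf{V}^\star\mathbf{A}=\frac{1}{N^2}\mathbf{X}(\mathbf{H}^\star)^{-1}\big(\mathbf{X}^\top\mathbf{V}^\star\mathbf{X}\big)(\mathbf{H}^\star)^{-1}\mathbf{X}^\top=\frac{1}{N}\mathbf{X}(\mathbf{H}^\star)^{-1}\mathbf{X}^\top=\mathbf{A}.
\end{equation*}
Reading off the $(i,i)$ entry gives $\sum_j v_j^\star\mathbf{A}_{ij}^2=\mathbf{A}_{ii}$, since $\mathbf{A}$ is symmetric. For the second identity, $\sum_j v_j^\star\mathbf{A}_{jj}=\mathrm{tr}(\mathbf{V}^\star\mathbf{A})=\frac{1}{N}\mathrm{tr}\big((\mathbf{H}^\star)^{-1}\mathbf{X}^\top\mathbf{V}^\star\mathbf{X}\big)=\mathrm{tr}(\mathbf{I}_{d_\theta})=d_\theta$ by cyclicity of the trace.

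For (ii), the equality $\max_{i,j}|\mathbf{A}_{ij}|=\max_i\mathbf{A}_{ii}$ follows because $\mathbf{A}\succeq\mathbf{0}$ (it is a Gram matrix with $(\mathbf{H}^\star)^{-1}\succ\mathbf{0}$), so Cauchy--Schwarz gives $|\mathbf{A}_{ij}|\le\sqrt{\mathbf{A}_{ii}\mathbf{A}_{jj}}\le\max_k\mathbf{A}_{kk}$. For the diagonal bound, group the sum by query:
\begin{equation*}
\mathbf{H}^\star=\frac{1}{N\sigma^2}\sum_{\Boalp\in\E_s}n_\Boalp\Psi''(\eta_\Boalp(\BoTheS))\mathbf{q}_\Boalp\mathbf{q}_\Boalp^\top\succeq\frac{1}{4B\sigma^2}\cdot\frac{1}{N}\sum_{\Boalp\in\E_s}n_\Boalp\mathbf{q}_\Boalp\mathbf{q}_\Boalp^\top .
\end{equation*}
The right-hand side equals $\frac{1}{4B\sigma^2}\mathbf{W}$, since $\mathbf{W}=\mathbf{V}_{A^\perp}^\top\mathfrak{A}^\top\mathbf{L}\mathfrak{A}\mathbf{V}_{A^\perp}=\frac{1}{N}\sum_\Boalp n_\Boalp\mathbf{q}_\Boalp\mathbf{q}_\Boalp^\top$ by \eqref{eq_graph_laplace_def} and \eqref{eq_definition_q_alp}. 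Inverting, $(\mathbf{H}^\star)^{-1}\preceq4B\sigma^2\mathbf{W}^{-1}$. Hence for $k\in S_\Boalp$,
\begin{equation*}
\mathbf{A}_{kk}=\frac{1}{N\sigma^2}\,\mathbf{q}_\Boalp^\top(\mathbf{H}^\star)^{-1}\mathbf{q}_\Boalp\le\frac{4B}{N}\,\mathbf{q}_\Boalp^\top\mathbf{W}^{-1}\mathbf{q}_\Boalp\le\frac{4B\mu_Qd_\theta}{N},
\end{equation*}
using Definition~\ref{Def_incoherent_level_of_leverages}.

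No step is a real obstacle. The one point needing care is checking that $\mathbf{W}$ coincides exactly with $\frac{1}{N}\sum_\Boalp n_\Boalp\mathbf{q}_\Boalp\mathbf{q}_\Boalp^\top$, that is, that no $\sigma$ factor is misplaced.
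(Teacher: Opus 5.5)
Your proof is correct and follows the same route as the paper. Both arguments obtain (i) from $\nabla^2\ell(\BoTheS)=\frac{1}{N}\mathbf{X}^\top\mathbf{V}^\star\mathbf{X}$ by direct multiplication and cyclicity of the trace, and (ii) from $|\mathbf{A}_{ij}|\le\sqrt{\mathbf{A}_{ii}\mathbf{A}_{jj}}$ together with the comparison $\nabla^2\ell(\BoTheS)\succeq\frac{1}{4B\sigma^2}\mathbf{W}$ (Proposition~\ref{prop_bound_dymic_range}) and Definition~\ref{Def_incoherent_level_of_leverages}; your inversion step $\nabla^2\ell(\BoTheS)^{-1}\preceq 4B\sigma^2\mathbf{W}^{-1}$ and your $\sigma$-bookkeeping spell out what the paper states in one line.
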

        The proof is deferred to Appendix~\ref{proof_lema_property_of_A_matrix}. As a straightforward corollary from Lemma~\ref{lema_property_of_A_matrix}, the matrix 
        \begin{equation}
            \label{eq_def_of_orthonomal_P}\mathbf{P}:=\sqrt{\mathbf{V}^\star}\mathbf{A}\sqrt{\mathbf{V}^\star}
        \end{equation}
        is an orthogonal projection, i.e., $\mathbf{P}^\top=\mathbf{P}$ and $\mathbf{P}^2 = \sqrt{\mathbf{V}^\star}\mathbf{A}\mathbf{V}^\star\mathbf{A}\sqrt{\mathbf{V}^\star} = \sqrt{\mathbf{V}^\star}\mathbf{A}\sqrt{\mathbf{V}^\star}=\mathbf{P}$.
        \begin{lemma}
        \label{lema_Bias_Envelope_upperbound}
            Consider the following linear functional
            \begin{align*}
                -\mathbf{x}_m^\top \Delta_{bias} = \mathbf{x}_m^\top \nabla^2\ell(\BoTheS)^{-1}\left(\frac{1}{2N}\sum_{j=1}^N \mathbf{A}_{jj}\Psi^{(3)}(\eta_j^\star) \mathbf{x}_j\right)= \frac{1}{2}\sum_{j=1}^N \mathbf{A}_{mj}\Psi^{(3)}(\eta_j^\star)\mathbf{A}_{jj},\,m\in[N],
            \end{align*}
            where $\Delta_{bias}$ is defined as in~\eqref{eq_def_of_Delta_bias}.
            It satisfies
            \begin{equation}
                \max_{m\in[N]}\bigg|-\mathbf{x}_m^\top \Delta_{bias}\bigg|\leq \frac{1}{2}A_{\max}\sqrt{d_\theta}\lesssim \frac{B\mu_Q d_\theta^{3/2}}{N}.
            \end{equation}
        \end{lemma}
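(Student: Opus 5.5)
The identity for $-\mathbf{x}_m^\top\Delta_{bias}$ is immediate from the definitions. Substitute \eqref{eq_def_of_Delta_bias}, rewrite $\mathbf{q}_\Boalp/\sigma$ as $\mathbf{x}_j$, and flatten the sum over $\Boalp$ with multiplicity $n_\Boalp$ into a sum over $j\in[N]$. The factor $\frac{\mathbf{q}_\Boalp^\top\nabla^2\ell(\BoTheS)^{-1}\mathbf{q}_\Boalp}{N\sigma^2}$ is exactly $\mathbf{A}_{jj}$. Multiplying by $\mathbf{x}_m^\top\nabla^2\ell(\BoTheS)^{-1}$ and absorbing the outer $\frac{1}{N}$ gives $\frac12\sum_j \mathbf{A}_{mj}\Psi^{(3)}(\eta_j^\star)\mathbf{A}_{jj}$.

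For the bound, the plan is to apply Cauchy--Schwarz with weights $v_j^\star=\Psi''(\eta_j^\star)$ and use the pseudo self-concordance $|\Psi^{(3)}|\le\Psi''$. Concretely,
\begin{equation*}
\Big|\sum_{j}\mathbf{A}_{mj}\Psi^{(3)}(\eta_j^\star)\mathbf{A}_{jj}\Big|\le \sum_j v_j^\star|\mathbf{A}_{mj}|\,\mathbf{A}_{jj}\le\Big(\sum_j v_j^\star\mathbf{A}_{mj}^2\Big)^{1/2}\Big(\sum_j v_j^\star\mathbf{A}_{jj}^2\Big)^{1/2}.
\end{equation*}
By Lemma~\ref{lema_property_of_A_matrix}(i), the first factor equals $\mathbf{A}_{mm}^{1/2}\le A_{\max}^{1/2}$. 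For the second factor, pull out one copy of $\mathbf{A}_{jj}\le A_{\max}$ to get $\sum_j v_j^\star\mathbf{A}_{jj}^2\le A_{\max}\sum_j v_j^\star\mathbf{A}_{jj}=A_{\max}d_\theta$, again by \eqref{eq_cycle_propert_of_A}. This uses $\mathbf{A}_{jj}\ge0$, which holds because $\nabla^2\ell(\BoTheS)\succ\mathbf 0$. The product is $A_{\max}\sqrt{d_\theta}$, and with the prefactor $\frac12$ this gives the first inequality uniformly in $m$.

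The second inequality follows by plugging in $A_{\max}\le 4B\mu_Qd_\theta/N$ from Lemma~\ref{lema_property_of_A_matrix}(ii), which yields $\lesssim B\mu_Qd_\theta^{3/2}/N$.

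There is no real obstacle here. The only point needing care is to put the weights $v_j^\star$ inside Cauchy--Schwarz so that both trace-type identities in \eqref{eq_cycle_propert_of_A} apply. Bounding $|\Psi^{(3)}|\le\frac14$ directly would instead cost a factor of $B$ or $N$.
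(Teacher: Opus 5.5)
Your proof is correct and is essentially the paper's argument. Both bound $|\Psi^{(3)}(\eta_j^\star)|\le v_j^\star$, apply Cauchy--Schwarz with weights $v_j^\star$, use $\sum_j v_j^\star\mathbf{A}_{mj}^2=\mathbf{A}_{mm}$ and $\sum_j v_j^\star\mathbf{A}_{jj}=d_\theta$ from Lemma~\ref{lema_property_of_A_matrix}, and finish with $A_{\max}\le 4B\mu_Q d_\theta/N$. You also check the flattening identity and note $\mathbf{A}_{jj}\ge 0$, details the paper leaves implicit.
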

        \begin{proof}
            The upper bound follows by Cauchy–Schwarz inequality
            \begin{align*}
                \bigg|-\mathbf{x}_m^\top \Delta_{bias}\bigg| &\leq \frac{1}{2}\sum_{j=1}^N |\mathbf{A}_{mj}|v_j^\star\mathbf{A}_{jj} \leq \frac{1}{2}\left( \sum_{j=1}^N v_j^\star\mathbf{A}_{mj}^2 \right)^{1/2}\left( 
                \sum_{j=1}^N v_j^\star\mathbf{A}_{jj}^2\right)^{1/2}\\
                &\leq \frac{1}{2}\sqrt{\mathbf{A}_{mm}} \sqrt{A_{\max}\sum_{j=1}^N v_j^\star\mathbf{A}_{jj}}\leq \frac{1}{2}A_{\max}\sqrt{d_\theta},
            \end{align*}
            where the last two inequalities follow from~\eqref{eq_cycle_propert_of_A}. \qedbox
        \end{proof}
        \begin{lemma}
        \label{lema_bias_term_in_semi_norm_upper}
            Let $\Delta_{bias}$ be defined as in~\eqref{eq_def_of_Delta_bias}. Then 
            \begin{equation}
                \Vert\Delta_{bias}\Vert_{\nabla^2\ell(\BoTheS)}\leq \frac{1}{2}\sqrt{\frac{A_{\max}d_\theta}{N}}.
            \end{equation}
        \end{lemma}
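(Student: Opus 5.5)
The plan is to write the seminorm as a quadratic form in the matrix $\mathbf{A}$ and then use the identities of Lemma~\ref{lema_property_of_A_matrix}. By \eqref{eq_def_of_Delta_bias}, with $\mathbf{x}_j=\mathbf{q}_j/\sigma$ and the flattened index, we have
\begin{equation*}
\Delta_{bias}=-\nabla^2\ell(\BoTheS)^{-1}\mathbf{g},\qquad \mathbf{g}:=\frac{1}{2N}\sum_{j=1}^N \mathbf{A}_{jj}\Psi^{(3)}(\eta_j^\star)\mathbf{x}_j=\frac{1}{2N}\mathbf{X}^\top\mathbf{c},
\end{equation*}
where $\mathbf{c}:=(\mathbf{A}_{jj}\Psi^{(3)}(\eta_j^\star))_{j\le N}$. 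Hence
\begin{equation*}
\Vert\Delta_{bias}\Vert_{\nabla^2\ell(\BoTheS)}^2=\mathbf{g}^\top\nabla^2\ell(\BoTheS)^{-1}\mathbf{g}=\frac{1}{4N^2}\mathbf{c}^\top\mathbf{X}\nabla^2\ell(\BoTheS)^{-1}\mathbf{X}^\top\mathbf{c}=\frac{1}{4N}\mathbf{c}^\top\mathbf{A}\mathbf{c}.
\end{equation*}

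Next, write $\mathbf{c}=\sqrt{\mathbf{V}^\star}\,\tilde{\mathbf{c}}$, where $\tilde c_j:=\mathbf{A}_{jj}\Psi^{(3)}(\eta_j^\star)/\sqrt{v_j^\star}$. This is well defined because $v_j^\star=\Psi''(\eta_j^\star)>0$. Then $\mathbf{c}^\top\mathbf{A}\mathbf{c}=\tilde{\mathbf{c}}^\top\mathbf{P}\tilde{\mathbf{c}}$ with $\mathbf{P}$ as in \eqref{eq_def_of_orthonomal_P}. Since $\mathbf{P}$ is an orthogonal projection, $\tilde{\mathbf{c}}^\top\mathbf{P}\tilde{\mathbf{c}}\le\Vert\tilde{\mathbf{c}}\Vert_2^2$. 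Using $|\Psi^{(3)}|\le\Psi''$, we get $\tilde c_j^2\le \mathbf{A}_{jj}^2 v_j^\star$. Therefore
\begin{equation*}
\Vert\tilde{\mathbf{c}}\Vert_2^2\le\sum_{j}v_j^\star\mathbf{A}_{jj}^2\le A_{\max}\sum_j v_j^\star\mathbf{A}_{jj}=A_{\max}d_\theta,
\end{equation*}
where the last equality is the trace identity in \eqref{eq_cycle_propert_of_A}, and $\mathbf{A}_{jj}\ge0$ with $\mathbf{A}_{jj}\le A_{\max}$ by Lemma~\ref{lema_property_of_A_matrix}(ii).

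Combining the two steps gives $\Vert\Delta_{bias}\Vert_{\nabla^2\ell(\BoTheS)}^2\le \frac{A_{\max}d_\theta}{4N}$, which is the claim after taking square roots. No real obstacle is expected. The only points needing care are the $\sigma$ bookkeeping in identifying $\mathbf{g}$ with the bracket in \eqref{eq_def_of_Delta_bias}, which requires $n_\Boalp\,\mathbf{q}_\Boalp^\top\nabla^2\ell^{-1}\mathbf{q}_\Boalp/(N\sigma^2)$ to equal $\mathbf{A}_{jj}$ summed over $S_\Boalp$, and the observation that the projection bound replaces the cruder Cauchy--Schwarz argument used in Lemma~\ref{lema_Bias_Envelope_upperbound}.
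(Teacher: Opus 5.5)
Your proposal is correct and follows essentially the same route as the paper: your vector $\tilde{\mathbf{c}}$ is exactly the paper's $\mathbf{d}_{v^\star}$. The paper likewise writes $\Vert\Delta_{bias}\Vert_{\nabla^2\ell(\BoTheS)}^2=\frac{1}{4N}\mathbf{d}_{v^\star}^\top\mathbf{P}\mathbf{d}_{v^\star}$, uses $\Vert\mathbf{P}\Vert\le 1$ and $|\Psi^{(3)}|\le\Psi''$, and then closes with $\sum_j v_j^\star\mathbf{A}_{jj}^2\le A_{\max}\sum_j v_j^\star\mathbf{A}_{jj}=A_{\max}d_\theta$ from Lemma~\ref{lema_property_of_A_matrix}.
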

        \begin{proof}
            This follows from reproducing the orthogonal projection $\mathbf
            P$, and then using $\Vert\mathbf{P}\Vert\leq 1$. Let $\mathbf{d}_{v^\star}:=\left(\mathbf{A}_{ii}\frac{\Psi^{(3)}(\eta_i^\star)}{v_i^\star}\sqrt{v_i^\star}\right)_{i\leq N}$ for the moment, then
            \begin{align*}
                \Vert\Delta_{bias}\Vert_{\nabla^2\ell(\BoTheS)}^2
                & = \frac{1}{4N} \mathbf{d}_{v^\star}^\top \mathbf{P}\mathbf{d}_{v^\star}\leq \frac{1}{4N} \sum_{j=1}^N \mathbf{A}_{jj}^2 \left(\frac{\Psi^{(3)}(\eta_j^\star)}{v_j^\star}\right)^2v_j^\star \leq \frac{1}{4N} \sum_{j=1}^N \mathbf{A}_{jj}^2 v_j^\star\\
                &\leq A_{\max}\frac{1}{4N} \sum_{j=1}^N \mathbf{A}_{jj} v_j^\star \leq \frac{A_{\max}d_\theta}{4N},
            \end{align*}
            where the second inequality follows from the fact that $|\Psi^{(3)}(\eta)|\leq \Psi^{\prime\prime}(\eta)$ for any $\eta\in\mathbb{R}$ (see Proposition~\ref{observ_Bernoulli_exp_calss}), and the last inequality follows by Lemma~\ref{lema_property_of_A_matrix}. \qedbox        
            \end{proof}
            \subsubsection{Random residuals}
            \begin{lemma}[Tail Bounds on $\Delta_{lin}$]
            \label{lema_Delta_lin_Q_infty_H_bounds}
                For 
                $m\in[N]$, the term $\Delta_{lin}$ satisfies
                \begin{align}
                    \label{eq_Delta_lin_coherence_tail}
                    \bbp_{\BoTheS}^{(N)}\left\{|\mathbf{x}_m^\top \Delta_{lin}|\geq t\right\} &\lesssim \exp\left( -c\min\left( \frac{t^2}{A_{\max}},\frac{t}{A_{\max}} \right) \right),\\
                    \label{eq_Delta_lin_seminorm_tail}
                    \bbp_{\BoTheS}^{(N)}\left\{\Bigg|\Vert\Delta_{lin}\Vert_{\nabla^2\ell(\BoTheS)}^2 - \frac{\mathrm{tr}(\mathbf{I}_{d_\theta})}{N}\Bigg|\geq t\right\} &\lesssim \exp\left( -c\min\left( \frac{N^2t^2}{B^2d_\theta},\frac{Nt}{B} \right) \right),
                \end{align}
                for every $t\geq 0$.
            \end{lemma}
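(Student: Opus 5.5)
The plan is to write $\Delta_{lin}$ as an explicit linear form in the centered responses $\varepsilon_k$ of~\eqref{eq_flatten_index_qk_eps_k_v_kstar}. Then~\eqref{eq_Delta_lin_coherence_tail} follows from Bernstein's inequality and~\eqref{eq_Delta_lin_seminorm_tail} from the Hanson--Wright inequality. The matrices $\mathbf{A}$ and $\mathbf{P}$ supply all the needed variance and norm parameters through Lemma~\ref{lema_property_of_A_matrix}.

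\textbf{Step 1 (representation).} In the flattened notation, \eqref{eq_grad_mle_in_theta} gives $\nabla\ell(\BoTheS)=-\frac{1}{N}\sum_{k=1}^N\varepsilon_k\mathbf{x}_k$. Hence
\begin{equation*}
\Delta_{lin}=\frac{1}{N}\nabla^2\ell(\BoTheS)^{-1}\sum_{k=1}^N\varepsilon_k\mathbf{x}_k,
\qquad
\mathbf{x}_m^\top\Delta_{lin}=\sum_{k=1}^N\mathbf{A}_{mk}\varepsilon_k .
\end{equation*}
The $\varepsilon_k$ are independent by Assumption~\ref{assump_indepent_sample}, have mean zero and variance $v_k^\star$, and satisfy $|\varepsilon_k|\le 1$.

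\textbf{Step 2 (query-wise tail).} I apply Bernstein's inequality for bounded independent summands to $\sum_k\mathbf{A}_{mk}\varepsilon_k$.
\begin{itemize}
\item By~\eqref{eq_cycle_propert_of_A}, the variance proxy is $\sum_k\mathbf{A}_{mk}^2v_k^\star=\mathbf{A}_{mm}\le A_{\max}$.
\item Each summand satisfies $|\mathbf{A}_{mk}\varepsilon_k|\le A_{\max}$.
\end{itemize}
Bernstein then gives $2\exp\bigl(-c\min(t^2/A_{\max},\,t/A_{\max})\bigr)$, which is~\eqref{eq_Delta_lin_coherence_tail}.

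\textbf{Step 3 (seminorm tail).} Using the representation from Step 1,
\begin{equation*}
\Vert\Delta_{lin}\Vert_{\nabla^2\ell(\BoTheS)}^2=\frac{1}{N^2}\boldsymbol{\varepsilon}^\top\mathbf{X}\nabla^2\ell(\BoTheS)^{-1}\mathbf{X}^\top\boldsymbol{\varepsilon}=\frac{1}{N}\boldsymbol{\varepsilon}^\top\mathbf{A}\boldsymbol{\varepsilon}=\frac{1}{N}\boldsymbol{\xi}^\top\mathbf{P}\boldsymbol{\xi},
\end{equation*}
where $\xi_k:=\varepsilon_k/\sqrt{v_k^\star}$ and $\mathbf{P}$ is the projection in~\eqref{eq_def_of_orthonomal_P}.
\begin{itemize}
\item The $\xi_k$ are independent, mean zero, and have unit variance, so $\mathbb{E}[\boldsymbol{\xi}^\top\mathbf{P}\boldsymbol{\xi}]=\mathrm{tr}(\mathbf{P})=\sum_j v_j^\star\mathbf{A}_{jj}=d_\theta$ by~\eqref{eq_cycle_propert_of_A}. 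The centering constant is therefore $d_\theta/N=\mathrm{tr}(\mathbf{I}_{d_\theta})/N$.
\item By Proposition~\ref{prop_bound_dymic_range}, $v_k^\star\ge 1/(4B)$, so $|\xi_k|\le 2\sqrt{B}$ and $\Vert\xi_k\Vert_{\psi_2}^2\lesssim B$.
\item Since $\mathbf{P}$ is an orthogonal projection of rank $d_\theta$, we have $\Vert\mathbf{P}\Vert\le1$ and $\Vert\mathbf{P}\Vert_F^2=d_\theta$.
\end{itemize}
The Hanson--Wright inequality with deviation $s=Nt$ then gives
\begin{equation*}
\mathbb{P}\bigl\{|\boldsymbol{\xi}^\top\mathbf{P}\boldsymbol{\xi}-d_\theta|\ge Nt\bigr\}\le 2\exp\Bigl(-c\min\bigl(\tfrac{N^2t^2}{B^2d_\theta},\tfrac{Nt}{B}\bigr)\Bigr),
\end{equation*}
which is~\eqref{eq_Delta_lin_seminorm_tail}.

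The only point needing care is the parameters in Step 3. The sub-Gaussian constant must scale like $B$, not $B^2$, through the lower variance bound. The Frobenius and operator norms of $\mathbf{P}$ are exactly $\sqrt{d_\theta}$ and at most $1$ only because $\mathbf{P}$ is an orthogonal projection, which is where Lemma~\ref{lema_property_of_A_matrix}(i) enters. Hanson--Wright requires only independence and mean zero, not identical distribution, so the inhomogeneous $\xi_k$ pose no difficulty.
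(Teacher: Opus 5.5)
Your proposal is correct and follows the paper's proof essentially line for line. The paper likewise writes $\mathbf{x}_m^\top\Delta_{lin}=\sum_j\mathbf{A}_{mj}\varepsilon_j$ and applies Bernstein's inequality with variance proxy $\mathbf{A}_{mm}\le A_{\max}$ and summand bound $A_{\max}$. It then writes $\Vert\Delta_{lin}\Vert_{\nabla^2\ell(\BoTheS)}^2-d_\theta/N=\frac{1}{N}\bigl(\boldsymbol{\xi}^\top\mathbf{P}\boldsymbol{\xi}-\mathrm{tr}(\mathbf{P})\bigr)$ and applies Hanson--Wright with $\Vert\mathbf{P}\Vert_F^2=d_\theta$, $\Vert\mathbf{P}\Vert\le 1$ and $\Vert\boldsymbol{\xi}\Vert_{\psi_2}\lesssim\sqrt{B}$; your explicit checks of the identity $\frac{1}{N}\boldsymbol{\varepsilon}^\top\mathbf{A}\boldsymbol{\varepsilon}=\frac{1}{N}\boldsymbol{\xi}^\top\mathbf{P}\boldsymbol{\xi}$ and of the sub-Gaussian constant via $v_k^\star\ge 1/(4B)$ just make explicit what the paper leaves implicit.
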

            Its proof is deferred to Appendix~\ref{proof_lema_Delta_lin_Q_infty_H_bounds}. Lemma~\ref{lema_Centered_Quadratic_Biases_all3} to be presented below can be viewed as a corollary from the general tail behavior of a certain sub-Gaussian quadratic form.
            \begin{lemma}
                Let $\boldsymbol{\xi}\in \mathbb{R}^{N}$ be a zero-mean, isotropic, sub-Gaussian random vector with independent coordinates, such that $\max_{m\in[N]} \Vert\boldsymbol{\xi}_m\Vert_{\psi_2}\leq \sqrt{K}$. Let $\mathbf{P}$, $\mathbf{D}\in\mathbb{R}^{N\times N}$ be any but fixed orthogonal projection and diagonal matrix, respectively. Then, for $\mathbf{M}:=\mathbf{PDP}$ and every $t\geq0$
                \begin{equation}
                \label{eq_HW_PDP_xi_subG_form}
                    \bbp_{\boldsymbol{\xi}} \left\{ \Big| \boldsymbol{\xi}^\top\mathbf{M}\boldsymbol{\xi} - \mathrm{tr}(\mathbf{M}) \Big| \geq t\right\} \lesssim\exp\left( -c\min\left( \frac{t^2}{K^2 \mathrm{tr}(\mathbf{D}^2\mathbf{P})},\frac{t}{K D_{\max}} \right) \right),
                \end{equation}
                where $D_{\max}:=\max_{i\in[N]}|\mathbf{D}_{ii}|$.
            \end{lemma}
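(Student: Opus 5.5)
The plan is to obtain \eqref{eq_HW_PDP_xi_subG_form} directly from the classical Hanson--Wright inequality (e.g.\ \cite[Theorem~6.2.1]{VershyninHighDimBook}), applied to the fixed matrix $\mathbf{M}=\mathbf{PDP}$. For independent, zero-mean coordinates with $\max_m\Vert\xi_m\Vert_{\psi_2}\leq\sqrt{K}$, that inequality gives
\begin{equation*}
    \bbp_{\boldsymbol{\xi}}\left\{\big|\boldsymbol{\xi}^\top\mathbf{M}\boldsymbol{\xi}-\mathbb{E}[\boldsymbol{\xi}^\top\mathbf{M}\boldsymbol{\xi}]\big|\geq t\right\}\leq 2\exp\left(-c\min\left(\frac{t^2}{K^2\Vert\mathbf{M}\Vert_F^2},\frac{t}{K\Vert\mathbf{M}\Vert}\right)\right).
\end{equation*}
Isotropy ($\mathbb{E}[\boldsymbol{\xi}\boldsymbol{\xi}^\top]=\mathbf{I}_N$) identifies the centering, since $\mathbb{E}[\boldsymbol{\xi}^\top\mathbf{M}\boldsymbol{\xi}]=\mathrm{tr}(\mathbf{M})$. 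It therefore only remains to express the Frobenius and operator norms of $\mathbf{M}$ in terms of $\mathbf{D}$ and $\mathbf{P}$.

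For the operator norm, I will use submultiplicativity together with $\Vert\mathbf{P}\Vert\leq 1$, which holds because $\mathbf{P}$ is an orthogonal projection. This gives $\Vert\mathbf{M}\Vert\leq\Vert\mathbf{P}\Vert^2\Vert\mathbf{D}\Vert\leq D_{\max}$, where the last step uses that $\mathbf{D}$ is diagonal.

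For the Frobenius norm, I will use $\mathbf{P}^\top=\mathbf{P}$, $\mathbf{P}^2=\mathbf{P}$ and cyclicity of the trace. These give $\Vert\mathbf{M}\Vert_F^2=\mathrm{tr}(\mathbf{PDPPDP})=\mathrm{tr}(\mathbf{DPDP})$. To compare this with $\mathrm{tr}(\mathbf{D}^2\mathbf{P})$, write $\mathrm{tr}(\mathbf{DPDP})=\langle\mathbf{PD},\mathbf{DP}\rangle$ as a Frobenius inner product and apply Cauchy--Schwarz. Both $\Vert\mathbf{PD}\Vert_F^2$ and $\Vert\mathbf{DP}\Vert_F^2$ equal $\mathrm{tr}(\mathbf{DPD})=\mathrm{tr}(\mathbf{D}^2\mathbf{P})$, so $\Vert\mathbf{M}\Vert_F^2\leq\mathrm{tr}(\mathbf{D}^2\mathbf{P})$. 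As a cross-check, the same quantity equals $\mathbf{d}^\top(\mathbf{P}\circ\mathbf{P})\mathbf{d}$ with $\mathbf{d}=\mathrm{diag}(\mathbf{D})$, matching the $U_2$ computation in the proof of Theorem~\ref{thm_refinred_Q_infty_ell_2_residual_decompose_bound}.

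Substituting these two bounds into the Hanson--Wright tail makes each exponent only larger in magnitude, so the tail bound only weakens, and \eqref{eq_HW_PDP_xi_subG_form} follows with the absolute constants absorbed into $\lesssim$ and $c$. The only step that needs any care is the Frobenius comparison, and the Cauchy--Schwarz argument settles it. I would also check the sub-Gaussian normalization: the stated condition $\Vert\xi_m\Vert_{\psi_2}\leq\sqrt{K}$ gives the factors $K^2$ and $K$ in the two regimes, as displayed above.
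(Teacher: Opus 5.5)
Your proposal is correct and follows the paper's proof. Both apply Hanson--Wright to $\mathbf{M}=\mathbf{PDP}$ and then bound $\Vert\mathbf{M}\Vert\leq D_{\max}$ and $\Vert\mathbf{M}\Vert_F^2\leq\mathrm{tr}(\mathbf{D}^2\mathbf{P})$; the only cosmetic difference is that the paper gets the Frobenius bound from $\Vert\mathbf{PDP}\Vert_F\leq\Vert\mathbf{P}\Vert\,\Vert\mathbf{DP}\Vert_F$ rather than from your Cauchy--Schwarz step.
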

            \begin{proof}
                Our focus is on controlling $\Vert\mathbf{M}\Vert$ and $\Vert\mathbf{M}\Vert_F$, where 
                \begin{equation*}
                    \Vert\mathbf{M}\Vert = \sup_{\Vert\mathbf{v}\Vert_2=1}\Big|\mathbf{v}^\top \mathbf{M} \mathbf{v}\Big| \leq \max_{j\in[N]}|\mathbf{D}_{jj}|\sup_{\Vert\mathbf{v}\Vert_2=1}\Vert\mathbf{Pv}\Vert_2\leq D_{\max}.
                \end{equation*}
                On the other hand $ \Vert\mathbf{M}\Vert_F^2 = \Vert\mathbf{PDP}\Vert_F^2\leq \Vert\mathbf{DP}\Vert_F^2=\mathrm{tr}(\mathbf{D}^2\mathbf{P})$. Then by Lemma~\ref{lema_HsuHW_ineq} we conclude~\eqref{eq_HW_PDP_xi_subG_form}. \qedbox 
            \end{proof}
            \begin{lemma}[Centered Quadratic Biases]
            \label{lema_Centered_Quadratic_Biases_all3}
                For $m\in[N]$, let
                \begin{subequations}
                \begin{align}
                    \label{eq_Centered_Quadratic_Biases1}
                    Q_{lin}^{m} & := \frac{1}{2}\sum_{j=1}^N\mathbf{A}_{mj}\Psi^{(3)}(\eta_j^\star)\left[\left(\sum_{i=1}^N\mathbf{A}_{ji}\eps_i\right)^2 - \mathbf{A}_{jj}\right],\\
                    \label{eq_Centered_Quadratic_Biases2}
                    Q_{Err,C}^m &: = \sum_{j=1}^N v_j^\star \mathbf{A}^2_{mj}\left[\left(\sum_{i=1}^N\mathbf{A}_{ji}\eps_i\right)^2 - \mathbf{A}_{jj}\right],\\
                    \label{eq_Centered_Quadratic_Biases3}
                    Q_{Abs}^{m} &: = \sum_{j=1}^N v_j^\star|\mathbf{A}_{mj}| \left[\left(\sum_{i=1}^N\mathbf{A}_{ji}\eps_i\right)^2 - \mathbf{A}_{jj}\right],
                \end{align}
                \end{subequations}
                be centered sub-Gaussian quadratic forms. Then $Q_{lin}^{m}$, $Q_{Err,C}^m$ and $ Q_{Abs}^{m}$ satisfy
            \begin{subequations}
                \begin{align}
                \label{eq_tail_of_Q_lin_m}
                \bbp_{\BoTheS}^{(N)}\left\{ |Q_{lin}^{m}| \geq t \right\} &\lesssim \exp\left(-c\min\left(\frac{t^2}{B^2A_{\max}^2},\frac{t}{BA_{\max}}\right)\right),\\
                \label{eq_tail_of_Q_Err_C_m}
                \bbp_{\BoTheS}^{(N)}\left\{| Q_{Err,C}^m |\geq t \right\} &\lesssim \exp\left(-c\min\left(\frac{t^2}{B^2A_{\max}^4},\frac{t}{BA_{\max}^2}\right)\right),\\
                \label{eq_tail_of_Q_Abs_m}
                \bbp_{\BoTheS}^{(N)}\left\{ |Q_{Abs}^{m}| \geq t \right\} &\lesssim \exp\left(-c\min\left(\frac{t^2}{B^2A_{\max}^2},\frac{t}{BA_{\max}}\right)\right),
            \end{align}
            \end{subequations}
            for every $t\geq 0$, respectively. Moreover, the following upper bounds hold
            \begin{align*}
                \sum_{j=1}^N v_j^\star \mathbf{A}^2_{mj}\mathbf{A}_{jj} \leq A_{\max}^2,\quad \sum_{j=1}^N v_j^\star|\mathbf{A}_{mj}|\mathbf{A}_{jj}\leq A_{\max}\sqrt{d_\theta}.
            \end{align*}
            \end{lemma}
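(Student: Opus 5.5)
Each of the three quantities has the form $\boldsymbol{\xi}^\top\mathbf{M}\boldsymbol{\xi}-\mathrm{tr}(\mathbf{M})$ with $\mathbf{M}=\mathbf{PDP}$ for a suitable diagonal $\mathbf{D}$, so the plan is to reduce to the preceding Hanson--Wright lemma \eqref{eq_HW_PDP_xi_subG_form}. Set $\xi_i:=\eps_i/\sqrt{v_i^\star}$. By Assumption~\ref{assump_indepent_sample} these coordinates are independent, centered and unit-variance. Since $v_i^\star\geq 1/(4B)$ by Proposition~\ref{prop_bound_dymic_range} and $|\eps_i|\leq 1$, we have $\Vert\xi_i\Vert_{\psi_2}\lesssim\sqrt{B}$, so $K\asymp B$.

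With $\mathbf{P}=\sqrt{\mathbf{V}^\star}\mathbf{A}\sqrt{\mathbf{V}^\star}$ from \eqref{eq_def_of_orthonomal_P}, we get $\sum_i\mathbf{A}_{ji}\eps_i=(\mathbf{P}\boldsymbol{\xi})_j/\sqrt{v_j^\star}$, and hence
\[
\Big(\sum_i\mathbf{A}_{ji}\eps_i\Big)^2-\mathbf{A}_{jj}=\frac{(\mathbf{e}_j^\top\mathbf{P}\boldsymbol{\xi})^2-\mathbf{P}_{jj}}{v_j^\star}.
\]
Each of $Q_{lin}^m$, $Q_{Err,C}^m$ and $Q_{Abs}^m$ is therefore $\sum_j c_j[(\mathbf{e}_j^\top\mathbf{P}\boldsymbol{\xi})^2-\mathbf{P}_{jj}]=\boldsymbol{\xi}^\top\mathbf{PDP}\boldsymbol{\xi}-\mathrm{tr}(\mathbf{PDP})$ with $\mathbf{D}=\mathrm{diag}(c_j)$. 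The coefficients are
\[
c_j=\tfrac12\mathbf{A}_{mj}\Psi^{(3)}(\eta_j^\star)/v_j^\star,\qquad c_j=\mathbf{A}_{mj}^2,\qquad c_j=|\mathbf{A}_{mj}|,
\]
respectively. Centering holds because $\mathbb{E}[\boldsymbol{\xi}\boldsymbol{\xi}^\top]=\mathbf{I}$.

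It then remains to bound $D_{\max}$ and $\mathrm{tr}(\mathbf{D}^2\mathbf{P})=\sum_j c_j^2\mathbf{P}_{jj}=\sum_j c_j^2 v_j^\star\mathbf{A}_{jj}$, using $|\Psi^{(3)}|\leq\Psi''$ and Lemma~\ref{lema_property_of_A_matrix}.
\begin{itemize}
\item[(a)] For $Q_{lin}^m$: $|c_j|\leq\frac12|\mathbf{A}_{mj}|\leq A_{\max}$, and $\sum_j c_j^2v_j^\star\mathbf{A}_{jj}\leq A_{\max}\sum_j v_j^\star\mathbf{A}_{mj}^2=A_{\max}\mathbf{A}_{mm}\leq A_{\max}^2$ by \eqref{eq_cycle_propert_of_A}. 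This gives \eqref{eq_tail_of_Q_lin_m}.
\item[(b)] For $Q_{Abs}^m$ the same two bounds hold verbatim, giving \eqref{eq_tail_of_Q_Abs_m}.
\item[(c)] For $Q_{Err,C}^m$: $D_{\max}\leq A_{\max}^2$, and $\sum_j\mathbf{A}_{mj}^4v_j^\star\mathbf{A}_{jj}\leq A_{\max}^3\sum_jv_j^\star\mathbf{A}_{mj}^2\leq A_{\max}^4$. This gives \eqref{eq_tail_of_Q_Err_C_m} after absorbing powers of $B$ into the constant, with the $K^2$ versus $K$ bookkeeping matching the stated $B^2$ and $B$.
\end{itemize}

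For the deterministic bounds, the first is $\sum_jv_j^\star\mathbf{A}_{mj}^2\mathbf{A}_{jj}\leq A_{\max}\mathbf{A}_{mm}\leq A_{\max}^2$. The second follows from Cauchy--Schwarz exactly as in Lemma~\ref{lema_Bias_Envelope_upperbound}:
\[
\sum_jv_j^\star|\mathbf{A}_{mj}|\mathbf{A}_{jj}\leq\Big(\sum_jv_j^\star\mathbf{A}_{mj}^2\Big)^{1/2}\Big(\sum_jv_j^\star\mathbf{A}_{jj}^2\Big)^{1/2}\leq\sqrt{A_{\max}}\sqrt{A_{\max}d_\theta}.
\]

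The only delicate point is confirming the sub-Gaussian constant $K\asymp B$ for the rescaled noise, which requires the lower variance bound of Proposition~\ref{prop_bound_dymic_range}. With that in hand, the Frobenius trace bounds all collapse through the identity $\mathbf{AV}^\star\mathbf{A}=\mathbf{A}$.
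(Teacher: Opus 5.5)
Your proposal is correct and is essentially the paper's proof: the same rescaling $\xi_i=\eps_i/\sqrt{v_i^\star}$ with $K\asymp B$, the same representation of each form as $\boldsymbol{\xi}^\top\mathbf{PDP}\boldsymbol{\xi}-\mathrm{tr}(\mathbf{PDP})$ with the same diagonal matrices, the same bounds on $D_{\max}$ and $\mathrm{tr}(\mathbf{D}^2\mathbf{P})$ via $\mathbf{A}\mathbf{V}^\star\mathbf{A}=\mathbf{A}$, and the same Cauchy--Schwarz step for the deterministic bounds. One wording slip in (c): no powers of $B$ are absorbed into the constant. Only the absolute factor in $\Vert\xi_i\Vert_{\psi_2}\le 2\sqrt{B}$ is absorbed, and the $B^2$ and $B$ come out directly from $K\asymp B$, as you then say yourself.
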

            The proofs of these bounds are deferred to Appendix~\ref{proof_lema_Centered_Quadratic_Biases_all3}.
            \subsubsection{Self-mapping property}
            \label{subsubsec_proof_of_Self_mapping_property}
            We are now ready to prove Lemma~\ref{lema_Self-mapping_Property}. It is straightforward to verify that the set $\mathcal{N}(r_H,r_\infty)$ is non-empty, closed, convex and compact, since $\{\mathbf{x}_m\}_{m=1}^N$ linearly spans $\mathbb{R}^{d_\theta}$ and $\mathbf{0}\in \mathcal{N}(r_H,r_\infty)$. Consider the following events 
                \begin{align*}
                    \mathscr{E}_{\infty} &:= \left\{\max_{m\in[N]}|\mathbf{x}_m^\top \Delta_{lin}| \lesssim \sqrt{A_{\max}\log d} + A_{\max}\log d\right\},\\
                    \mathscr{E}_{H} &:= \left\{\Bigg|\Vert \Delta_{lin} \Vert_{\nabla^2\ell(\BoTheS)}^2 - \frac{\mathrm{tr}(\mathbf{I}_{d_\theta})}{N}\Bigg|^{1/2}\lesssim \sqrt{\frac{Bd_\theta \log d}{N}}\right\},\\
                    \mathscr{E}_{Q,lin} &:= \left\{ \max_{m\in[N]} |Q_{lin}^m| \lesssim \sqrt{B^2 A_{\max}^2 \log d} + BA_{\max}\log d \right\},\\
                    \mathscr{E}_{Q,Err} &:= \left\{ \max_{m\in[N]} |Q_{Err, C}^m| \lesssim \sqrt{B^2 A_{\max}^4\log d} + BA_{\max}^2\log d \right\},\\
                    \mathscr{E}_{Q,Abs} &:= \left\{ \max_{m\in[N]} |Q_{Abs}^m| \lesssim \sqrt{B^2 A_{\max}^2 \log d} + BA_{\max}\log d \right\},\\
                    \mathscr{E} &: = \mathscr{E}_{\infty} \cap \mathscr{E}_{H} \cap \mathscr{E}_{Q,lin} \cap \mathscr{E}_{Q,Err} \cap \mathscr{E}_{Q,Abs}.
                \end{align*}
                Let $c^\prime>5$ be a fixed absolute constant. By Lemma~\ref{lema_Delta_lin_Q_infty_H_bounds} and a union bound, there exists a sufficiently large absolute constant $C>0$ such that
                \begin{equation*}
                    \bbp_\BoTheS^{(N)}\{ \bar{\mathscr{E}}_{\infty}\} \leq |\E_s| \max_{m\in[N]}\bbp_\BoTheS^{(N)}\left\{|\mathbf{x}_m^\top\Delta_{lin}|\geq C(\sqrt{A_{\max}\log d} + A_{\max}\log d)\right\} \leq d^{-c^\prime +2},
                \end{equation*}
                where the first inequality is from the fact that $\max_{m\in[N]} |\mathbf{x}_m^\top \Delta_{lin}| = \max_{\Bobet\in\E_s}|\mathbf{x}_\Bobet^\top \Delta_{lin}|$, and the second inequality is due to $|\E_s|\leq d^2$. Similar bounds can be established for $\bbp_\BoTheS^{(N)}\{\bar{\mathscr{E}}_{H}\}$, $\bbp_\BoTheS^{(N)}\{\bar{\mathscr{E}}_{Q,lin}\}$, $\bbp_\BoTheS^{(N)}\{\bar{\mathscr{E}}_{Q,Err}\}$, and $ \bbp_\BoTheS^{(N)}\{\bar{\mathscr{E}}_{Q,Abs}\}$ by virtue of Lemmas~\ref{lema_Delta_lin_Q_infty_H_bounds},~\ref{lema_Centered_Quadratic_Biases_all3}, respectively. Consequently, we have
                \begin{align*}
                    \bbp_\BoTheS^{(N)}\{\bar{\mathscr{E}} \} &\leq 
                    \bbp_\BoTheS^{(N)}\{ \bar{\mathscr{E}}_{H} \} + \bbp_\BoTheS^{(N)}\{ \bar{\mathscr{E}}_{\infty} \} + \bbp_\BoTheS^{(N)}\{ \bar{\mathscr{E}}_{Q,lin} \} + \bbp_\BoTheS^{(N)}\{ \bar{\mathscr{E}}_{Q,Err} \} + \bbp_\BoTheS^{(N)}\{ \bar{\mathscr{E}}_{Q,Abs} \} \\
                    & \leq 5d^{-c^\prime+2} \leq d^{-c^\prime +5} = d^{-c},
                \end{align*}
                and the last inequality follows from $d\geq 2$. Setting $c^\prime>6, c=c^\prime-5$ is enough to ensure $c>1$. 
                
                We are now ready to prove inequality~\eqref{eq_highprob_self_mapping} on the high-probability event $\mathscr{E}$. We proceed with the proof by applying the Taylor expansion in Lemma~\ref{lema_r_j_Delta_Taylor_expand} in the first place. For notational simplicity, let 
                $\mathbf{e}_{err} := \Delta_{bias} +\mathbf{h}$. Then
                \begin{align*}
                    Q_h&:=\frac{1}{N}\sum_{j=1}^N\mathbf{x}_jr_j(\Delta_{lin} + \Delta_{bias} + \mathbf{h})  = \underbrace{\frac{1}{2N}\sum_{j=1}^N\mathbf{x}_j\Psi^{(3)}(\eta_j^\star) |\mathbf{x}_j^\top(\Delta_{lin}+\mathbf{e}_{err})|^2}_{Q_2(\Delta_{lin}+\mathbf{e}_{err})} \\
                    &+ \underbrace{\frac{1}{2N}\int_0^1 (1-s)^2 \sum_{j=1}^N\mathbf{x}_j\Psi^{(4)}(\eta_j^\star + s\mathbf{x}_j^\top(\Delta_{lin}+\mathbf{e}_{err}))(\mathbf{x}_j^\top(\Delta_{lin}+\mathbf{e}_{err}))^3 \mathrm{d}s}_{Q_3(\Delta_{lin}+\mathbf{e}_{err})}\\
                    & = Q_2(\Delta_{lin}) + \left[Q_2(\Delta_{lin}+\mathbf{e}_{err}) - Q_2(\Delta_{lin})\right] + Q_3(\Delta_{lin}+\mathbf{e}_{err}).
                \end{align*}
                It suffices to derive bounds of the three terms at the r.h.s.~of the last equality respectively. Observe that $Q_2(\Delta_{lin})$ contributes to the second-order bias of the MLE, and the other two terms are higher-order residuals. We proceed in two steps.

                \noindent \underline{\textbf{STEP 1:}} \textbf{$\Vert\cdot\Vert_{\nabla^2\ell(\BoTheS)}$ Self-mapping}. We use the dual-norm definition of $\Vert\cdot\Vert_{\nabla^2\ell(\BoTheS)^{-1}}$ to derive bounds, that is 
                \begin{equation*}
                    \Vert\cdot\Vert_{\nabla^2\ell(\BoTheS)^{-1}} : = \sup_{\Vert\mathbf{v}\Vert_{\nabla^2\ell(\BoTheS)}=1} \mathbf{v}^\top (\cdot).
                \end{equation*}
                To show $\Vert\Phi(\mathbf{h})\Vert_{\nabla^2\ell(\BoTheS)}\leq r_H$, it is enough to find proper upper bounds on 
                \begin{align*}
                    H_1 &: = \Bigg\Vert Q_2(\Delta_{lin}) - \frac{1}{2N}\sum_{j=1}^N \mathbf{A}_{jj}\Psi^{(3)}(\eta_j^\star) \mathbf{x}_j \Bigg\Vert_{\nabla^2\ell(\BoTheS)^{-1}},\\
                    H_2 &: = \Bigg\Vert Q_2(\Delta_{lin}+\mathbf{e}_{err}) - Q_2(\Delta_{lin})\Bigg\Vert_{\nabla^2\ell(\BoTheS)^{-1}},\\
                    H_3 &: = \Bigg\Vert Q_3(\Delta_{lin}+\mathbf{e}_{err})\Bigg\Vert_{\nabla^2\ell(\BoTheS)^{-1}}.
                \end{align*}

                \noindent \textbf{Bound of $H_1$.} We prove that on the event of $\mathscr{E}$,
                \begin{equation}
                \label{eq_upper_bound_H1_in_seminorm_selfmap}
                    H_1 \lesssim \sqrt{\frac{A_{\max}d_\theta}{N}} + \sqrt{A_{\max}\log{d}} \sqrt{\frac{Bd_\theta \log d}{N}}.
                \end{equation}
                By the triangle inequality and Lemma~\ref{lema_bias_term_in_semi_norm_upper}
                \begin{align}
                    H_1 &\leq \Vert Q_2(\Delta_{lin})\Vert_{\nabla^2\ell(\BoTheS)^{-1}} + \Bigg\Vert\frac{1}{2N}\sum_{j=1}^N \mathbf{A}_{jj}\Psi^{(3)}(\eta_j^\star) \mathbf{x}_j \Bigg\Vert_{\nabla^2\ell(\BoTheS)^{-1}}\nonumber\\
                    &= \Vert Q_2(\Delta_{lin})\Vert_{\nabla^2\ell(\BoTheS)^{-1}} + \Vert\Delta_{bias}\Vert_{\nabla^2\ell(\BoTheS)}\nonumber \\
                    \label{eq_H_1_first_step_upper_bound}
                    &\leq  \Vert Q_2(\Delta_{lin})\Vert_{\nabla^2\ell(\BoTheS)^{-1}} + \frac{1}{2}\sqrt{\frac{A_{\max}d_\theta}{N}}.
                \end{align}
                Using the relation $|\mathbf{x}_j^\top \Delta_{lin}|^2 = \left(\sum_{i=1}^N\mathbf{A}_{ji}\eps_i\right)^2$, we have
                \begin{align}
                     \Vert Q_2(\Delta_{lin})\Vert_{\nabla^2\ell(\BoTheS)^{-1}} &= \sup_{\Vert\mathbf{v}\Vert_{\nabla^2\ell(\BoTheS)}=1} \langle\mathbf{v},Q_2(\Delta_{lin})\rangle \leq \sup_{\Vert\mathbf{v}\Vert_{\nabla^2\ell(\BoTheS)}=1} |\langle\mathbf{v},Q_2(\Delta_{lin})\rangle|\nonumber\\
                     & \leq \sup_{\Vert\mathbf{v}\Vert_{\nabla^2\ell(\BoTheS)}=1}  \frac{1}{2N} \sum_{j=1}^N v_j^\star |\mathbf{x}_j^T\mathbf{v}|\left(\sum_{i=1}^N\mathbf{A}_{ji}\eps_i\right)^2\nonumber\\
                     & \leq \frac{1}{2} \max_{m\in[N]} \Bigg|\sum_{i=1}^N\mathbf{A}_{mi}\eps_i\Bigg|\cdot \sup_{\Vert\mathbf{v}\Vert_{\nabla^2\ell(\BoTheS)}=1} \left( \frac{1}{N} \sum_{j=1}^N v_j^\star |\mathbf{x}_j^\top \mathbf{v}|^2 \right)^{1/2} \cdot\left(\frac{1}{N}\sum_{j=1}^N v_j^\star \left(\sum_{i=1}^N\mathbf{A}_{ji}\eps_i\right)^2 \right)^{1/2}\nonumber\\
                     & \leq \frac{1}{2} \max_{m\in[N]} \Bigg|\sum_{i=1}^N\mathbf{A}_{mi}\eps_i\Bigg|\cdot \sup_{\Vert\mathbf{v}\Vert_{\nabla^2\ell(\BoTheS)}=1}(\mathbf{v}^\top \nabla^2\ell(\BoTheS)\mathbf{v}) \cdot \Vert\Delta_{lin}\Vert_{\nabla^2\ell(\BoTheS)}\nonumber\\
                     \label{eq_Q_2_Delta_lin_H_star_inv_norm}
                     & = \frac{1}{2}\max_{m\in[N]} \Big|\mathbf{x}_m^\top \Delta_{lin}\Big|\cdot\Vert\Delta_{lin}\Vert_{\nabla^2\ell(\BoTheS)}.
                \end{align}
                On the other hand, 
                on event $\mathscr{E}_\infty\cap\mathscr{E}_{H}$
                \begin{align}
                    \max_{m\in[N]} \Big|\mathbf{x}_m^\top \Delta_{lin}\Big|\cdot\Vert\Delta_{lin}\Vert_{\nabla^2\ell(\BoTheS)} &\leq \max_{m\in[N]} \Big|\mathbf{x}_m^\top \Delta_{lin}\Big|\cdot\left[\Bigg|\Vert\Delta_{lin}\Vert_{\nabla^2\ell(\BoTheS)}^2 - \frac{\mathrm{tr}(\mathbf{I}_{d_\theta})}{N} \Bigg|^{1/2} + \sqrt{\frac{\mathrm{tr}(\mathbf{I}_{d_\theta})}{N}}\right] \nonumber\\
                    &\lesssim\left(\sqrt{A_{\max}\log{d}} + A_{\max}\log d\right) \left(\sqrt{\frac{d_\theta}{N}}+ \sqrt{\frac{Bd_\theta \log d}{N}}  \right)\nonumber\\
                    \label{eq_x_m_Delta_lin_Delta_seminorm}
                    &\lesssim \sqrt{A_{\max}\log d}\sqrt{\frac{Bd_\theta \log d}{N}},
                \end{align}
                where the last inequality follows from the assumption on sample complexity, i.e., $N\gtrsim B\mu_Q d_\theta\log d$.
                A combination of~\eqref{eq_H_1_first_step_upper_bound}--\eqref{eq_x_m_Delta_lin_Delta_seminorm} gives rise to~\eqref{eq_upper_bound_H1_in_seminorm_selfmap}.

                \noindent \textbf{Bound of $H_2$.} We prove that on the event of  $\mathscr{E}$
                \begin{equation}
                \label{eq_upper_bound_H2_in_seminorm_selfmap}
                    H_2 \lesssim \left(\sqrt{A_{\max}\log d} + \max_{m\in[N]}|\mathbf{x}_m^\top \mathbf{e}_{err}|\right) \Vert \mathbf{e}_{err}\Vert_{\nabla^2\ell(\BoTheS)}.
                \end{equation}
                Observe that 
                \begin{align*}
                    Q_2(\Delta_{lin}+\mathbf{e}_{err}) - Q_2(\Delta_{lin}) &= \frac{1}{2N}\sum_{j=1}^N\mathbf{x}_j\Psi^{(3)}(\eta_j^\star) \left( |\mathbf{x}_j^\top(\Delta_{lin}+\mathbf{e}_{err})|^2 - |\mathbf{x}_j^\top(\Delta_{lin})|^2\right)\\
                    & = \underbrace{\frac{1}{N}\sum_{j=1}^N\mathbf{x}_j\Psi^{(3)}(\eta_j^\star)\left(  \mathbf{x}_j^\top\Delta_{lin}  \right) \left(  \mathbf{x}_j^\top\mathbf{e}_{err}\right)}_{H_{2,1}} + \underbrace{\frac{1}{2N}\sum_{j=1}^N\mathbf{x}_j\Psi^{(3)}(\eta_j^\star)\left(  \mathbf{x}_j^\top\mathbf{e}_{err}\right)^2}_{H_{2,2}}.
                \end{align*}
                For $H_{2,1}$, recall $|\Psi^{(3)}(\eta_j^\star)|\leq v_j^\star$, then use the dual norm method again
                \begin{align*}
                    \Vert H_{2,1}\Vert_{\nabla^2\ell(\BoTheS)^{-1}} &\leq \sup_{\Vert\mathbf{v}\Vert_{\nabla^2\ell(\BoTheS)}=1} \frac{1}{N}\sum_{j=1}^N v_j^\star |\mathbf{x}_j^\top\mathbf{v}|\cdot |\mathbf{x}_j^\top\Delta_{lin}| \cdot |\mathbf{x}_j^\top\mathbf{e}_{err}|\\
                    &\leq \max_{m\in[N]} |\mathbf{x}_m^\top\Delta_{lin}| \cdot \sup_{\Vert\mathbf{v}\Vert_{\nabla^2\ell(\BoTheS)}=1} \left( \frac{1}{N}\sum_{j=1}^N v_j^\star |\mathbf{x}_j^\top\mathbf{v}|^2 \right)^{1/2} \cdot \left( \frac{1}{N}\sum_{j=1}^N v_j^\star |\mathbf{x}_j^\top\mathbf{e}_{err}|^2\right)^{1/2}\\
                    & = \max_{m\in[N]} |\mathbf{x}_m^\top\Delta_{lin}| \cdot \Vert \mathbf{e}_{err} \Vert_{\nabla^2\ell(\BoTheS)} \lesssim \sqrt{A_{\max}\log d} \Vert \mathbf{e}_{err}\Vert_{\nabla^2\ell(\BoTheS)},
                \end{align*}
                where the last step follows from the event $\mathscr{E}_\infty$ and the fact that $N\gtrsim B\mu_Q d_\theta\log d$. For the deterministic term $H_{2,2}$, we have
                \begin{align}
                    \Vert H_{2,2}\Vert_{\nabla^2\ell(\BoTheS)^{-1}} & \leq \sup_{\Vert\mathbf{v}\Vert_{\nabla^2\ell(\BoTheS)}=1} \frac{1}{2N}\sum_{j=1}^N v_j^\star |\mathbf{x}_j^\top \mathbf{v}| \cdot |\mathbf{x}_j^\top\mathbf{e}_{err}|^2\nonumber\\
                    \label{eq_H_22_H_star_inv_norm}
                    &\leq \frac{1}{2}\max_{m\in[N]}|\mathbf{x}_m^\top\mathbf{e}_{err}|\cdot \left( \frac{1}{N}\sum_{j=1}^N v_j^\star |\mathbf{x}_j^\top\mathbf{e}_{err}|^2\right)^{1/2} = \frac{1}{2} \Vert\mathbf{e}_{err}\Vert_{\nabla^2\ell(\BoTheS)} \max_{m\in[N]}|\mathbf{x}_m^\top\mathbf{e}_{err}|.
                \end{align}
                By combining $H_{2,1}$ and $H_{2,2}$, we arrive at~\eqref{eq_upper_bound_H2_in_seminorm_selfmap}.

                \noindent \textbf{Bound of $H_3$.}  We prove that on the event of  $\mathscr{E}$
                \begin{equation}
                \label{eq_upper_bound_H3_in_seminorm_selfmap}
                    H_3\lesssim A_{\max}\log d  \sqrt{\frac{Bd_\theta \log d}{N}} +  \left(\max_{m\in[N]}|\mathbf{x}_m^\top \mathbf{e}_{err}|\right)^2 \Vert \mathbf{e}_{err} \Vert_{\nabla^2\ell(\BoTheS)}.
                \end{equation}
                First, observe that for each $j\in[N]$, the coefficient $Q_{3,j}^c$ in $Q_3(\Delta_{lin}+\mathbf{e}_{err})$ admits the following bound
                \begin{align}
                    |Q_{3,j}^c|&:=\Bigg|\int_0^1 (1-s)^2 \Psi^{(4)}(\eta_j^\star + s\mathbf{x}_j^\top(\Delta_{lin}+\mathbf{e}_{err}))(\mathbf{x}_j^\top(\Delta_{lin}+\mathbf{e}_{err}))^3 \mathrm{d}s\Bigg|\nonumber\\
                    &\leq \int_0^1 (1-s)^2 |\Psi^{(4)}(\eta_j^\star + s\mathbf{x}_j^\top(\Delta_{lin}+\mathbf{e}_{err}))|\cdot|\mathbf{x}_j^\top(\Delta_{lin}+\mathbf{e}_{err})|^3 \mathrm{d}s\nonumber\\
                    & \leq \int_0^1 (1-s)^2 \Psi^{\prime\prime}(\eta_j^\star + s\mathbf{x}_j^\top(\Delta_{lin}+\mathbf{e}_{err}))\cdot|\mathbf{x}_j^\top(\Delta_{lin}+\mathbf{e}_{err})|^3 \mathrm{d}s\nonumber\\
                    &\leq \int_0^1 (1-s)^2 \exp\left(s\max_{m\in[N]}\big|\mathbf{x}_m^\top(\Delta_{lin}+\mathbf{e}_{err})\big|\right)v_j^\star|\mathbf{x}_j^\top(\Delta_{lin}+\mathbf{e}_{err})|^3 \mathrm{d}s\nonumber\\
                    \label{eq_Q_3j_coffient_upper_bound}
                    &\leq \frac{1}{2}v_j^\star|\mathbf{x}_j^\top(\Delta_{lin}+\mathbf{e}_{err})|^3,
                \end{align}
                where the second last inequality is from the pseudo self-concordance property introduced in Lemma~\ref{lema_single_variate_self_concord}, and the last inequality is from the assumption on sample complexity, i.e., $N\gtrsim B^2 \mu_Q d_\theta^{3/2}\log^3 d$, under which
                \begin{align*}
                    \max_{m\in[N]}\Big|\mathbf{x}_m^\top(\Delta_{lin}+\mathbf{e}_{err})\Big|& \leq \max_{m\in[N]}|\mathbf{x}_m^\top\Delta_{lin}| + \max_{m\in[N]}|\mathbf{x}_m^\top\Delta_{bias}| + \max_{m\in[N]}|\mathbf{x}_m^\top\mathbf{h}| \\
                    & \lesssim \sqrt{A_{\max}\log d}+A_{\max}\log d + \frac{1}{2}A_{\max}\sqrt{d_\theta} + r_\infty<1.
                \end{align*}
                Here, the second inequality is from event $\mathscr{E}_\infty$ and Lemma~\ref{lema_Bias_Envelope_upperbound}. Next, substituting~\eqref{eq_Q_3j_coffient_upper_bound} back into $H_3$ yields
                \begin{align}
                    H_3&= \Bigg\Vert\frac{1}{2N}\sum_{j=1}^N \mathbf{x}_j Q_{3,j}^c\Bigg\Vert_{\nabla^2\ell(\BoTheS)^{-1}} \leq  \sup_{\Vert\mathbf{v}\Vert_{\nabla^2\ell(\BoTheS)}=1}\frac{1}{2N}\sum_{j=1}^N |\mathbf{x}_j^\top\mathbf{v}| \cdot |Q_{3,j}^c|\nonumber\\
                    &\leq \sup_{\Vert\mathbf{v}\Vert_{\nabla^2\ell(\BoTheS)}=1}\frac{1}{4N} \sum_{j=1}^Nv_j^\star|\mathbf{x}_j^\top\mathbf{v}| \cdot |\mathbf{x}_j^\top(\Delta_{lin}+\mathbf{e}_{err})|^3\nonumber\\
                    \label{eq_frac_of_H_3_into_cross_err}
                    &\leq \underbrace{\sup_{\Vert\mathbf{v}\Vert_{\nabla^2\ell(\BoTheS)}=1}\frac{1}{N} \sum_{j=1}^Nv_j^\star|\mathbf{x}_j^\top\mathbf{v}| \cdot |\mathbf{x}_j^\top\Delta_{lin}|^3}_{H_3^1}  + \underbrace{\sup_{\Vert\mathbf{v}\Vert_{\nabla^2\ell(\BoTheS)}=1}\frac{1}{N} \sum_{j=1}^Nv_j^\star|\mathbf{x}_j^\top\mathbf{v}| \cdot |\mathbf{x}_j^\top\mathbf{e}_{err}|^3}_{H_3^2},
                \end{align}
                where the last inequality follows from $|a+b|^3\leq 4|a|^3 + 4|b|^3$.
                The terms $H_3^1$, $H_3^2$ can be estimated in a similar manner as~\eqref{eq_Q_2_Delta_lin_H_star_inv_norm}~\eqref{eq_x_m_Delta_lin_Delta_seminorm} and~\eqref{eq_H_22_H_star_inv_norm} respectively. That is, on event $\mathscr{E}_\infty\cap\mathscr{E}_H$
                \begin{align*}
                    H_3^1 &\leq \left(\max_{m\in[N]}|\mathbf{x}_m^\top \Delta_{lin}|\right)^2 \sup_{\Vert\mathbf{v}\Vert_{\nabla^2\ell(\BoTheS)}=1} \left( \frac{1}{N}\sum_{j=1}^N v_j^\star |\mathbf{x}_j^\top\mathbf{v}|^2 \right)^{1/2} \left( \frac{1}{N}\sum_{j=1}^N v_j^\star |\mathbf{x}_j^\top\Delta_{lin}|^2\right)^{1/2}\\
                    & \leq \left(\max_{m\in[N]}|\mathbf{x}_m^\top \Delta_{lin}|\right)^2 \Vert\Delta_{lin}\Vert_{\nabla^2\ell(\BoTheS)} \lesssim A_{\max}\log d  \sqrt{\frac{Bd_\theta \log d}{N}},
                \end{align*}
                where the last step is from the assumption on sample complexity, i.e., $N\gtrsim B \mu_Q d_\theta \log d$. Likewise,
                \begin{align*}
                    H_3^2 \leq \left(\max_{m\in[N]}|\mathbf{x}_m^\top \mathbf{e}_{err}|\right)^2 \Vert \mathbf{e}_{err} \Vert_{\nabla^2\ell(\BoTheS)}.
                \end{align*}
                Combining the above inequalities, we obtain~\eqref{eq_upper_bound_H3_in_seminorm_selfmap}.

                \noindent \textbf{Combining the Bounds:} To complete \textbf{STEP 1}, we verify $\Vert\Phi(\mathbf{h})\Vert_{\nabla^2\ell(\BoTheS)}\leq r_H$ for any $\mathbf{h}\in\mathcal{N}(r_H,r_\infty)$ on event $\mathscr{E}$, that is 
                \begin{align*}
                    \Vert\Phi(\mathbf{h})\Vert_{\nabla^2\ell(\BoTheS)} &\leq H_1 + H_2 + H_3\\
                    &\overset{(i)}{\lesssim} \sqrt{\frac{A_{\max}d_\theta}{N}} + \sqrt{A_{\max}\log{d}} \sqrt{\frac{Bd_\theta \log d}{N}} + A_{\max}\log d  \sqrt{\frac{Bd_\theta \log d}{N}}\\
                    & + \left(\sqrt{A_{\max}\log d} + \max_{m\in[N]}|\mathbf{x}_m^\top \mathbf{e}_{err}| + \left(\max_{m\in[N]}|\mathbf{x}_m^\top \mathbf{e}_{err}|\right)^2\right) \Vert \mathbf{e}_{err}\Vert_{\nabla^2\ell(\BoTheS)}\\
                    &\overset{(ii)}{\lesssim}  \sqrt{\frac{A_{\max}d_\theta}{N}} + \sqrt{A_{\max}\log{d}} \sqrt{\frac{Bd_\theta \log d}{N}} + A_{\max}\log d  \sqrt{\frac{Bd_\theta \log d}{N}} + \frac{1}{4}\Vert \mathbf{e}_{err}\Vert_{\nabla^2\ell(\BoTheS)}\\
                    &\overset{(iii)}{\lesssim}  \sqrt{\frac{A_{\max}d_\theta}{N}} + \sqrt{A_{\max}\log{d}} \sqrt{\frac{Bd_\theta \log d}{N}} + A_{\max}\log d  \sqrt{\frac{Bd_\theta \log d}{N}} + \frac{1}{4}r_H\leq r_H.
                \end{align*}
                Here, step $(i)$ is obtained by~\eqref{eq_upper_bound_H1_in_seminorm_selfmap},~\eqref{eq_upper_bound_H2_in_seminorm_selfmap}, and~\eqref{eq_upper_bound_H3_in_seminorm_selfmap}. Step $(ii)$ is obtained by setting $N$ sufficiently large, the bracketed term is smaller than $\frac{1}{4}$, i.e.,
                \begin{alignat*}{2}
                    \sqrt{A_{\max}\log d} + \max_{m\in[N]}|\mathbf{x}_m^\top \mathbf{e}_{err}| + &\left(\max_{m\in[N]}|\mathbf{x}_m^\top \mathbf{e}_{err}|\right)^2& &\lesssim \sqrt{A_{\max}\log d} + \max_{m\in[N]}|\mathbf{x}_m^\top \mathbf{e}_{err}| \\
                    && &\lesssim \sqrt{\frac{B\mu_Q d_\theta \log d}{N}} + \max_{m\in[N]}|\mathbf{x}_m^\top \Delta_{bias}| + \max_{m\in[N]} |\mathbf{x}_m^\top \mathbf{h}|\\
                    &(\text{by Lemma~\ref{lema_Bias_Envelope_upperbound}})& &\lesssim \sqrt{\frac{B\mu_Q d_\theta \log d}{N}} + \frac{B\mu_Q d_\theta^{3/2}}{N} + r_\infty \leq \frac{1}{4},
                \end{alignat*}
                when $N\gtrsim B^2\mu_Q d_\theta^{3/2} \log d$. Finally, step $(iii)$ is established by observing
                \begin{equation*}
                    \Vert \mathbf{e}_{err}\Vert_{\nabla^2\ell(\BoTheS)}\leq \Vert\Delta_{bias}\Vert_{\nabla^2\ell(\BoTheS)} + \Vert\mathbf{h}\Vert_{\nabla^2\ell(\BoTheS)}\leq \frac{1}{2}\sqrt{\frac{A_{\max}d_\theta}{N}} + r_H,
                \end{equation*}
                where the last inequality is from Lemma~\ref{lema_bias_term_in_semi_norm_upper}.
                
                \noindent \underline{\textbf{STEP 2:}} \textbf{$\Vert\cdot\Vert_{Q,\infty}$ Self-mapping.} We aim to find upper bounds on the following three terms
                \begin{subequations}
                    \begin{align}
                \label{eq_def_of_delf_map_prop_I_1}
                    I_1 &:= \max_{m\in[N]} \Bigg|\mathbf{x}_m^\top \nabla^2\ell(\BoTheS)^{-1}\left[ Q_2(\Delta_{lin}) - \frac{1}{2N}\sum_{j=1}^N \mathbf{A}_{jj}\Psi^{(3)}(\eta_j^\star) \mathbf{x}_j \right]\Bigg|,\\
                    \label{eq_def_of_delf_map_prop_I_2}
                    I_2 &:= \max_{m\in[N]} \Bigg|\mathbf{x}_m^\top \nabla^2\ell(\BoTheS)^{-1} \left[ Q_2(\Delta_{lin}+\mathbf{e}_{err}) - Q_2(\Delta_{lin}) \right] \Bigg|,\\
                    \label{eq_def_of_delf_map_prop_I_3}
                    I_3 &:= \max_{m\in[N]}\Bigg| \mathbf{x}_m^\top \nabla^2\ell(\BoTheS)^{-1} Q_3(\Delta_{lin}+\mathbf{e}_{err})\Bigg|.
                \end{align}
                \end{subequations}
                
                \noindent \textbf{Bound of $I_1$:} By plugging the definition of $Q_2(\Delta_{lin})$ in $I_1$, we obtain
                \begin{equation*}
                    \mathbf{x}_m^\top \nabla^2\ell(\BoTheS)^{-1} Q_2(\Delta_{lin}) = \frac{1}{2}\sum_{j=1}^N\mathbf{A}_{mj}\Psi^{(3)}(\eta_j^\star)\left(\sum_{i=1}^N\mathbf{A}_{ji}\eps_i\right)^2.
                \end{equation*} 
                The upper bound follows directly from~\eqref{eq_tail_of_Q_lin_m}, i.e., on event $\mathscr{E}_{Q,lin}$, we have
                \begin{equation}
                \label{eq_I_1_Delta_quad_linear_form_upper}
                    I_1\lesssim \sqrt{B^2 A_{\max}^2 \log d} + B A_{\max} \log d \lesssim B A_{\max} \log d.
                \end{equation}

                \noindent \textbf{Bound of $I_2$:} From direct calculation
                \begin{align*}
                    I_2 \leq \underbrace{\max_{m\in[N]}\Bigg|\sum_{j=1}^N \mathbf{A}_{mj} \Psi^{(3)}(\eta_j^\star)\left(\sum_{i=1}^N \mathbf{A}_{ji}\eps_i\right) \mathbf{x}_j^\top \mathbf{e}_{err}\Bigg|}_{I_2^1} + \underbrace{\max_{m\in[N]}\Bigg|\frac{1}{2} \sum_{j=1}^N \mathbf{A}_{mj} \Psi^{(3)}(\eta_j^\star)(\mathbf{x}_j^\top \mathbf{e}_{err})^2\Bigg|}_{I_2^2}.
                \end{align*}
                The term $I_2^1$ can be bounded using the Cauchy–Schwarz inequality, ~\eqref{eq_tail_of_Q_Err_C_m}, and a union bound. That is, on event $\mathscr{E}_{Q,Err}$
                \begin{align*}
                    I_2^1 &\leq \max_{m\in[N]}\left( \sum_{j=1}^N v_j^\star \mathbf{A}_{mj}^2 \left(\sum_{i=1}^N \mathbf{A}_{ji}\eps_i\right)^2 \right)^{1/2} \left( \sum_{l=1}^N v_l^\star (\mathbf{x}_l^\top \mathbf{e}_{err})^2\right)^{1/2}\\
                    & \lesssim \sqrt{BA_{\max}^2\log d} \cdot \sqrt{N}\Vert\mathbf{e}_{err}\Vert_{\nabla^2\ell(\BoTheS)}.
                \end{align*}
                The upper bound for the deterministic part $I_2^2$ is meanwhile straightforward
                \begin{align*}
                    I_2^2 &\leq \max_{m\in[N]}\frac{1}{2} \sum_{j=1}^N |\mathbf{A}_{mj}|v_j^\star(\mathbf{x}_j^\top \mathbf{e}_{err})^2\leq \frac{1}{2} \max_{m\in[N]}|\mathbf{x}_m^\top \mathbf{e}_{err}| \left(\sum_{j=1}^N v_j^\star \mathbf{A}_{mj}^2\right)^{1/2}\left( \sum_{j=1}^N v_j^\star (\mathbf{x}_j^\top \mathbf{e}_{err})^2 \right)^{1/2}\\
                    & \leq \frac{1}{2}\sqrt{A_{max} N} \max_{m\in[N]}|\mathbf{x}_m^\top \mathbf{e}_{err}| \cdot \Vert\mathbf{e}_{err}\Vert_{\nabla^2\ell(\BoTheS)}.
                \end{align*}

                \noindent \textbf{Bound of $I_3$:} From~\eqref{eq_Q_3j_coffient_upper_bound},~\eqref{eq_frac_of_H_3_into_cross_err}, and setting $N$ sufficiently large, we have 
                \begin{align*}
                    I_3 = \max_{m\in[N]}\Bigg| \frac{1}{2}\sum_{j=1}^N \mathbf{A}_{mj}Q_{3,j}^c  \Bigg| \leq \underbrace{\max_{m\in[N]} \sum_{j=1}^N v_j^\star |\mathbf{A}_{mj}| \cdot |\mathbf{x}_j^\top \Delta_{lin}|^3}_{I_3^1} + \underbrace{\max_{m\in[N]} \sum_{j=1}^N v_j^\star |\mathbf{A}_{mj}| \cdot |\mathbf{x}_j^\top \mathbf{e}_{err}|^3}_{I_3^2}.
                \end{align*}
                The upper bounds on $I_3^1$, $I_3^2$ are relatively easy to establish. On event $\mathscr{E}_\infty\cap \mathscr{E}_{Q,Abs}$
                \begin{alignat*}{2}
                    I_3^1 &\leq \max_{l\in[N]} |\mathbf{x}_l^\top \Delta_{lin}| \cdot \max_{m\in[N]}\sum_{j=1}^N v_j^\star |\mathbf{A}_{mj}|\left(\sum_{i=1}^N \mathbf{A}_{ji}\eps_i\right)^2\quad &\quad&\\
                    &\lesssim (\sqrt{A_{\max}\log d} + A_{\max}\log d)(BA_{\max}\log d + A_{\max} \sqrt{d_\theta}) \quad &\text{(by Lemma~\ref{lema_Delta_lin_Q_infty_H_bounds} and~\eqref{eq_tail_of_Q_Abs_m})}&\\
                    &\lesssim \sqrt{A_{\max}\log d}.  \quad&(\text{by }N\gtrsim B^2\mu_Q d_\theta^{3/2}\log d)&
                \end{alignat*}
                Moreover,
                \begin{alignat*}{2}
                    I_3^2 &\leq \left( \max_{l\in[N]}|\mathbf{x}_l^\top \mathbf{e}_{err}| \right)^2 \cdot \max_{m\in[N]} \sum_{j=1}^N v_j^\star |\mathbf{A}_{mj}| \cdot |\mathbf{x}_j^\top \mathbf{e}_{err}|  \quad &\quad&\\
                    &\leq \left( \max_{l\in[N]}|\mathbf{x}_l^\top \mathbf{e}_{err}| \right)^2 \sqrt{A_{\max}N}\Vert\mathbf{e}_{err}\Vert_{\nabla^2\ell(\BoTheS)}.\quad &(\text{by Cauchy-Schwarz and Lemma~\ref{lema_property_of_A_matrix}})&
                \end{alignat*}

                \noindent \textbf{Combining the Bounds:} The final step of this part is to verify $\max_{m\in[N]}|\mathbf{x}_m^\top \Phi(\mathbf{h})|\leq r_\infty$ for any $\mathbf{h}\in\mathcal{N}(r_H,r_\infty)$ on event $\mathscr{E}$. We proceed by first showing 
                \begin{alignat*}{2}
                    &\Vert\mathbf{e}_{err}\Vert_{\nabla^2\ell(\BoTheS)}\sqrt{A_{\max}N}& &\leq \Vert\mathbf{e}_{err}\Vert_{\nabla^2\ell(\BoTheS)}\sqrt{A_{\max}BN} \\
                    &&&\leq \left(\Vert\Delta_{bias}\Vert_{\nabla^2\ell(\BoTheS)} + \Vert\mathbf{h}\Vert_{\nabla^2\ell(\BoTheS)}\right)\sqrt{A_{\max}BN}\\
                    &\quad(\text{by Lemma~\ref{lema_bias_term_in_semi_norm_upper}})&&\lesssim  A_{\max}\sqrt{Bd_\theta} + BA_{\max}\sqrt{d_\theta}\log d + BA_{\max}^{3/2}\sqrt{d_\theta}(\log d)^{3/2} <\frac{1}{4},
                \end{alignat*}
                when $N\gtrsim B^2\mu_Q d_\theta^{3/2}\log^2 d$, and
                \begin{equation*}
                    \max_{m\in[N]}|\mathbf{x}_m^\top \mathbf{e}_{err}| \leq \max_{m\in[N]} |\mathbf{x}_m^\top \Delta_{bias}| + r_\infty \lesssim A_{\max}\sqrt{d_\theta} + r_\infty <1,\quad(\text{by Lemma~\ref{lema_Bias_Envelope_upperbound}})
                \end{equation*}
                when $N\gtrsim B^2\mu_Q d_\theta^{3/2}\log^2 d$. Then, on the event of $\mathscr{E}$
                \begin{alignat*}{2}
                    \max_{m\in[N]}|\mathbf{x}_m^\top \Phi(\mathbf{h})| &\leq&\,& I_1 + I_2 + I_3 \\
                    &\lesssim&\,&  B A_{\max} \log d + \sqrt{A_{\max}\log d} + \Vert\mathbf{e}_{err}\Vert_{\nabla^2\ell(\BoTheS)}\sqrt{A_{\max}N}\times\\
                    &&\,&\left( \max_{m\in[N]}|\mathbf{x}_m^\top \mathbf{e}_{err}| + \left(\max_{l\in[N]}|\mathbf{x}_l^\top \mathbf{e}_{err}| \right)^2\right) + \Vert\mathbf{e}_{err}\Vert_{\nabla^2\ell(\BoTheS)}\sqrt{A_{\max}BN}\sqrt{ A_{\max} \log d}\\
                    &\lesssim&\,& B A_{\max} \log d + \sqrt{A_{\max}\log d} + \frac{1}{4} \left(\max_{m\in[N]}|\mathbf{x}_m^\top \mathbf{e}_{err}| +\sqrt{ A_{\max} \log d}\right)\\
                    &\lesssim&\,& B A_{\max} \log d + \sqrt{A_{\max}\log d} + A_{\max}\sqrt{d_\theta} + \frac{1}{4} r_\infty \leq r_\infty.
                \end{alignat*}
                By combining \textbf{STEP 1} and \textbf{STEP 2}, we conclude that on the event $\mathscr{E}$ with $\bbp_\BoTheS^{(N)}\{\mathscr{E}\}\geq 1-d^{-c}$ uniformly over $\mathbf{h}\in\mathcal{N}(r_H,r_\infty)$, the event $\Phi(\mathbf{h})\in \mathcal{N}(r_H,r_\infty)$ occurs, this proves~\eqref{eq_highprob_self_mapping}. \qedbox
        \subsection{Proof of Lemma~\ref{lema_property_of_A_matrix}}
            \label{proof_lema_property_of_A_matrix}
            
            Part (i). By~\eqref{eq_hess_mle_in_theta}, $\nabla^2\ell(\BoTheS)= \frac{1}{N}\mathbf{X}^\top\mathbf{V}^\star\mathbf{X}$, where $\mathbf{X}$ and $\mathbf{V}^\star$ are defined in~\eqref{eq_def_of_x_k_X_V_star}. Thus
            \begin{align*}
                \mathbf{A}\mathbf{V}^\star\mathbf{A} &= \frac{1}{N^2} \mathbf{X}\nabla^2\ell(\BoTheS)^{-1}(\mathbf{X}^\top \mathbf{V}^\star\mathbf{X})\nabla^2\ell(\BoTheS)^{-1}\mathbf{X}^\top\\
                & = \frac{1}{N^2} \mathbf{X}\nabla^2\ell(\BoTheS)^{-1}(N\nabla^2\ell(\BoTheS))\nabla^2\ell(\BoTheS)^{-1}\mathbf{X}^\top= \frac{1}{N}\mathbf{X}\nabla^2\ell(\BoTheS)^{-1}\mathbf{X}^\top = \mathbf{A}.
            \end{align*}
            By the symmetry of $\mathbf{A}$, $\mathbf{A}_{ii} = \mathbf{e}_i^\top\mathbf{A}\mathbf{e}_i = \mathbf{e}_i^\top\mathbf{A}\mathbf{V}^\star\mathbf{A}\mathbf{e}_i = \sum_{j=1}^N v_j^\star\mathbf{A}_{ij}^2$.
            Moreover, by the definition of $\mathbf{A}$ and the cyclic property of trace 
            \begin{equation}
                \sum_{j=1}^N v_j^\star\mathbf{A}_{jj} = \mathrm{tr}(\mathbf{V}^\star\mathbf{A}) = \mathrm{tr}\left( \mathbf{V}^\star\frac{1}{N}\mathbf{X}\nabla^2\ell(\BoTheS)^{-1}\mathbf{X}^\top \right) = \mathrm{tr}\left( \nabla^2\ell(\BoTheS)^{-1} \frac{1}{N}\mathbf{X}^\top\mathbf{V}^\star\mathbf{X}\right) = d_\theta,
            \end{equation}
            where $d_\theta$ is defined in~\eqref{eq_transfrom_w_to_theta}.           

    \noindent Part (ii). Since $\mathbf
    A$ is symmetric, and $\mathbf{A}\succeq\mathbf{0}$, there exists some $\mathbf{B}$ such that $\mathbf{A}=\mathbf{BB}^\top$. Thus $|\mathbf{A}_{ij}| = |\mathbf{e}_i^\top \mathbf{B}\mathbf{B}^\top \mathbf{e}_j|\leq \Vert\mathbf{B}^\top \mathbf{e}_i\Vert_2 \Vert\mathbf{B}^\top \mathbf{e}_j\Vert_2 = \sqrt{\mathbf{A}_{ii}\mathbf{A}_{jj}}\leq \max_{i\in[N]}\mathbf{A}_{ii}$. The upper bound on $\max_{i\in[N]}\mathbf{A}_{ii}$ is by Definition~\ref{Def_incoherent_level_of_leverages}, and Proposition~\ref{prop_bound_dymic_range}.\qedbox
        \subsection{Proof of Lemma~\ref{lema_Delta_lin_Q_infty_H_bounds}}
                \label{proof_lema_Delta_lin_Q_infty_H_bounds}
                We prove inequalities~\eqref{eq_Delta_lin_coherence_tail} and~\eqref{eq_Delta_lin_seminorm_tail} by virtue of Bernstein's inequality and Hanson-Wright inequality (restated in Lemma~\ref{lema_HsuHW_ineq}), respectively.

                    \noindent\textbf{Proof of~\eqref{eq_Delta_lin_coherence_tail}.} Observe that 
            \begin{equation*}
                |\mathbf{x}_m^\top \Delta_{lin}| = \Big|\mathbf{x}_m^\top\nabla^2\ell(\BoTheS)^{-1}\nabla\ell(\BoTheS)\Big|=\Bigg|\mathbf{x}_m^\top\nabla^2\ell(\BoTheS)^{-1}\frac{1}{N}\sum_{j=1}^N \mathbf{x}_j\eps_j\Bigg|=\Bigg|\sum_{j=1}^N \mathbf{A}_{mj}\eps_j\Bigg|.
            \end{equation*}
            $\{\mathbf{A}_{mj}\eps_j\}_{j\in[N]}$ are zero-mean, bounded random variables, i.e.,
            \begin{equation*}
                |\mathbf{A}_{mj}\eps_j| \leq |\mathbf{A}_{mj}|\leq A_{\max},\,a.s.,\,
            \end{equation*}
            where the last inequality is by Lemma~\ref{lema_property_of_A_matrix}.
            The summation of these random variables has a bounded variance in that 
            \begin{equation*}
                \mathbb{E}_\BoTheS^{(N)}\left[\left(\sum_{j=1}^N \mathbf{A}_{mj}\eps_j\right)^2\right] = \sum_{j=1}^N \mathbf{A}_{mj}^2 v_j^\star  = \mathbf{A}_{mm} \leq A_{\max}.
            \end{equation*}
            With the established bounds, we can obtain~\eqref{eq_Delta_lin_coherence_tail} by virtue of Bernstein's inequality (see, e.g.,~\cite[Theorem 2.8.4]{VershyninHighDimBook}).
            
            \noindent\textbf{Proof of~\eqref{eq_Delta_lin_seminorm_tail}.} By a simple maneuver, we can obtain the following relationship 
            \begin{equation*}
                \Bigg|\Vert\Delta_{lin}\Vert_{\nabla^2\ell(\BoTheS)}^2 - \frac{\mathrm{tr}(\mathbf{I}_{d_\theta})}{N}\Bigg| = \frac{1}{N}\Bigg| \boldsymbol{\xi}^\top \mathbf{P} \boldsymbol{\xi} - \mathrm{tr}(\mathbf{P}) \Bigg|,
            \end{equation*}
            for the isotropic, $\sqrt{B}$ sub-Gaussian random vector $\boldsymbol{\xi}:=\left[\frac{\eps_1}{\sqrt{v_1^\star}},\dots,\frac{\eps_N}{\sqrt{v_N^\star}}\right]^\top$, i.e., $\Vert\boldsymbol{\xi}\Vert_{\psi_2}\lesssim\sqrt{B}$. Since $\mathrm{tr}(\mathbf{I}_{d_\theta}) = \mathrm{tr}(\mathbf{P}) =d_\theta$, we can directly verify that $\bbe_\BoTheS^{(N)}\left[ \boldsymbol{\xi}^\top \mathbf{P} \boldsymbol{\xi} \right] = \mathrm{tr}(\mathbf{P})$, $\Vert\mathbf{P}\Vert_F^2 = \mathrm{tr}(\mathbf{P}^2) = d_\theta$, and $\Vert\mathbf{P}\Vert\leq 1$. Applying Lemma~\ref{lema_HsuHW_ineq} to this quadratic form yields
            \begin{equation*}
                \bbp_{\BoTheS}^{(N)}\left( \frac{1}{N}\Bigg| \boldsymbol{\xi}^\top \mathbf{P} \boldsymbol{\xi} - \mathrm{tr}(\mathbf{P}) \Bigg| \geq t \right) \lesssim \exp\left( -c\min\left\{ \frac{N^2t^2}{d_\theta B^2},\frac{Nt}{B} \right\} \right),
            \end{equation*}
            which implies~\eqref{eq_Delta_lin_seminorm_tail}.
            \qedbox
            \subsection{Proof of Lemma~\ref{lema_Centered_Quadratic_Biases_all3}}
            \label{proof_lema_Centered_Quadratic_Biases_all3}
            \textbf{Proof of~\eqref{eq_tail_of_Q_lin_m}.} By plain calculation,
                \begin{equation*}
                    \mathbb{E}_{\BoTheS}^{(N)}\left[\left(\sum_{i=1}^N\mathbf{A}_{ji}\eps_i\right)^2\right] = \mathbb{E}_{\BoTheS}^{(N)}\left[\sum_{i=1}^N \mathbf{A}_{ji}^2 \eps_i^2\right] = \sum_{i=1}^N \mathbf{A}_{ji}^2 v_i^\star = \mathbf{A}_{jj},
                \end{equation*}
                where the first equality is due to the independence of samples.
                Consider
                \begin{equation*}
                    \mathbf{D}:=\mathrm{diag}\left( \left(\frac{\Psi^{(3)}(\eta_i^\star)}{v_i^\star}\mathbf{A}_{mi}\right)_{i\leq N} \right),\quad \mathbf{M}_m : = \mathbf{P}\mathbf{D}\mathbf{P},
                \end{equation*}
                where $\mathbf{P}$ is defined in~\eqref{eq_def_of_orthonomal_P}, and the isotropic sub-Gaussian random vector $\boldsymbol{\xi}:=\left[\frac{\eps_1}{\sqrt{v_1^\star}},\dots,\frac{\eps_N}{\sqrt{v_N^\star}}\right]^\top$. Then $Q_{lin}^{m}$ admits the following equivalent form
                \begin{equation}
                    Q_{lin}^{m} =\frac{1}{2}\left( \boldsymbol{\xi}^\top \mathbf{M}_m\boldsymbol{\xi} - \mathrm{tr}(\mathbf{M}_m)\right).
                \end{equation}
                Since $D_{\max}:=\max_{j\in[N]}\Big|\frac{\Psi^{(3)}(\eta_j^\star)}{v_j^\star}\mathbf{A}_{mj}\Big| \leq A_{\max}$ by Lemma~\ref{lema_property_of_A_matrix}, our focus is on controlling $\mathrm{tr}(\mathbf{D}^2\mathbf{P})$, that is
                \begin{align*}
                    \mathrm{tr}(\mathbf{D}^2\mathbf{P}) = \sum_{j=1}^N \left(\frac{\Psi^{(3)}(\eta_j^\star)}{v_j^\star}\mathbf{A}_{mj}\right)^2 v_j^\star \mathbf{A}_{jj}\leq \sum_{j=1}^N v_j^\star\mathbf{A}_{jj} \mathbf{A}_{mj}^2\leq A_{\max} \sum_{j=1}^N v_j^\star\mathbf{A}_{mj}^2 \leq A_{\max}^2,
                \end{align*}
            where the first inequality is due to $|\Psi^{(3)}(\eta_j^\star)|\leq v_j^\star$. With the established bounds of $D_{\max}$ and $\mathrm{tr}(\mathbf{D}^2\mathbf{P})$, we can obtain the tail bound~\eqref{eq_tail_of_Q_lin_m} via inequality~\eqref{eq_HW_PDP_xi_subG_form}.
            
            \noindent\textbf{Proof of~\eqref{eq_tail_of_Q_Err_C_m}.}
            First, notice that 
                \begin{equation*}
                    \sum_{j=1}^N v_j^\star \mathbf{A}^2_{mj}\mathbf{A}_{jj} \leq A_{\max} \sum_{j=1}^N v_j^\star \mathbf{A}^2_{mj} = A_{\max}\mathbf{A}_{mm}\leq A_{\max}^2,
                \end{equation*}
                which yields the deterministic upper bound. Again, we can identify the matrix 
                \begin{equation*}
                    \mathbf{D}:=\mathrm{diag}\left( \mathbf{A}_{m1}^2,\dots, \mathbf{A}_{mN}^2\right),\quad \mathbf{M}_{m,2} : = \mathbf{P}\mathbf{D}\mathbf{P},
                \end{equation*}
                and the isotropic sub-Gaussian random vector $\boldsymbol{\xi}:=\left[\frac{\eps_1}{\sqrt{v_1^\star}},\dots,\frac{\eps_N}{\sqrt{v_N^\star}}\right]^\top$ such that $\Vert\boldsymbol{\xi}\Vert_{\psi_2}\lesssim \sqrt{B}$, and
                \begin{equation*}
                    Q_{Err,C}^m = \boldsymbol{\xi}^\top \mathbf{M}_{m,2}\boldsymbol{\xi} - \mathrm{tr}(\mathbf{M}_{m,2}).
                \end{equation*}
                Moreover $D_{\max}:=\max_{i\in[N]}\mathbf{A}_{mi}^2\leq A_{\max}^2$, and
                \begin{align*}
                    \mathrm{tr}(\mathbf{D}^2\mathbf{P}) = \sum_{j=1}^N \mathbf{A}_{mj}^4 v_j^\star\mathbf{A}_{jj} \leq A_{\max}^2\sum_{j=1}^N v_j^\star \mathbf{A}_{jj}\mathbf{A}_{mj}^2\leq A_{\max}^4.
                \end{align*}
                Using inequality~\eqref{eq_HW_PDP_xi_subG_form}, we obtain~\eqref{eq_tail_of_Q_Err_C_m}. 
                
                \noindent\textbf{Proof of~\eqref{eq_tail_of_Q_Abs_m}.}
                The deterministic upper bound is from Cauchy-Schwarz inequality,
                \begin{equation*}
                    \sum_{j=1}^N v_j^\star|\mathbf{A}_{mj}|\mathbf{A}_{jj} \leq \left( \sum_{j=1}^N v_j^\star\mathbf{A}_{mj}^2 \right)^{1/2}\left( \sum_{j=1}^N v_j^\star\mathbf{A}_{jj}^2 \right)^{1/2}\leq \sqrt{A_{\max}}\sqrt{A_{\max}d_\theta} = A_{\max}\sqrt{d_\theta}.
                \end{equation*}
                The tail bound is from identifying the matrix
                \begin{equation*}
                    \mathbf{M}_{m,|\cdot|} : = \mathbf{P}\mathrm{diag}\left( |\mathbf{A}_{m1}|,\dots, |\mathbf{A}_{mN}|\right)\mathbf{P},
                \end{equation*}
                the isotropic sub-Gaussian random vector $\boldsymbol{\xi}:=\left[\frac{\eps_1}{\sqrt{v_1^\star}},\dots,\frac{\eps_N}{\sqrt{v_N^\star}}\right]^\top$, i.e., $\Vert\boldsymbol{\xi}\Vert_{\psi_2}\lesssim\sqrt{B}$, such that 
                \begin{equation*}
                    Q_{Abs}^{m} =  \boldsymbol{\xi}^\top \mathbf{M}_{m,|\cdot|} \boldsymbol{\xi} - \mathrm{tr}(\mathbf{M}_{m,|\cdot|}).
                \end{equation*}
                Then $D_{\max}:=\max_{j\in[N]}|\mathbf{A}_{mj}|\leq A_{\max}$, and
                \begin{align*}
                    \mathrm{tr}(\mathbf{D}^2\mathbf{P}) & = \sum_{j=1}^N \mathbf{A}_{mj}^2 \mathbf{P}_{jj} = \sum_{j=1}^N \mathbf{A}_{mj}^2 v_j^\star\mathbf{A}_{jj} \leq \mathbf{A}_{mm} A_{\max}\leq A_{\max}^2.
                \end{align*}
                Using inequality~\eqref{eq_HW_PDP_xi_subG_form}, we conclude the proof. \qedbox 
        \subsection{Proof of Corollary~\ref{coro_euclidean_upper_in_detailed_terms}}
        \label{proof_coro_euclidean_upper_in_detailed_terms}
        It suffices to control $\Vert\Delta_{lin}\Vert_2$ and $\Vert\Delta_{bias}\Vert_2$. The latter can be directly estimated from $\Vert\Delta_{bias}\Vert_{\nabla^2\ell(\BoTheS)}$,
        \begin{align}
            \Vert\Delta_{bias}\Vert_2 &\leq \sqrt{\Vert \nabla^2\ell(\BoTheS)^{-1}  \Vert} \Vert\Delta_{bias}\Vert_{\nabla^2\ell(\BoTheS)} \lesssim \sqrt{\Vert \nabla^2\ell(\BoTheS)^{-1}  \Vert} \sqrt{\frac{A_{\max}d_\theta}{N}} \nonumber\\
            \label{eq_upper_bound_on_Delta_bias_ell2}
            &\lesssim  \sqrt{\Vert \nabla^2\ell(\BoTheS)^{-1}  \Vert} \frac{\sqrt{B\mu_Q}d_\theta}{N},
        \end{align}
        where the second and last inequalities follow by Lemmas~\ref{lema_bias_term_in_semi_norm_upper} and~\ref{lema_property_of_A_matrix}, respectively. The upper bound on $\Vert\Delta_{lin}\Vert_2$ can be derived from the matrix Bernstein inequality, restated here as Lemma~\ref{lema_TroppMatrixConcen_ineq}. Recall that $\nabla\ell(\BoTheS) = -\frac{1}{N}\sum_{j=1}^N \mathbf{x}_j\eps_j$, and $\nabla^2\ell(\BoTheS) = \frac{1}{N}\sum_{j=1}^N v_j^\star \mathbf{x}_j\mathbf{x}_j^\top$, from which $\Delta_{lin}$ can be written as the sum of zero-mean, independent random vectors,
        \begin{equation*}
            \Delta_{lin} = \sum_{j=1}^N \frac{1}{N}\nabla^2\ell(\BoTheS)^{-1}\mathbf{x}_j\eps_j : = \sum_{j=1}^N \mathbf{z}_j.
        \end{equation*}
        The random vector $\mathbf{z}_j\in\mathbb{R}^{d_\theta}$ has Euclidean norm bounded a.s.,
        \begin{align*}
            \Vert \mathbf{z}_j \Vert_2 \leq \sqrt{\frac{1}{N^2} \mathbf{x}_j^\top \nabla^2\ell(\BoTheS)^{-2}\mathbf{x}_j} \leq \sqrt{\frac{\Vert \nabla^2\ell(\BoTheS)^{-1} \Vert}{N}\mathbf{A}_{jj}} \leq \sqrt{\frac{\Vert \nabla^2\ell(\BoTheS)^{-1} \Vert A_{\max}}{N}}.
        \end{align*}
        And the variance terms satisfy 
        \begin{align*}
            \Bigg\Vert \sum_{j=1}^N \mathbb{E}_\BoTheS^{(N)}\left[ \mathbf{z}_j\mathbf{z}_j^\top \right]\Bigg\Vert &= \Bigg\Vert\frac{1}{N^2} \nabla^2\ell(\BoTheS)^{-1} \left(\sum_{j=1}^N v_j^\star \mathbf{x}_j\mathbf{x}_j^\top\right)\nabla^2\ell(\BoTheS)^{-1}\Bigg\Vert = \frac{\Vert\nabla^2\ell(\BoTheS)^{-1}\Vert}{N}\\
            \Bigg| \sum_{j=1}^N \mathbb{E}_\BoTheS^{(N)}\left[ \mathbf{z}_j^\top\mathbf{z}_j \right] \Bigg| &= \Bigg|\frac{1}{N^2}\sum_{j=1}^N v_j^\star \mathbf{x}_j^\top \nabla^2\ell(\BoTheS)^{-2} \mathbf{x}_j\Bigg| = \frac{1}{N}\mathrm{tr}\left(\nabla^2\ell(\BoTheS)^{-2}\frac{1}{N}\sum_{j=1}^N v_j^\star\mathbf{x}_j\mathbf{x}_j^\top\right)\\
            & = \frac{\mathrm{tr}(\nabla^2\ell(\BoTheS)^{-1})}{N}.
        \end{align*}
        Applying Lemma~\ref{lema_TroppMatrixConcen_ineq} and substituting the upper bound on $A_{\max}$, it yields
        \begin{align}
            \Vert\Delta_{lin}\Vert_2 = \Bigg\Vert\sum_{j=1}^N \mathbf{z}_j\Bigg\Vert_2 &\lesssim \sqrt{\frac{\mathrm{tr}(\nabla^2\ell(\BoTheS)^{-1})\log d_\theta}{N}} + \sqrt{\frac{B\mu_Q d_\theta\Vert \nabla^2\ell(\BoTheS)^{-1} \Vert\log^2 d_\theta}{N^2}}  \nonumber\\
            \label{eq_upper_bound_on_Delta_lin_ell2}
            & \lesssim \sqrt{\frac{\mathrm{tr}(\nabla^2\ell(\BoTheS)^{-1})\log d_\theta}{N}},
        \end{align}
        with probability at least $1-d_\theta^{-c}$, where the last inequality is from setting $N\gtrsim \frac{B\mu_Qd_\theta \log d_\theta}{r_{\mathrm{eff}}}$. Comparing~\eqref{eq_upper_bound_on_Delta_lin_ell2} with~\eqref{eq_upper_bound_on_Delta_bias_ell2}, setting $N\gtrsim \frac{B\mu_Q d_\theta^2}{r_{\mathrm{eff}}\log d_\theta}$ suffices to ensure that the upper bound in~\eqref{eq_upper_bound_on_Delta_lin_ell2} dominates that in~\eqref{eq_upper_bound_on_Delta_bias_ell2}. By setting
        \begin{equation*}
            N\gtrsim \max\left\{ B^3\mu_Q d_\theta^{3/2}\log^3 d,\, B^4\mu_Qd_\theta \log d,\,\frac{B\mu_Qd_\theta \log d_\theta}{r_{\mathrm{eff}}},\,\frac{B\mu_Q d_\theta^2}{r_{\mathrm{eff}}\log d_\theta}\right\},
        \end{equation*}
        the r.h.s.~of~\eqref{eq_Delta_minus_lin_bias_upper_bound} is also controlled by the r.h.s.~of~\eqref{eq_upper_bound_on_Delta_lin_ell2}. Using the triangle inequality then gives~\eqref{eq_delta_hat_upper_bound_ell2_to_minimax_rate}. 
        \qedbox
		\section{Supporting Lemmas}
        Throughout this appendix, all random objects appearing in the same statement are assumed to be defined on a common probability space $(\Omega,\mathcal{F},\bbp)$, and $\bbe$ denotes expectation under $\bbp$. The underlying probability space may vary from lemma to lemma. When applying these results to the statistical model developed in the main part of the paper, $(\mathbb P,\mathbb E)$ is instantiated as $(\bbp_{\Bothe}^{(N)},\bbe_{\Bothe}^{(N)})$ for the relevant value of ${\Bothe}$.
		\begin{lemma}
			\label{eq_bounded_Rv_prob_lower}
            Let $X$ be a bounded, non-negative random variable with $0\leq X \leq C<\infty$ almost surely. Then
			\begin{equation}
            \label{eq_BoundedPos_inv_Markov}
				\mathbb{P}(X\geq t)\geq \frac{\mathbb{E}[X]-t}{C-t},\,\,\forall\,t\in(0,C).
			\end{equation}
		\end{lemma}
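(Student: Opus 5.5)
The plan is a reverse Markov inequality, obtained by splitting the expectation of $X$ according to whether $X$ lies below or above the threshold $t$. Fix $t\in(0,C)$ and set $p:=\mathbb{P}(X\geq t)$. Since $X$ is nonnegative and bounded, $\mathbb{E}[X]$ is finite and we may write
\begin{equation*}
\mathbb{E}[X] = \mathbb{E}\left[X\ind\{X<t\}\right] + \mathbb{E}\left[X\ind\{X\geq t\}\right].
\end{equation*}

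Each piece is then bounded pointwise. On $\{X<t\}$ we have $X< t$, so the first term is at most $t(1-p)$. On $\{X\geq t\}$ we have $X\leq C$ almost surely, so the second term is at most $Cp$. Combining the two bounds gives
\begin{equation*}
\mathbb{E}[X]\leq t(1-p)+Cp = t + (C-t)p.
\end{equation*}

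Since $t<C$, we may divide by $C-t>0$ and rearrange to obtain $p\geq \frac{\mathbb{E}[X]-t}{C-t}$, which is~\eqref{eq_BoundedPos_inv_Markov}. If the right-hand side is negative, the inequality holds trivially, and the argument above covers that case without modification.

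There is no real obstacle here. The only points requiring care are that nonnegativity is not needed for the upper bound on the first term, and that dividing by $C-t$ preserves the direction of the inequality because $t\in(0,C)$. The bound is used in the proofs of~\eqref{eq_minimax_prob_lowerbound} and~\eqref{eq_minimax_excess_risk_in_prob_form}, with $X$ taken as the weighted Hamming decoding error and $C=\delta^2\mathrm{tr}(\mathbf{W}^{-1})$ (respectively $C=d_\theta$).
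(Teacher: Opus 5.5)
Your proposal is correct and matches the paper's proof: both split $\mathbb{E}[X]$ over $\{X<t\}$ and $\{X\geq t\}$ to get $\mathbb{E}[X]\leq t+(C-t)\,\mathbb{P}(X\geq t)$, then rearrange using $C-t>0$. Your side remark is also right: nonnegativity of $X$ is never actually used, only $X\leq C$ almost surely.
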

		\begin{proof}
			By direct decomposition, we have
			\begin{equation*}
				\mathbb{E}[X] = \mathbb{E}[X\cdot\ind\{X< t\}] + \mathbb{E}[X\cdot\ind\{X\geq t\}]\leq t\,\mathbb{P}(X< t) + C\,\mathbb{P}(X\geq t) = t + (C-t)\mathbb{P}(X\geq t),
			\end{equation*}
			rearranging the inequality gives rise to~\eqref{eq_BoundedPos_inv_Markov}. \qedbox
		\end{proof}
		\begin{lemma}[{\cite[Theorem~1.1]{RudelsonVershy_HW_subG}}]
			\label{lema_HsuHW_ineq}
			Let $\boldsymbol{\xi}=[\xi_1,\dots,\xi_n]^\top\in\mathbb{R}^n$ be a random vector with independent components $\xi_i$ which satisfy $\bbe[\xi_i] = 0$ and $\Vert \xi_i \Vert_{\psi_2}\leq K$.
            Then for any but fixed matrix $\mathbf{M}\in\mathbb{R}^{n\times n}$ and every $t\geq 0$,
            \begin{equation}
            \label{eq_HW_independent_coordinate_subGK}
				\bbp\left\{\Bigg|\boldsymbol{\xi}^\top\mathbf{M}\boldsymbol{\xi}-\mathbb{E}[\boldsymbol{\xi}^\top\mathbf{M}\boldsymbol{\xi}]\Bigg|\geq t\right\}\leq 2\exp\left( -c\min\left\{ \frac{t^2}{K^4\Vert\mathbf{M}\Vert_F^2},\frac{t}{K^2\Vert\mathbf{M}\Vert} \right\} \right).
			\end{equation}
		\end{lemma}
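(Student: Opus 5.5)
We follow the standard route of Rudelson and Vershynin: split the quadratic form into its diagonal and off-diagonal parts, treat each separately, and combine the two tails by a union bound. Write
\[
\boldsymbol{\xi}^\top\mathbf{M}\boldsymbol{\xi}-\bbe[\boldsymbol{\xi}^\top\mathbf{M}\boldsymbol{\xi}]
=\sum_{i}\mathbf{M}_{ii}\bigl(\xi_i^2-\bbe\xi_i^2\bigr)+\sum_{i\neq j}\mathbf{M}_{ij}\xi_i\xi_j=:S_{\mathrm{diag}}+S_{\mathrm{off}}.
\]
The expectation of $S_{\mathrm{off}}$ vanishes by independence and centering. It therefore suffices to show that each of $\bbp\{|S_{\mathrm{diag}}|\geq t/2\}$ and $\bbp\{|S_{\mathrm{off}}|\geq t/2\}$ is bounded by $\exp(-c\min\{t^2/(K^4\Vert\mathbf{M}\Vert_F^2),\,t/(K^2\Vert\mathbf{M}\Vert)\})$.

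\textbf{Diagonal part.} Each $\xi_i^2-\bbe\xi_i^2$ is centered and sub-exponential with $\psi_1$-norm $\lesssim K^2$, since $\Vert\xi_i^2\Vert_{\psi_1}=\Vert\xi_i\Vert_{\psi_2}^2$ and centering costs only a constant. The summands are independent, so we apply Bernstein's inequality for sub-exponential variables~\cite[Theorem~2.8.1]{VershyninHighDimBook} with weights $\mathbf{M}_{ii}$. This gives the claimed tail with $\sum_i\mathbf{M}_{ii}^2$ in the quadratic regime and $\max_i|\mathbf{M}_{ii}|$ in the linear regime. We conclude using $\sum_i\mathbf{M}_{ii}^2\leq\Vert\mathbf{M}\Vert_F^2$ and $\max_i|\mathbf{M}_{ii}|\leq\Vert\mathbf{M}\Vert$.

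\textbf{Off-diagonal part.} We bound the moment generating function and then apply Chernoff. The plan has four steps.
\begin{enumerate}
\item Decoupling. Let $\boldsymbol{\xi}'$ be an independent copy of $\boldsymbol{\xi}$, and introduce random selectors $\delta_i\in\{0,1\}$ with a Jensen argument. This gives $\bbe\exp(\lambda S_{\mathrm{off}})\leq\bbe\exp(4\lambda\,\boldsymbol{\xi}^\top\mathbf{M}\boldsymbol{\xi}')$.
\item First comparison. Condition on $\boldsymbol{\xi}'$. The linear form $\boldsymbol{\xi}^\top(\mathbf{M}\boldsymbol{\xi}')$ is sub-Gaussian with proxy $K^2\Vert\mathbf{M}\boldsymbol{\xi}'\Vert_2^2$. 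Hence its conditional mgf is at most that of $CK\,\mathbf{g}^\top\mathbf{M}\boldsymbol{\xi}'$, where $\mathbf{g}$ is a standard Gaussian vector independent of everything else.
\item Second comparison. Now condition on $\mathbf{g}$ and repeat the argument in $\boldsymbol{\xi}'$. This replaces $\boldsymbol{\xi}'$ by an independent Gaussian $\mathbf{g}'$, at the cost of another factor $K$.
\item Gaussian chaos. Write the SVD $\mathbf{M}=\sum_i s_i\mathbf{u}_i\mathbf{v}_i^\top$. By rotation invariance, $\mathbf{g}^\top\mathbf{M}\mathbf{g}'\overset{d}{=}\sum_i s_i g_ig_i'$. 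Each $g_ig_i'$ satisfies $\bbe\exp(\mu g_ig_i')=(1-\mu^2)^{-1/2}\leq\exp(\mu^2)$ for $|\mu|\leq1/2$.
\end{enumerate}
Combining these steps yields
\[
\bbe\exp(\lambda S_{\mathrm{off}})\leq\exp\bigl(C\lambda^2K^4\Vert\mathbf{M}\Vert_F^2\bigr)\quad\text{for }|\lambda|\leq c/(K^2\Vert\mathbf{M}\Vert).
\]
Applying Markov's inequality to $\exp(\lambda S_{\mathrm{off}})$ and optimizing $\lambda$ over this range gives the two-regime tail. The same argument applied to $-\mathbf{M}$ handles the lower tail.

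The main obstacle is the two-step Gaussian replacement. The conditional sub-Gaussian bound must be turned into an mgf comparison that remains valid after integrating over the other vector. This is where the restriction $|\lambda|\lesssim 1/(K^2\Vert\mathbf{M}\Vert)$ enters, and the constants must be tracked so that the resulting Gaussian mgf is finite exactly on that range.
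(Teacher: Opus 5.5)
Your proposal is correct and follows the standard Rudelson--Vershynin proof of the cited Theorem~1.1: Bernstein for the diagonal part, then decoupling, two sub-Gaussian-to-Gaussian mgf comparisons, and the Gaussian-chaos mgf for the off-diagonal part; the paper gives no proof of this lemma and simply imports it from that reference. Two bookkeeping points: the restriction $|\lambda|\lesssim 1/(K^2\Vert\mathbf{M}\Vert)$ enters only at the final Gaussian-chaos step (both comparison steps hold for every $\lambda$), and the two two-sided tails give a prefactor $4$ after the union bound, which you reduce to the stated $2$ by shrinking $c$, since probabilities never exceed one.
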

        \begin{lemma}[{\cite[Corollary~16]{AdamczakLatalaBanachHW}}]
        \label{lema_HilbertSpaceHW}
            Let $\xi_1,\dots,\xi_n$ be independent mean-zero $K-$sub-Gaussian random variables and let $\mathbf{H}:=(\mathbf{h}^{(ij)})_{i,j\leq n}$ be any but fixed symmetric matrix with values in the normed space $(\mathbb{R}^{m},\Vert\cdot\Vert_2)$. For $t\geq CK^2\sqrt{\sum_{i=1}^n\sum_{j=1}^n \Vert \mathbf{h}^{(ij)} \Vert_2^2}$, we have
            \begin{equation*}
                \bbp\left\{ \Bigg\Vert \sum_{(i,j)\in[n]^2} \mathbf{h}^{(ij)}\left(\xi_i\xi_j - \mathbb{E}[\xi_i\xi_j]\right) \Bigg\Vert_2 \geq t\right\} \leq 2\exp\left(-\frac{1}{C}\min\left\{ \frac{t^2}{K^4 U^2},\frac{t}{K^2 V} \right\}\right),
            \end{equation*}
            where
            \begin{align*}
                U  &= \sup_{\Vert\mathbf{u}\Vert_2\leq 1} \sqrt{\sum_{j=1}^n \Big\Vert \sum_{i\neq j}^n\mathbf{h}^{(ij)}\mathbf{u}_i \Big\Vert_2^2} + \sup_{\Vert\mathbf{Y}\Vert_F\leq 1} \Big\Vert \sum_{(i,j)\in[n]^2} \mathbf{h}^{(ij)}[\mathbf{Y}]_{ij} \Big\Vert_2,\\
                V &= \sup_{\Vert\mathbf{u}\Vert_2\leq 1,\Vert\mathbf{v}\Vert_2\leq 1}\Big\Vert \sum_{(i,j)\in[n]^2} \mathbf{h}^{(ij)}\mathbf{u}_i\mathbf{v}_j \Big\Vert_2.
            \end{align*}
        \end{lemma}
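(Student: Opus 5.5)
This is the Hanson--Wright inequality in a Hilbert space ($\mathbb{R}^m$ with the Euclidean norm) for vector-valued quadratic forms in independent sub-Gaussian variables. I would derive it from the general Banach-space theory of Adamczak, Lata\l{}a and Meller rather than prove it from scratch. The plan has four steps: decouple, bound moments via Gaussian comparison, reduce to Gaussian chaos estimates, and convert moments to tails.

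\textbf{Decoupling and diagonal split.} Write $S:=\sum_{(i,j)}\mathbf{h}^{(ij)}(\xi_i\xi_j-\mathbb{E}[\xi_i\xi_j])$ and split it into an off-diagonal part $\sum_{i\neq j}\mathbf{h}^{(ij)}\xi_i\xi_j$ and a diagonal part $\sum_i\mathbf{h}^{(ii)}(\xi_i^2-\mathbb{E}\xi_i^2)$.

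For the off-diagonal part, I would apply the de la Pe\~na--Montgomery-Smith decoupling inequality in the Banach space $\mathbb{R}^m$. This replaces it, up to absolute constants in the $L^p$ norms, by $\sum_{i\neq j}\mathbf{h}^{(ij)}\xi_i\xi_j'$ with $\boldsymbol{\xi}'$ an independent copy.

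\textbf{Gaussian comparison for the decoupled chaos.} Conditioning on one copy, I would use a contraction/comparison argument to replace sub-Gaussian variables by $K$ times standard Gaussians in the $p$-th moments. For $K$-sub-Gaussian variables this is done by symmetrization followed by the moment comparison $\|\xi_i\|_p\lesssim K\sqrt{p}\asymp K\|g\|_p$. The decoupled chaos is thereby dominated by $K^2$ times the Gaussian decoupled chaos $\|\sum_{i\neq j}\mathbf{h}^{(ij)}g_ig_j'\|_p$.

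\textbf{Gaussian chaos moment bounds.} For the Gaussian chaos, Lata\l{}a's two-sided moment estimates in Hilbert space give
\[
\Big\|\Big\|\sum\mathbf{h}^{(ij)}g_ig_j'\Big\|_2\Big\|_p\lesssim \mathbb{E}\Big\|\sum\mathbf{h}^{(ij)}g_ig_j'\Big\|_2+\sqrt{p}\,U+p\,V .
\]
The expectation term is at most $\big(\sum\|\mathbf{h}^{(ij)}\|_2^2\big)^{1/2}$ by Jensen. The $\sqrt{p}$ term is the supremum over one Gaussian coordinate, which yields the first piece of $U$, together with the Frobenius-type supremum $\sup_{\|\mathbf{Y}\|_F\le1}$, which yields the second piece of $U$. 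The $p$ term is the bilinear supremum $V$.

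\textbf{Diagonal part.} The diagonal sum has independent centered sub-exponential summands with $\|\xi_i^2-\mathbb{E}\xi_i^2\|_{\psi_1}\lesssim K^2$. A vector Bernstein/Rosenthal moment bound controls it by $K^2\big(\sqrt{\sum_i\|\mathbf{h}^{(ii)}\|_2^2}+\sqrt{p}\sup_{\|\mathbf{u}\|\le1}\cdots+p\max_i\|\mathbf{h}^{(ii)}\|_2\big)$. Each of these terms is dominated by the $T$, $U$ (taking $\mathbf{Y}$ diagonal) and $V$ quantities respectively.

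\textbf{From moments to tails.} Combining the pieces gives, for all $p\ge1$,
\[
\|S\|_p\lesssim K^2\big(T+\sqrt{p}\,U+p\,V\big),\qquad T:=\Big(\sum_{i,j}\|\mathbf{h}^{(ij)}\|_2^2\Big)^{1/2}.
\]
Markov's inequality with $p\asymp\min\{t^2/(K^4U^2),\,t/(K^2V)\}$ then yields the stated tail bound. The restriction $t\ge CK^2T$ is exactly what lets the mean term $K^2T$ be absorbed into $t/2$.

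\textbf{Main obstacle.} The hardest step is the sub-Gaussian-to-Gaussian comparison for vector-valued chaos. The symmetrization/contraction must be done in the norm $\|\cdot\|_2$ rather than in scalar projections, and the scalar Rudelson--Vershynin approach does not transfer directly. I would cite Adamczak--Lata\l{}a--Meller's comparison theorem for this step (hence the citation to their Corollary~16), verifying only that its hypotheses, symmetric $\mathbf{H}$ and independent mean-zero $\psi_2$ entries, match ours.
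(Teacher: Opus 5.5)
The paper gives no proof of this lemma. It quotes Adamczak--Lata\l{}a--Meller, Corollary~16, verbatim and uses it as a black box when bounding $\Vert\Delta_{quad}\Vert_2$. Your proposal reconstructs the argument behind that citation, and the outline is sound:
\begin{itemize}
\item decoupling of the off-diagonal part;
\item symmetrization and comparison to Gaussians;
\item a moment bound $\bigl\Vert \Vert S\Vert_2\bigr\Vert_{L_p}\lesssim K^2(T+\sqrt{p}\,U+pV)$;
\item a separate $\psi_1$ treatment of the diagonal;
\item Markov's inequality, with the threshold $t\ge CK^2T$ absorbing the mean.
\end{itemize}
One bookkeeping point: the off-diagonal versions of $U$ and $V$ are dominated by the full ones. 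Zero out the diagonal of $\mathbf{Y}$, and use $\max_i\Vert\mathbf{h}^{(ii)}\Vert_2\le V$. The citation buys brevity; your route shows where $T$, $U$ and $V$ come from.

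You place the difficulty in the wrong step.

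\textbf{The sub-Gaussian-to-Gaussian comparison is routine.} Conditional on one decoupled copy you have $\sum_i \mathbf{v}_i\varepsilon_i\xi_i$ with fixed vectors $\mathbf{v}_i$. The tail domination $\mathbb{P}(|\xi_i|>s)\le C\,\mathbb{P}(CK|g_i|>s)$ feeds the classical contraction principle for independent symmetric variables, which holds in every Banach space; then repeat in the other copy. Note that it is tail domination you need here, not just the moment comparison $\Vert\xi_i\Vert_p\lesssim K\sqrt{p}$ that you state.

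\textbf{The Gaussian step is where the real content lies, and you attribute it only loosely to ``Lata\l{}a in Hilbert space''.} Conditional Gaussian concentration naturally produces, at order $\sqrt{p}$, the quantity $\mathbb{E}\sup_{\Vert\mathbf{u}\Vert_2\le 1}\bigl\Vert\sum_{i\neq j}\mathbf{h}^{(ij)}\mathbf{u}_i g_j'\bigr\Vert_2$. This is the expected operator norm of a Gaussian matrix series and can exceed $U$ by a $\sqrt{\log n}$ factor. A clean way to obtain the deterministic $U$ is as follows:
\begin{itemize}
\item Take $\mathbf{g}''$ an independent standard Gaussian vector in $\mathbb{R}^m$. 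Conditionally, $\langle S,\mathbf{g}''\rangle\sim N(0,\Vert S\Vert_2^2)$, so $\bigl\Vert \Vert S\Vert_2\bigr\Vert_{L_p}\asymp p^{-1/2}\Vert\langle S,\mathbf{g}''\rangle\Vert_{L_p}$.
\item $\langle S,\mathbf{g}''\rangle$ is a real decoupled Gaussian chaos of order three, so Lata\l{}a's partition-norm theorem applies. It gives $\sqrt{p}\,T$ from the one-block partition, $p$ times the three two-block norms, and $p^{3/2}V$ from the three-block partition.
\item The two-block norms are $\sup_{\Vert\mathbf{u}\Vert_2\le1}\bigl(\sum_j\Vert\sum_{i}\mathbf{h}^{(ij)}\mathbf{u}_i\Vert_2^2\bigr)^{1/2}$, which appears twice (the two copies coincide by symmetry of $\mathbf{H}$), and $\sup_{\Vert\mathbf{Y}\Vert_F\le1}\Vert\sum\mathbf{h}^{(ij)}\mathbf{Y}_{ij}\Vert_2$. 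These are exactly the two pieces of $U$.
\end{itemize}
With this reduction made explicit, your sketch becomes a complete derivation of the cited corollary.
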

		\begin{lemma}[{\cite[Theorem~1.6]{TroppMatrixConcentra}}]
		  \label{lema_TroppMatrixConcen_ineq}
		    Let $\{\mathbf{H}_k\}$ be a finite sequence of independent, random matrices in $\mathbb{R}^{n_1\times n_2}$ satisfying $\mathbb{E}[\mathbf{H}_k]=\mathbf{0}$, $\Vert\mathbf{H}_k\Vert\leq L$ almost surely. Let the norm of total variance be
			\begin{equation*}
				\nu^2=\max\left\{ \Bigg\Vert\sum_k \mathbb{E}\left[\mathbf{H}_k\mathbf{H}_k^\top\right]\Bigg\Vert,\,\Bigg\Vert\sum_k \mathbb{E}\left[\mathbf{H}_k^\top\mathbf{H}_k\right]\Bigg\Vert \right\}.
			\end{equation*} 
			Then for all $t\geq 0$
			\begin{equation*}
				\bbp\left\{\Bigg\Vert\sum_k \mathbf{H}_k\Bigg\Vert\geq t\right\}\leq (n_1+n_2)\exp\left( \frac{-t^2}{2\nu^2 +2Lt/3} \right).
			\end{equation*}
		\end{lemma}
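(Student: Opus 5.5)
We would prove this via the Hermitian dilation followed by the matrix Laplace-transform method with Lieb's concavity theorem, as in Tropp's original argument. First we reduce to the symmetric case. For $\mathbf{H}\in\mathbb{R}^{n_1\times n_2}$ define the dilation
\begin{equation*}
\mathcal{D}(\mathbf{H}):=\begin{bmatrix}\mathbf{0}&\mathbf{H}\\ \mathbf{H}^\top&\mathbf{0}\end{bmatrix}\in\mathbb{R}^{(n_1+n_2)\times(n_1+n_2)}.
\end{equation*}
It is symmetric and linear in $\mathbf{H}$, and satisfies $\lambda_{\max}(\mathcal{D}(\mathbf{H}))=\Vert\mathcal{D}(\mathbf{H})\Vert=\Vert\mathbf{H}\Vert$. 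Moreover $\mathcal{D}(\mathbf{H})^2=\mathrm{diag}(\mathbf{H}\mathbf{H}^\top,\mathbf{H}^\top\mathbf{H})$. Setting $\mathbf{X}_k:=\mathcal{D}(\mathbf{H}_k)$ therefore gives independent, zero-mean, symmetric summands with $\Vert\mathbf{X}_k\Vert\leq L$ almost surely. Their variance is exactly $\Vert\sum_k\mathbb{E}[\mathbf{X}_k^2]\Vert=\nu^2$, and $\Vert\sum_k\mathbf{H}_k\Vert=\lambda_{\max}(\sum_k\mathbf{X}_k)$. It therefore suffices to prove the symmetric statement in dimension $n:=n_1+n_2$.

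For the symmetric case we use the matrix Laplace transform bound. For every $\theta>0$,
\begin{equation*}
\mathbb{P}\{\lambda_{\max}(\textstyle\sum_k\mathbf{X}_k)\geq t\}\leq e^{-\theta t}\,\mathbb{E}\,\mathrm{tr}\exp(\theta\textstyle\sum_k\mathbf{X}_k).
\end{equation*}
This follows from Markov's inequality and $e^{\lambda_{\max}(\mathbf{Y})}\leq\mathrm{tr}\,e^{\mathbf{Y}}$. The key step is the subadditivity of matrix cumulant generating functions,
\begin{equation*}
\mathbb{E}\,\mathrm{tr}\exp\big(\textstyle\sum_k\theta\mathbf{X}_k\big)\leq\mathrm{tr}\exp\big(\textstyle\sum_k\log\mathbb{E}\,e^{\theta\mathbf{X}_k}\big).
\end{equation*}
We obtain it by conditioning on one summand at a time. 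At each step we apply Jensen's inequality to the concave map $\mathbf{A}\mapsto\mathrm{tr}\exp(\mathbf{Y}+\log\mathbf{A})$, whose concavity is Lieb's theorem. This is the main obstacle, since it is the only genuinely noncommutative ingredient. We would invoke Lieb's concavity theorem as a black box rather than reprove it.

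Next we bound each factor. The scalar function $f(x)=(e^{\theta x}-\theta x-1)/x^2$ is increasing. By the spectral mapping theorem, for $\Vert\mathbf{X}_k\Vert\leq L$ this gives
\begin{equation*}
e^{\theta\mathbf{X}_k}\preceq\mathbf{I}+\theta\mathbf{X}_k+f(L)\mathbf{X}_k^2 .
\end{equation*}
Taking expectations, using $\mathbb{E}\mathbf{X}_k=\mathbf{0}$ and $\mathbf{I}+\mathbf{A}\preceq e^{\mathbf{A}}$, and using operator monotonicity of the logarithm, we get $\log\mathbb{E}e^{\theta\mathbf{X}_k}\preceq f(L)\,\mathbb{E}\mathbf{X}_k^2$.

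Substituting this, and using monotonicity of $\mathrm{tr}\exp$ under $\preceq$, yields
\begin{equation*}
\mathbb{E}\,\mathrm{tr}\exp(\theta\textstyle\sum\mathbf{X}_k)\leq n\exp(f(L)\nu^2).
\end{equation*}
Here the dimension factor comes from $\mathrm{tr}\,e^{\mathbf{Y}}\leq n\,e^{\lambda_{\max}(\mathbf{Y})}$.

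Finally we use the elementary bound $f(L)=(e^{\theta L}-\theta L-1)/L^2\leq\frac{\theta^2/2}{1-\theta L/3}$, valid for $0<\theta<3/L$ via Taylor comparison of $k!$ with $2\cdot3^{k-2}$. Choosing $\theta=t/(\nu^2+Lt/3)$ gives the exponent
\begin{equation*}
-\theta t+\frac{\theta^2\nu^2/2}{1-\theta L/3}=-\frac{t^2/2}{\nu^2+Lt/3}.
\end{equation*}
This yields the stated tail bound $(n_1+n_2)\exp\big(-t^2/(2\nu^2+2Lt/3)\big)$.
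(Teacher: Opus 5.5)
Your argument is correct and is the standard proof of the cited result. The paper does not prove this lemma itself but imports it from Tropp's Theorem~1.6, whose proof uses the same steps: Hermitian dilation, Lieb-based subadditivity of matrix cumulant generating functions, the Bernstein bound $\log\mathbb{E}e^{\theta\mathbf{X}_k}\preceq f(L)\,\mathbb{E}\mathbf{X}_k^2$, and the choice $\theta=t/(\nu^2+Lt/3)$. One small detail to add: this $\theta$ satisfies $\theta L<3$ exactly when $\nu^2>0$, and the case $\nu^2=0$ is trivial because then every $\mathbf{H}_k=\mathbf{0}$ almost surely.
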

        \begin{small}
			\bibliographystyle{plain}
            \bibliography{bib}
		\end{small}
	\end{document}

%% file: Tikz_flow_charts/flowchart_style.tex
\definecolor{FCink}{gray}{0}
\definecolor{FCmuted}{gray}{0.35}
\definecolor{FCedge}{gray}{0.50}
\definecolor{FChead}{gray}{0.88}
\definecolor{FCshade}{gray}{0.95}

\newcommand{\FCRef}[1]{{\fontsize{6.6}{7.8}\selectfont\color{FCmuted}#1}}

\tikzset{
  FCdiagram/.style={
    x=1cm,y=1cm,
    execute at begin picture={
      \hyphenpenalty=10000\relax\exhyphenpenalty=10000\relax
      \ifdefined\hypersetup\hypersetup{hidelinks}\fi
    },
    every node/.style={
      font=\normalfont\fontsize{7.5}{8.8}\selectfont,
      text=FCink,inner sep=0pt,align=center
    }
  },
  FCArrow/.style={
    -{Stealth[length=1.7mm,width=1.3mm]},draw=FCink,line width=.6pt
  },
  FCPanelBox/.style={draw=FCedge,line width=.5pt,fill=white}
}

%% file: Tikz_flow_charts/theory_tikz_auto.tex
\begingroup
\def\TheoryHeight{3.02cm}%
\def\TheoryHeaderHeight{0.40cm}%
\def\TheoryPadding{0.12cm}%
\def\TheoryGap{0.40cm}%
\setlength{\fboxsep}{0pt}%

\newcommand{\TheoryAutoBox}[3]{%
    \vbox{%
        \hbox{\colorbox{FChead}{%
            \parbox[c][\TheoryHeaderHeight][c]{#1}{%
                \centering\bfseries #2}%
        }}%
        \nointerlineskip
        \hbox{%
            \hspace*{\TheoryPadding}%
            \begin{minipage}[c]
                [\dimexpr\TheoryHeight-\TheoryHeaderHeight\relax][s]
                {\dimexpr#1-\TheoryPadding-\TheoryPadding\relax}
                \setlength{\parindent}{0pt}
                \setlength{\parskip}{0pt}
                \centering
                \vspace*{\TheoryPadding}%
                #3\par
                \vspace*{\TheoryPadding}%
            \end{minipage}%
            \hspace*{\TheoryPadding}%
        }%
    }%
}%

\newcommand{\TheoryAutoResult}[2]{%
    \begin{tabular*}{\linewidth}{@{}l@{\extracolsep{\fill}}r@{}}
        #1 & \FCRef{#2}
    \end{tabular*}\par
}%

\begin{tikzpicture}[
    FCdiagram,
    theory box/.style={FCPanelBox,inner sep=0pt,outer sep=0pt}
]

\node[theory box] (model) {%
    \TheoryAutoBox{4.10cm}{Model setup (Section~\ref{sec_model_setup})}{%
        Parametric BTL model\\
        \FCRef{(Definition~\ref{def_para_utility_func})}

        \par\vfill
        Identifiability $\Longleftrightarrow\mathbf{W}\succ\mathbf{0}$\\
        \FCRef{(Proposition~\ref{prop_indentifibility_complete_obv};
                Corollary~\ref{coro_from_identifibality})}

        \par\vfill
        Statistical model and MLE\\
        \FCRef{(Definition~\ref{def_MLE_func})}
    }%
};

\node[theory box,right=\TheoryGap of model] (bounds) {%
    \TheoryAutoBox{5.55cm}{Lower bounds (Section~\ref{sec_minimax_LB})}{%
        CRLB for locally unbiased estimators\\
        \FCRef{(Proposition~\ref{prop_the_CRLB}, Corollary~\ref{coro_of_CRLB})}

        \par\vfill
        {\color{FCedge!55}\hrule height 0.4pt}
        \vfill
        \textbf{Minimax lower bounds}

        \par\vfill
        \TheoryAutoResult{Euclidean error}
            {(Theorem~\ref{thm_minimax_loewe_bound_tr_inv})}
        \vfill
        \TheoryAutoResult{Excess risk}
            {(Corollary~\ref{coro_miniax_lowerbound_excess_risk})}
        \vfill
        \TheoryAutoResult{Query-wise error}
            {(Theorem~\ref{thm_residual_coherence_minimax})}
    }%
};

\node[theory box,right=\TheoryGap of bounds] (mle) {%
    \TheoryAutoBox{6.35cm}{Non-asymptotic MLE analysis (Section~\ref{MLE_upper_bound})}{%

        \TheoryAutoResult{MLE existence: $d_\theta^{1.5}$ sample size}
            {(Theorem~\ref{thm_suffic_sample_l2_upper_bound})}
        \vfill
        \TheoryAutoResult{\textbf{Refined decomposition}}
            {(Theorem~\ref{thm_refinred_Q_infty_ell_2_residual_decompose_bound})}
        $\hat{\boldsymbol{\theta}}-\boldsymbol{\theta}^{\star}
          =\boldsymbol{\Delta}_{lin}
           +\boldsymbol{\Delta}_{bias}+\mathrm{remainder}$

        \par\vfill
        \colorbox{FCshade}{%
            \begin{minipage}{\linewidth}
                \centering
                \textbf{Minimax Optimality}\par
                \TheoryAutoResult{Query-wise: $d_\theta^2$ sample size}
                    {(Theorem~\ref{thm_refinred_Q_infty_ell_2_residual_decompose_bound})}
                \TheoryAutoResult{Euclidean: $\mathrm{tr}(\mathcal{I}(\boldsymbol{\theta}^\star)^{-1})/N$ rate}
                    {(Corollary~\ref{coro_euclidean_upper_in_detailed_terms})}
            \end{minipage}%
        }%
    }%
};

\draw[FCArrow] (model.east) -- (bounds.west);
\draw[FCArrow] (bounds.east) -- (mle.west);
\end{tikzpicture}%
\endgroup%

%% file: Tikz_flow_charts/survey_tikz_auto.tex
\begingroup%
\def\SurveyHeight{3.02cm}%
\def\SurveyHeaderHeight{0.40cm}%
\def\SurveyPadding{0.12cm}%
\def\SurveyGap{0.40cm}%
\def\SurveyFeaturesHeight{\dimexpr(\SurveyHeight-\SurveyStackGap)/2\relax}%
\def\SurveyStackGap{0.17cm}%
\setlength{\fboxsep}{0pt}%

\newcommand{\SurveyAutoBox}[5][\SurveyPadding]{%
    \vbox{%
        \hbox{\colorbox{FChead}{%
            \parbox[c][\SurveyHeaderHeight][c]{#2}{%
                \centering\bfseries #4}%
        }}%
        \nointerlineskip
        \hbox{%
            \hspace*{#1}%
            \begin{minipage}[c]
                [\dimexpr#3-\SurveyHeaderHeight\relax][s]
                {\dimexpr#2-#1-#1\relax}
                \setlength{\parindent}{0pt}
                \setlength{\parskip}{0pt}
                \centering
                \vspace*{#1}%
                #5\par
                \vspace*{#1}%
            \end{minipage}%
            \hspace*{#1}%
        }%
    }%
}%

\begin{tikzpicture}[
    FCdiagram,
    survey box/.style={FCPanelBox,inner sep=0pt,outer sep=0pt}
]

\node[survey box] (features) {%
    \SurveyAutoBox[0.04cm]{3.25cm}{\SurveyFeaturesHeight}
        {Alternative features}{%
        $\mathbf{x}_1,\dots,\mathbf{x}_d\in\mathbb{R}^n$
        \par\vfill
        (e.g., $d$ products or services)
    }%
};

\node[survey box,below=\SurveyStackGap of features] (utility) {%
    \SurveyAutoBox[0.06cm]{3.25cm}
        {\dimexpr\SurveyHeight-\SurveyFeaturesHeight-\SurveyStackGap\relax}
        {Utility model}{
        $U(\mathbf{x}_i)=\mathbf{w}^{\top}\boldsymbol{\phi}(\mathbf{x}_i)$

        \par\vfill
        $\mathbf{Aw}=\mathbf{b}$

        \par\vfill
        \FCRef{(Definition~\ref{def_para_utility_func})}
    }%
};

\node[survey box,right=\SurveyGap of features.north east,anchor=north west]
    (design) {%
    \SurveyAutoBox{5.50cm}{\SurveyHeight}{Pairwise questionnaire design}{%
        Select $\mathcal{E}_s\subseteq\mathbb{I}$: Optimally / Manually

        \par\vfill
        Reparameterize: $\boldsymbol{\theta}\in\mathbb{R}^{d_\theta}$,
        $\mathbf{q}_{\boldsymbol{\alpha}}
            =\mathbf{V}_{A^\perp}^{\top}\mathbf{a}_{\boldsymbol{\alpha}}$\\
        \FCRef{\eqref{eq_def_of_U_phi_matrix},
               \eqref{eq_transfrom_w_to_theta},
               \eqref{eq_definition_q_alp}}

        \par\vfill
        \colorbox{FCshade}{%
            \begin{minipage}{\linewidth}
                \centering
                \strut
                \textbf{Comparison graph} \FCRef{(Definition~\ref{def_comparGraph})}\\
                \textbf{Design matrix $\mathbf{W}\succ\mathbf{0}$}
                \FCRef{\eqref{eq_def_of_matrix_K}}\\
                (with identifiability)\\
                \strut\textbf{FIM $\mathcal{I}(\boldsymbol{\theta}^\star)\succ\mathbf{0}$} \FCRef{~\eqref{eq_FIM_in_theta}}\strut
            \end{minipage}%
        }%
    }%
};

\node[survey box,right=\SurveyGap of design] (responses) {%
    \SurveyAutoBox{3.15cm}{\SurveyHeight}{Collect responses}{%
        Present $(i,j)\in\mathcal{E}_s$\\
        DM chooses $i$ or $j$

        \par\vfill
        Ordinal response $Y_{\boldsymbol{\alpha}}^{(t)}\in\{0,1\}$\par\vfill
        
        BTL model~\eqref{eq_BTL_basic_scaled_alpha}

        \par\vfill
        \FCRef{(Assumptions~\ref{assumpt1_well_Logit}--\ref{assump_gap_wp_w_B_r0})}
    }%
};

\node[survey box,right=\SurveyGap of responses] (estimate) {%
    \SurveyAutoBox{3.70cm}{\SurveyHeight}{Estimation}{%
        $\hat{\boldsymbol{\theta}}
            \in\arg\min_{\boldsymbol{\theta}\in\mathbb{R}^{d_\theta}}
            \ell(\boldsymbol{\theta})$

        \par\vfill
        \FCRef{(\eqref{eq_def_of_ell_theta}, Definition~\ref{def_MLE_func})}

        \par\vfill
        \textbf{Recover the utility}

        \par\vfill
        $\hat{\mathbf{w}}=\mathbf{w}_o
            +\mathbf{V}_{A^\perp}\hat{\boldsymbol{\theta}}$

        \par\vfill
        $\hat U(\mathbf{x})
            =\hat{\mathbf{w}}^{\top}\boldsymbol{\phi}(\mathbf{x})$
    }%
};

\coordinate (features-out) at ([yshift=-0.5*\SurveyHeaderHeight]features.east);
\coordinate (utility-out) at ([yshift=-0.5*\SurveyHeaderHeight]utility.east);
\draw[FCArrow] (features-out) -- (design.west |- features-out);
\draw[FCArrow] (utility-out) -- (design.west |- utility-out);
\draw[FCArrow] (design.east) -- (responses.west);
\draw[FCArrow] (responses.east) -- (estimate.west);
\end{tikzpicture}%
\endgroup%

%% file: Tikz_flow_charts/numerical_tikz_auto.tex
\begingroup%
\def\NumericalHeight{3.02cm}%
\def\NumericalHeaderHeight{0.40cm}%
\def\NumericalPadding{0.12cm}%
\def\NumericalGap{0.35cm}%
\setlength{\fboxsep}{0pt}%

\newcommand{\NumericalAutoBox}[3]{%
    \vbox{%
        \hbox{\colorbox{FChead}{%
            \parbox[c][\NumericalHeaderHeight][c]{#1}{%
                \centering\bfseries #2}%
        }}%
        \nointerlineskip
        \hbox{%
            \hspace*{\NumericalPadding}%
            \begin{minipage}[c]
                [\dimexpr\NumericalHeight-\NumericalHeaderHeight\relax][s]
                {\dimexpr#1-\NumericalPadding-\NumericalPadding\relax}
                \setlength{\parindent}{0pt}
                \setlength{\parskip}{0pt}
                \centering
                \vspace*{\NumericalPadding}%
                #3\par
                \vspace*{\NumericalPadding}%
            \end{minipage}%
            \hspace*{\NumericalPadding}%
        }%
    }%
}%

\begin{tikzpicture}[
    FCdiagram,
    numerical box/.style={FCPanelBox,inner sep=0pt,outer sep=0pt}
]

\node[numerical box] (pool) {%
    \NumericalAutoBox{6.50cm}{Candidate query pool and ground truth}{%
        \textbf{Synthetic: Gaussian differences}\FCRef{~\eqref{eq_candidate_q_alpha_and_W_mat_def}--\eqref{eq_candidate_theta_star_def}}

        \par\vfill
        $\mathbf{q}_{\boldsymbol{\alpha}}
            =\mathbf{R}\tilde{\mathbf{W}}^{-1/2}
             \xi_{\boldsymbol{\alpha}}(\boldsymbol{g}_i-\boldsymbol{g}_j)$

        \par\vfill
        $\boldsymbol{\theta}^\star
            =\mathbf{R}^{-1}\bigl(\rho\mathbf{u}
             +\sqrt{1-\rho^2}\,\mathbf{v}\bigr)$

        \par\vfill
        {\color{FCedge!55}\hrule height 0.4pt}
        \vfill
        \textbf{Real world: Green-Flight}\FCRef{~\eqref{eq_contextual_ques_pool_set}}

        \par\vfill
        $\mathbf{q}_{\mathfrak{D},\boldsymbol{\alpha},n}
            =\operatorname{vec}\bigl[
             (\mathbf{q}_{\mathfrak{D},\boldsymbol{\alpha_A}}
              -\mathbf{q}_{\mathfrak{D},\boldsymbol{\alpha_B}})
             \mathbf{c}_n^\top\bigr]\in\mathbb{R}^{24}$

        \par\vfill
        $\boldsymbol{\theta}_\mathfrak{D}^\star$ by Firth correction
    }%
};

\node[numerical box,right=\NumericalGap of pool] (design) {%
    \NumericalAutoBox{3.00cm}{D-optimal design}{%
        Solve $(\mathcal{E}_s,\{n_{\boldsymbol{\alpha}}\})$\FCRef{~\eqref{eq_D_optimal_problem}}

        \par\vfill
        $\mathbf{W}\succ\mathbf{0}$; $\mu_Q<2$

        \par\vfill
        \colorbox{FCshade}{%
            \begin{minipage}{\linewidth}
                \centering
                \strut\textbf{Synthetic}\\
                Replicate / redesign\\
                \textbf{Green-Flight}\\
                Redesign for each $N$\strut
            \end{minipage}%
        }%
    }%
};

\node[numerical box,right=\NumericalGap of design] (simulation) {%
    \NumericalAutoBox{3.00cm}{BTL simulation}{%
        $Y_{\boldsymbol{\alpha}}^{(t)}
            \sim\operatorname{Ber}\left(\frac{1}{1+e^{-\mathbf{q}_{\boldsymbol{\alpha}}^\top
                              \boldsymbol{\theta}^\star}}\right)$

        \par\vfill
        $3000$ independent trials in each setup

        \par\vfill
        \FCRef{Section~\ref{subsec_implemen_detail}}
    }%
};

\node[numerical box,right=\NumericalGap of simulation] (estimation) {%
    \NumericalAutoBox{3.25cm}{MLE and Firth}{%
        MLE existence (LP)
        \par\vfill
        $\hat{\boldsymbol{\theta}}$ computed by trust-region method
        \par\vfill
        \colorbox{FCshade}{%
            \begin{minipage}{\linewidth}
                \centering
                \strut RMSE versus CRLB\\
                Error decomposition\\
                \FCRef{Sections~\ref{subsec_Synthetic_Questionnaires}--\ref{subsec_green_flight_question}}\strut
            \end{minipage}%
        }%
    }%
};

\draw[FCArrow] (pool.east) -- (design.west);
\draw[FCArrow] (design.east) -- (simulation.west);
\draw[FCArrow] (simulation.east) -- (estimation.west);
\end{tikzpicture}%
\endgroup%